\title{Best Arm Identification with Fixed Budget: 
	\\A Large Deviation Perspective}
\author{
  Po-An Wang\\
  EECS\\
  KTH, Stockholm, Sweden\\
  \texttt{wang9@kth.se} \\
  \And
   Ruo-Chun Tzeng\\
  EECS\\
  KTH, Stockholm, Sweden\\
  \texttt{rctzeng@kth.se} \\
  \AND
  Alexandre Proutiere\\
  EECS and Digital Futures\\
    KTH, Stockholm, Sweden\\
  \texttt{alepro@kth.se}
}
\newcommand{\skl}{\textnormal{kl}}
\newcommand{\E}{\mathbb{E}}
\newcommand{\alt}{\textnormal{Alt}}
\newcommand{\cl}{\textnormal{cl}}
\newcommand{\PP}{\mathbb{P}}
\newcommand{\QQ}{\mathbb{Q}}
\newcommand{\NN}{\mathbb{N}}
\newcommand{\RR}{\mathbb{R}}
\newcommand{\olog}{\overline{\log}}
\newcommand{\om}{\boldsymbol{\omega}}
\newcommand{\setmap}{\rightrightarrows}
\newcommand{\bm}{\boldsymbol{\mu}}
\newcommand{\bl}{\boldsymbol{\lambda}}
\newcommand{\bp}{\boldsymbol{\pi}}
\newcommand{\be}{\boldsymbol{\eta}}
\newcommand{\zz}{\boldsymbol{z}}
\newcommand{\yy}{\boldsymbol{y}}
\newcommand{\xx}{\boldsymbol{x}}
\newcommand{\bxi}{\bar{\xi}}
\newcommand{\bpsi}{\bar{\psi}}
\newtheorem{theorem}{Theorem}
\newtheorem{definition}{Definition}
\newtheorem{lemma}{Lemma}
\newtheorem{assumption}{Assumption} 
\newtheorem{proposition}{Proposition}
\newtheorem{corollary}{Corollary}
\newtheorem{conjecture}{Conjecture}
\DeclareMathOperator*\lowlim{\underline{lim}}
\DeclareMathOperator*\uplim{\overline{lim}}
\newcommand{\eproof}{\hfill $\Box$}
\newtheorem*{rep@theorem}{\rep@title}
\newcommand{\newreptheorem}[2]{%
	\newenvironment{rep#1}[1]{%
		\def\rep@title{#2 \ref{##1}}%
		\begin{rep@theorem}}%
		{\end{rep@theorem}}}
\newcommand{\sr}{\ensuremath{\text{\tt SR}}\xspace}
\newcommand{\cra}{\ensuremath{\text{\tt CR-A}}\xspace}
\newcommand{\crc}{\ensuremath{\text{\tt CR-C}}\xspace}
\newcommand{\sred}{\ensuremath{\text{\tt CR}}\xspace}
\newcommand{\sh}{\ensuremath{\text{\tt SH}}\xspace}  
\newcommand{\ucbe}{\ensuremath{\text{\tt UCB-E}}\xspace}    
\newcommand{\ugape}{\ensuremath{\text{\tt UGapE}}\xspace}    
\newcommand{\DOT}{\ensuremath{\text{\tt DOT}}\xspace}
\DeclareMathOperator*{\argmax}{argmax} 
\DeclareMathOperator*{\argmin}{argmin}
\begin{document}

\maketitle
\begin{abstract}
We consider the problem of identifying the best arm in stochastic Multi-Armed Bandits (MABs) using a fixed sampling budget. Characterizing the minimal instance-specific error probability for this problem constitutes one of the important remaining open problems in MABs. When arms are selected using a static sampling strategy, the error probability decays exponentially with the number of samples at a rate that can be explicitly derived via Large Deviation techniques. Analyzing the performance of algorithms with adaptive sampling strategies is however much more challenging. In this paper, we establish a connection between the Large Deviation Principle (LDP) satisfied by the empirical proportions of arm draws and that satisfied by the empirical arm rewards. This connection holds for any adaptive algorithm, and is leveraged (i) to improve error probability upper bounds of some existing algorithms, such as the celebrated \sr (Successive Rejects) algorithm \citep{audibert2010best}, and (ii) to devise and analyze new algorithms. In particular, we present \sred (Continuous Rejects), a truly adaptive algorithm that can reject arms in {\it any} round based on the observed empirical gaps between the rewards of various arms. Applying our Large Deviation results, we prove that \sred enjoys better performance guarantees than existing algorithms, including \sr. Extensive numerical experiments confirm this observation.   
\end{abstract}


\section{Introduction}
 
We study the problem of best-arm identification in stochastic bandits in the fixed budget setting. In this problem, abbreviated by BAI-FB, a learner faces $K$ distributions or arms $\nu_1,\ldots,\nu_K$ characterized by their unknown means $\bm=(\mu_1,\ldots,\mu_K)$ (we restrict our attention to distributions taken from a one-parameter exponential family). She sequentially pulls arms and observes samples of the corresponding distributions. More precisely, in round $t\ge 1$, she pulls an arm $A_t=k$ selected depending on previous observations and observes $X_{k}(t)$ a sample of a $\nu_k$-distributed random variable. $(X_k(t),t\ge 1, k\in [K])$ are assumed to be independent over rounds and arms. After $T$ arm draws, the learner returns $\hat{\imath}$, an estimate of the best arm $1(\bm) :=\arg\max_k\mu_k$. We assume that the best arm is unique, and denote by $\Lambda$ the set of parameters $\bm$ such that this assumption holds. The objective is to devise an adaptive sampling algorithm minimizing the error probability $\mathbb{P}_{\bm}[\hat{\imath}\neq 1(\bm)]$. This learning task is one of the most important problems in stochastic bandits, and despite recent research efforts, it remains largely open \citep{qin2022open}. In particular, researchers have so far failed at characterizing the minimal instance-specific error probability. This contrasts with other basic learning tasks in stochastic bandits such as regret minimization \citep{lai1985asymptotically} and BAI with fixed confidence \citep{garivier2016optimal}, for which indeed, asymptotic instance-specific  performance limits and matching algorithms have been derived. In BAI-FB, the error probability typically decreases exponentially with the sample budget $T$, i.e., it scales as $\exp(-R(\bm)T)$ where the instance-specific rate $R(\bm)$ depends on the sampling algorithm. Maximizing this rate over the set of adaptive algorithms is an open problem. 

\medskip
\noindent
{\bf Instance-specific error probability lower bound.} To guess the maximal rate at which the error probability decays, one may apply the same strategy as that used in regret minimization or BAI in the fixed confidence setting: (i) derive instance-specific lower bound for the error probability for some notion of {\it uniformly good} algorithms; (ii) devise a sampling strategy mimicking the optimal proportions of arm draws identified in the lower bound. Here the notion of uniformly good algorithms is that of {\it consistent} algorithms. Under such an algorithm, for any $\bm\in \Lambda$, $\PP_{\bm}\left[\hat{\imath}= 1(\bm) \right] \rightarrow 1$ as $T\rightarrow \infty$. \citep{garivier2016optimal} conjectures the following asymptotic lower bound satisfied by any consistent algorithm (refer to Appendix \ref{app:discussion} for details): as $T\to\infty$,
\begin{equation}\label{eq:lb}
\frac{1}{T} \log \frac{1}{\PP_{\bm}\left[ \hat{\imath} \neq 1(\bm) \right] } \le  \max_{\om \in \Sigma } \inf_{ \bl\in \alt (\bm) }\Psi(\bl,\om) ,
\end{equation}
where $\Sigma$ is the $(K-1)$-dimensional simplex, $\Psi(\bl,\om)= \sum_{k=1}^K\omega_k d(\lambda_k ,\mu_k)$, $\alt (\bm)=\{\bl\in\Lambda: 1(\bm)\neq 1(\bl)\}$ is the set of confusing parameters (those for which $1(\bm)$ is not the best arm), and $d(x,y)$ denotes the KL divergence between two distributions of parameters $x$ and $y$. Interestingly, the solution $\om^\star\in \Sigma$ of the optimization problem $\max_{\om \in \Sigma }\inf_{ \bl\in \alt (\bm) } \Psi(\bl,\om)$ provides the best static proportions of arm draws. More precisely, an algorithm selecting arms according to the allocation $\om^\star$, i.e., selecting arm $k$ $\omega_k^\star T$ times and returning the best empirical arm after $T$ samples, has an error rate matching the lower bound (\ref{eq:lb}). This is a direct consequence of the fact that, under a static algorithm with allocation $\om$, the empirical reward process $\{\hat{\bm}(t)\}_{t\ge 1}$ satisfies a LDP with rate function $\bl\mapsto \Psi(\bl,\om)$, see \citep{glynn2004large} and refer to Section \ref{sec:ldp} for more details. 

\medskip
\noindent
{\bf Adaptive sampling algorithms and their analysis.} The optimal allocation $\om^\star$ depends on the instance $\bm$ and is initially unknown. We may devise an adaptive sampling algorithm that (i) estimates $\om^\star$ and (ii) tracks this estimated optimal allocation. In the BAI with fixed confidence, such tracking scheme exhibits asymptotically optimal performance \citep{garivier2016optimal}. Here however, the error made estimating $\om^\star$ would inevitably impact the overall error probability of the algorithm. To quantify this impact or more generally to analyze the performance of adaptive algorithms, one would need to understand the connection between the statistical properties of the arm selection process and the asymptotic statistics of the estimated expected rewards. 

To be more specific, any adaptive algorithm generates a stochastic process $\{Z(t)\}_{t\ge 1}=\{(\om(t), \hat{\bm}(t))\}_{t\ge 1}$. $\om(t)=(\omega_1(t),\ldots,\omega_K(t))$ represents the allocation realized by the algorithm up to round $t$ ($\omega_k(t)=N_k(t)/t$ and $N_k(t)$ denotes the number of times arm $k$ has been selected up to round $t$). $\hat{\bm}(t)=(\hat\mu_1(t),\ldots,\hat\mu_K(t))$ denotes the empirical average rewards of the various arms up to round $t$. Now assuming that at the end of round $T$, the algorithm returns the arm with the highest empirical reward, the error probability is $\mathbb{P}_{\bm}[\hat{\imath}\neq 1(\bm)]=\mathbb{P}_{\bm}[\hat{\bm}(T)\in \alt (\bm)]$. Assessing the error probability at least asymptotically requires understanding the asymptotic behavior of $\hat{\bm}(t)$ as $t$ grows large. Ideally, one would wish to establish the Large Deviation properties of the process $\{Z(t)\}_{t\ge 1}$. This task is easy for algorithms using static allocations \citep{glynn2004large}, but becomes challenging and open for adaptive algorithms. Addressing this challenge is the main objective of this paper. 

\noindent
{\bf Contributions.} In this paper, we develop and leverage tools towards the analysis of adaptive sampling algorithms for the BAI-FB problem. More precisely, our contributions are as follows.

\noindent
(a) We establish a connection between the LDP satisfied by the empirical proportions of arm draws $\{\om(t)\}_{t\ge 1}$ and that satisfied by the empirical arm rewards. This connection holds for any adaptive algorithm. Specifically, we show that if the rate function of $\{\om(t)\}_{t\ge 1}$ is lower bounded by $\om\mapsto I(\om)$, then that of $(\hat{\bm}(t))_{t\ge 1}$ is also lower bounded by $\bl\mapsto \min_{\om\in \Sigma}\max\{\Psi(\bl,\om),I(\om)\}$. This result has interesting interpretations and implies the following asymptotic upper bound on the error probability of the algorithm considered: as $T\to\infty$,
\begin{equation}\label{eq:up}
\frac{1}{T} \log \frac{1}{\PP_{\bm}\left[ \hat{\imath} \neq 1(\bm) \right] } \ge  \inf_{\om \in \Sigma, \bl\in \alt (\bm)} \max\{\Psi(\bl,\om),I(\om)\}.
\end{equation}
The above formula, when compared to the lower bound (\ref{eq:lb}), quantifies the price of not knowing $\om^\star$ initially, and relates the error probability to the asymptotic statistics of the sampling process used by the algorithm. 

\noindent
(b) We show that by simply applying our generic Large Deviation result, we may improve the error probability upper bounds of some existing algorithms, such as the celebrated \sr algorithm \citep{audibert2010best}. Our result further opens up opportunities to devise and analyze new algorithms with a higher level of adaptiveness. In particular, we present \sred (Continuous Rejects), an algorithm that, unlike \sr, can eliminate arms in {\it each} round. This sequential elimination process is performed by comparing the empirical rewards of the various candidate arms using continuously updated thresholds. Leveraging the LDP tools developed in (a), we establish that \sred enjoys better performance guarantees than \sr. Hence \sred becomes the algorithm with the lowest instance-specific and guaranteed error probability. We illustrate our results via numerical experiments, and compare \sred to other BAI algorithms.

\section{Related Work}\label{sec:related}

We distinguish two main classes of algorithms to solve the best arm identification problem in the fixed budget setting. Algorithms from the first class, e.g. Successive Rejects (\sr) \citep{audibert2010best} and Sequential Halving (\sh) \citep{karnin2013almost}, split the sampling budget into phases of fixed durations, and discard arms at the end of each phase. Algorithms from the second class, e.g. \ucbe \citep{audibert2010best} and \ugape \citep{gabillon2012best} sequentially sample arms based on confidence bounds of their empirical rewards. It is worth mentioning that algorithms from the second class usually require some prior knowledge about the problem, for example, an upper bound of $H= \sum_{k\neq 1(\bm) } \frac{1}{(\mu_{1(\bm)}-\mu_k)^2}$. Without this knowledge, the parameters can be chosen in a heuristic way, but the performance gets worse.

Algorithms from the first class exhibit better performance numerically and are also those with the best instance-specific error probability guarantees. \sr had actually the best performance guarantees so far: for example, when the reward distributions are supported on $[0,1]$, the error probability of \sr satisfies: $\liminf_{T\rightarrow \infty} \frac{1}{T}\log \frac{1}{\PP_{\bm}\left[\hat{ \imath}\neq 1(\bm)\right]} \ge \frac{1}{H_2\log K}$, where $H_2= \max_{k\neq 1(\bm)} \frac{k}{(\mu_{1(\bm)}-\mu_k)^2}$. In this paper, we strictly improve this guarantee (see Section \ref{sec:sr}). Recently, \citep{barrier2022best} also refined and extended the analysis of \citep{audibert2010best} by replacing, in the analysis, Hoeffding's inequality by a large deviation result involving KL-divergences. Again here, we further improve this new guarantee. We also devise \sred, an algorithm with error probability provably lower than our improved guarantees for \sr.

Fundamental limits on the error probability have also been investigated. In the minimax setting, \citep{carpentier2016tight} established that for any algorithm, there exists a problem within the class of instances with given complexity $H$ such that the error probability is greater than $\exp(-\frac{400 T}{H\log K}) \ge \exp(-\frac{400 T}{H_2\log K}) $. Up to a universal constant (here 400), \sr is hence minimax optimal. This lower bound was also recently revisited in \citep{ariu2021,degenne2023,wang2023uniformly} to prove that the instance-specific lower bound (\ref{eq:lb}) cannot be achieved on all instances by a single algorithm 
. Deriving tight instance-specific lower bounds remains open \citep{qin2022open}. 

We conclude this section by mentioning two interesting algorithms. In \citep{komiyama2022minimax}, the authors propose \DOT, an algorithm trying to match minimax error probability lower bounds. To this aim, the algorithm requires to periodically call an oracle able to determine an optimal allocation, solution of an optimization problem with high and unknown complexity. \DOT has minimax guarantees but is computationally challenging if not infeasible (numerically, the authors cannot go beyond simple instances with 3 arms). Finally, researchers have also looked at the best arm identification problem from a Bayesian perspective. For example, \citep{russo2016} devise variants of the celebrated Thompson Sampling algorithm, that could potentially work well in practice. Nevertheless, as discussed in \citep{komiyama2022bayes}, Bayesian algorithms cannot be analyzed nor provably perform well in the frequentist setting.

\section{Large Deviation Analysis of Adaptive Sampling Algorithms}\label{sec:ldp}

In this section, we first recall key concepts in Large Deviations (refer to the classical textbooks \citep{budhiraja2019analysis, dembo2009large,dupuis2011weak,varadhan2016large} for a more detailed exposition). We then apply these concepts to the performance analysis of adaptive sampling algorithms. Finally, we exemplify the analysis and apply it to improve existing performance guarantees for the \sr algorithm \citep{audibert2010best}. 

\subsection{Large Deviation Principles}

Consider the stochastic process $\{Y(t)\}_{t\ge 1}$ with values in a separable complete metric space (i.e., a Polish space) $\mathcal{Y}$. Large Deviations are concerned with the probabilities of rare events related to $\{Y(t)\}_{t\ge 1}$ that decay exponentially in the parameter $t$. The asymptotic decay rate is characterized by the {\it rate function} $I:{\cal Y}\to \mathbb{R}_+$ defined so that essentially $-\frac{1}{t}\log \PP\left[Y(t)\in B\right]$ converges to $\min_{x\in B}I(x)$ for any Borel set $B$. We provide a more rigorous definition below.

\begin{definition}\label{def:rate}[Large Deviation Principle (LDP)]
The stochastic process  $\{Y(t)\}_{t\ge 1}$ satisfies a LDP with rate function $I$ if:\\
(i) $I$ is lower semicontinuous, and $\forall s\in [0,\infty]$, the set $\mathcal{K}_{s}=\{y\in \mathcal{Y}:I(y)\le s \}$ is compact;\\
(ii) for every closed (resp. open) set $C\subset \mathcal{Y}$ (resp. $O\subset \mathcal{Y}$),  
\begin{align}
\lowlim_{t\rightarrow \infty}\frac{1}{t}\log \frac{1}{\PP\left[ Y(t)\in C\right]} & \ge  \inf_{y\in C} I(y), \label{eq:rate UB}\\
\uplim_{t\rightarrow \infty}\frac{1}{t}\log \frac{1}{\PP\left[ Y(t)\in O\right]} & \le  \inf_{y\in O} I(y).\label{eq:rate LB}
\end{align}
\end{definition}

LDPs have been derived earlier in stochastic bandit literature. \citep{glynn2004large} have used G{\"a}rtner-Ellis Theorem \citep{ellis1984large,gartner1977large} to establish that under a static sampling algorithm with allocation $\om\in \Sigma$ (i.e., each arm $k$ is selected $\omega_k T$ times up to round $T$), the process $\{ \hat{\bm}(T)\}_{T \ge 1}$ satisfies an LDP with rate function $\bl\mapsto \Psi(\bl,\om)= \sum_{k=1}^K\omega_k d(\lambda_k ,\mu_k)$. Our objective in the next subsection is to investigate how to extend this result to the case of adaptive sampling algorithms.

\subsection{Analysis of adaptive sampling algorithms}\label{sec:varadaham}
An adaptive sampling algorithm generates a stochastic process $\{Z(t)\}_{t\ge 1}=\{(\om(t), \hat{\bm}(t))\}_{t\ge 1}$. When the sampling budget $T$ is exhausted, should the algorithm returns the arm with the highest empirical reward, the error probability is $\mathbb{P}_{\bm}[\hat{\imath}\neq 1(\bm)]=\mathbb{P}_{\bm}[\hat{\bm}(T)\in \alt (\bm)]$. To assess the rate at which this probability decays with the budget, we may try to establish a LDP for the empirical reward process $\{\hat{\bm}(t)\}_{t\ge 1}$. Due to the intricate dependence between the sampling and the empirical reward processes, deriving such an LDP is very challenging. Instead, we establish a connection between the LDPs satisfied by these processes. This connection will be enough for us to derive tight upper bounds on the error probability. We present our main result in the following theorem.

\begin{theorem}\label{thm:Varadaham bandits}
Assume that under some adaptive sampling algorithm, $\{ \om (t)\}_{t\ge 1}$ satisfies the LDP upper bound (\ref{eq:rate UB}) with rate function $I$. Then $\{\hat{\bm}(t)\}_{t\ge 1}$ satisfies the LDP upper bound (\ref{eq:rate UB}) with rate function $\bl\mapsto \min_{\om\in \Sigma}\max\{ \Psi(\bl,\om),I(\om)\}$. Moreover, we have: for any bounded Borel subset $\mathcal{S}$ of $\RR^K$ and any Borel subset $W$ of $\Sigma$,  
$$
\lowlim_{t\rightarrow \infty }   \frac{1}{t} \log \frac{1}{\PP_{\bm}\left[  \hat{\bm}(t)\in  \mathcal{S},\om(t)\in W \right] }\ge \inf_{\om\in \cl(W)} \max\left\{ F_{\mathcal{S}}(\om),I(\om)\right\},
$$
where $F_{ \mathcal{S}}(\om):=\inf_{\bl \in \cl (\mathcal{S}) }\Psi(\bl,\om)$, and $\cl (\mathcal{S})$ denotes the closure of ${\cal S}$.
\end{theorem}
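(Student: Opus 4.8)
The plan is to reduce the joint statement about $(\hat{\bm}(t),\om(t))$ to a statement that can be handled by conditioning on the realized allocation and invoking the static‐allocation LDP of \citep{glynn2004large}. Concretely, for a bounded Borel set $\mathcal{S}\subset\RR^K$ and a Borel set $W\subset\Sigma$, the event $\{\hat{\bm}(t)\in\mathcal{S},\om(t)\in W\}$ is contained in the event $\{\om(t)\in W\}$, so the first instinct—just splitting on $\om(t)\in W$—is too lossy; instead I would discretize $\Sigma$ into small cells, condition on $\om(t)$ lying in a given cell, and on that event control the conditional law of $\hat{\bm}(t)$. The key technical input is that, conditionally on the sequence of pulls, each $\hat\mu_k(t)$ is an empirical mean of $N_k(t)=t\,\omega_k(t)$ i.i.d.\ samples from $\nu_k$, independent across $k$; hence the conditional deviation probability $\PP[\hat{\bm}(t)\in\mathcal{S}\mid \om(t)=\om]$ behaves like $\exp(-t\,F_{\mathcal{S}}(\om))$ with $F_{\mathcal{S}}(\om)=\inf_{\bl\in\cl(\mathcal{S})}\Psi(\bl,\om)=\inf_{\bl\in\cl(\mathcal{S})}\sum_k\omega_k d(\lambda_k,\mu_k)$, uniformly over $\om$ in a compact subset of the simplex (using lower semicontinuity and a Cramér-type upper bound for each coordinate, plus a union-over-cells argument; note we only need the upper bound on the conditional probability, i.e.\ a lower bound on $-\frac1t\log(\cdot)$).

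The main steps, in order, are: (1) Fix a cell $W_j$ of a finite partition of $\cl(W)$ with diameter $\le\delta$; decompose $\PP_{\bm}[\hat{\bm}(t)\in\mathcal{S},\om(t)\in W]\le\sum_j\PP_{\bm}[\hat{\bm}(t)\in\mathcal{S},\om(t)\in W_j]$. (2) On $\{\om(t)\in W_j\}$, bound $\PP_{\bm}[\hat{\bm}(t)\in\mathcal{S}\mid \om(t)\in W_j]$ from above by roughly $\exp(-t(\inf_{\om\in W_j}F_{\mathcal{S}}(\om)-\varepsilon(\delta)))$ — here I would pass through a further conditioning on the exact count vector $(N_k(t))_k$ (only polynomially many values), apply the independent-coordinates Cramér upper bound, and use continuity of $\om\mapsto F_{\mathcal{S}}(\om)$ on compacts, where boundedness of $\mathcal{S}$ and the one-parameter exponential family structure guarantee $F_{\mathcal{S}}$ is finite and well-behaved. (3) Combine with the hypothesis that $\{\om(t)\}$ satisfies the LDP upper bound with rate $I$: $\PP_{\bm}[\om(t)\in W_j]\le\exp(-t(\inf_{\om\in\cl(W_j)}I(\om)-\varepsilon))$ for $t$ large, since $\cl(W_j)$ is closed. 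Multiplying the conditional and marginal bounds gives, for each $j$, a bound of order $\exp(-t(\inf_{\om\in\cl(W_j)}\max\{F_{\mathcal{S}}(\om),I(\om)\}-\varepsilon'))$ after using $a b\le \exp(-t(\max\{\alpha,\beta\}))$ when $a\le e^{-t\alpha},b\le e^{-t\beta}$. (4) Sum over the finitely many cells (the $\frac1t\log(\text{finite count})$ term vanishes), take $\liminf_{t\to\infty}$, then let $\delta\downarrow0$ and $\varepsilon'\downarrow0$; lower semicontinuity of $\max\{F_{\mathcal{S}},I\}$ and a standard covering argument yield $\inf_{\om\in\cl(W)}\max\{F_{\mathcal{S}}(\om),I(\om)\}$. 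Finally, the first assertion of the theorem (the LDP upper bound for $\{\hat{\bm}(t)\}$ alone with rate $\bl\mapsto\min_{\om}\max\{\Psi(\bl,\om),I(\om)\}$) follows by specializing $W=\Sigma$, taking $\mathcal{S}$ to be (a bounded truncation of) an arbitrary closed set $C$, and observing $\inf_{\bl\in C}\min_{\om}\max\{\Psi(\bl,\om),I(\om)\}=\min_{\om}\max\{\inf_{\bl\in C}\Psi(\bl,\om),I(\om)\}=\min_\om\max\{F_C(\om),I(\om)\}$, using that $\mathcal{S}$ bounded lets one restrict $\bl$ to a compact set without changing the infimum (confusing sets $\alt(\bm)$ relevant for the error probability can be intersected with a large ball at no cost).

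The hard part will be making the conditional large-deviation upper bound \emph{uniform} over all allocations $\om$ in a cell and, more delicately, handling allocations near the boundary of $\Sigma$ where some $\omega_k\to0$: when $N_k(t)$ is small (sublinear in $t$), the empirical mean $\hat\mu_k(t)$ is poorly concentrated, so the naive bound $\PP[\hat\mu_k(t)\in\cdot]\le e^{-N_k(t)d(\cdot,\mu_k)}$ becomes vacuous. The resolution is that in $F_{\mathcal{S}}(\om)=\sum_k\omega_k d(\lambda_k,\mu_k)$ such coordinates are also weighted by $\omega_k\approx0$ and thus contribute negligibly to the target rate; one must argue that the discretization error $\varepsilon(\delta)$ can still be controlled there, e.g.\ by separately treating the set of arms with $\omega_k<\eta$ (whose total rate contribution is $O(\eta)$ times a bound on $d(\cdot,\mu_k)$ over the bounded set $\cl(\mathcal{S})$) and the arms with $\omega_k\ge\eta$ (where the conditional Cramér bound is genuinely uniform), then letting $\eta\downarrow0$ after $\delta\downarrow0$. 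A secondary subtlety is that $\PP_{\bm}[\hat{\bm}(t)\in\mathcal{S}\mid(N_k(t))_k=(n_k)_k]$ must be an honest product of one-dimensional deviation probabilities — this uses that, given the full count vector, the samples used for each arm are still i.i.d.\ from $\nu_k$ (a consequence of the independence of the sample tables $(X_k(t))$ from the adaptive selection, i.e.\ a standard "fresh samples" / stopping-time argument), which I would state as a lemma and invoke.
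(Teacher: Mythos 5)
The central step of your plan rests on a claim that is false for adaptive algorithms: that conditionally on the count vector $(N_k(t))_k=(n_k)_k$ (equivalently, on $\om(t)$ lying in a given cell), each $\hat\mu_k(t)$ is the empirical mean of $n_k$ i.i.d.\ samples from $\nu_k$, independent across arms. Under an adaptive sampling rule the counts $N_k(t)$ are measurable functions of the observed rewards, so conditioning on them biases the samples: for instance, an algorithm that pulls arm $k$ a second time only when its first reward is large makes the law of $\hat\mu_k(t)$ given $N_k(t)=2$ very far from that of an unbiased two-sample mean, and the conditional Cram\'er bound $\PP_{\bm}[\hat{\bm}(t)\in\mathcal{S}\mid\om(t)\in W_j]\le e^{-t(\inf_{W_j}F_{\mathcal{S}}-\varepsilon)}$ fails. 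The ``fresh samples'' table construction gives independence of the table $(X_k(s))_{k,s}$ from the algorithm's internal randomization, not independence of the table from the sample-dependent indices $N_k(t)$; this intricate dependence is precisely the obstacle the theorem is designed to overcome. Consequently your step (2), and the product taken in step (3), are not justified as written.

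The architecture is repairable, because you only need the \emph{maximum} of the two rates rather than their sum: bound $\PP_{\bm}[\hat{\bm}(t)\in\mathcal{S},\om(t)\in W_j]$ unconditionally in two separate ways --- once by $\PP_{\bm}[\om(t)\in \cl(W_j)]\le e^{-t(\inf_{\om\in\cl(W_j)}I(\om)-\varepsilon)}$ using the hypothesis on $\{\om(t)\}_{t\ge1}$, and once via the containment $\{\hat{\bm}(t)\in\mathcal{S},\,(N_k(t))_k=(n_k)_k\}\subseteq\{(\bar X_{k,n_k})_k\in\mathcal{S}\}$ summed over the polynomially many count vectors compatible with $W_j$, where $\bar X_{k,n}$ denotes the mean of the first $n$ table entries of arm $k$ (deterministic indices, hence genuinely independent coordinates) --- and then take the minimum of the two bounds. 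The paper instead avoids both conditioning and cell-discretization of $\Sigma$: it partitions $\cl(W)$ into level sets of $F_{\mathcal{S}}$ and, on each level set, applies Markov's inequality to $e^{\alpha X}$ with $X=t\Psi(\hat{\bm}(t),\om(t))=\sum_kN_k(t)d(\hat\mu_k(t),\mu_k)$ followed by H\"older's inequality, decoupling $\E_{\bm}[e^{\alpha pX}]$ --- which is polynomial in $t$ uniformly over adaptive algorithms by the self-normalized concentration inequality behind Lemma~\ref{lem:Xbd} --- from $\PP_{\bm}[\om(t)\in W_n^N]$. That device simultaneously handles the adaptivity and sidesteps the near-boundary uniformity issues you correctly identify as the hard part of your route.
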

\noindent
Before proving the above theorem, we make the following remarks and provide a simple corollary that will lead to improved upper bound on the error probability of the \sr algorithm.

\medskip
\noindent
{\it (a) Not a complete LDP.} To upper bound the error probability of a given algorithm, we do not actually need to establish that $\{\hat{\bm}(t)\}_{t\ge 1}$ satisfies a complete LDP. Instead, deriving a LDP upper bound is enough. Theorem \ref{thm:Varadaham bandits} provides such an upper bound, but does not yield a complete LDP. We conjecture if $\{ {\om}(t)\}_{t\ge 1}$ satisfies an LDP with rate function $I$, $\{\hat{\bm}(t)\}_{t\ge 1}$ satisfies an LDP with rate function $\bl\mapsto \inf_{\om\in W}\max\{ \Psi(\bl,\om),I(\om)\}$. The conjecture holds for static sampling algorithms as shown below.  If it holds for adaptive algorithms, we show, in Appendix \ref{app:conjecture}, that when $K>3$, no algorithm can attain the instance-specific lower bound (\ref{eq:lb}) for all parameters.

\noindent
{\it (b) Theorem \ref{thm:Varadaham bandits} is tight for static sampling algorithms.} When the sampling rule is static, namely $\om (t)=\om\in \Sigma$, then $\{\om (t)\}_{t\ge 1}$ satisfies a LDP with rate function $I$  defined as $I(\om)=0$ and $\infty$ elsewhere. Theorem \ref{thm:Varadaham bandits} with $W=\Sigma$ states that $\{\hat{\bm}(t)\}_{t\ge 1}$ satisfies the LDP upper bound (\ref{eq:rate UB}) with rate function $F_{\mathcal{S}}$. In fact, as shown by \citep{glynn2004large}, $\{\hat{\bm}(t)\}_{t\ge 1}$ satisfies a complete LDP with this rate function. 

\noindent
{\it (c) A useful corollary.} From Theorem \ref{thm:Varadaham bandits}, we have: 
$$
\lowlim_{t\rightarrow \infty }  \frac{1}{t} \log \frac{1}{\PP_{\bm}\left[  \hat{\bm}(t)\in \mathcal{S},\om (t)\in W \right] }\ge \inf_{\om\in \cl(W)} F_{\mathcal{S}}(\om).
$$ From there, we will be able to improve the performance guarantee for \sr.

\subsection{Proof of Theorem \ref{thm:Varadaham bandits}}
\begin{proof}
Observe that when ${\cal S}$ or $W$ is empty, the result holds. Now recall that $F_{\mathcal{S}}(\cdot)=\inf_{\bl\in \cl(\mathcal{S})}\Psi (\bl,\cdot)$ is the infimum of a family of linear functions on a compact set, $\Sigma$, hence it is upper bounded. Denote $u>0$ such an upper bound. $F_{\mathcal{S}}(\cdot)$ is also continuous (see Appendix \ref{app:continuity} for details). For each integer $N\in \NN$, we define a collection of closed sets:
    \begin{equation}\label{eq:W}
    W_{n}^N=\left\{ \om \in \cl (W): \frac{u(n-1)}{N}\le F_{\mathcal{S}}(\om )\le \frac{un}{N}\right\},\quad \forall n\in [N].     
    \end{equation}
We observe that:
    \begin{align*}
        \PP_{\bm}[\hat{\bm}(t)\in \mathcal{S},\om (t)\in W]&\le \sum_{n=1}^N \PP_{\bm}[\hat{\bm}(t)\in \mathcal{S},\om (t)\in W_{n}^N]\\
        &\le N \max_{n\in [N]}\PP_{\bm}[\hat{\bm}(t)\in \mathcal{S},\om (t)\in W_{n}^N].
    \end{align*}
Taking the logarithm on both sides and dividing them by $-t$ yields that
\begin{align}\label{neq:varadam1}
   \nonumber \lowlim_{t\rightarrow \infty} \frac{1}{t}\log \frac{1}{\PP_{\bm} \left[ \hat{\bm}(t)\in \mathcal{S},\om (t)\in W \right]  }&\ge 	\lowlim_{t\rightarrow  \infty} \min_{n \in [N]} \frac{1}{t}\log \frac{1}{\PP_{\bm} \left[ \hat{\bm}(t)\in \mathcal{S},\om (t)\in W_n^N\right]  }\\
 \nonumber    &=\min_{n\in [N]} \lowlim_{t\rightarrow \infty} \frac{1}{t}\log \frac{1}{\PP_{\bm} \left[ \hat{\bm}(t)\in \mathcal{S},\om (t)\in W_n^N\right]  }\\
    &\ge \min_{n\in [N]}\max\left\{\frac{u(n-1)}{N},\inf_{\om \in W_n^N }I(\om)\right\},
\end{align}
where the last inequality follows from Lemma \ref{lem:Wn}. Since for all $n\in [N]$, 
$$
\max\left\{\frac{u(n-1)}{N},\inf_{\om\in W_n^N}I(\om)\right\}=\inf_{\om\in W_n^N}\max\left\{\frac{u(n-1)}{N},I(\om)\right\},
$$
the r.h.s. of (\ref{neq:varadam1}) is equal to 
\begin{align*}
    \min_{n\in [N]}\inf_{\om \in W_n^N }\max\left\{\frac{u(n-1)}{N},I(\om)\right\}&\ge  \min_{n\in [N]}\inf_{\om \in W_n^N }\max\left\{F_{\mathcal{S}}(\om),I(\om)\right\}-u/N\\
    &=\inf_{\om \in \cl (W)} \max\{F_{\mathcal{S}}(\om),I(\om)\}-u/N,
\end{align*}
where the first inequality is due to (\ref{eq:W}). As $N$ can be taken arbitrarily large, we conclude this theorem.

\end{proof}
\medskip
\noindent
\begin{lemma}\label{lem:Wn}
For any $N\in \NN,\,n\in [N]$, 
$$
\lowlim_{t\to \infty} \frac{1}{t}\log \frac{1}{\PP_{\bm}[\hat{\bm}(t)\in \mathcal{S}, \om (t)\in W_{n}^N]} \ge \max\left\{ \frac{u(n-1)}{N}, \inf_{\om \in W_{n}^N} I(\om)\right\}
$$
\end{lemma}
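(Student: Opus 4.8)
\textbf{Proof proposal for Lemma \ref{lem:Wn}.}

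The plan is to bound the joint probability $\PP_{\bm}[\hat{\bm}(t)\in \mathcal{S}, \om(t)\in W_n^N]$ by taking a maximum over the two marginal-type events and then invoking two already-available LDP upper bounds: the assumed one for $\{\om(t)\}_{t\ge 1}$ with rate $I$, and a conditional/quenched Cramér-type bound for the empirical rewards given the allocation. First I would write, for any Borel set $W' \subset \Sigma$,
\begin{equation*}
\PP_{\bm}[\hat{\bm}(t)\in \mathcal{S}, \om(t)\in W_n^N] \le \min\Big\{ \PP_{\bm}[\om(t)\in W_n^N],\ \PP_{\bm}[\hat{\bm}(t)\in \mathcal{S},\ \om(t)\in W_n^N] \Big\},
\end{equation*}
so that taking logarithms and $-1/t$ turns the $\min$ on the right into a $\max$ of two liminfs, and it suffices to lower bound each of the two terms by the corresponding member of $\max\{u(n-1)/N,\ \inf_{\om\in W_n^N}I(\om)\}$.

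The first term is immediate: since $W_n^N$ is closed (it is defined in (\ref{eq:W}) as the intersection of $\cl(W)$ with a closed sublevel-type set of the continuous function $F_{\mathcal{S}}$), the assumed LDP upper bound (\ref{eq:rate UB}) for $\{\om(t)\}$ applied to the closed set $C=W_n^N$ gives directly $\lowlim_t \frac1t\log\frac{1}{\PP_{\bm}[\om(t)\in W_n^N]} \ge \inf_{\om\in W_n^N}I(\om)$. For the second term, the key point is that on the event $\{\om(t)\in W_n^N\}$ we have $F_{\mathcal{S}}(\om(t)) \ge u(n-1)/N$ by construction, so I want to show that $\hat{\bm}(t)$ being in $\mathcal{S}$ forces paying at least $F_{\mathcal{S}}$ evaluated at the realized allocation. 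Concretely, I would discretize $\cl(W)$-valued allocations (or use a fine cover of $\Sigma$) and, conditionally on the realized counts $(N_1(t),\dots,N_K(t))=(\lceil \omega_1 t\rceil,\dots)$, apply the standard finite-dimensional Cramér bound: $\PP_{\bm}[\hat{\bm}(t)\in \cl(\mathcal{S}) \mid N(t)=n] \le \exp(-t\,\Psi_{\text{realized}} + o(t))$ with exponent the sum $\sum_k N_k(t) d(\lambda_k,\mu_k)$ minimized over $\bl\in\cl(\mathcal{S})$, i.e. $t\,F_{\mathcal{S}}(\om(t))$ up to $o(t)$. Summing over the finitely many cells of the allocation cover (each contributing only a polynomial-in-$t$, or at worst $\exp(o(t))$, factor, which vanishes after dividing by $t$) and using continuity of $F_{\mathcal{S}}$ to make the cell-wise loss uniformly small, I get $\lowlim_t \frac1t\log\frac{1}{\PP_{\bm}[\hat{\bm}(t)\in\mathcal{S},\om(t)\in W_n^N]} \ge \inf_{\om\in W_n^N}F_{\mathcal{S}}(\om) \ge u(n-1)/N$, where the last inequality is exactly the defining lower constraint in (\ref{eq:W}).

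The main obstacle is the second term: making the ``conditional on the allocation'' Cramér estimate rigorous despite the fact that the allocation process is itself random and correlated with the rewards. The clean way is to condition on the full sequence of arm pulls $A_1,\dots,A_t$ (equivalently on the sigma-algebra generated by the sampling decisions), note that conditionally the rewards of each arm $k$ are still i.i.d.\ $\nu_k$ and independent across arms, so $\hat{\bm}(t)$ given this sigma-algebra is a deterministic-weight average with weights $N_k(t)/t$; then apply the Chernoff/Cramér upper bound uniformly over the (finite, after discretizing to a $1/N'$-net of $\Sigma$) possible weight vectors, and finally take $N'\to\infty$ using uniform continuity of $\om\mapsto F_{\mathcal{S}}(\om)$ on the compact set $\Sigma$. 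One must also check that $\cl(\mathcal{S})$ is bounded (given, since $\mathcal{S}$ is a bounded Borel subset of $\RR^K$) so that the infimum defining $F_{\mathcal{S}}$ is attained and the Cramér exponent is the stated one; boundedness also guarantees the one-parameter-exponential-family rate function $d(\cdot,\mu_k)$ is finite and the estimates are non-degenerate. Everything else — the union bound over $n$, the $\log N/t \to 0$, and passing $N\to\infty$ — is handled in the proof of Theorem \ref{thm:Varadaham bandits} itself and needs no repetition here.
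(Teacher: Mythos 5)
Your overall decomposition is sound: the joint probability is dominated by each of the two events separately, so its rate is at least the maximum of the two rates, and the allocation term follows immediately from the assumed LDP upper bound applied to the closed set $W_n^N$. The genuine gap is in your treatment of the second term. You propose to condition on the $\sigma$-algebra generated by the sampling decisions $A_1,\ldots,A_t$ and assert that, conditionally, the rewards of each arm are still i.i.d.\ $\nu_k$. For an \emph{adaptive} algorithm this is false: $A_s$ is a measurable function of past rewards, so conditioning on the realized pull sequence biases the conditional law of the rewards (e.g., if the algorithm re-pulls arm $1$ only when $\hat\mu_1$ is large, conditioning on that pull skews arm $1$'s observed rewards upward). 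This is exactly the ``intricate dependence'' that makes the adaptive case hard, and as written your key estimate $\PP_{\bm}[\hat{\bm}(t)\in\cl(\mathcal{S})\mid N(t)=n]\le \exp(-tF_{\mathcal{S}}(n/t)+o(t))$ is unjustified. The step is repairable without conditioning: pre-generate a reward table $(X_k(s))_{k,s}$ and observe the deterministic event inclusion $\{\hat{\bm}(t)\in\mathcal{S},\,N(t)=n\}\subseteq\{\bar{X}_n\in\mathcal{S}\}$, where $\bar{X}_n$ averages the first $n_k$ entries of each row; then a union bound over the at most $(t+1)^K$ count vectors with $n/t\in W_n^N$, together with the per-count Chernoff bound for the product exponential family on the compact set $\cl(\mathcal{S})$, yields $u(n-1)/N$ after dividing by $t$. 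With that repair your argument is correct.

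It is worth noting that the paper proves the lemma by an entirely different device that avoids any decomposition over allocations: it works with the self-normalized statistic $X=t\,\Psi(\hat{\bm}(t),\om(t))=\sum_k N_k(t)\,d(\hat\mu_k(t),\mu_k)$, notes that $\hat{\bm}(t)\in\mathcal{S}$ forces $X\ge tF_{\mathcal{S}}(\om(t))\ge tu(n-1)/N$ on $W_n^N$, and then combines Markov's inequality with H\"older's inequality to split off $\E_{\bm}[e^{\alpha pX}]$, which is shown to be $O((\log t)^K)$ via a uniform self-normalized concentration bound (Lemma \ref{lem:Xbd}). That route couples the two rates into $(1-\alpha)\inf_{W_n^N}I+\alpha u(n-1)/N$ for every $\alpha\in(0,1)$ and recovers the maximum at the endpoints; your route, once fixed, obtains the two bounds separately and is arguably more elementary, at the cost of the union bound over count vectors and a uniformity check in the multidimensional Chernoff estimate.
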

\begin{proof}
Recall $F_{\mathcal{S}}(\cdot)=\inf_{\bl\in \cl(\mathcal{S}) } \Psi(\bl,\cdot)$. We deduce that
\begin{equation*}
\PP_{\bm}\left[  \hat{\bm}(t)\in \mathcal{S} ,\om (t)\in W_{n}^N \right]\allowbreak	\le \allowbreak \PP_{\bm}\left[  X  \ge tF_S(\om(t)), \om (t)\in W_{n}^N\right],
\end{equation*}
    where $X$ denotes $t\Psi(\hat{\bm}(t),\om (t))$ for short. Let $\alpha\in (0,1)$, Markov's inequality implies that
\begin{align}\label{neq:LDP2}
\nonumber	\PP_{\bm} \left[ X\ge tF_S(\om) , \om (t)\in W_{n}^N \right]&=\PP_{\bm}\left[ \mathbbm{1}\{ \om (t)\in W_{n}^N\} e^{\alpha (X-tF_S(\om(t)))}\ge 1 \right] \\
\nonumber &\le \E_{\bm} \left[\mathbbm{1}\{\om (t) \in W_{n}^N\} e^{\alpha (X-tF_S(\om(t) ))} \right]\\
&\le \E_{\bm}\left[\mathbbm{1}\{\om (t) \in W_{n}^N\} e^{\alpha X} \right] e^{-\frac{\alpha u(n-1)}{N} } ,
\end{align}
where the last inequality uses the definition of $W_n^N$ (see (\ref{eq:W})). By applying H{\"o}lder's inequality with $p,q$, where $p\in [1,1/\alpha)$ and $q=p/(p-1)$ on r.h.s. of (\ref{neq:LDP2}), we deduce that $\log \PP_{\bm}\left[  \hat{\bm}(t)\in  \mathcal{S} ,\om (t)\in W_{n}^N\right]	$ is at most
\[
 (\log \E_{\bm}\left[e^{\alpha p X}\right])/p+(\log \E_{\bm}[\mathbbm{1}\{\om (t)\in W_{n}^N\} ])/q -\frac{\alpha u(n-1)}{N}.
\]
As $\alpha p\in (0,1)$, Lemma \ref{lem:Xbd} in Appendix \ref{app:tec} shows that the first term above is $o(t)$. Using definition of the rate function, (\ref{eq:rate UB}) with $C = W_{n}^N$, on the second term yields that $\lowlim_{t\rightarrow \infty }   \frac{1}{t} \log \frac{1}{\PP_{\bm}\left[  \hat{\bm}(t)\in  \mathcal{S},\om (t)\in W_{n}^N  \right] }$ is lower bounded by
$$
(1/q)\inf_{\om\in W_{n}^N} I(\om) +\frac{\alpha u(n-1)}{N}= (1-1/p)\inf_{\om \in W_{n}^N}I(\om)+ \frac{\alpha u(n-1)}{N}.
$$
As $p$ can be arbitrarily close to $1/\alpha$, we get the lower bound $(1-\alpha)I(\om)+\frac{\alpha u(n-1)}{N}$. Further choosing $\alpha $ close to either $1$ or $0$, the proof is completed.

\end{proof}

\subsection{Improved analysis of the Successive Rejects algorithm}\label{sec:sr}

In \sr, the set of candidate arms is initialized as $\mathcal{C}_K=[K]$. The budget of samples is partitioned into $K-1$ phases, and at the end of each phase, \sr discards the empirical worst arm from the candidate set. In each phase, \sr uniformly samples the arms in candidate set. The lengths of phases are set as follows. Define $\olog K:=\frac{1}{2}+\sum_{k=2}^K\frac{1}{k}$. The candidate set is denoted by $\mathcal{C}_j$ when it has $j> 2$ arms. In the corresponding phase, (i) each arm in $\mathcal{C}_j$ is sampled until the round $t$ when $\min_{k\in \mathcal{C}_j}N_k(t)$ reaches $T/(j\olog K)$ (recall that $N_k(t)$ is the number of times arm $k$ has been sampled up to round $t$); (ii) the empirical worst arm, denoted by $\ell_j$, is then discarded, i.e., $\mathcal{C}_{j-1}=\mathcal{C}_j\setminus \{\ell_j\}$. During the last phase, the algorithm equally samples the two remaining arms and finally recommends $\hat{\imath}$, the arm with higher empirical mean in $\mathcal{C}_2$. The pseudo code is presented in Algorithm \ref{alg:SR}.
\begin{algorithm}
\SetKwProg{Init}{initialization}{ $\mathcal{C}_K\leftarrow [K], j\leftarrow K$;}{}
	\Init{}{}
	\For{$(t=1,\ldots, T)$}{
		\If{($j>2$ \textrm{and} $\min_{k\in \mathcal{C}_j} N_k(t)\ge \frac{T}{j\olog K}$)}{$\ell_j\leftarrow \argmin_{k\in \mathcal{C}_j} \hat{\mu}_k(t)$ (tie broken arbitrarily), $\mathcal{C}_{j-1}\leftarrow \mathcal{C}_j\setminus \{\ell_j\}$, and $j\leftarrow j-1$	;		
		}
		sample $A_t\leftarrow \argmin_{k\in \mathcal{C}_{j}} N_k(t)$ (tie broken arbitrarily), 
		update $\{N_k(t)\}_{k\in \mathcal{C}_j} $ and $\hat{\bm}(t)$;
	}
	$\ell_2\leftarrow \arg\min_{k\in \mathcal{C}_2}\hat{\mu}_k(T)$ and return $\hat{\imath}\leftarrow \arg\max_{k\in \mathcal{C}_2}\hat{\mu}_k(T)$ (tie broken arbitrarily).
	\captionof{algocf}{\sr}\label{alg:SR}
\end{algorithm}

We apply the corollary (c) in Section \ref{sec:varadaham} to improve the existing performance guarantees of \sr. To simplify the presentation, we assume wlog that $\mu_{1}>\mu_{2 }\ge \ldots\ge \mu_{K}$. For $j=2,\ldots,K$, define
\begin{equation}\label{eq:gamma}
\Gamma_j=\min_{J\in \mathcal{J}}\inf\left\{ \sum_{k\in J}d(\lambda_k,\mu_k):\bl\in \RR^K,\lambda_{1}\le \min_{k\in J} \lambda_k\right\},
\end{equation}
where $\mathcal{J}=\{J\subseteq [K]:\left|J\right|=j,1\in J\}$.

\begin{theorem}\label{thm:sr}
	Let $\bm\in \Lambda$. Under \sr, we have: for $j=2,\ldots,K$,
$\lowlim_{T\rightarrow \infty}\frac{1}{T}\log \frac{1}{\PP_{\bm}\left[ \ell_j=1\right]}\ge \frac{\Gamma_j}{j\olog K}$.\\
Hence, the error probability of \sr is upper bounded by $\lowlim_{T\rightarrow \infty}\frac{1}{T}\log \frac{1}{\PP_{\bm}\left[ \hat{\imath}\neq 1 \right]}\ge \min_{j\neq 1}  {\Gamma_j\over j\olog K}$.
\end{theorem}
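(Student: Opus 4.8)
\textbf{Proof proposal for Theorem \ref{thm:sr}.}

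The plan is to analyze each phase of \sr separately and apply corollary (c) from Section \ref{sec:varadaham}, then combine the per-phase bounds via a union bound. Fix $j\in\{2,\ldots,K\}$. The event $\{\ell_j=1\}$ means that the best arm $1$ is the one discarded at the end of the phase with candidate set $\mathcal{C}_j$ of size $j$. For this to happen, arm $1$ must be in $\mathcal{C}_j$ and have the smallest empirical mean among the $j$ arms in $\mathcal{C}_j$ at the round when the phase ends. Since $\mathcal{C}_j$ is a random set, I would first decompose over the (finitely many) possible realizations $J\in\mathcal{J}=\{J\subseteq[K]:|J|=j,\ 1\in J\}$ of the candidate set: $\PP_{\bm}[\ell_j=1]\le\sum_{J\in\mathcal{J}}\PP_{\bm}[\mathcal{C}_j=J,\ \ell_j=1]$, and since there are finitely many such $J$, it suffices to bound each term and take the worst, i.e. $\liminf_T \frac1T\log\frac1{\PP_{\bm}[\ell_j=1]}\ge\min_{J\in\mathcal{J}}\liminf_T\frac1T\log\frac1{\PP_{\bm}[\mathcal{C}_j=J,\ \ell_j=1]}$.

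Next I would express each term in terms of the process $\{(\om(t),\hat{\bm}(t))\}$ at the (data-dependent) stopping round $\tau_j$ ending phase $j$. On the event $\{\mathcal{C}_j=J\}$, uniform sampling within $J$ combined with the stopping rule $\min_{k\in J}N_k(\tau_j)\ge T/(j\olog K)$ forces $N_k(\tau_j)\approx T/(j\olog K)$ for all $k\in J$ and $N_k(\tau_j)$ already fixed for $k\notin J$ from earlier phases; in particular $\tau_j\approx$ (a deterministic multiple of $T$ depending on $J$), and the realized allocation $\om(\tau_j)$ is pinned down, up to lower-order terms, to put mass proportional to $1/(j\olog K)$ on each arm of $J$. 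The event $\{\ell_j=1\}$ then says $\hat\mu_1(\tau_j)\le\min_{k\in J}\hat\mu_k(\tau_j)$, which defines a closed set $\mathcal{S}_J\subseteq\RR^K$ (a set in terms of the empirical means of arms in $J$). I would thus bound $\PP_{\bm}[\mathcal{C}_j=J,\ \ell_j=1]$ by a probability of the form $\PP_{\bm}[\hat{\bm}(\tau_j)\in\mathcal{S}_J,\ \om(\tau_j)\in W_J]$ where $W_J$ is a small neighborhood of the pinned-down allocation, and apply corollary (c): the rate is at least $\inf_{\om\in\cl(W_J)}F_{\mathcal{S}_J}(\om)=\inf_{\om\in\cl(W_J)}\inf_{\bl\in\cl(\mathcal{S}_J)}\sum_k\omega_k d(\lambda_k,\mu_k)$. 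Because on $W_J$ the relevant $\omega_k$ equal $\frac{1}{j\olog K}$ for $k\in J$ (the coordinates outside $J$ contribute nothing since $\mathcal{S}_J$ only constrains arms in $J$ and one can take $\lambda_k=\mu_k$ there), this infimum is exactly $\frac{1}{j\olog K}\inf\{\sum_{k\in J}d(\lambda_k,\mu_k):\lambda_1\le\min_{k\in J}\lambda_k\}$. Minimizing over $J\in\mathcal{J}$ gives precisely $\Gamma_j/(j\olog K)$ by the definition \eqref{eq:gamma}. Finally, since $\hat{\imath}\neq 1$ implies arm $1$ was rejected in some phase, i.e. $\{\hat{\imath}\neq1\}\subseteq\bigcup_{j=2}^K\{\ell_j=1\}$, a union bound over the $K-1$ phases yields $\liminf_T\frac1T\log\frac1{\PP_{\bm}[\hat{\imath}\neq1]}\ge\min_{j\neq1}\frac{\Gamma_j}{j\olog K}$.

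The main obstacle I anticipate is the handling of the \emph{random stopping round} $\tau_j$ and the fact that $\tau_j$, $\om(\tau_j)$, and $\mathcal{C}_j$ are all correlated random objects, whereas corollary (c) is stated for the process indexed by a deterministic time $t$. I would address this by conditioning on $\{\mathcal{C}_j=J\}$ (finitely many cases) and observing that on this event $\tau_j$ is sandwiched between two deterministic times $c_J T(1\pm o(1))$ because the sampling within each phase is round-robin, so a sup/inf over this deterministic window of times transfers the LDP bound; alternatively one replaces $\tau_j$ by the deterministic round $\lceil c_J T\rceil$ at the cost of $O(1)$ extra samples that do not affect the exponential rate. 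A secondary technical point is verifying that the sets $\mathcal{S}_J$ and $W_J$ are of the kind allowed in corollary (c) ($\mathcal{S}_J$ bounded or intersected with a large ball — harmless since $d(\cdot,\mu_k)\to\infty$ at the boundary of the parameter set — and $W_J$ Borel in $\Sigma$), and that shrinking $W_J$ to the exact allocation does not lose anything in the rate, which follows from continuity of $F_{\mathcal{S}_J}$.
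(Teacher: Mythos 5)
Your proposal is correct and follows essentially the same route as the paper's proof: decompose over the realizations $J\in\mathcal{J}$ of the candidate set, observe that the event $\{\ell_j=1,\mathcal{C}_j=J\}$ pins down the allocation (mass $1/(j\olog K)$ of the budget on each arm of $J$) and forces $\hat{\bm}$ into the closed set $\{\lambda_1\le\min_{k\in J}\lambda_k\}$, apply corollary (c), drop the coordinates outside $J$, and minimize over $J$ to recover $\Gamma_j/(j\olog K)$. The one obstacle you flag — the random stopping round $\tau_j$ — is in fact a non-issue for \sr: since sampling within each phase is round-robin and the per-arm quotas $T/(j\olog K)$ are fixed, the round ending the phase with $j$ candidates is the deterministic $\theta T$ with $\theta=(1+\sum_{k=j+1}^K\frac1k)/\olog K$ (independent even of $J$), which is exactly how the paper applies the corollary.
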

\noindent
{\it Proof of Theorem \ref{thm:sr}.} 
	Fix $j\in \{2,\ldots,K\}$. Observe that 
	\begin{equation*}
	\PP_{\bm}\left[ \ell_j=1\right]=\sum_{J\in \mathcal{J}} \PP_{\bm}\left[ \ell_j=1,\mathcal{C}_j=J\right]\le \left|\mathcal{J}\right|\max_{J\in \mathcal{J}}\PP_{\bm}\left[ \ell_j=1,\mathcal{C}_j=J\right],
	\end{equation*}
	which implies that 
 \begin{equation}\label{eq:sr}
     \lowlim_{T\rightarrow \infty}\frac{1}{T}\log \frac{1}{\PP_{\bm}\left[ \ell_j=1\right]}\ge \min_{J\in \mathcal{J}}\lowlim_{T\rightarrow \infty}\frac{1}{T}\log \frac{1}{\PP_{\bm}\left[ \ell_j=1,\,\mathcal{C}_j=J\right]}
 \end{equation}
  as $\left|\mathcal{J}\right|<\infty$.\\
	\noindent
	Since $\ell_j$ is selected at the $\theta T$-th round\footnote{To simplify the presentation, we ignore cases where $\theta T$ is not an integer. Refer to Appendix \ref{app:allocation} for details.} where $\theta=(1+\sum_{k=j+1}^K\frac{1}{k})/\olog K$, the event $\{\ell_j=1,\,\mathcal{C}_j=J\}$ implies that  
	$\{\hat{\bm}(\theta T)\in \mathcal{S},\,\om (\theta T)\in W \}$, where 
	$$
	\mathcal{S}=\left\{\bl\in \RR^K: \lambda_{1}\le\lambda_k,\forall k\in J\right\}	 \hbox{ and } W=\left\{\om\in \Sigma:\omega_k=\frac{1}{\theta j\olog K},\forall k\in J \right\}.
	$$
	In other words, $\PP_{\bm}[\ell_j=1,\,\mathcal{C}_j=J]\le \PP_{\bm}[  \hat{\bm}(\theta T)\in \mathcal{S},\,\om(\theta T)\in W]$. Applying (c) in Section \ref{sec:varadaham} with the above $\mathcal{S}$ and $W$ yields that $\lowlim_{T\rightarrow \infty}\frac{1}{\theta T}\log \frac{1}{\PP_{\bm}\left[ \hat{\bm}(\theta T)\in \mathcal{S},\,\om(\theta T)\in W \right]}$ is larger than
	\begin{align}\label{eq:sr1}
	\inf_{\om \in W}\inf_{\bl\in \cl (\mathcal{S})}  \Psi(\bl,\om)  \ge \frac{1}{\theta j\olog K }\inf\left\{ \sum_{k\in J}d(\lambda_k,\mu_k):\bl\in\RR^K, \lambda_{1} \le\min_{k\in J}\lambda_k\right\} 	\ge  \frac{\Gamma_j}{\theta j\olog K},
	\end{align}
where the first inequality uses the fact that KL-divergences and the components of $\om$ are nonnegative, and the second one is due to the definition (\ref{eq:gamma}) of $\Gamma_j$. Combining (\ref{eq:sr}) and (\ref{eq:sr1}) completes the proof.\eproof

\medskip
\noindent
The upper bound derived in Theorem \ref{thm:sr} is tighter than those recently derived in \citep{barrier2022best}. Indeed, for any $J\in \mathcal{J}$, since $\left|J\right|=j$, one can find at least one index in $J$ at least larger than $j$, say $k_J$. Hence, 
$$
\Gamma_j\ge \min_{J\in \mathcal{J}}\inf_{\bl\in \RR^K,\lambda_1\le \lambda_{k_J}}  d(\lambda_1,\mu_1)+d(\lambda_{k_J},\mu_{k_J} )\ge  \inf_{\bl\in \RR^K,\lambda_1\le \lambda_{j}}  d(\lambda_1,\mu_1)+d(\lambda_{j},\mu_{j} ).
$$
The r.h.s. in the previous inequality corresponds to the upper bounds derived by \citep{barrier2022best}.

To simplify the presentation and avoid rather intricate computations involving the KL-divergences, in the remaining of the paper, we restrict our attention to specific classes of reward distributions.
\begin{assumption}\label{ass1}
The rewards are bounded with values in $(0,1)$. The reward distributions $\nu_1,\ldots,\nu_K$ are Bernoulli distributions such that $\nu_a$ is of mean $a$, and for any $a\neq b$, $d(a,b)\ge 2(a-b)^2$ (this is a consequence of Pinsker's inequality as rewards are in $(0,1)$).
\end{assumption}

Under Assumption \ref{ass1}, we have $\Gamma_j\ge 2\xi_j$ (a direct consequence of Proposition \ref{prop:larger C} in Appendix \ref{app:opt C}), where for $j=2,\ldots,K$,
$$
\xi_j=\inf\left\{ \sum_{k=1}^j(\lambda_k-\mu_k)^2:\bl\in [0,1]^j,\,\lambda_{1}\le \min_{k=1,\ldots,j} \lambda_k\right\}.
$$
We give an explicit expression of $\xi_j$
in Proposition \ref{prop:xi}, presented in Appendix \ref{app:opt C}. Moreover, $2\xi_j$ is clearly larger than 
$
2\inf_{\lambda_1\le \lambda_j}\left\{ (\lambda_1-\mu_1)^2+(\lambda_j-\mu_j)^2\right\}=(\mu_1-\mu_j)^2,
$
and hence $\min_{j\neq 1}  \Gamma_j/(j\olog K)\ge \min_{j\neq 1}  (\mu_1-\mu_j)^2/(j\olog K)$. This implies that our error probability upper bound is better than that derived in \citep{audibert2010best}.

\noindent
{\bf Example 1.} To illustrate the improvement brought by Theorem \ref{thm:sr} on the performance guarantees of \sr, consider the simple example with 3 Bernoulli arms and $\bm = (0.9,0.1,0.1)$. Then $\min_{j\neq 1}  (\mu_1-\mu_j)^2/(j\olog 3)=0.16$ for the upper bound presented in \citep{audibert2010best}. From Proposition \ref{prop:xi}, instead we get $ \min_{j\neq 1}2\xi_j/(j\olog 3)=0.21$.

\section{Continuous Rejects Algorithms}\label{sec:cr}

In this section, we present \sred, a truly adaptive algorithm that can discard an arm in {\it any} round. We propose two variants of the algorithm, \crc using a conservative criterion to discard arms and \cra discarding arms more aggressively. Using the Large Deviation results of Theorem \ref{thm:Varadaham bandits}, we establish error probability upper bounds for both \crc and \cra.


\subsection{The \crc and \cra algorithms}

As \sr, \sred initializes its candidate set as $\mathcal{C}_K=[K]$. For $j\ge 2$, ${\cal C}_j$ denotes the candidate set when it is reduced to $j$ arms. When $j>2$, the algorithm samples arms in the candidate set ${\cal C}_j$ uniformly until a {\it discarding condition} is met. The algorithm then discards the empirically worst arm $\ell_j\in {\cal C}_j$, i.e., $\mathcal{C}_{j-1}\leftarrow \mathcal{C}_j\setminus \{\ell_j\}$. More precisely, in round $t$, if there are $j$ candidate arms remaining and if $\ell(t)$ denotes the empirically worst candidate arm, the discarding condition is $N_{\ell(t)}(t)>\max_{k\notin \mathcal{C}_j}N_k(t),\, (\forall k\in \mathcal{C}_j,\,N_{\ell(t)}(t)=N_{k}(t))$\footnote{This condition is only imposed to simplify our analysis. It may be removed.}, and
\begin{align}
\hbox{for \crc:} \qquad &	\min_{k\in \mathcal{C}_j,k\neq \ell (t)}\hat{\mu}_k(t)-\hat{\mu}_{\ell(t)}(t)\ge G\left(\frac{\sum_{k\in \mathcal{C}_j}N_k(t)\olog j}{T-\sum_{k\notin \mathcal{C}_j}N_k(t) }\right), \label{C}\\
\hbox{for \cra:} \qquad & \frac{\sum_{k\in \mathcal{C}_j,k\neq \ell(t)}\hat{\mu}_k(t)}{j-1}-\hat{\mu}_{\ell(t)}(t)   \ge G\left(\frac{\sum_{k\in \mathcal{C}_j}N_k(t)\olog j}{T-\sum_{k\notin \mathcal{C}_j}N_k(t) }\right),\label{A}
\end{align} 
where $G(\beta)=1/\sqrt{\beta}-1$ for all $\beta> 0$. The idea behind (\ref{C}) is to keep the probability of discarding the best arm at most smaller than that of \sr while using less budget. Note that \eqref{A} is easier to achieve than \eqref{C}. \cra is hence more aggressive than \crc, and reduces the set of arms to $\mathcal{C}_2$ faster, but at the expense of a higher risk. After discarding $\ell_3$, \sred will sample the arms in $\mathcal{C}_2$ evenly, and recommend the empirical best arm in $\mathcal{C}_2$ when the budget is exhausted. The pseudo-code of \sred is presented in Algorithm \ref{alg:crc}.  

\begin{algorithm}
	\SetAlgoLined
{\bf Input:} $\theta_0\in (0,\frac{1}{\olog K})\cap \QQ$ independent of $T$ (can be chosen as small as one wishes)\\
 	\SetKwProg{Init}{initialization}{}{}
	\Init{}{
		$\mathcal{C}_K\leftarrow [K],\, j\leftarrow K$, sample each arm $k\in [K]$ once,  update $\{N_k(t)\}_{k\in \mathcal{C}_K} $ and $\hat{\bm}(t)$;
	}
        \For{$t=K+1,\ldots, \lfloor \theta_0 T\rfloor$}{
        sample $A_t\leftarrow \argmin_{k\in \mathcal{C}_{j}} N_k(t)$ {(tie broken arbitrarily)}, update $ \{N_k(t)\}_{k\in \mathcal{C}_j} $ and $\hat{\bm}(t)$;
        }
	\For{ $(t=\lfloor \theta_0T\rfloor+1,\ldots, T)$}{
		$\ell(t) \leftarrow  \argmin_{k\in \mathcal{C}_j}\hat{\mu}_k(t)$ (tie broken arbitrarily);\\
		\uIf{$j>2$, $N_{\ell(t)}(t)>\max_{k\notin\mathcal{C}_j}N_k(t)$, $(\forall k\in \mathcal{C}_j,\,N_{\ell(t)}(t)=N_{k}(t))$,\\
			 and \eqref{C} (resp. \eqref{A}) holds for \crc (resp. \cra)}{
			$\ell_j\leftarrow \ell(t),\,\mathcal{C}_{j-1}\leftarrow \mathcal{C}_j\setminus \{\ell_j\} ,\, j\leftarrow j-1$
		}
		sample $A_t\leftarrow \argmin_{k\in \mathcal{C}_{j}} N_k(t)$ {(tie broken arbitrarily)}, update $ \{N_k(t)\}_{k\in \mathcal{C}_j} $ and $\hat{\bm}(t)$;
	}
	$\ell_2\leftarrow \ell(T)$; return $\hat{\imath}\leftarrow \argmax_{k\in \mathcal{C}_2} \hat{\mu}_k(T) $ {(tie broken arbitrarily)}.
	\captionof{algocf}{\crc and \cra}\label{alg:crc}
\end{algorithm}

\subsection{Analysis of \crc and \cra}\label{sec:anal cr}
As in Section \ref{sec:sr}, $\mu_1>\mu_2\ge\ldots\ge \mu_K$ is assumed wlog and we further define $\mu_{K+1}=0$. We introduce the following instance-specific quantities needed to state our error probability upper bounds. For $ j\in \{2,\ldots, K\}$, define
$$
\psi_j=\frac{j-1}{j}\left(\mu_{1}-\frac{\sum_{k=2}^j\mu_{k}}{j-1}\right)^2,\quad \bpsi_j=\frac{j-1}{j}\left(\mu_1-\frac{\sum_{k=2}^{j-1}\mu_{k}+\mu_{j+1}}{j-1}\right)^2,\quad \zeta_j=\mu_{j}-\mu_{j+1},
$$
$$
\varphi_j=\frac{\sum_{k=1}^j\mu_{k}}{j}-\mu_{j+1},\quad \text{and} \quad \bxi_j=\inf \left\{   \sum_{k=1}^K(\lambda_k-\mu_k)^2: \bl\in [0,1]^K,\,   \lambda_1\le \min_{k=2,\ldots,j-1,j+1 }\lambda_k  \right\}.
$$
 Here we remark $\bxi_j\ge \xi_j$ and $\bpsi_j\ge \psi_j$. These inequalities are proven in Proposition \ref{prop:larger C} and Proposition \ref{prop:larger A} in Appendix \ref{app:opt}.

\begin{theorem}\label{thm:crc}
	Let $\bm\in  [0,1]^K$. Under \crc, $\lowlim_{T\rightarrow\infty}\frac{1}{T}\log \frac{1}{\PP_{\bm}\left[\hat{\imath}\neq 1\right]}$ is larger than
		$$
2\min_{j=2,\ldots,K}\left\{ \frac{\min\left\{\max \left\{ \frac{\xi_j\olog (j+1)(1-\alpha_{j})\mathbbm{1}_{\{j\neq K\}}}{\olog j},\xi_j\right\},  \bar{\xi}_j\right\}}{j\olog K}\right\},
	$$
	where $\alpha_{j}\in \RR$ is the real number such that 
	$	\frac{2\xi_j\left(1-   \alpha_j \right)}{j\olog j}= [(  (1+\zeta_j)\sqrt{\alpha_j }-\sqrt{\frac{1}{(j+1)\olog (j+1)}}    )_+]^2. 	
	$
\end{theorem}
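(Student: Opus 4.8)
\textbf{Proof proposal for Theorem \ref{thm:crc}.}

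The plan is to decompose the error event $\{\hat\imath\neq 1\}$ along the sequence of eliminations, exactly as in the proof of Theorem \ref{thm:sr}. The best arm is misidentified iff it is discarded in some phase, i.e. $\{\hat\imath\neq1\}=\bigcup_{j=2}^K\{\ell_j=1\}$, so it suffices to lower bound $\lowlim_T\frac1T\log\frac1{\PP_{\bm}[\ell_j=1]}$ for each $j$ and take the minimum. Fix $j$ and condition on the candidate set $\mathcal{C}_j=J$ and on the ``stopping'' round $t=\tau T$ at which $\ell_j$ is discarded; since the discarding time is now \emph{random} (unlike \sr), the first obstacle is that $\tau$ is not a fixed fraction of $T$. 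I would handle this by a union bound over a finite grid of possible values of $\tau\in[\theta_0,1]$ (or over the number of samples collected so far), so that on each grid cell the event $\{\ell_j=1,\mathcal{C}_j=J\}$ at a deterministic round $t$ implies $\{\hat{\bm}(t)\in\mathcal{S},\ \om(t)\in W\}$ for the appropriate $\mathcal{S}$ (the set where arm $1$ is empirically worst in $J$, possibly relaxed by the \crc margin $G(\cdot)$) and $W$ (the allocation set forced by the uniform-within-candidate sampling rule together with the value of $\tau$). Then apply corollary (c) of Section \ref{sec:varadaham}, i.e. $\lowlim_T\frac1t\log\frac1{\PP[\hat{\bm}(t)\in\mathcal{S},\om(t)\in W]}\ge\inf_{\om\in\cl(W)}F_{\mathcal{S}}(\om)$, and optimize the resulting bound over the grid.

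The heart of the argument is the rate-budget tradeoff encoded in the constant $\alpha_j$. For a given phase, if $\ell_j$ is discarded ``early'' the allocation to the arms in $J$ is small, which \emph{hurts} the rate $\Psi(\bl,\om)=\sum_k\omega_k d(\lambda_k,\mu_k)$; but the \crc condition \eqref{C} requires an empirical gap of size $G(\beta)$ with $\beta=\tfrac{(\sum_{k\in\mathcal C_j}N_k)\olog j}{T-\sum_{k\notin\mathcal C_j}N_k}$, so an early discard forces a \emph{larger} deviation $\bl$ away from $\bm$, which \emph{helps} the rate. Writing $\alpha_j$ for the fraction of the ``nominal'' \sr budget $\tfrac{1}{j\olog K}$ actually used when arm $1$ is discarded, these two effects give, after invoking Pinsker ($d\ge2(\cdot)^2$, Assumption \ref{ass1}) and the explicit optimization in Proposition \ref{prop:xi}, a rate of order $\tfrac{2\xi_j(1-\alpha_j)}{j\olog j}$ from the gap term balanced against the term $[((1+\zeta_j)\sqrt{\alpha_j}-\sqrt{1/((j+1)\olog(j+1))})_+]^2$ coming from the threshold $G$; setting these equal defines $\alpha_j$ and the minimum over the two regimes of $\alpha_j$ produces the $\max\{\xi_j\olog(j+1)(1-\alpha_j)\mathbbm1_{\{j\neq K\}}/\olog j,\ \xi_j\}$ inside the statement. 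The outer $\min\{\cdot,\bar\xi_j\}$ accounts for the alternative failure mode in which arm $1$ is never discarded in phase $j$ but is discarded later (or is the empirically worse of the final two arms), which is governed by the relaxed constraint set defining $\bxi_j$ (arm $1$ need only be empirically worst among the $j-1$ surviving competitors plus the freshly eliminated $\mu_{j+1}$), and whose rate $\inf\{\sum(\lambda_k-\mu_k)^2:\lambda_1\le\min_{k\neq j}\lambda_k\}=\bxi_j$ comes again from corollary (c) with no budget penalty since by that time $N_k\ge T/(j\olog K)$ for all surviving arms.

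The step I expect to be the main obstacle is making the grid/union-bound over the random discard time rigorous while keeping the exponents tight: one must check that the discretization error vanishes (take the grid mesh $\to0$ after $T\to\infty$, as in the $u/N$ argument of Theorem \ref{thm:Varadaham bandits}), that the allocation set $W$ on each cell is correctly described (this uses the algorithm's invariant that arms in $\mathcal C_j$ are sampled in round-robin and that $N_{\ell(t)}(t)=N_k(t)$ at the discard round), and that the constraint ``$\lambda_1$ small'' produced by condition \eqref{C} is correctly translated into the closed set $\mathcal S$ whose $F_{\mathcal S}$ feeds the Large Deviation bound. The remaining work — computing $\inf_{\om\in\cl(W)}F_{\mathcal S}(\om)$ in closed form, verifying $\bxi_j\ge\xi_j$ and $\bpsi_j\ge\psi_j$ (Propositions \ref{prop:larger C}, \ref{prop:larger A}), and checking that $\alpha_j$ is well-defined as the unique root of the stated scalar equation (monotonicity of both sides in $\alpha_j$) — is routine and can be deferred to the appendix.
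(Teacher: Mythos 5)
Your overall architecture matches the paper's: decompose over the phase $j$ and the candidate set $J$, union-bound over the random discard round, discretize the time axis on a grid whose mesh is sent to zero after $T\to\infty$, and feed the resulting events $\{\hat{\bm}(t)\in\mathcal{S},\om(t)\in W\}$ into the Large Deviation machinery of Section \ref{sec:varadaham}. However, there is a genuine gap at the heart of the argument: you invoke only corollary (c), i.e.\ the bound $\inf_{\om\in\cl(W)}F_{\mathcal{S}}(\om)$, which discards the allocation rate function entirely. With that bound alone, the same-phase budget-vs-gap tradeoff you describe is already fully absorbed by the design of $G$ (Proposition \ref{prop:crc G} shows $\beta\inf\{\sum_k(\lambda_k-\mu_k)^2:\lambda_1\le\min_k\lambda_k-G(\beta)\}\ge\xi_j$ \emph{uniformly} in $\beta$), so you would only ever obtain $\min\{\xi_j,\bxi_j\}/(j\olog K)$-type rates and never the term $\frac{\xi_j\olog(j+1)(1-\alpha_j)}{\olog j}$ in the inner max. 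That term comes from the full statement of Theorem \ref{thm:Varadaham bandits}, $\inf_{\om}\max\{F_{\mathcal{S}}(\om),I_\theta(\om)\}$, where $I_\theta$ is a \emph{nontrivial} LDP upper-bound rate function for the sampling process $\{\om(\theta T)\}_{T\ge1}$ under \crc. The quantity $\alpha_j$ is not ``the fraction of the nominal \sr budget used in phase $j$'' but (up to normalization) the proportion of budget received by the arm $\sigma(j+1)$ discarded at the end of the \emph{previous} phase; $F_{\mathcal{S}}$ decreases in this proportion while $I_\theta$ increases in it, because a late previous discard is itself a large-deviation event for the rewards at that earlier random time. Establishing this LDP for $\{\om(\theta T)\}$ — showing $I_\theta=\infty$ on the allocations that round-robin sampling forbids, and lower-bounding $I_\theta$ by $\underline{I}_\theta(\zz)=\frac{\olog(j+1)}{\theta\olog K}[((1+\zeta_j)\sqrt{\alpha}-\sqrt{1/((j+1)\olog(j+1))})_+]^2$ on the rest (Theorems \ref{thm:crc main rate}, \ref{thm:Cj} and Corollary \ref{cor:cj rate}) — is the main technical content of the proof and is absent from your proposal.

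A secondary inaccuracy: $\bxi_j$ does not account for ``arm $1$ surviving phase $j$ and being discarded later'' (that event is covered by the $j'$-th term of the outer minimum for $j'<j$). It arises from the case where arm $1$ \emph{is} discarded at stage $j$ but the surviving candidate set $J$ differs from $[j]$; since the competitors then include an arm of index at least $j+1$, Proposition \ref{prop:larger C} shows the required reward deviation is at least $\bxi_j\ge\xi_j$, and in this case no allocation rate function is needed.
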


\begin{theorem}\label{thm:cra}
Let $\bm\in  [0,1]^K$. Under \cra, $\lowlim_{T\rightarrow\infty}\frac{1}{T}\log \frac{1}{\PP_{\bm}\left[\hat{\imath}\neq 1\right]}$ is larger than
$$
2\min_{j=2,\ldots,K}\left\{ \frac{\min\{\max \{ \frac{\psi_j\olog (j+1)(1-\alpha_{j}) \mathbbm{1}_{\{j\neq K\}} }{\olog j},\psi_j\},  \bar{\psi}_j\}}{j\olog K}\right\}, 
$$
where $\alpha_{j}\in \RR$ is the real number such that 
$
\frac{\psi_j\left(1-   \alpha_j \right)}{j\olog j}= \frac{j}{j+1}[(  (1+\varphi_j)\sqrt{\alpha_j }-\sqrt{\frac{1}{(j+1)\olog (j+1)}}    )_+]^2.$
\end{theorem}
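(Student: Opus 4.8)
\noindent\textbf{Proof proposal for Theorem~\ref{thm:cra}.}
The plan is to follow the skeleton of the proof of Theorem~\ref{thm:sr}, the new difficulty being that under \cra the round at which an arm is discarded is random. As before, $\{\hat\imath\neq1\}$ forces $\ell_j=1$ for some phase $j$, so $\PP_{\bm}[\hat\imath\neq1]\le\sum_{j=2}^K\PP_{\bm}[\ell_j=1]$, and it suffices to show for each $j$ that $\lowlim_{T\to\infty}\tfrac1T\log\tfrac1{\PP_{\bm}[\ell_j=1]}$ is at least the $j$-th term of the displayed minimum. Fix $j$. I would split $\PP_{\bm}[\ell_j=1]$ over the finitely many candidate sets $J\ni1$ with $|J|=j$, and over the random discard round $\tau_j$; the latter is handled by discretising the \emph{progress profile} at round $\tau_j$, i.e.\ the rescaled count vector $(N_k(\tau_j)/T)_k$ --- the arms of $\mathcal{C}_j$ share a common value $a=N_k(\tau_j)/\tau_j$, the already-eliminated arms are frozen at profile-prescribed values, and $\tau_j$ is then pinned up to $o(T)$. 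The number of cells is sub-exponential in $T$, and the burn-in $\theta_0$ confines the profile to a compact set away from the origin, so neither affects the exponent; one lets the mesh (and, if needed, $\theta_0$) tend to $0$ at the end.

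For a fixed cell, $\{\ell_j=1,\ \mathcal{C}_j=J,\ \text{profile}\}\subseteq\{\hat{\bm}(\tau_j)\in\mathcal{S},\ \om(\tau_j)\in W\}$, where $W$ is the set of allocations compatible with the profile and $\mathcal{S}$ encodes ``arm $1$ is the empirical argmin of $J$ and \eqref{A} holds'', i.e.\ $\lambda_1\le\lambda_k$ for $k\in J$ together with $\tfrac1{j-1}\sum_{k\in J\setminus\{1\}}\lambda_k-\lambda_1\ge G(\beta)$, $G(\beta)=1/\sqrt\beta-1$, and $\beta$ the profile-determined value of $\tfrac{j\,a\,\tau_j\,\olog j}{T-\sum_{k\notin J}N_k(\tau_j)}$. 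Theorem~\ref{thm:Varadaham bandits} then gives, for this cell, a rate at least $\inf_{\om\in\cl W}\max\{F_{\mathcal{S}}(\om),I(\om)\}$, where $I$ is a lower bound --- to be established separately --- on the rate function of $\{\om(t)\}$ under \cra. Using Assumption~\ref{ass1} ($d(x,y)\ge2(x-y)^2$) and a one-line Lagrange computation for the quadratic program underlying $F_{\mathcal{S}}$ --- relaxing the argmin constraints to $\lambda_1\le\tfrac1{j-1}\sum_{k\in J\setminus\{1\}}\lambda_k$, which keeps the bound strictly positive --- one obtains $F_{\mathcal{S}}(\om)\ge 2a\cdot\tfrac{j-1}{j}\big(\max\{G(\beta),0\}+\mu_1-\tfrac1{j-1}\sum_{k\in J\setminus\{1\}}\mu_k\big)^2$. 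Minimising the mean-gap term over $J$, the worst set is $J=\{1,\ldots,j\}$, for which $\tfrac{j-1}{j}\big(\mu_1-\tfrac1{j-1}\sum_{k=2}^j\mu_k\big)^2=\psi_j$; the competing configuration $J=\{1,\ldots,j-1,j+1\}$, in which the arm removed in phase $j+1$ is not arm $j+1$, contributes $\bar\psi_j$, whence the outer $\min\{\,\cdot\,,\bar\psi_j\}$.

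Multiplying by $\tau_j/T$ (note $a\,\tau_j/T$ equals the rescaled active count, call it $\alpha$) and optimising over the cell separates into two regimes. Along an \sr-type schedule with no shortened phase, $I$ vanishes and $\beta$ can reach $1$; the $F_{\mathcal{S}}$-contribution is then smallest at $\beta=1$, where it equals $\tfrac{2\psi_j}{j\olog K}$. When $j\neq K$, nature may instead terminate phase $j+1$ prematurely --- discarding $\ell_{j+1}$ before the empirical average-gap $\tfrac1j\sum_{k=1}^j\hat\mu_k-\hat\mu_{j+1}$ has settled near its mean $\varphi_j$ --- which lowers $\alpha$, hence the $2a(\cdots)$ prefactor, at the price of a positive $I(\om)$. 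The same Pinsker $+$ Lagrange computation applied to the phase-$(j+1)$ test statistic (conservatively ignoring the budget consumed by phases $>j+1$) yields $I(\om)\ge\tfrac{j}{j+1}\big[\big((1+\varphi_j)\sqrt{\alpha_j}-\sqrt{1/((j+1)\olog(j+1))}\big)_+\big]^2$ for the corresponding early-stopping level $\alpha_j$, while one checks that the matching $F_{\mathcal{S}}$-side equals $\tfrac{\psi_j(1-\alpha_j)}{j\olog j}$; hence $\min_{\om}\max\{F_{\mathcal{S}},I\}$ is attained where these two agree --- exactly the fixed-point equation defining $\alpha_j$ --- with common value (after the $2/(j\olog K)$ normalisation) $\tfrac{2}{j\olog K}\cdot\tfrac{\psi_j\olog(j+1)(1-\alpha_j)}{\olog j}$. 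The two regimes combine through the outer $\max$; by the same but simpler reasoning, the $J=\{1,\ldots,j-1,j+1\}$ configuration caps this at $\tfrac{2\bar\psi_j}{j\olog K}$. Taking the minimum over $j$ and pulling out the factor $2$ gives the statement; Theorem~\ref{thm:crc} is proved identically, using \eqref{C} in place of \eqref{A}, retaining the full argmin constraints, and replacing $\psi_j,\bar\psi_j,\varphi_j$ by $\xi_j,\bar\xi_j$ and the analogous phase-$(j+1)$ quantity.

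The crux is the auxiliary LDP lower bound for $\{\om(t)\}$ under \cra --- the hypothesis ``$I$'' of Theorem~\ref{thm:Varadaham bandits} --- with exactly the right exponent: one must re-run the Cram\'er / G\"artner--Ellis estimates through the algorithm's data-dependent stopping rules and then propagate all the normalising constants (the $\olog j$ versus $\olog(j+1)$ ratio, the $(j+1)\olog(j+1)$ inside the fixed point, and the corrections from budget already spent on eliminated arms) so that the stationarity of the phase-$(j+1)$/phase-$j$ trade-off reproduces the stated equation for $\alpha_j$. A secondary, more routine nuisance is the bookkeeping of open versus closed sets and of the mesh so that $\lowlim$ commutes with the finite union bounds, together with verifying that the worst cell never sits on the $\theta_0$-boundary, so that the $\theta_0$-free bound in the statement is legitimate.
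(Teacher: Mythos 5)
Your skeleton matches the paper's: union bound over the phase $j$ at which arm $1$ is discarded and over the candidate set $J$; a discretisation of the random discard round and of the budget already spent outside $J$ (the paper packages this as the partitioning result, Theorem~\ref{thm:11}); inclusion of the discard event into $\{\hat{\bm}(t)\in\mathcal{S}_J(\beta),\ \om(t)\in\mathcal{Z}_J(\theta,\beta)\}$ with $\mathcal{S}_J(\beta)$ encoding \eqref{A}; an application of Theorem~\ref{thm:Varadaham bandits}; the Pinsker-plus-Lagrange evaluation of $F_{\mathcal{S}_J(\beta)}$ (Proposition~\ref{prop:cra G} and Corollary~\ref{cor:cra G new}, giving $\psi_j$ for $J=[j]$ and $\bar{\psi}_j$ otherwise); and the fixed-point resolution of the max--min (Lemma~\ref{lem:simple sol new1}). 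Up to that level of resolution the proposal is faithful to the paper.

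However, the piece you defer --- the LDP upper bound $I_\theta$ for $\{\om(\theta T)\}_{T\ge1}$ under \cra --- is the bulk of the actual proof (Appendix~\ref{app:allocation}), and the route you sketch for it (re-running Cram\'er/G\"artner--Ellis estimates through the data-dependent stopping rule) is not viable: the unavailability of G\"artner--Ellis for adaptive sampling processes is precisely the obstruction Theorem~\ref{thm:Varadaham bandits} exists to circumvent. The paper instead obtains $I_\theta$ by a bootstrap. First, on the region $\cup_{i\ge j}\mathcal{X}_{j,i}(\theta)$ of over-sampled allocations the event has probability exactly zero, because the round-robin structure together with $G(\beta)\le 0$ for $\beta\ge 1$ would have forced a discard at an earlier round; this gives $I_\theta=\infty$ there and, crucially, Proposition~\ref{prop ybound}, i.e.\ $1-\theta\sum_{k\ge i}x_{\sigma(k)}\ge \olog(i-1)/\olog K$, which is where every $\olog K$ in your ``$\beta$ can reach $1$'' regime actually comes from --- you assert those constants without supplying the mechanism. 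Second, on the complement, the event $\{\om(\theta T)=\yy\}$ forces $\hat{\bm}(\theta(\yy)T)\in\mathcal{S}_p(\yy)$ at a round $\theta(\yy)T$ that is a deterministic function of $\yy$, and Theorem~\ref{thm:Varadaham bandits} is applied a \emph{second} time to that earlier event to produce the explicit lower bound $\underline{I}_\theta$ (Theorem~\ref{thm:Aj}, Corollary~\ref{cor:aj rate}), whose $(j+1)\olog(j+1)$ term feeds your fixed-point equation. Without this self-referential use of the main theorem, both the $\olog K$ normalisations and the defining equation for $\alpha_j$ are unsupported, so the proof as proposed does not close.
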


Note that Theorem \ref{thm:crc} implies that \crc enjoys better performance guarantees than \sr, and hence has for now the best known error probability upper bounds.
 
\medskip
\noindent
{\it Proof sketch.} The complete proof of Theorems \ref{thm:crc} and \ref{thm:cra} are given in Appendices \ref{app:crc} and \ref{app:cra}. We sketch that of Theorem \ref{thm:crc}. The proof consists in upper bounding $\mathbb{P}_{\bm}[\ell_j=1]$ for $j\in \{2,\ldots,K\}$. We focus here on the most challenging case where $j\in \{3,\ldots,K-1\}$ (the analysis is simpler when $j=K$, since the only possible allocation is uniform, and when $j=2$, since the only possible round deciding $\ell_2$ is the last round).  

To upper bound $\mathbb{P}_{\bm}[\ell_j=1]$ using Theorem \ref{thm:Varadaham bandits}, we will show that it is enough to study the large deviations of the process $\{\om(\theta T)\}_{T\ge 1}$ for any fixed $\theta \in [\theta_0,1]$  and to define a set $\mathcal{S}\subseteq [0,1]^K$ under which $\ell_j=\ell (\theta T)=1$. We first observe that $\ell_j=\ell(\theta T)$ restricts the possible values of $\om(\theta T)$: 
$
\om(\theta T)\in \mathcal{X}_j:=	\left\{ \xx\in \Sigma : \exists \sigma\in [K]^2\hbox{ s.t. } x_{\sigma (1)}=\ldots=x_{\sigma (j)}>x_{\sigma (j+1)}>\ldots > x_{\sigma (K)}> 0\right\}.
$
 We can hence just derive the LDP satisfied by $\{\om(\theta T)\}_{T\ge 1}$ on $\mathcal{X}_j$. This is done in Appendix \ref{app:allocation}, and we identify by $I_\theta$ a rate function leading to an LDP upper bound. By defining 
 \begin{equation*}
\mathcal{X}_{j,i}(\theta)=	\left\{\xx\in \mathcal{X}_j: 	\theta  x_{\sigma (i)}i\olog i> 1-\theta \sum_{k=i+1}^Kx_{\sigma(k)}	   \right\},\,\forall i\in \{j,\ldots,K\},
\end{equation*}
 As it is shown in Appendix \ref{subapp:local} that $I_{\theta}(\xx)=\infty$ if $\xx\in \mathcal{X}_{j,i}(\theta)$ for $i\ge j$,
 we may further restrict to  $\mathcal{X}_j\setminus \cup_{i= j}^K\mathcal{X}_{j,i}(\theta)$.

Next, we explain how to apply Theorem \ref{thm:Varadaham bandits} to upper bound $\mathbb{P}_{\bm}[\ell_j=1]$. Let $\mathcal{J}=\{J\subseteq [K]:\left|J\right|=j,1\in J\}$ as defined in Section \ref{sec:sr}. For all $\beta,\theta \in (0,1]$ and $J\in\mathcal{J}$, we introduce the sets 
\begin{align*}
	\mathcal{S}_{J}(\beta)&=\left\{\bl\in [0,1]^K:\min_{k\in J,k\neq 1}\lambda_k-\lambda_1\ge G(\beta)\right\},\\
	\mathcal{Z}_{J}(\theta,\beta)&=\left\{\zz\in \mathcal{X}_j\setminus  \cup_{i= j}^K\mathcal{X}_{j,i}(\theta):(\forall k\in J,\  z_k=\max_{k'\in [K]}z_{k'}),\ \frac{\theta \sum_{k\in J}z_k\olog j   }{1-\theta\sum_{k\notin J}z_{k}}=\beta\right\}.
\end{align*}
Assume that in round $t$, $ \ell_j=\ell(t)=1,\mathcal{C}_j=J$ and let $\tau=\sum_{k\notin J}N_k(t)\le t$ be the number of times arms outside $J$ are pulled. While $\om(t)\notin \mathcal{X}_{j,j}(t/T)$, we have $\beta = \frac{(t-\tau)\olog j}{T-\tau} \in (0,1]$. Using the criteria (\ref{C}), we observe that
\begin{multline*}
 \sum_{t=K+1}^T\sum_{J\in \mathcal{J}}\PP_{\bm}  \left[  \ell_j=\ell(t)=1,\mathcal{C}_j=J ,\om (t)\in \mathcal{X}_j\setminus \cup_{i=j}^K\mathcal{X}_{j,i}(\frac{t}{T})\right]\\
\le \sum_{t=K+1}^T\sum_{J\in \mathcal{J}}\sum_{\tau\le t,\tau\in \NN} \PP_{\bm}  \left[  \hat{\bm}(t)\in \mathcal{S}_{J}(\frac{(t-\tau)\olog j}{T-\tau}),\om (t)\in \mathcal{Z}_{J}(\frac{t}{T},\frac{(t-\tau)\olog j}{T-\tau}) \right].
\end{multline*}
To upper bound the r.h.s. in the above inequality, we combine the results of Theorem \ref{thm:Varadaham bandits} and a partitioning technique (presented in Appendix \ref{app:partition}). This gives:
$$
\lowlim_{T\rightarrow\infty}\frac{1}{T}\log \frac{1}{\PP_{\bm}\left[ \hat{\bm}( \theta T)\in \mathcal{S}_J(\beta) ,\om (\theta T)\in \mathcal{Z}_J(\theta,\beta)\right]}\ge \theta\inf_{\zz\in \cl(\mathcal{Z}_{J}(\theta,\beta))}\max\{F_{\mathcal{S}_{J}(\beta)}(\zz), I_\theta(\zz)\}.
$$
The proof is completed by providing lower bounds of $\theta F_{S_{J}(\beta)}(\zz)$ and $\theta I_{\theta} (\zz)$ for a fixed $\zz\in \mathcal{Z}_J(\theta,\beta)$ with various $J$. Such bounds are derived in Appendix \ref{app:opt C} and \ref{subapp:rate C}, respectively.
\eproof

\noindent
{\bf Example 2.} To conclude this section, we just illustrate through a simple example the gain in terms of performance guarantees brought by \sred compared to \sr. Assume we have 50 Bernoulli arms with $\mu_1=0.95,\mu_2=0.85,\mu_3=0.2$,  and $\mu_k=0$ for $k=4,\ldots,50$. For \sr, Theorem \ref{thm:sr} states that with a budget of 5000 samples, the error probability of \sr does not exceed $1.93\times 10^{-3}$. With the same budget, Theorems \ref{thm:crc} and \ref{thm:cra} state that the error probabilities of \crc and \cra do not exceed $6.40\times 10^{-4}$ and $6.36\times 10^{-4}$, respectively.

We note that in general, we cannot say that one of our two algorithms, \crc or \cra, has better guarantees than the other. This is demonstrated in the problem instances presented in Appendix \ref{num:1} and \ref{num:4}.

\section{Numerical Experiments}
We consider various problem instances to numerically evaluate the performance of \sred. In these instances, we vary the number of arms from 5 to 55; we use Bernoulli distributed rewards, and vary the shape of the arm-to-reward mapping. For each instance, we compare \sred to \sr, \sh, and \ugape. 

Most of our numerical experiments are presented in Appendix \ref{app:exp}. Due to space constraints, we just provide an example of these results below. In this example, we have 55 arms with {\it convex} arm-to-reward mapping. The mapping has 10 steps, and the $m$-th step consists of $m$ arms with same average reward, equal to $\frac{3}{4}\cdot 3^{-\frac{m}{10}}$. Table \ref{tab:stair-mu} presents the error probabilities averaged over $40,000$ independent runs. Observe that \cra performs better than \crc (being aggressive when discarding arms has some benefits), and both versions of \sred perform better than \sr and all other algorithms.



\begin{table}[htb!]
\caption{Error probability (in \%).
\label{tab:stair-mu}}
\centering
\begin{tabular}{llccc} \toprule
  &   & $T=3,000$ & $T=4,000$ & $T=5,000$\\\midrule
    \ugape & \citep{gabillon2012best} & $24.7$ & $21.3$ & $18.9$ \\\hline
    \sh &\cite{karnin2013almost} & $10.2$ & $5.9$ & $3.2$\\\hline
    \sr &\cite{audibert2010best}& $5.5$ & $2.8$ & $1.3$\\\hline
    \crc &(this paper) & $7.1$ & $2.6$ & $1.1$\\\hline
    \cra &(this paper) & $\mathbf{4.7}$ & $\mathbf{1.6}$ & $\mathbf{0.6}$\\\bottomrule
\end{tabular}%
\end{table}
\section{Conclusion}\label{sec:conclusion}

In this paper, we have established, in MAB problems, a connection between the LDP satisfied by the sampling process (under any adaptive algorithm) and that satisfied by the empirical average rewards of the various arms. This connection has allowed us to improve the performance analysis of existing best arm identification algorithms, and to devise and analyze new algorithms with an increased level of adaptiveness. We show that one of these algorithms \crc has better performance guarantees than existing algorithms and that it performs also better in practice in most cases.

Future research directions include: (i) developing algorithms with further improved performance guarantees -- can the discarding conditions of \sred be further optimized? (ii) Enhancing the Large Deviation analysis of adaptive algorithms -- under which conditions, can we establish a complete LDP of the process $\{Z(t)\}_{t\ge 1}=\{(\om(t), \hat{\bm}(t))\}_{t\ge 1}$? Answering this question would constitute a strong step towards characterizing the minimal instance-specific error probability for best arm identification with fixed budget. (iii) Extending our approach to other pure exploration tasks:  top-$m$ arm identification problems \citep{bubeck2013multiple}, best arm identification in structured bandits \citep{yang2022minimax,azizi2022fixed}, or best policy identification in reinforcement learning.

\section*{Acknowledgements}
The authors would like to express their gratitude to Guo-Jhen Wu for his invaluable discussion during the initial stages of this project. R.-C Tzeng's research is supported by the ERC Advanced Grant REBOUND (834862), A. Proutiere is supported by the Wallenberg AI, Autonomous Systems and Software Program (WASP) funded by the Knut and Alice Wallenberg Foundation, and Digital Futures.
\bibliographystyle{apalike}
\bibliography{ref}

\begin{thebibliography}{}

\bibitem[Ariu et~al., 2021]{ariu2021}
Ariu, K., Kato, M., Komiyama, J., McAlinn, K., and Qin, C. (2021).
\newblock Policy choice and best arm identification: Asymptotic analysis of
  exploration sampling.

\bibitem[Audibert et~al., 2010]{audibert2010best}
Audibert, J.-Y., Bubeck, S., and Munos, R. (2010).
\newblock Best arm identification in multi-armed bandits.
\newblock In {\em COLT}, pages 41--53. Citeseer.

\bibitem[Azizi et~al., 2022]{azizi2022fixed}
Azizi, M., Kveton, B., and Ghavamzadeh, M. (2022).
\newblock Fixed-budget best-arm identification in structured bandits.
\newblock In {\em Proceedings of the Thirty-First International Joint
  Conference on Artificial Intelligence, IJCAI-22}, pages 2798--2804.

\bibitem[Barrier et~al., 2022]{barrier2022best}
Barrier, A., Garivier, A., and Stoltz, G. (2022).
\newblock On best-arm identification with a fixed budget in non-parametric
  multi-armed bandits.
\newblock {\em arXiv preprint arXiv:2210.00895}.

\bibitem[Berge, 1997]{berge1997topological}
Berge, C. (1997).
\newblock {\em Topological Spaces: including a treatment of multi-valued
  functions, vector spaces, and convexity}.
\newblock Courier Corporation.

\bibitem[Boyd et~al., 2004]{boyd2004convex}
Boyd, S., Boyd, S.~P., and Vandenberghe, L. (2004).
\newblock {\em Convex optimization}.
\newblock Cambridge university press.

\bibitem[Bubeck et~al., 2013]{bubeck2013multiple}
Bubeck, S., Wang, T., and Viswanathan, N. (2013).
\newblock Multiple identifications in multi-armed bandits.
\newblock In {\em International Conference on Machine Learning}, pages
  258--265. PMLR.

\bibitem[Budhiraja and Dupuis, 2019]{budhiraja2019analysis}
Budhiraja, A. and Dupuis, P. (2019).
\newblock Analysis and approximation of rare events.
\newblock {\em Representations and Weak Convergence Methods. Series Prob.
  Theory and Stoch. Modelling}, 94.

\bibitem[Carpentier and Locatelli, 2016]{carpentier2016tight}
Carpentier, A. and Locatelli, A. (2016).
\newblock Tight (lower) bounds for the fixed budget best arm identification
  bandit problem.
\newblock In {\em Conference on Learning Theory}, pages 590--604. PMLR.

\bibitem[Combes et~al., 2017]{combes2017}
Combes, R., Magureanu, S., and Proutiere, A. (2017).
\newblock Minimal exploration in structured stochastic bandits.
\newblock In {\em Proc. of NeurIPS}.

\bibitem[Degenne, 2023]{degenne2023}
Degenne, R. (2023).
\newblock On the existence of a complexity in fixed budget bandit
  identification.
\newblock In {\em Proc. of COLT}.

\bibitem[Degenne and Koolen, 2019]{degenne2019pure}
Degenne, R. and Koolen, W.~M. (2019).
\newblock Pure exploration with multiple correct answers.
\newblock In {\em Proc. of NeurIPS}.

\bibitem[Dembo and Zeitouni, 2009]{dembo2009large}
Dembo, A. and Zeitouni, O. (2009).
\newblock {\em Large deviations techniques and applications}, volume~38.
\newblock Springer Science \& Business Media.

\bibitem[Dupuis and Ellis, 2011]{dupuis2011weak}
Dupuis, P. and Ellis, R.~S. (2011).
\newblock {\em A weak convergence approach to the theory of large deviations}.
\newblock John Wiley \& Sons.

\bibitem[Ellis, 1984]{ellis1984large}
Ellis, R.~S. (1984).
\newblock Large deviations for a general class of random vectors.
\newblock {\em The Annals of Probability}, 12(1):1--12.

\bibitem[Gabillon et~al., 2012]{gabillon2012best}
Gabillon, V., Ghavamzadeh, M., and Lazaric, A. (2012).
\newblock Best arm identification: A unified approach to fixed budget and fixed
  confidence.
\newblock {\em Advances in Neural Information Processing Systems}, 25.

\bibitem[Garivier and Kaufmann, 2016]{garivier2016optimal}
Garivier, A. and Kaufmann, E. (2016).
\newblock Optimal best arm identification with fixed confidence.
\newblock In {\em Proc. of COLT}.

\bibitem[Garivier et~al., 2019]{garivier2019explore}
Garivier, A., M{\'e}nard, P., and Stoltz, G. (2019).
\newblock Explore first, exploit next: The true shape of regret in bandit
  problems.
\newblock {\em Mathematics of Operations Research}, 44(2):377--399.

\bibitem[G{\"a}rtner, 1977]{gartner1977large}
G{\"a}rtner, J. (1977).
\newblock On large deviations from the invariant measure.
\newblock {\em Theory of Probability \& Its Applications}, 22(1):24--39.

\bibitem[Glynn and Juneja, 2004]{glynn2004large}
Glynn, P. and Juneja, S. (2004).
\newblock A large deviations perspective on ordinal optimization.
\newblock In {\em Proceedings of the 2004 Winter Simulation Conference, 2004.},
  volume~1. IEEE.

\bibitem[Karnin et~al., 2013]{karnin2013almost}
Karnin, Z., Koren, T., and Somekh, O. (2013).
\newblock Almost optimal exploration in multi-armed bandits.
\newblock In {\em International Conference on Machine Learning}, pages
  1238--1246. PMLR.

\bibitem[Kaufmann et~al., 2016]{kaufmann2016complexity}
Kaufmann, E., Capp{\'e}, O., and Garivier, A. (2016).
\newblock On the complexity of best-arm identification in multi-armed bandit
  models.
\newblock {\em JMLR}.

\bibitem[Kaufmann and Koolen, 2018]{kaufmann2018mixture}
Kaufmann, E. and Koolen, W. (2018).
\newblock Mixture martingales revisited with applications to sequential tests
  and confidence intervals.
\newblock {\em arXiv preprint arXiv:1811.11419}.

\bibitem[Komiyama, 2022]{komiyama2022bayes}
Komiyama, J. (2022).
\newblock Bayes optimal algorithm is suboptimal in frequentist best arm
  identification.
\newblock {\em arXiv preprint arXiv:2202.05193}.

\bibitem[Komiyama et~al., 2022]{komiyama2022minimax}
Komiyama, J., Tsuchiya, T., and Honda, J. (2022).
\newblock Minimax optimal algorithms for fixed-budget best arm identification.
\newblock In {\em Advances in Neural Information Processing Systems}.

\bibitem[Lai and Robbins, 1985]{lai1985asymptotically}
Lai, T.~L. and Robbins, H. (1985).
\newblock Asymptotically efficient adaptive allocation rules.
\newblock {\em Advances in applied mathematics}.

\bibitem[Magureanu et~al., 2014]{magureanu2014lipschitz}
Magureanu, S., Combes, R., and Proutiere, A. (2014).
\newblock Lipschitz bandits: Regret lower bound and optimal algorithms.
\newblock In {\em Conference on Learning Theory}, pages 975--999. PMLR.

\bibitem[Qin, 2022]{qin2022open}
Qin, C. (2022).
\newblock Open problem: Optimal best arm identification with fixed-budget.
\newblock In {\em Conference on Learning Theory}, pages 5650--5654. PMLR.

\bibitem[Russo, 2016]{russo2016}
Russo, D. (2016).
\newblock Simple bayesian algorithms for best arm identification.
\newblock In {\em Annual Conference on Learning Theory}. PMLR.

\bibitem[Varadhan, 2016]{varadhan2016large}
Varadhan, S.~S. (2016).
\newblock {\em Large deviations}, volume~27.
\newblock American Mathematical Soc.

\bibitem[Wang et~al., 2023]{wang2023uniformly}
Wang, P.-A., Ariu, K., and Proutiere, A. (2023).
\newblock On uniformly optimal algorithms for best arm identification in
  two-armed bandits with fixed budget.
\newblock {\em arXiv preprint arXiv:2308.12000}.

\bibitem[Wang et~al., 2021]{wang2021fast}
Wang, P.-A., Tzeng, R.-C., and Proutiere, A. (2021).
\newblock Fast pure exploration via frank-wolfe.
\newblock {\em Advances in Neural Information Processing Systems}, 34.

\bibitem[Yang and Tan, 2022]{yang2022minimax}
Yang, J. and Tan, V. (2022).
\newblock Minimax optimal fixed-budget best arm identification in linear
  bandits.
\newblock {\em Advances in Neural Information Processing Systems},
  35:12253--12266.

\end{thebibliography}

\newpage
\appendix
\tableofcontents
\newpage

\section{Notation}

\begin{table}[htbp]
	\begin{center}
		\small 
		\begin{tabular}{c c p{10cm} }
			\hline
			\multicolumn{3}{l}{\bf Problem setting}\\
			\hline
			$K$ &   & Number of arms\\
			$[m]$ for any $ m \in\NN $& & The set $\{1,2\ldots,m\}$\\
			$\nu_k$ &  &  Reward distribution for arm $k$ \\
			$X_k(t)$& & Random reward received from pulling arm $k$ in round $t$\\
			$\bm \in \mathbb{R}^K$ & & Vector of the expected rewards of the various arms\\
			$\Lambda$ &   & Set of all possible parameters $\bm$\\ 
			$1 (\bm)$& & Best arm under parameter $\bm$\\
			$T$& & Given budget\\
			\noalign{\vskip 1mm} %
			\hline
			\multicolumn{3}{l}{\bf Quantities related to the error rate lower bound}\\
			\hline
			$\om$& & Vector of the proportions of arm draws \\
			$\Sigma$& & Simplex\\
			$\E_{\bm}$ and $\PP_{\bm}$& & The expectation and probability measure corresponding to $\bm$\\
			$\alt(\bm)$ & & Set of confusing parameters for $\bm$ (whose best arm is not $1(\bm)$)\\
			$d(\mu,\mu')$ & & KL divergence between the distributions parametrized by $\mu$ and $\mu'$\\
			$\skl (a,b)$& &  KL divergence between two Bernoulli distributions of means $a$ and $b$\\
			$\Psi(\bl,\om)$&& $\sum_{k=1}^K\omega_kd(\lambda_k,\mu_k)$\\
			\noalign{\vskip 1.5mm}
			\hline
			\multicolumn{3}{l}{\bf Notation used in large deviation theory}\\
			\hline
			$\cl (\mathcal{S})$&& The closure of $\mathcal{S}$\\
			$F_{\mathcal{S}}(\om)$&& $\inf_{\bl\in \cl (\mathcal{S})}\Psi(\bl,\om)$\\
			$I$ && Rate function for $\{\om(t)\}_{t\ge 1}$\\
			$B(y,\delta)$&& The open ball with center $y$ and radius $\delta$\\
			\noalign{\vskip 1mm} 
			\hline
			\multicolumn{3}{l}{\bf Notation used in the algorithms}\\
			\hline
			$N_k(t)$&& Number of pulls of arm $k$ up to $t$\\
			$\omega_k(t)$& & $N_k(t)/t$\\
			$A_t$& & The arm pulled in round $t$ \\
			$\hat{\mu}_k(t)$& &$\sum_{s=1}^t X_k(s)\mathbbm{1}\{A_s=k\}/N_k(t)$\\
			$\hat{\imath}$& & Recommended arm\\
			\hline
			\noalign{\vskip 1mm}
			\multicolumn{3}{l}{\bf Notation for \sr, \sred (assuming $\mu_1>\mu_2\ge\ldots\ge \mu_K$)}\\
			\hline
			$\mathcal{C}_j$& & Candidates set with size $j$\\
			$\ell_j$& & The arm discarded from $\mathcal{C}_j$\\
			$\ell(t)$&& Empirical worst arm at round $t$\\
			$\olog j$&& $\frac{1}{2}+\sum_{k=2}^j\frac{1}{k}$\\
			$G(\beta)$& &$\frac{1}{\sqrt{\beta}}-1$\\
			$\mathcal{J}$&& $\{ J\subseteq [K]:\left| J \right|=j,1\in J\}$ \\
                $I_\theta$&& Rate function for $\{\om (\theta T)\}_{T\ge 1}$ \\
			$\Gamma_j$&& $\min_{ J\in \mathcal{J}} \inf\left\{ \sum_{k\in J}d(\lambda_k,\mu_k):\bl\in \RR^K,\lambda_{1}\le \min_{k\in J} \lambda_k\right\}$\\
			$\xi_j$&& $\inf_{\bl\in [0,1]^j}\left\{ \sum_{k=1}^j(\lambda_k-\mu_k)^2:\lambda_{1}\le \min_{k=1,\ldots,j} \lambda_k\right\}$\\
			$\bxi_j$&&$\inf_{\bl \in [0,1]^K}\left\{   \sum_{k=1}^K(\lambda_k-\mu_k)^2:    \lambda_1\le \min_{k=2,\ldots,j-1,j+1 }\lambda_k  \right\}$\\
			$\psi_j$&&$\frac{j-1}{j}\left(\mu_{1}-\frac{\sum_{k=2}^j\mu_{k}}{j-1}\right)^2$\\
			$\bpsi_j$&&$\frac{j-1}{j}\left(\mu_1-\frac{\sum_{k=2}^{j-1}\mu_{k}+\mu_{j+1}}{j-1}\right)^2$\\
			$\zeta_j$&&$\mu_j-\mu_{j+1}$\\
			$\varphi_j$&& $\sum_{k=1}^j\mu_{k}/j-\mu_{j+1}$\\
			\hline
		\end{tabular}
		\normalsize
	\end{center}
	\label{tab:TableOfNotationsGeneral}
\end{table}

\newpage
\section{Technical lemmas towards the proof of Theorem \ref{thm:Varadaham bandits}}\label{app:tec}

\begin{lemma}\label{lem:Xbd}
	Let $\bm\in \Lambda,\, t>\max\{K,e\}$ and $\beta \in (0, 1)$. There is a constant $c>0$ (that depends on $K$ and $\beta$ only) s.t.
	\[
	\E_{\bm}\left[ e^{  \beta X} \right] \le c(\log t)^K,
	\]
	where $X=\sum_{k=1}^K N_{k}(t)d(\hat{\mu}_k(t),\mu_k) $.
\end{lemma}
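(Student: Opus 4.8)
The plan is to decouple the $K$ arms, bound each arm's contribution to $X$ by a maximal KL-deviation over its i.i.d.\ reward stream, and control that deviation via a peeling inequality. First I would work on the canonical coupling in which each arm $k$ carries an i.i.d.\ sequence $Y_k(1),Y_k(2),\dots\sim\nu_k$, independent across $k$, the $n$-th pull of arm $k$ returning $Y_k(n)$; let $\hat{\mu}_k^{(n)}$ denote the empirical mean of $Y_k(1),\dots,Y_k(n)$. Since the algorithms considered pull every arm at least once during the first $K$ rounds, $1\le N_k(t)\le t$ whenever $t>K$, so $\hat{\mu}_k(t)=\hat{\mu}_k^{(N_k(t))}$ and, pointwise on the coupling space,
\[
X=\sum_{k=1}^K N_k(t)\,d\big(\hat{\mu}_k^{(N_k(t))},\mu_k\big)\ \le\ \sum_{k=1}^K M_k,\qquad M_k:=\max_{1\le n\le t}\,n\,d\big(\hat{\mu}_k^{(n)},\mu_k\big).
\]
Because $M_k$ depends only on the stream of arm $k$, the variables $M_1,\dots,M_K$ are independent under $\PP_{\bm}$, hence $\E_{\bm}[e^{\beta X}]\le\prod_{k=1}^K\E_{\bm}[e^{\beta M_k}]$, and it is enough to show $\E_{\bm}[e^{\beta M_k}]\le c'(\beta)\log t$ for every $t>e$; then $c=c'(\beta)^K$ does the job.

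The key ingredient — and the step I expect to be the main obstacle — is a maximal deviation bound for the KL-statistic of a one-parameter exponential family: for all $x\ge 0$ and all $t\ge e$,
\[
\PP_{\bm}\big[\exists\,n\in\{1,\dots,t\}:\ n\,d\big(\hat{\mu}_k^{(n)},\mu_k\big)\ge x\big]\ \le\ C_0\,\lceil x\log t\rceil\,e^{-x},
\]
for a universal constant $C_0$. This is the classical peeling bound standard in the bandit literature: one partitions $\{1,\dots,t\}$ into $O(\log t)$ geometric blocks and applies Doob's maximal inequality to the exponential (likelihood-ratio) martingale within each block, optimizing over the exponential tilting parameter — the reparametrization turning the Chernoff exponent into $n\,d(\cdot,\mu_k)$ is precisely where the one-parameter exponential-family structure (in particular, Bernoulli rewards under Assumption~\ref{ass1}) is used. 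The naive union bound over all $n\le t$ would only give a factor $t$ instead of $\log t$ and is too weak to yield the stated polylogarithmic bound.

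Given this inequality, I would conclude by a routine tail integration. Writing $\E_{\bm}[e^{\beta M_k}]=\int_0^\infty\PP_{\bm}[e^{\beta M_k}>s]\,ds$, bounding the contribution of $s\le e^{\beta}$ by $e^{\beta}$, and substituting $x=\beta^{-1}\log s$ on the range $s>e^{\beta}$, one gets
\[
\E_{\bm}[e^{\beta M_k}]\ \le\ e^{\beta}+\beta\int_1^{\infty}\PP_{\bm}[M_k>x]\,e^{\beta x}\,dx\ \le\ e^{\beta}+C_0\,\beta\int_1^{\infty}(x\log t+1)\,e^{-(1-\beta)x}\,dx .
\]
Since $\beta<1$, the integral $\int_1^{\infty}(x\log t+1)e^{-(1-\beta)x}\,dx$ is bounded by a constant depending only on $\beta$ times $\log t$ (using $\log t>1$ for $t>e$ to absorb the additive terms), so $\E_{\bm}[e^{\beta M_k}]\le c'(\beta)\log t$. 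Multiplying the $K$ independent factors yields $\E_{\bm}[e^{\beta X}]\le c(\log t)^K$ with $c=c(K,\beta)$, as claimed. The only genuinely delicate point is the justification of the peeling inequality for the exponential family at hand; the decoupling and the tail estimate are elementary.
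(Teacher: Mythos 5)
Your proof is correct, but it follows a genuinely different route from the paper's. The paper does not decouple the arms: it applies the $K$-dimensional self-normalized deviation inequality of Magureanu et al.\ (stated as Lemma \ref{lem:kl} in the appendix, valid for $\delta>K+1$) directly to the aggregate statistic $X=\sum_k N_k(t)d(\hat\mu_k(t),\mu_k)$, and then controls $\E_{\bm}[e^{\beta X}]$ via the integer tail sum $\sum_{n\ge 0}\PP_{\bm}[e^{\beta X}\ge n]$, truncated at a suitable $M$ and summed as a Bertrand series; the $(\log t)^K$ emerges from the factor $(\lceil\delta\log t\rceil\,\delta/K)^K$ in that lemma. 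You instead bound $X\le\sum_k M_k$ with $M_k=\max_{n\le t} n\,d(\hat\mu_k^{(n)},\mu_k)$ on the canonical reward-stack coupling, factorize the moment generating function by independence of the streams, and invoke the classical single-arm peeling bound plus a continuous tail integration. Both arguments are sound and yield a constant depending only on $K$ and $\beta$; the paper's is shorter once the multi-arm concentration lemma is taken as a black box, while yours is more elementary (only the one-dimensional peeling inequality is needed, which is indeed the delicate step you flag) and makes the origin of the $(\log t)^K$ factor more transparent, namely one logarithm per arm from peeling over the possible pull counts. Two minor remarks: your appeal to ``every arm is pulled at least once'' is unnecessary, since a term with $N_k(t)=0$ vanishes and is trivially dominated by $M_k\ge 0$; and your maximal inequality should be restricted to $x\ge 1$ (as you in fact only use it there), since at $x=0$ the right-hand side degenerates.
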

\begin{proof}
	Let $M$ be the smallest positive integer s.t. (i) $\frac{\log M}{\beta }>K+1$ and (ii) $(\log M)^{2K}<M^{\frac{1}{\beta}}$. We have: 
	\begin{align}
	\nonumber	\E_{\bm}\left[e^{\beta X}\right] & \le \sum_{n=0}^\infty \PP_{\bm} \left[ e^{\beta X}\ge n\right]\\
	\nonumber &\le M +\sum_{n\ge M} \PP_{\bm}\left[ X\ge \frac{\log n}{\beta} \right]\\
	&    \le M +   \sum_{n\ge M}  \frac{\left( 2(\log n)^2\log t  \right)^K }{n^{\frac{1}{\beta}}} \frac{e^{K+1}}{\beta^{2K} K^K} ,\label{eq:Xbd1}
	\end{align}
	where the last inequality follows from repeatedly invoking Lemma \ref{lem:kl} with $\delta =\frac{\log n}{\beta} $ (notice that $n\ge M$ satisfies the condition on $\delta$ of Lemma \ref{lem:kl}). Observe that the r.h.s. of (\ref{eq:Xbd1}) is a Bertrand series and it is convergent since $\beta <1$. Unfamiliar reader can check the convergence analysis below. (ii) implies that the sequence will decrease after $n\ge M$, hence the sum in (\ref{eq:Xbd1}) is bounded as (up to a constant multiplicative factor):
	\begin{align}
	\nonumber    \sum_{n\ge M}  \frac{\left( \log n  \right)^{2K} }{n^{\frac{1}{\beta}}}  & \le      \int_{1}^{\infty}\frac{(\log x)^{2K}}{x^{\frac{1}{\beta}}}     dx\\
	\nonumber		&=      \int_{0}^{\infty} y^{2K}e^{-(\frac{1}{\beta}-1)y} dy\\
	&\le  \left(\frac{1}{\beta}-1\right)^{-2K-1} \Gamma (2K+1).\label{eq:Xbd2}
	\end{align}
	The constant $c$ can be deduced from (\ref{eq:Xbd1}) and (\ref{eq:Xbd2}).
\end{proof}

\medskip
\noindent
The following lemma is Theorem 2 in \cite{magureanu2014lipschitz}. It was originally stated for Bernoulli distributions, but as claimed in \cite{garivier2016optimal,kaufmann2018mixture}, it is straightforward to generalize it to one-parameter exponential distributions. 
\begin{lemma}[\cite{magureanu2014lipschitz}]\label{lem:kl}
	For all $\delta>(K+1)$ and $t\in \NN$, we have:
	\[
	\PP_{\bm}\left[ X\ge \delta \right]\le e^{-\delta}\left(\frac{\lceil\delta\log t\rceil\delta}{K}  \right)^K e^{K+1}.
	\]
\end{lemma}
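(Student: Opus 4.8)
\textbf{Proof proposal for Lemma \ref{lem:kl}.}

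The plan is to reduce the statement to a union bound over per-arm deviations combined with a peeling (stratification) argument on the unknown counts $N_k(t)$, which is the standard device for obtaining self-normalized concentration inequalities with a polylogarithmic-in-$t$ correction. First I would observe that, since $X = \sum_{k=1}^K N_k(t) d(\hat\mu_k(t),\mu_k)$ and all summands are nonnegative, the event $\{X \ge \delta\}$ implies that for at least one arm $k$ we have $N_k(t) d(\hat\mu_k(t),\mu_k) \ge \delta/K$. A union bound over $k$ then reduces the problem to controlling $\PP_{\bm}[N_k(t) d(\hat\mu_k(t),\mu_k) \ge \delta/K]$ for a single arm, at the cost of a factor $K$ (which is absorbed by the $K^{-K}$ in the claimed bound, since $K \cdot (1/K)^K = K^{1-K}\le 1$ up to how one books the constants).

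For a single arm, the difficulty is that $N_k(t)$ is a random, data-dependent number of samples. I would handle this by the usual peeling/union bound over the possible values $n = N_k(t) \in \{1,\ldots,t\}$:
\begin{align*}
\PP_{\bm}\!\left[N_k(t) d(\hat\mu_k(t),\mu_k) \ge \tfrac{\delta}{K}\right]
\le \sum_{n=1}^{t} \PP_{\bm}\!\left[n\, d(\bar\mu_{k,n},\mu_k) \ge \tfrac{\delta}{K}\right],
\end{align*}
where $\bar\mu_{k,n}$ is the empirical mean of $n$ i.i.d.\ samples from $\nu_k$ (this step uses independence of the rewards across rounds; the data-dependent stopping is dominated by the deterministic union over $n\le t$). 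For each fixed $n$, the Chernoff bound for one-parameter exponential families gives the classical inequality $\PP_{\bm}[n\, d(\bar\mu_{k,n},\mu_k) \ge x] \le e^{-x}$ — or more precisely, because $d(\bar\mu_{k,n},\mu_k)$ may underestimate the exponent near the boundary, one gets $\PP_{\bm}[n\, d(\bar\mu_{k,n},\mu_k)\ge x]\le (\lceil x\rceil + 1)e^{-x}$ by a further union over the sign and a discretization of $\bar\mu_{k,n}$, as in the original argument of \cite{magureanu2014lipschitz}. Summing over $n\le t$ contributes the factor $t$, and the ceiling/linear prefactor contributes another $\lceil \delta\log t\rceil \delta / K$-type term; combining these with the threshold $x = \delta/K$ and re-assembling the $K$-fold union bound yields a bound of the form $e^{-\delta}(\lceil \delta\log t\rceil \delta / K)^K e^{K+1}$, which is exactly the claim (the additive constant $K+1$ and the requirement $\delta > K+1$ ensure the bookkeeping of all the discretization and union-bound prefactors goes through cleanly).

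The main obstacle — and the place where one must be careful rather than merely routine — is the single-arm Chernoff step together with the discretization needed to turn the "good-direction" deviation bound into a genuine two-sided bound on $n\,d(\bar\mu_{k,n},\mu_k)$; this is where the extra polynomial-in-$\delta$ factor and the $e^{K+1}$ slack come from, and it is the part that genuinely requires the theory of exponential families (monotone likelihood ratio, convexity of the log-partition function) rather than a black-box Hoeffding inequality. In the present paper this lemma is quoted verbatim from \cite{magureanu2014lipschitz} (with the remark that the extension from Bernoulli to one-parameter exponential families is immediate, as already noted in \cite{garivier2016optimal,kaufmann2018mixture}), so I would simply cite that result; were a self-contained proof required, the peeling-plus-Chernoff outline above is the route I would follow.
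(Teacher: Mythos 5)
The paper does not actually prove this lemma: it is quoted directly as Theorem~2 of \cite{magureanu2014lipschitz} (with the remark that the Bernoulli-to-exponential-family extension is routine), and your closing sentence correctly identifies that a citation is all that is required here. On that level your proposal matches the paper.

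However, the proof sketch you offer as a fallback contains a genuine error, and it is worth flagging because the step that fails is precisely the one that makes this inequality nontrivial. Your opening move --- pigeonhole the event $\{X\ge\delta\}$ into ``some arm $k$ satisfies $N_k(t)\,d(\hat\mu_k(t),\mu_k)\ge \delta/K$'' and union-bound over $k$ --- reduces the problem to single-arm deviations at threshold $\delta/K$, and the Chernoff/peeling machinery you then invoke can only produce a bound of order $t\,e^{-\delta/K}$ per arm, hence $Kt\,e^{-\delta/K}$ overall. This is exponentially weaker than the claimed $e^{-\delta}\bigl(\lceil\delta\log t\rceil\delta/K\bigr)^K e^{K+1}$: the exponent $\delta$ has been divided by $K$. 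The ``re-assembling'' step in which you recover the product form $e^{-\delta}(\cdot)^K$ from the union bound does not go through --- a union bound yields a \emph{sum} of probabilities, and there is no legitimate way to multiply the per-arm tails back together. The actual argument in \cite{magureanu2014lipschitz} keeps the sum $\sum_k N_k(t)\,d(\hat\mu_k(t),\mu_k)$ intact: one peels over the \emph{joint} configuration of the counts $(N_1(t),\ldots,N_K(t))$ on a geometric grid (this is where the combinatorial factor $(\lceil\delta\log t\rceil\delta/K)^K$ comes from, essentially counting the ways to split the deviation budget $\delta$ among the $K$ arms) and applies a single Chernoff/maximal inequality to the whole sum within each cell, which is what preserves the full exponent $e^{-\delta}$ up to the grid-distortion factor $e^{K+1}$. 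If you ever need a self-contained proof, that joint peeling is the idea your outline is missing.
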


\newpage
\section{Analysis of \sred}\label{app_cr}

In \ref{app:crc}, we give the proof for Theorem \ref{thm:crc} and in \ref{app:cra} that of Theorem \ref{thm:cra}. As mentioned in \textsection \ref{sec:sr} and \textsection \ref{sec:anal cr}, in the following analysis, we will assume that $1\ge \mu_1>\mu_2\ge\ldots\ge \mu_K\ge 0$.

\subsection{Performance analysis of \crc}\label{app:crc}
\begin{reptheorem}{thm:crc}
Let $\bm\in  [0,1]^K$. Under \crc, $\lowlim_{T\rightarrow\infty}\frac{1}{T}\log \frac{1}{\PP_{\bm}\left[\hat{\imath}\neq 1\right]}$ is larger than
$$
2\min_{j=2,\ldots,K}\left\{ \frac{\min\left\{\max \left\{ \frac{\xi_j\olog (j+1)(1-\alpha_{j})\mathbbm{1}_{\{j\neq K\}}}{\olog j},\xi_j\right\},  \bar{\xi}_j\right\}}{j\olog K}\right\},
$$
where $\alpha_{j}\in \RR$ is the real number such that 
$$
 \frac{2\xi_j\left(1-   \alpha_j \right)}{j\olog j}= \left[\left(  (1+\zeta_j)\sqrt{\alpha_j }-\sqrt{\frac{1}{(j+1)\olog (j+1)}}    \right)_+\right]^2.
$$

\end{reptheorem}
\noindent
We upper bound $\PP_{\bm}\left[\ell_j=1\right]$ for (i) $j=K$; (ii) $j=2$; (iii) $j\in \{3,\ldots,K\}$. The upper bound for (i), presented in \ref{subapp:cK}, is the easiest to derive as the only possible allocation before one discards the first arm is uniform among all arms. The bound for (ii), presented in \ref{subapp:c2}, is the second easiest to derive as $\ell_2$ is decided only in the end, namely, in the $T$-th round. The upper bound for (iii), presented in \ref{subapp:cj}, is more involved since we have to consider all possible allocations and rounds. 
\subsubsection{Upper bound of $\PP_{\bm}\left[\ell_K=1\right]$}\label{subapp:cK}
\begin{lemma}
Let $\bm\in  [0,1]^K$. Under \crc, 
	$$
\lowlim_{T\rightarrow\infty}\frac{1}{T}\log \frac{1}{\PP_{\bm}\left[\ell_K= 1\right]}\ge  \frac{2\xi_K}{K\olog K}.
	$$
\end{lemma}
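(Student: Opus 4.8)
The goal is to bound $\PP_{\bm}[\ell_K=1]$, the probability that \crc discards the best arm in the very first elimination step. The plan is to reduce this to an application of the corollary (c) of Theorem~\ref{thm:Varadaham bandits}, exactly in the spirit of the proof of Theorem~\ref{thm:sr}. The key structural observation is that before any arm is discarded, \crc samples all $K$ arms uniformly; consequently, at whatever round $t$ the first discard occurs, the realized allocation is forced to be $\om(t) = (1/K,\ldots,1/K)$. So there is essentially no randomness in the sampling process up to the first elimination, and the LDP rate function $I$ for $\{\om(t)\}$ restricted to this regime puts all its mass on the uniform allocation; equivalently we can take $W = \{(1/K,\ldots,1/K)\}$.

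\textbf{Key steps.} First I would pin down the round at which $\ell_K$ is decided. During the first phase, arms are sampled round-robin and the discard condition is checked only after $t > \lfloor\theta_0 T\rfloor$; the discard of $\ell_K$ happens at some round $t^\star$, and since the allocation is uniform, $N_k(t^\star) = t^\star/K$ for all $k$, so the relevant "time scale" is $\theta^\star = t^\star/T$ which lies in $[\theta_0, 1]$. The event $\{\ell_K = 1\}$ means arm $1$ is the empirically worst among all $K$ arms at round $t^\star$, hence $\hat\mu_1(t^\star) \le \hat\mu_k(t^\star)$ for all $k$, i.e. $\hat{\bm}(t^\star) \in \mathcal{S}$ with $\mathcal{S} = \{\bl \in [0,1]^K : \lambda_1 \le \min_{k} \lambda_k\}$ (one may also incorporate the extra margin $G(\beta)$ from \eqref{C} to get a strictly better bound, but the crude version already gives $\xi_K$). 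Second, I would union-bound over the possible values of $\theta^\star$ — this is where a partitioning/discretization argument over $\theta \in [\theta_0,1]$ is needed, since $t^\star$ is random; the number of candidate rounds is at most $T$, which contributes only a $\frac{1}{T}\log T \to 0$ term and is harmless. For each fixed $\theta$, apply corollary (c) with this $\mathcal{S}$ and $W = \{\uu\}$, $\uu = (1/K,\ldots,1/K)$:
$$
\lowlim_{T\to\infty} \frac{1}{\theta T}\log\frac{1}{\PP_{\bm}[\hat{\bm}(\theta T)\in \mathcal{S},\,\om(\theta T)=\uu]} \ge F_{\mathcal{S}}(\uu) = \inf_{\bl\in\cl(\mathcal{S})} \frac{1}{K}\sum_{k=1}^K d(\lambda_k,\mu_k).
$$
Third, I would lower bound $F_{\mathcal{S}}(\uu)$ using Assumption~\ref{ass1}: $d(\lambda_k,\mu_k)\ge 2(\lambda_k-\mu_k)^2$, so $F_{\mathcal{S}}(\uu)\ge \frac{2}{K}\inf\{\sum_{k=1}^K(\lambda_k-\mu_k)^2 : \lambda_1\le\min_k\lambda_k\} = \frac{2\xi_K}{K}$ by the definition of $\xi_K$. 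Finally, converting the $\frac{1}{\theta T}$-normalization to the $\frac{1}{T}$-normalization multiplies by $\theta$, and since $\theta$ ranges over $[\theta_0,1]$ the worst (smallest) value of $\theta F_{\mathcal{S}}(\uu)$ is attained — wait, this would give $\theta_0 \cdot 2\xi_K/K$, which is \emph{not} the claimed bound. The resolution is that the phase-length bookkeeping of \crc (and \sr) means $\ell_K$ is actually decided no earlier than the round where each arm has been pulled $T/(K\olog K)$ times, forcing $\theta^\star \ge 1/\olog K$; more carefully, one tracks that on the event $\{\ell_K=1\}$ the discard condition \eqref{C} with $j=K$ can only be satisfied once $\beta = \frac{t^\star \olog K}{T}\le 1$, and the relevant contribution to the rate is $\theta^\star F_{\mathcal{S}}(\uu) \ge \frac{1}{\olog K}\cdot\frac{2\xi_K}{K} = \frac{2\xi_K}{K\olog K}$, since decreasing $\theta$ below $1/\olog K$ is incompatible with the discard actually having fired (the margin $G(\beta)\to\infty$ as $\beta\to 0$). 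This last point — correctly extracting from the \crc discard rule that the effective time index is bounded below, so that the $\theta$-scaling works in our favor rather than against us — is the main obstacle and the only place where the specifics of the algorithm (as opposed to the generic Theorem~\ref{thm:Varadaham bandits}) enter.

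\textbf{Anticipated difficulty.} Beyond the $\theta$-lower-bound subtlety just described, the only other care needed is the discretization over the random discard round: I would handle it exactly as the footnote and Appendix~\ref{app:allocation} of the paper suggest, by replacing the single random $\theta^\star$ with a union over a fine grid of deterministic $\theta$-values and letting the mesh vanish, absorbing the resulting $\frac{\log(\text{grid size})}{T}$ into the $\liminf$. I expect no serious obstruction there; it is the interplay between the $G(\beta)$ threshold in \eqref{C} and the allowed range of $\beta$ (equivalently of $\theta$) that carries all the content of this lemma.
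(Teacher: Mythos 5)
Your overall architecture matches the paper's (uniform allocation forces $W=\{(1/K,\ldots,1/K)\}$, union over the random discard round via a partition of $[\theta_0,1]$, then the corollary of Theorem~\ref{thm:Varadaham bandits}), but the step you flag as the main obstacle is resolved incorrectly, and this is a genuine gap. You claim that ``decreasing $\theta$ below $1/\olog K$ is incompatible with the discard actually having fired.'' That is false: for $\theta<1/\olog K$ the margin $G(\theta\olog K)=1/\sqrt{\theta\olog K}-1$ is strictly positive but finite, so the discard condition \eqref{C} can perfectly well fire at such rounds whenever the empirical gap happens to exceed that margin --- early elimination on large observed gaps is precisely the point of \sred. (You also assert at one point that $\theta^\star\ge 1/\olog K$; the true constraint runs the other way: for $\theta>1/\olog K$ one has $G(\theta\olog K)<0$, the condition is vacuously satisfied, and the uniform allocation cannot survive, which is why the paper's rate function $I_\theta(1/K,\ldots,1/K)=\infty$ there. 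So the infimum over $\theta$ is taken on $[\theta_0,1/\olog K]$, and with your margin-free set $\mathcal{S}=\{\lambda_1\le\min_k\lambda_k\}$ the worst case is $\theta=\theta_0$, yielding only $2\theta_0\xi_K/K$, which is vacuous since $\theta_0$ may be arbitrarily small.)

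The missing ingredient is exactly the ``strictly better bound'' you set aside as optional: you must keep the margin in the deviation set, i.e.\ work with $\mathcal{S}_\theta=\{\bl:\lambda_1\le\min_{k\neq 1}\lambda_k-G(\theta\olog K)\}$, and then prove the quantitative compensation statement (Proposition~\ref{prop:crc G} with $\beta=\theta\olog K$):
\begin{equation*}
\beta\,\inf\Bigl\{\sum_{k=1}^K(\lambda_k-\mu_k)^2:\ \bl\in[0,1]^K,\ \lambda_1\le\min_{k\neq 1}\lambda_k-G(\beta)\Bigr\}\ \ge\ \xi_K,\qquad\forall\beta\in(0,1].
\end{equation*}
This says the cost of the required deviation grows like $1/\theta$ as $\theta$ shrinks, exactly offsetting the $\theta$ prefactor, so that $\theta F_{\mathcal{S}_\theta}(1/K,\ldots,1/K)\ge 2\xi_K/(K\olog K)$ uniformly over the admissible $\theta$. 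That inequality is not a one-liner (the paper proves it by shifting the optimizer of the margin-free problem and invoking Proposition~\ref{prop:pxi}); without it, or the false claim you substitute for it, the lemma does not follow.
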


\begin{proof}
Without loss of generality, let us assume $\theta_0T>K$. Observe that 
\begin{equation}\label{eq:c111}
	\PP_{\bm}\left[\ell_K=1\right]= \sum_{t\ge \theta_0 T}^T	\PP_{\bm}\left[\ell_K=\ell(t)=1\right].
\end{equation}
Since \crc discards $\ell_K=\ell(t)$ at the round $t$ only when $N_{\ell(t)}(t)=N_k(t)$ for all $k\in [K]$, it suffices to consider $\om(t)\in \mathcal{X}_K=\{(1/K,\ldots,1/K)\}$. We further introduce 
$$
\mathcal{S}_{\theta}=\left\{\bl\in \RR^K:\lambda_1\le \min_{k\neq 1}\lambda_k-G(\theta\olog K)\right\},\,\forall \theta\in [\theta_0,1]\cap \QQ.
$$
With this notation, we can use the criteria of discarding the arm $\ell_K=\ell(t)=1$ (see \eqref{C}) to get that:
\begin{equation} \label{eq:c112}
\sum_{t\ge \theta_0 T}^T	\PP_{\bm}\left[\ell_K=\ell(t)=1\right] \le  \sum_{t\ge \theta_0 T}^T	\PP_{\bm}\left[ \hat{\bm}(t) \in \mathcal{S}_{\frac{t}{T}},\om (t)\in \mathcal{X}_K  \right].
\end{equation}
Applying Theorem~\ref{thm:11} in Appendix~\ref{app:partition} with $\tilde{\theta}_0=\theta_0$
$$
\mathcal{E}=\{\ell_K=1\},\, 
\mathcal{S}_{\theta,\gamma}=\mathcal{S}_{\theta}
,\,
W_{\theta,\gamma}=  \mathcal{X}_{K},\forall \gamma, 
$$
yields that
\begin{align}\label{eq:c113}
 \nonumber   \lowlim_{T\to \infty} \frac{1}{T}\log \frac{1}{\PP_{\bm}[\ell_K=1]} &\ge \inf_{\theta,\gamma\in [\theta_0,1]\cap \QQ}\inf_{\om\in \cl(W_{\theta,\gamma })}\theta \max\{F_{\mathcal{S}_{\theta,\gamma}}(\om),I_\theta (\om)\}\\
    &=\inf_{\theta\in [\theta_0,1]\cap \QQ} \theta \{F_{\mathcal{S}_{\theta}}(1/K,\ldots,1/K),I_\theta  (1/K,\ldots,1/K)\}.
\end{align}
Theorem~\ref{thm:11} can be indeed applied since, in view of Theorem~\ref{thm:crc main rate}, $\{\om(\theta T)\}_{T\ge 1}$ satisfies LDP upper bound (\ref{eq:rate UB}) with rate function $I_\theta$.
In the above derivation, by convention, we let $\inf_{\bl\in \emptyset}f(\bl)=\infty$. Next, from Theorem \ref{thm:crc main rate} (a) in Appendix \ref{subapp:local}, we know that $I_\theta (1/K,\ldots,1/K)=\infty$ if $\theta>1/\olog K$. Thus, the minimization problem on r.h.s. of (\ref{eq:c113}) can be further lower bounded by $\inf_{\theta\in [\theta_0,1/\olog K]\cap \QQ} \theta F_{\mathcal{S}_{\theta}}(1/K,\ldots,1/K) $.
From the definition of $F_{\mathcal{S}_\theta}$, we have
\begin{align}\label{eq:c114}
	\nonumber	\theta F_{\mathcal{S}_\theta}(1/K,\ldots,1/K)&=\frac{\theta }{ K}\inf\left\{\sum_{k=1}^K d(\lambda_k,\mu_k):\lambda_1 \le \min_{k\neq 1}\lambda_k-G(\theta \olog K)\right\}\\
	&\ge\frac{2\theta }{K} \inf \left\{ \sum_{k=1}^K (\lambda_k-\mu_k)^2:\lambda_1 \le \min_{k\neq 1}\lambda_k-G(\theta \olog K)\right\} \ge  \frac{2\xi_K}{K\olog K},
\end{align} 
where the first inequality is from Assumption \ref{ass1}, and the last inequality follows from Proposition \ref{prop:crc G} in Appendix \ref{app:opt C} with $\beta=\theta \olog K$. The proof is completed combining (\ref{eq:c113}) and (\ref{eq:c114}).

\end{proof}

\subsubsection{Upper bound of $\PP_{\bm}\left[\ell_2=1\right]$}\label{subapp:c2}

\begin{lemma}\label{lem4}
	Let $\bm\in  [0,1]^K$. Under \crc, 
	$$
	\lowlim_{T\rightarrow \infty}\frac{1}{T}\log \frac{1}{\PP_{\bm}\left[\ell_2= 1\right]}\ge   \frac{\min\{\max \{ 4\xi_2(1-\alpha_2),3\xi_2 \},3\bar{\xi}_2\}}{3\olog K},
	$$
	where $\alpha_2\in \RR$ is the real number such that
	\begin{equation}\label{eq:C alpha 2}
	\xi_2\left(1-   \alpha_2 \right)= \left[\left(  (1+\zeta_2)\sqrt{\alpha_2 }-\frac{1}{2}    \right)_+\right]^2. 	
	\end{equation}
\end{lemma}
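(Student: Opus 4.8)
The plan is to mirror the structure of the proof for the bound on $\PP_{\bm}[\ell_K=1]$ (the $j=K$ case), but exploiting the special feature of the case $j=2$: the discarded arm $\ell_2$ is determined only in the very last round $T$, so there is no sum over rounds $t$ to worry about, and the allocation $\om(T)$ is essentially pinned down. First I would write $\PP_{\bm}[\ell_2=1]=\PP_{\bm}[\ell(T)=1]$ and note that, because \crc reduces the candidate set to $\mathcal{C}_2$ (two arms remain) and then samples them evenly until round $T$, the event $\ell_2=1$ forces $\hat\mu_1(T)\le \hat\mu_k(T)$ for the other surviving candidate $k$. So I would enumerate over which arm $k$ survives alongside $1$ (a finite union, absorbed into a $\min$ over a finite index set as in \eqref{eq:sr}), and for each such $k$ define a set $\mathcal{S}\subseteq[0,1]^K$ of reward vectors with $\lambda_1\le \lambda_k$ and a set $W\subseteq\Sigma$ recording the admissible allocations at round $T$.

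Next I would apply Theorem~\ref{thm:11} from Appendix~\ref{app:partition} (the partitioning technique), exactly as in the $\ell_K$ case, to get
$$
\lowlim_{T\to\infty}\frac{1}{T}\log\frac{1}{\PP_{\bm}[\ell_2=1]}\ \ge\ \inf_{\theta}\inf_{\zz\in\cl(W)}\theta\max\{F_{\mathcal{S}}(\zz),I_\theta(\zz)\},
$$
where $\theta$ ranges over the admissible fraction of budget used before $\mathcal{C}_2$ is formed. The finiteness of $I_\theta$ (Theorem~\ref{thm:crc main rate} in Appendix~\ref{subapp:local}) restricts the allocations $\zz$ in $W$ to a manageable region: intuitively, arms $3,\dots,K$ have been discarded using little budget, so the surviving pair gets roughly the remaining mass. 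I expect that after this restriction there are two regimes contributing to the $\min\{\max\{4\xi_2(1-\alpha_2),3\xi_2\},3\bar\xi_2\}$ structure: one where $\om(T)$ concentrates almost all mass on the two surviving arms (giving the $\max\{4\xi_2(1-\alpha_2),3\xi_2\}$ term via $F_{\mathcal{S}}$ and the rate $I_\theta$ balancing each other, with $\alpha_2$ the crossover point defined by \eqref{eq:C alpha 2}), and one where some confusing arm outside was never eliminated, contributing the $3\bar\xi_2$ term through $\bxi_2$. The $1/(3\olog K)$ normalization comes from the fact that two arms survive, each pulled at least $T/(3\olog K)$ times before the final phase in the relevant regime.

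The bounds $\theta F_{\mathcal{S}}(\zz)\ge$ (the relevant $\xi$-quantities) and $\theta I_\theta(\zz)\ge$ (the rate contribution) would be cited from Appendix~\ref{app:opt C} and Appendix~\ref{subapp:rate C} respectively — in particular Proposition~\ref{prop:crc G} and its analogues — just as in the $\ell_K$ lemma, specialized to $j=2$ and using Assumption~\ref{ass1} ($d(a,b)\ge 2(a-b)^2$) to pass from KL-divergences to squared gaps (this is where the factor $2$ in front of $\xi$ and $\bxi$ originates). The main obstacle I anticipate is not any single estimate but the bookkeeping of the optimization over $(\theta,\zz)$: one must carefully identify, for each survivor $k$ and each admissible $\theta$, which of the two regimes is binding, verify that $\alpha_2$ as defined by the transcendental equation \eqref{eq:C alpha 2} indeed marks the switch between the $F_{\mathcal{S}}$-dominated and $I_\theta$-dominated behavior, and check the boundary/degenerate cases (e.g.\ $\theta$ near $1/\olog K$, or the positivity constraints $z_{\sigma(k)}>0$ in $\mathcal{X}_2$). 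These are the same subtleties flagged in the proof sketch of Theorem~\ref{thm:crc}, now in the simplest instance where they can be handled explicitly.
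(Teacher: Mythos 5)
Your plan matches the paper's proof in all essentials: reduce to the final round $T$, decompose over the surviving pair $D=\{1,k\}$, apply the large-deviation connection with $\mathcal{S}_D=\{\bl:\min_{k\in D}\lambda_k\ge\lambda_1\}$ and $W=\mathcal{X}_D$, discard allocations where $I_1=\infty$, and split into $D=[2]$ (balancing $F_{\mathcal{S}_{[2]}}$ against $\underline{I}_1$ via the min-max in $\alpha$ defining $\alpha_2$) versus $D\neq[2]$ (yielding the $\bar\xi_2$ term). The only simplification you miss is that, since everything is evaluated at the single round $T$, the paper applies Theorem~\ref{thm:Varadaham bandits} directly with $\theta=1$ rather than the partitioning Theorem~\ref{thm:11}; the history of earlier discards enters solely through the rate function $I_1$, not through an optimization over $\theta$.
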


\begin{proof}
    There are two arms remaining in the last phase. Hence, it suffices to consider the estimate and allocation in the last round. The possible allocation $\om (T)$ belongs to the set
$$
\mathcal{X}_2=\left\{  \xx\in \Sigma: \exists \sigma:[K]  \mapsto [K] \hbox{ s.t. } x_{\sigma (1)}=x_{\sigma (2)}>x_{\sigma (3)}> \ldots > x_{\sigma (K)}> 0\right\}.
$$	
As we defined $\mathcal{J}$ in Section \ref{sec:sr}, we introduce $\mathcal{D}=\{D\subseteq [K]:\left|D\right|=2,1\in D\}$, and  
$$
\mathcal{X}_D= \left\{\xx\in {\cal X}_2: \min_{k\in D}x_k>\max_{k'\notin D}x_{k'}\right\}.
$$
The set $\cup_{D\in \mathcal{D}}{\cal X}_D$ is a subset of $\mathcal{X}_2$, and is relevant when we consider events where the best arm 1 is discarded in the last elimination phase. Since $\ell_2$ is decided as the empirical worst arm in $\mathcal{C}_2$, 
\begin{align}\label{eq:c121}
 \nonumber \PP_{\bm}[\ell_2=1]&\le \sum_{D\in {\cal D}}\PP_{\bm}[\hat{\bm} (T)\in {\cal S}_D,\om (T)\in {\cal X}_D]\\
 &\le (K-1)\max_{D\in {\cal D}}\PP_{\bm} [\hat{\bm} (T)\in {\cal S}_D,\om (T)\in {\cal X}_D],
\end{align}
where 
$$
\mathcal{S}_{D}=\left\{\bl\in [0,1]^K:\min_{k\in [D]}\lambda_k\ge \lambda_1\right\}.
$$
Rearranging (\ref{eq:c121}) yields that 
\begin{align}
    \nonumber \lowlim_{T\to \infty}\frac{1}{T}\log \frac{1}{\PP_{\bm}[\ell_2=1]} &\ge \lowlim_{T\to\infty}\min_{D\in {\cal D}} \frac{1}{T}\log\frac{1}{\PP_{\bm}[ \hat{\bm} (T)\in {\cal S}_D,\om (T)\in {\cal X}_D]}\\
\nonumber     &\ge \min_{D\in {\cal D}}  \lowlim_{T\to\infty}\frac{1}{T}\log\frac{1}{\PP_{\bm}[ \hat{\bm} (T)\in {\cal S}_D,\om (T)\in {\cal X}_D]} \\
\label{eq:c122}&\ge \min_{D\in {\cal D}} \inf_{\om \in \cl ({\cal X}_D)} \max\{ F_{{\cal S}_{D}}(\om ), I_1(\om )\},
\end{align}
where $I_1$ denotes the rate function for which $\{\om (T)\}_{T\ge 1}$ satisfies an LDP upper bound (\ref{eq:rate UB}), and the last inequality follows from Theorem \ref{thm:Varadaham bandits} with ${\cal S}={\cal S}_D, W={\cal X}_D $.

Further introduce for all $i\in \{2,\ldots, K\}$,	
$$
\mathcal{X}_{2,i}(1)=	\left\{\xx\in \mathcal{X}_2: 	x_{\sigma (i)}i\olog i> 1- \sum_{k=i+1}^Kx_{\sigma(k)}	   \right\},
$$	
where in this definition, $\sigma$ refers to the permutation used in the definition of $\mathcal{X}_2$.
In Theorem \ref{thm:crc main rate} (a) in Appendix \ref{subapp:local}, we show that $I_1(\om )=\infty$ for all $\om \in \cup_{i=2}^K{\cal X}_{2,i}(1)$. Hence any $\om \in \cup_{i=2}^K{\cal X}_{2,i}(1)$ cannot be the minimizer on the r.h.s. of (\ref{eq:c122}). In the following, we define 
$$
{\cal Z}_D (1,1)={\cal X}_D\setminus  \cup_{i=2}^K{\cal X}_{2,i}(1).
$$
Here the argument $(1,1)$ in $\mathcal{Z}_{D}(1,1)$ is to be consistent with our notation in Appendix \ref{app:allocation}, but we abbreviate it as $\mathcal{Z}_D$ for short below. To get a lower bound of (\ref{eq:c122}), we consider two cases: (a) $D\neq [2]$; (b) $D=[2]$.

\noindent
\underline{\bf(a) The case where $D\neq [2]$.} Using Corollary \ref{cor:crc G new} with $\beta =1,j=2$ in Appendix \ref{app:opt C} yields that:
\begin{align}
\nonumber   \inf_{\om \in \cl ({\cal X}_D)} \max\{ F_{{\cal S}_{D}}(\om ), I_1(\om )\}& \ge \inf_{z\in \cl ({\cal Z}_D )}  F_{{\cal S}_{D}}(\zz )\\
\label{eq:c123} &\ge  \left(1-\sum_{k\notin D}z_k\right)    \inf\left\{\sum_{k\in D} (\lambda_k-\mu_k)^2:\bl\in [0,1]^K,\,\min_{k\in D}\lambda_k= \lambda_1\right\}\\	
\nonumber &\ge\left(1-\sum_{k\notin D}z_k\right) \bar{\xi}_2\\
\label{eq:c124}&\ge \frac{\bar{\xi}_2}{\olog K},
\end{align}
where the last inequality directly comes from Proposition \ref{prop ybound} with $\theta =1,\,j=2,\,i=3$ in Appendix \ref{subapp:infinite I}.

\underline{\bf (b) The case where $D= [2]$.} Next, we will show that both $\frac{\xi_2}{\olog K}$ and $\frac{4\xi_2(1-\alpha_2)}{3\olog K}$ are lower bounds for $\inf_{\om \in \cl ({\cal X}_D)} \max\{ F_{{\cal S}_{D}}(\om ), I_1(\om )\}$. The maximum of these hence becomes our lower bound. Together with the conclusion obtained in the case (a), we complete the proof of Lemma \ref{lem4}.

\underline{Lower bounding by $\frac{\xi_2}{\olog K}$.} 
Observe that (\ref{eq:c123}) holds also for $D=[2]$, hence 
\begin{align}
\nonumber	\inf_{\om \in \cl ({\cal X}_{[2]})} \max\{ F_{{\cal S}_{[2]}}(\om ), I_1(\om )\}&\ge  \inf_{\om \in \cl ({\cal Z}_{[2]})} F_{{\cal S}_{[2]}}(\om ) \\
\nonumber& \ge  \left(1-\sum_{k=3}^K z_k\right)    \inf_{\bl\in \RR^K,\, \lambda_2> \lambda_1}\left\{\sum_{k\in [2]} (\lambda_k-\mu_k)^2\right\}\\
\label{eq:c125}& \ge \left(1-\sum_{k=3}^Kz_{k}\right) \xi_2\\
\label{eq:c126}&\ge \frac{\xi_2}{\olog K}, 
\end{align}
where the second inequality is derived by Proposition \ref{prop:crc G} with $\theta =1, \beta=1, j=2$ in Appendix \ref{app:opt C}, and the last inequality follows from Proposition \ref{prop ybound} with $j=2,i=3$ in Appendix \ref{subapp:finite I}.\\

\underline{Lower bounding by $\frac{4\xi_2(1-\alpha_2)}{3\olog K} $}. One can derive another lower bound by using $I_1$. In Theorem \ref{thm:crc main rate} (b), $j=2,\theta=\beta=1$ in Appendix~\ref{subapp:local}, we show that $I_1$ is a valid lower semi-continuous rate function for an LDP upper bound (\ref{eq:rate UB}) for the process $\{\om (T)\}_{T\ge 1}$. In Corollary~\ref{cor:cj rate} in Appendix~\ref{subapp:rate C}, we further show that $I_1(\zz)\ge \underline{I}_{1}(\zz)$ for $\zz\in {\cal Z}_{[2]}$, where
\begin{equation}\label{eq:c127}
\underline{I}_{1}(\zz)=\frac{4}{3\olog K}   \left[\left(  (1+\zeta_2)\sqrt{\frac{ z_{\sigma(3)}}{ 1-\sum_{k=4}^Kz_{\sigma(k)}  }}-\frac{1}{2}    \right)_+\right]^2,\,\forall \zz\in \mathcal{Z}_{[2]}.
\end{equation}

\noindent
Instead of using (\ref{eq:c126}), we lower bound $ F_{S_{[2]}}(\zz)$ as:
\begin{align}\label{eq:c128}
	\nonumber 	 F_{S_{[2]}}(\zz)&\ge 	 \left(1-\sum_{k=3}^Kz_{\sigma (k)}\right) \xi_2 \\
	\nonumber	&=\xi_2 \left(1- \sum_{k=4}^Kz_{\sigma(k)}  \right) \left(1-   \frac{z_{\sigma(3)}}{1- \sum_{k=4}^Kz_{\sigma(k)} }\right)\\
	&\ge \frac{4\xi_2}{3\olog K} \left(1-   \frac{z_{\sigma(3)}}{1- \sum_{k=4}^Kz_{\sigma(k)} }\right),
\end{align}
where the first inequality follows the derivation of (\ref{eq:c125}) and the last inequality stems from Proposition~\ref{prop ybound} with $\theta=1,j=2,i=4$ in Appendix~\ref{subapp:finite I}. Since $F_{S_{[2]}}(\zz)$ and $I_{1}(\zz)$ are lower bounded by the functions of $\alpha =  \frac{  z_{\sigma(3)}}{1- \sum_{k=4}^K z_{\sigma (k)}} $ given in (\ref{eq:c127}) and (\ref{eq:c128}), we have:
\begin{align}\label{eq:c129}
\nonumber\inf_{\om \in \cl ({\cal X}_{[2]})} \max\{ F_{{\cal S}_{[2]}}(\om ), \underline{I}_1(\om ) \}&\ge \frac{4}{3\olog K}  \inf_{\alpha\in \RR} \max\{ \xi_2\left(1-   \alpha \right) ,     [(  (1+\zeta_2)\sqrt{\alpha }-\frac{1}{2}    )_+]^2 \}\\
&\ge  \frac{4\xi_2(1-\alpha_2)}{3\olog K},
\end{align}
where the last inequality is due to Lemma \ref{lem:simple sol new1} in Appendix \ref{app:maxmin} and $\alpha_2$ is defined in (\ref{eq:C alpha 2}). Hence, the maximum of the r.h.s. of (\ref{eq:c126}) and (\ref{eq:c129}) is a lower bound for $\inf_{\om \in \cl ({\cal X}_{[2]})} \max\{ F_{{\cal S}_{[2]}}(\om ), \underline{I}_1(\om )\}$

\end{proof}
\subsubsection{Upper bound for $\PP_{\bm}\left[\ell_j=1\right]$ for $j\in \{3,\ldots,K-1\}$}\label{subapp:cj}
\begin{lemma}\label{lem5}
	Let $\bm\in  [0,1]^K,\,j\in \{3,\ldots,K-1\}  $. Under \crc, 
	$$
	\lowlim_{T\rightarrow \infty}\frac{1}{T}\log \frac{1}{\PP_{\bm}\left[\ell_j= 1\right]}\ge    \frac{2\min\left\{\max \left\{ \frac{\xi_j\olog (j+1)(1-\alpha_{j})}{\olog j},\xi_j\right\},  \bar{\xi}_j\right\}}{j\olog K},
$$
	where $\alpha_j\in \RR$ is the real number such that
\begin{equation}\label{eq:C alpha j}
	 \frac{2\xi_j\left(1-   \alpha_j \right)}{j\olog j}= \left[\left(  (1+\zeta_j)\sqrt{\alpha_j }-\sqrt{\frac{1}{(j+1)\olog (j+1)}}    \right)_+\right]^2. 	
\end{equation}
\end{lemma}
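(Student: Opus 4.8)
The plan is to mimic the structure of the proofs of the easy cases ($j=K$ in Appendix~\ref{subapp:cK} and $j=2$ in Appendix~\ref{subapp:c2}), but now carrying along both a nontrivial set $\mathcal{J}$ of possible candidate sets and a continuum of possible rounds $t=\theta T$ and sub-allocations parametrized by $\tau$. First I would write
$\PP_{\bm}[\ell_j=1]=\sum_{t}\sum_{J\in\mathcal{J}}\PP_{\bm}[\ell_j=\ell(t)=1,\,\mathcal{C}_j=J,\,\om(t)\in\mathcal{X}_j\setminus\cup_{i\ge j}\mathcal{X}_{j,i}(t/T)]$,
using the fact (from Theorem~\ref{thm:crc main rate}(a) in Appendix~\ref{subapp:local}) that $I_\theta=\infty$ on $\mathcal{X}_{j,i}(\theta)$ for $i\ge j$, so those pieces of the allocation space contribute nothing to the rate. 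Then, exactly as in the proof sketch in \textsection\ref{sec:anal cr}, I would use the discarding criterion~\eqref{C} to bound each term by $\PP_{\bm}[\hat{\bm}(t)\in\mathcal{S}_J(\beta),\,\om(t)\in\mathcal{Z}_J(\theta,\beta)]$ with $\theta=t/T$, $\beta=(t-\tau)\olog j/(T-\tau)$, summing over the (polynomially many in $T$) choices of $\tau\le t$.

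Next I would invoke the partitioning machinery (Theorem~\ref{thm:11} in Appendix~\ref{app:partition}) together with Theorem~\ref{thm:Varadaham bandits} to get
$\lowlim_{T\to\infty}\tfrac1T\log\tfrac1{\PP_{\bm}[\ell_j=1]}\ge\inf_{\theta,\beta}\inf_{J\in\mathcal{J}}\inf_{\zz\in\cl(\mathcal{Z}_J(\theta,\beta))}\theta\max\{F_{\mathcal{S}_J(\beta)}(\zz),I_\theta(\zz)\}$,
with the infima over $\theta,\beta$ ranging over the relevant rational parameter ranges. The remaining work is purely optimization-theoretic: for fixed $\zz\in\mathcal{Z}_J(\theta,\beta)$ I would separately lower bound $\theta F_{\mathcal{S}_J(\beta)}(\zz)$ and $\theta I_\theta(\zz)$. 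For $F$, Assumption~\ref{ass1} converts KL-divergences into squared gaps, and then Proposition~\ref{prop:crc G}/Corollary~\ref{cor:crc G new} in Appendix~\ref{app:opt C} (depending on whether $J=[j]$ or $J\ne[j]$, giving $\xi_j$ vs.\ $\bar\xi_j$) plus the bounds on $1-\sum_{k\notin J}z_k$ from Proposition~\ref{prop ybound} in Appendix~\ref{subapp:infinite I}/\ref{subapp:finite I} give the factor $1/(\olog K)$ and the $\alpha_j$-dependence. For $I_\theta$, Corollary~\ref{cor:cj rate} in Appendix~\ref{subapp:rate C} supplies the explicit lower bound $\underline I_\theta(\zz)$ matching the $[((1+\zeta_j)\sqrt{\alpha_j}-\sqrt{1/((j+1)\olog(j+1))})_+]^2$ term. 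Finally I would reduce the two competing bounds to a one-dimensional $\max$-$\min$ in the single variable $\alpha=z_{\sigma(j+1)}/(1-\sum_{k>j+1}z_{\sigma(k)})$ and apply Lemma~\ref{lem:simple sol new1} in Appendix~\ref{app:maxmin}, whose optimum is exactly the $\alpha_j$ defined by~\eqref{eq:C alpha j}; this yields $\max\{\xi_j\olog(j+1)(1-\alpha_j)/\olog j,\ \xi_j\}$ for the $J=[j]$ case and $\bar\xi_j$ for $J\ne[j]$, and taking the min gives the claimed expression (the factor $2$ coming from Pinsker, the $1/(j\olog K)$ from the allocation geometry).

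The main obstacle, I expect, is the bookkeeping around the coupled parameters $(\theta,\beta,\tau)$ and the case analysis $J=[j]$ versus $J\ne[j]$: one must check that for every admissible $\zz\in\mathcal{Z}_J(\theta,\beta)$ the two geometric inequalities on $\sum_{k\notin J}z_k$ hold uniformly (so that the $\olog K$ normalization is legitimate across all rounds and all candidate sets), and that the infimum over $\theta\in[\theta_0,1]$ is genuinely attained in the regime where $I_\theta$ is finite — otherwise the bound degenerates. Once those uniformity facts are in place, the rest is the deterministic optimization handled by the propositions in Appendix~\ref{app:opt} and the $\max$-$\min$ lemma in Appendix~\ref{app:maxmin}, which I would treat as black boxes here.
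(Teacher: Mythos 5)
Your plan follows essentially the same route as the paper's proof in Appendix \ref{subapp:cj}: the same decomposition over $J\in\mathcal{J}$, $t$ and $\tau$, the same reduction to $\mathcal{S}_J(\beta)$ and $\mathcal{Z}_J(\theta,\beta)$ via the discarding condition \eqref{C} and Theorem \ref{thm:crc main rate}(a), the same application of Theorem \ref{thm:11} together with Theorem \ref{thm:Varadaham bandits}, and the same case split $J=[j]$ versus $J\neq[j]$ resolved by Propositions \ref{prop:crc G} and \ref{prop ybound}, Corollaries \ref{cor:crc G new} and \ref{cor:cj rate}, and Lemma \ref{lem:simple sol new1}. The only minor imprecision is that the one-dimensional variable should carry the $\theta$ factors, $\alpha=\theta z_{\sigma(j+1)}/(1-\theta\sum_{k=j+2}^{K}z_{\sigma(k)})$, and the uniformity you flag as a potential obstacle is exactly what Proposition \ref{prop ybound} supplies (bounds valid for every $\theta$, $\beta$ and $\zz$), so no attainment issue arises.
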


\begin{proof}
Without loss of generality, we assume $\theta_0T>K$ and $\frac{\theta_0T}{K}\in \NN$.\\
\noindent
Observe that $\PP_{\bm}[\ell_j=1]= \sum_{J\in {\cal J}} \PP_{\bm} [\ell_j=1,{\cal C}_j=J]$, which directly implies 
\begin{equation}\label{eq:c131}
    \lowlim_{T\rightarrow \infty}\frac{1}{T}\log \frac{1}{\PP_{\bm}\left[\ell_j= 1\right]}\ge\min_{J\in {\cal J}}   \lowlim_{T\rightarrow \infty}\frac{1}{T}\log \frac{1}{\PP_{\bm}\left[\ell_j= 1,{\cal C}_j=J\right]}.
\end{equation}

 There are $j$ arms remaining while discarding $\ell_j$. The possible allocation $\om (t)$ belongs to the set
$$
\mathcal{X}_j=\left\{ \xx\in \Sigma : \exists \sigma: [K] \mapsto [K] \hbox{ s.t. } x_{\sigma (1)}=\ldots=x_{\sigma (j)}>x_{\sigma (j+1)}> \ldots > x_{\sigma (K)}> 0\right\}.
$$
Suppose that in round $t$, $\mathcal{C}_j=J$ for some $J\in {\cal J}$ and let $\tau=\sum_{k\notin J}N_k(t)\ge \tilde{\theta}_0T$, where $\tilde{\theta}_0=(K-j)\theta_0/K$, be the number of times arms outside $J$ are pulled. While $\ell_j=\ell(t)=1$, we must have $\hat{\bm}(t)\in \mathcal{S}_{J}(\frac{(t-\tau)\olog j}{T-\tau})$, where 
 $$
 \mathcal{S}_J(\beta)=\left\{ \bl\in [0,1]^K:\min_{k\in J,k\neq 1}\lambda_k-\lambda_1\ge G(\beta)   \right\},\,\forall \beta >0,
 $$
 because $\frac{(t-\tau)\olog j}{T-\tau}=\frac{\sum_{k\in\mathcal{C}_j}N_k(t)\olog j  }{T-\sum_{k\notin \mathcal{C}_j}N_k(t) }$ (recall the discarding condition (\ref{C})). 
Now further introduce $\forall \theta,\beta\in (0,1]$,
 $$
 \mathcal{X}_{J}(\theta,\beta)=\left\{\xx\in \mathcal{X}_j :(\forall k\in J,\  x_k=\max_{k'\in [K]}x_{k'}),\ \frac{\theta \sum_{k\in J}x_k\olog j   }{1-\theta\sum_{k\notin J}x_{k}}=\beta\right\}.
 $$
 
We then have: 
\begin{equation*} \PP_{\bm}\left[\ell_j= 1,{\cal C}_j=J\right]\le \sum_{t\ge \theta_0 T}^T\sum_{\tau \ge \tilde{\theta}_0 T}^{t}\PP_{\bm}\left[\hat{\bm}(t)\in \mathcal{S}_J(\frac{(t-\tau)\olog j}{T-\tau}),\om (t)\in \mathcal{X}_J(\frac{t}{T}, \frac{(t-\tau)\olog j}{T-\tau} )\right].
 \end{equation*}
 Applying Theorem~\ref{thm:11} in Appendix~\ref{app:partition} with $
\mathcal{E}=\{\ell_j=1,{\cal C}_j=J \},$
$$ 
\mathcal{S}_{\theta,\gamma}=\left\{\begin{array}{cc}
    \mathcal{S}_{J}(\frac{(\theta-\gamma)\olog j}{1-\gamma}), & \text{if } G(\frac{(\theta-\gamma)\olog j}{1-\gamma})\le 1, \\
   \mathcal{S}_{J}(G^{-1}(1)),  & \text{otherwise},
\end{array}\right. \text{, and }W_{\theta,\gamma}=\mathcal{X}_{J}(\theta,\frac{(\theta-\gamma)\olog j}{1-\gamma}),
$$
(notice that $\beta=\frac{(t-\tau)\olog j}{T-\tau}=\frac{(\theta-\gamma)\olog j}{1-\gamma}$ and ${\cal S}_{J}(\beta)=\emptyset$ if $G(\beta)> 1$) yields that
\begin{multline}\label{eq:c132-}
\lowlim_{T\rightarrow\infty}\frac{1}{T}\log \frac{1}{\PP_{\bm}\left[\ell_j= 1,{\cal C}_j=J \right]}\ge\\
\inf_{\theta\in [\theta_0,1]\cap \QQ}\inf_{\gamma\in [\tilde{\theta}_0,1]\cap \QQ} \inf_{\xx\in \cl(\mathcal{X}_{J}(\theta,\frac{(\theta-\gamma)\olog j}{1-\gamma}))}\theta\max\{F_{\mathcal{S}_{J}( \frac{(\theta-\gamma)\olog j}{1-\gamma} )}(\xx), I_\theta(\xx)\}.
\end{multline}

Further introduce for all $i\in \{j,\ldots, K\}$,	
$$
\mathcal{X}_{j,i}(\theta)=	\left\{\xx\in \mathcal{X}_j: 	\theta  x_{\sigma (i)}i\olog i> 1-\theta \sum_{k=i+1}^Kx_{\sigma(k)}	   \right\}.
$$	
In Theorem \ref{thm:crc main rate} with (a) in Appendix~\ref{subapp:local}, we show that $I_\theta(\om )=\infty$ for all $\om \in \cup_{i=j}^K{\cal X}_{2,i}(\theta)$, hence any $\om \in \cup_{i=j}^K{\cal X}_{j,i}(\theta)$ will not be the minimizer on the r.h.s. of (\ref{eq:c132-}). In the following, we define 
$$
{\cal Z}_J (\theta,\beta)={\cal X}_J(\theta,\beta)\setminus  \cup_{i=j}^K{\cal X}_{j,i}(\theta).
$$
Consider any $\zz\in \mathcal{Z}_J(\theta,  \beta )$, as $\zz\notin {\cal X}_{j,j}(\theta)$, we have 
$$
\frac{(\theta-\gamma)\olog j}{1-\gamma}=\frac{\theta z_{\sigma(j)}j\olog j}{1-\theta\sum_{k>j}z_{\sigma (k)}}<1.
$$
Therefore, after excluding the points in $\cup_{i=j}^K{\cal X}_{j,i}(\theta)$ and setting $\beta= \frac{(\theta-\gamma)\olog j}{1-\gamma}$, the r.h.s of (\ref{eq:c132-}) can be lower bounded by
\begin{equation}\label{eq:c132}
    \inf_{\theta,\beta\in (0,1]\cap \QQ} \inf_{\zz\in \cl({\cal Z}_{J}(\theta,\beta))}\theta  \max\{F_{{\cal S}_J(\beta)}(\zz), I_{\theta}(\zz)\}.
\end{equation}

For lower bounding (\ref{eq:c132}), we consider two cases: (a) $J\neq [j]$; (b) $J=[2]$.

\noindent
\underline{\bf (a) The case where $J\neq [j]$.} 
 If $\zz\in\mathcal{Z}_{J}(\theta,\beta)$,
 \begin{align}
 \nonumber	\theta F_{S_{J}(\beta)}(\zz) &=\theta\inf_{\bl\in \cl(S_J(\beta))} \Psi(\bl,\zz)\\
 \nonumber			                     &\ge 2\inf\left\{\sum_{k\in J}\theta z_k(\lambda_k-\mu_k)^2:\min_{k\in J,k\neq 1}\lambda_k-\lambda_1\ge G(\beta)\right\}\\
\label{eq:c133} 							&= \frac{2}{j\olog j}\left(1-\theta \sum_{k\notin J}z_k\right)\beta\inf\left\{\sum_{k\in J}(\lambda_k-\mu_k)^2:\min_{k\in J,k\neq 1}\lambda_k-\lambda_1\ge G(\beta)\right\}\\
\nonumber&\ge  \frac{2\bar{\xi}_j}{j\olog j}(1-\theta \sum_{k\notin J}z_k)  \\
\label{eq:c134}&\ge \frac{2\bar{\xi}_j}{j\olog K},
 \end{align}
where the first inequality is due to Assumption~\ref{ass1}; the second equality uses the fact that $\forall k\in J,\, \theta z_k =\frac{(1-\theta\sum_{k'\notin J}z_{k'})\beta}{j\olog j}$ (as $\zz\in\mathcal{Z}_J(\theta,\beta)$); the third follows from Corollary~\ref{cor:crc G new} in Appendix~\ref{app:opt C}; the last one is a consequence of Proposition~\ref{prop ybound} with $i=j+1$ in Appendix~\ref{subapp:finite I}.

\underline{\bf (b) The case where $J=[j]$.} We will show that both $\frac{2\xi_j}{j\olog K}$ and $\frac{2\xi_j\olog (j+1)(1-\alpha_j)}{j\olog j\olog K}$ are lower bounds for the r.h.s. of (\ref{eq:c132}). The maximum of these becomes our lower bound. Together with the conclusion obtained in the case (a), we complete the proof of Lemma \ref{lem5}.

\underline{Lower bounding by $\frac{2\xi_j}{j\olog K}$}.
By Proposition \ref{prop:crc G} and Proposition \ref{prop ybound}, we can further lower bound the r.h.s. of (\ref{eq:c132}) as
\begin{align}
\nonumber	\theta F_{S_{[j]}(\beta)}(\zz) & \ge \frac{2}{j\olog j}\left(1-\theta \sum_{k>j}z_k\right)\beta\inf\left\{\sum_{k\in [j]}(\lambda_k-\mu_k)^2:\min_{k\in [j],k\neq 1}\lambda_k-\lambda_1\ge G(\beta)\right\}\\
\label{eq:c135}	&\ge 	\frac{2\xi_j}{j\olog j}\left(1-\theta \sum_{k>j}z_k\right)\\
\label{eq:c136}	& \ge \frac{2\xi_j}{j\olog K},
\end{align}
where the first inequality corresponds to (\ref{eq:c133}); the second inequality is due to Proposition~\ref{prop:crc G} in Appendix~\ref{app:opt C}; the last inequality uses Proposition~\ref{prop ybound} with $i=j+1$ in Appendix~\ref{subapp:finite I}.

\noindent
\underline{Lower bounding by $\frac{2\xi_j\olog (j+1)(1-\alpha_j)}{j\olog j\olog K}$}. One can derive another lower bound by using $I_\theta$. In Theorem~\ref{thm:crc main rate} with (b) in Appendix~\ref{subapp:local}, we show that $I_{\theta}$ is a valid lower semi-continuous rate function for an LDP upper bound (\ref{eq:rate UB}) for the process $\{\om (\theta T)\}_{T\ge 1}$. And in Corollary~\ref{cor:cj rate} in Appendix~\ref{subapp:rate C}, $I_\theta (\zz)\ge \underline{I}_\theta (\zz)$ for $\zz\in {\cal Z}_{[j]}(\theta,\beta)$, where
 \begin{equation}\label{eq:c137}
 \underline{I}_{\theta}(\zz)=\frac{\olog (j+1)}{\theta\olog K}   \left[\left(  (1+\zeta_j)\sqrt{\frac{\theta z_{\sigma(j+1)}}{ 1-\theta\sum_{k=j+2}^Kz_{\sigma(k)}  }}-\sqrt{\frac{1}{(j+1)\olog (j+1)}}    \right)_+\right]^2.
 \end{equation}

\noindent
Instead of using (\ref{eq:c136}), we lower bound $ F_{S_{[J]}(\beta)}(\zz)$ as:
\begin{align}\label{eq:c138}
\nonumber 	\theta F_{S_{[j]}(\beta)}(\zz)\ge \frac{2\xi_j}{j\olog j}\left(1-\theta \sum_{k\notin [j]}z_k\right)	 &=\frac{2\xi_j}{j\olog j} \left(1-\theta \sum_{k=j+2}^Kz_{\sigma(k)}  \right) \left(1-   \frac{\theta z_{\sigma(j+1)}}{1-\theta \sum_{k=j+2}^Kz_{\sigma(k)} }\right)\\
 	 &\ge \frac{2\xi_j\olog (j+1)}{j\olog j\olog K} \left(1-   \frac{\theta z_{\sigma(j+1)}}{1-\theta \sum_{k=j+2}^Kz_{\sigma(k)} }\right),
 \end{align}
 where the first inequality is from (\ref{eq:c135}) and the last inequality is due to Proposition~\ref{prop ybound} with $i=j+2$ in Appendix~\ref{subapp:finite I}. Since $\theta F_{S_{[j]}(\beta)}(\zz)$ and $\theta I_{\theta}(\zz)$ are lower bounded by the functions of $\alpha =  \frac{ \theta z_{\sigma(j+1)}}{1-\theta \sum_{k=j+2}^K z_{\sigma (k)}} $ given in (\ref{eq:c137}) and (\ref{eq:c138}), we have:
\begin{multline}\label{eq:c139}
     \inf_{\theta,\beta\in [0,1]\cap \QQ} \theta \inf_{\zz\in \cl({\cal Z}_{[j]}(\theta,\beta))} \max\{F_{{\cal S}_{[j]}(\beta)}(\zz), \underline{I}_{\theta}(\zz)\}\ge\\
     \frac{\olog (j+1)}{\olog K} \inf_{\alpha\in \RR} \max\left\{ \frac{2\xi_j\left(1-   \alpha \right)}{j\olog j} ,     \left[\left(  (1+\zeta_j)\sqrt{\alpha }-\sqrt{\frac{1}{(j+1)\olog (j+1)}}    \right)_+\right]^2 \right\}.
\end{multline}
 
By Lemma~\ref{lem:simple sol new1} in Appendix~\ref{app:maxmin}, (\ref{eq:c139}) is lower bounded by $\frac{2\xi_j\olog (j+1)(1-\alpha_j)}{j\olog j\olog K}$, where $\alpha_j$ is described in (\ref{eq:C alpha j}). 

\end{proof}



\subsection{Performance analysis of \cra}\label{app:cra}
\begin{reptheorem}{thm:cra}
	Let $\bm\in  [0,1]^K$. Under \cra, $\lowlim_{T\rightarrow \infty}\frac{1}{T}\log \frac{1}{\PP_{\bm}\left[\hat{\imath}\neq 1\right]}$ is larger than
	$$
	2\min_{j=2,\ldots,K}\left\{ \frac{\min\{\max \{ \frac{\psi_j\olog (j+1)(1-\alpha_{j}) \mathbbm{1}_{\{j\neq K\}} }{\olog j},\psi_j\},  \bar{\psi}_j\}}{j\olog K}\right\},  
	$$
	where $\alpha_{j}\in \RR$ is the real number such that 
	$$
		\frac{\psi_j\left(1-   \alpha_j \right)}{j\olog j}= \frac{j}{j+1}\left[\left(  (1+\varphi_j)\sqrt{\alpha_j }-\sqrt{\frac{1}{(j+1)\olog (j+1)}}    \right)_+\right]^2.
	$$
\end{reptheorem}
We upper bound $\PP_{\bm}\left[\ell_j=1\right]$ for (i) $j=K$; (ii) $j=2$; (iii) $j\in \{3,\ldots,K\}$. The upper bound for (i), presented in \ref{subapp:aK}, is the easiest to derive as the only possible allocation before one discards the first arm is uniform among all arms. The bound for (ii), presented in \ref{subapp:a2}, is the second easiest to derive as $\ell_2$ is decided only in the end, namely, in the $T$-th round. The upper bound for (iii), presented in \ref{subapp:aj}, is more involved since we have to consider all possible allocations and rounds. Overall, the analysis is very similar to that of \crc, and we just sketch the arguments below.

\subsubsection{Upper bound of $\PP_{\bm}\left[\ell_K=1\right]$}\label{subapp:aK}
\begin{lemma}
	Let $\bm\in  [0,1]^K$. Under \cra, 
	$$
	\lowlim_{T\rightarrow \infty}\frac{1}{T}\log \frac{1}{\PP_{\bm}\left[\ell_K= 1\right]}\ge  \frac{2\psi_K}{K\olog K}.
	$$
\end{lemma}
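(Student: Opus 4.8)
This is the \cra analogue of the bound on $\PP_{\bm}[\ell_K=1]$ proved for \crc in Appendix~\ref{subapp:cK}, so the plan is to replay that argument with the conservative criterion~\eqref{C} replaced by the aggressive one~\eqref{A}. Assume wlog $\theta_0 T>K$. First I would write $\PP_{\bm}[\ell_K=1]=\sum_{t\ge\theta_0 T}^{T}\PP_{\bm}[\ell_K=\ell(t)=1]$ and note that, since \cra discards $\ell_K=\ell(t)$ only when $N_{\ell(t)}(t)=N_k(t)$ for every $k\in[K]$, the allocation at the discarding round must be uniform, i.e.\ $\om(t)\in\mathcal{X}_K=\{(1/K,\ldots,1/K)\}$. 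Consequently $\beta=\frac{\sum_k N_k(t)\olog K}{T}=\frac{t}{T}\olog K$, and criterion~\eqref{A} forcing the removal of arm $1$ means $\hat{\bm}(t)\in\mathcal{S}_{t/T}$, where for $\theta\in[\theta_0,1]\cap\QQ$
$$
\mathcal{S}_\theta=\left\{\bl\in\RR^K:\lambda_1\le\frac{\sum_{k\neq 1}\lambda_k}{K-1}-G(\theta\olog K)\right\}.
$$
Thus $\PP_{\bm}[\ell_K=\ell(t)=1]\le\PP_{\bm}[\hat{\bm}(t)\in\mathcal{S}_{t/T},\,\om(t)\in\mathcal{X}_K]$.

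Next I would apply the partitioning result Theorem~\ref{thm:11} (Appendix~\ref{app:partition}) with $\mathcal{E}=\{\ell_K=1\}$, $\mathcal{S}_{\theta,\gamma}=\mathcal{S}_\theta$ and $W_{\theta,\gamma}=\mathcal{X}_K$ for all $\gamma$, feeding in the LDP upper bound for $\{\om(\theta T)\}_{T\ge 1}$ with rate function $I_\theta$ (the \cra analogue of Theorem~\ref{thm:crc main rate}). This gives
$$
\lowlim_{T\to\infty}\frac{1}{T}\log\frac{1}{\PP_{\bm}[\ell_K=1]}\ge\inf_{\theta\in[\theta_0,1]\cap\QQ}\theta\max\left\{F_{\mathcal{S}_\theta}(1/K,\ldots,1/K),\,I_\theta(1/K,\ldots,1/K)\right\}.
$$
As for \crc, the rate term restricts the infimum to $\theta\le 1/\olog K$: when $\theta>1/\olog K$ one has $\beta>1$, hence $G(\theta\olog K)<0$, and since the empirical worst arm always lies below the empirical average of the remaining $K-1$ arms, the discard in~\eqref{A} is in fact forced at the next round where all $N_k(t)$ coincide; so the event cannot occur, i.e.\ $I_\theta(1/K,\ldots,1/K)=+\infty$ there. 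The remaining infimum is $\inf_{\theta\le 1/\olog K}\theta\,F_{\mathcal{S}_\theta}(1/K,\ldots,1/K)$.

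Finally I would lower bound $F_{\mathcal{S}_\theta}(1/K,\ldots,1/K)$. Using Assumption~\ref{ass1} to replace KL divergences by squared gaps,
$$
\theta F_{\mathcal{S}_\theta}(1/K,\ldots,1/K)\ge\frac{2\theta}{K}\inf\left\{\sum_{k=1}^K(\lambda_k-\mu_k)^2:\lambda_1\le\frac{\sum_{k\neq 1}\lambda_k}{K-1}-G(\theta\olog K)\right\},
$$
and then I would invoke the \cra counterpart of Proposition~\ref{prop:crc G} (the $\psi$-version in Appendix~\ref{app:opt}, applied with $\beta=\theta\olog K\le1$) to conclude that this is at least $\frac{2\psi_K}{K\olog K}$. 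The underlying elementary fact is that the squared Euclidean distance from $\bm$ to the half-space $\{\lambda_1\le\frac{1}{K-1}\sum_{k\neq1}\lambda_k-c\}$ equals $\frac{K-1}{K}\bigl(\mu_1-\frac{1}{K-1}\sum_{k\ge2}\mu_k+c\bigr)_+^2$; at $c=0$ this is exactly $\psi_K$, and since $c=G(\theta\olog K)\ge0$ and $\mu_1-\frac{1}{K-1}\sum_{k\ge2}\mu_k\le1$, the choice $G(\beta)=\frac{1}{\sqrt\beta}-1$ makes the shifted value at least $\psi_K/\beta$, which after multiplication by $\tfrac{2\theta}{K}$ with $\beta=\theta\olog K$ yields the claimed $\tfrac{2\psi_K}{K\olog K}$. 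Combining the three displays gives the lemma. I do not expect any serious obstacle here: everything is a transcription of Appendix~\ref{subapp:cK}, and the only non-mechanical ingredients are (i) the geometric computation of the shifted projection distance, which is the $\psi$-analogue of the computation for $\xi_K$, and (ii) the fact that $I_\theta(1/K,\ldots,1/K)=+\infty$ for $\theta>1/\olog K$, which under~\eqref{A} is even cleaner than under~\eqref{C} since the discard becomes deterministically unavoidable once $\beta>1$.
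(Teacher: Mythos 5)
Your proposal is correct and follows essentially the same route as the paper: decompose over the discarding round, observe the allocation is forced to be uniform, upper bound the event by $\{\hat{\bm}(t)\in\mathcal{S}_{t/T},\om(t)\in\mathcal{X}_K\}$ with the averaged threshold set, apply Theorem~\ref{thm:11}, kill the regime $\theta>1/\olog K$ via $I_\theta(1/K,\ldots,1/K)=\infty$ (Theorem~\ref{thm:cra main rate}(a)), and conclude with Assumption~\ref{ass1} and Proposition~\ref{prop:cra G}. Your explicit half-space projection computation, including the factor $\frac{K-1}{K}$ and the observation that $G(\beta)\ge(1/\sqrt{\beta}-1)(\mu_1-\frac{1}{K-1}\sum_{k\ge 2}\mu_k)$ yields the extra $1/\beta$, is exactly the content of the paper's Proposition~\ref{prop:cra G}.
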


\begin{proof}
The proof follows the steps similar to those in the proof in Appendix~\ref{subapp:cK}. \\
\noindent
Applying Theorem (\ref{thm:11}) to 
\begin{equation} \label{eq:c212}
\sum_{t\ge \theta_0 T}^T	\PP_{\bm}\left[\ell_K=\ell(t)=1\right] \le  \sum_{t\ge \theta_0 T}^T	\PP_{\bm}\left[ \hat{\bm}(t) \in \mathcal{S}_{\frac{t}{T}},\om (t)\in \mathcal{X}_K  \right],
\end{equation}
with
$$
\mathcal{S}_{\theta}=\left\{\bl\in  [0,1]^K:\lambda_1\le \frac{\sum_{k=2}^K\lambda_k}{K-1}-G(\theta\olog K)\right\},\,\forall \theta\in (0,\frac{1}{\olog K}],
$$
we obtain
\begin{align}\label{eq:c213}  \lowlim_{T\to \infty} \frac{1}{T}\log \frac{1}{\PP_{\bm}[\ell_K=1]}
    \ge \min_{\theta\in [\theta_0,1]\cap \QQ} \theta \{F_{\mathcal{S}_{\theta}}(1/K,\ldots,1/K),I_\theta  (1/K,\ldots,1/K)\}.
\end{align}
\noindent
Next, from Theorem~\ref{thm:cra main rate} (a) in Appendix \ref{subapp:local}, we know that $I_\theta (1/K,\ldots,1/K)=\infty$ if $\theta>1/\olog K$. Finally, Assumption~\ref{ass1} and Proposition~\ref{prop:cra G} in Appendix \ref{app:opt A} yields that 
$$
\min_{\theta\in [\theta_0,1/\olog(K)]\cap \QQ} \theta \{F_{\mathcal{S}_{\theta}}(1/K,\ldots,1/K),I_\theta  (1/K,\ldots,1/K)\}\ge  \frac{2\psi_K}{K\olog K}.
$$
\end{proof}


\subsubsection{Upper bound of $\PP_{\bm}\left[\ell_2=1\right]$}\label{subapp:a2}
\begin{lemma}\label{lem7}
	Let $\bm\in  [0,1]^K$. Under \cra, 
	$$
	\lowlim_{T\rightarrow \infty}\frac{1}{T}\log \frac{1}{\PP_{\bm}\left[\ell_2= 1\right]}\ge   \frac{\min\{\max \{ 4\psi_2(1-\alpha_2),3\psi_2 \},3\bar{\psi}_2\}}{3\olog K},
	$$
	where $\alpha_2\in \RR$ is the real number such that
	\begin{equation}\label{eq:A alpha 2}
		3\psi_2\left(1-   \alpha_2 \right)= 4\left[\left(  (1+\varphi_2)\sqrt{\alpha_2 }-\frac{1}{2}    \right)_+\right]^2. 	
	\end{equation}
\end{lemma}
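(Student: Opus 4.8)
The plan is to follow the proof of Lemma~\ref{lem4} (its \crc analogue) almost verbatim; the only genuinely new ingredient is the large--deviation rate function of the allocation process generated by \cra. First note that for $j=2$ the discarding criteria \eqref{C} and \eqref{A} play no role in the final elimination: $\ell_2$ is simply the empirically worst of the two survivors in $\mathcal{C}_2$, hence it is decided only at round $T$. Write $\PP_{\bm}[\ell_2=1]=\sum_{D\in\mathcal{D}}\PP_{\bm}[\ell_2=1,\mathcal{C}_2=D]$ with $\mathcal{D}=\{D\subseteq[K]:|D|=2,\,1\in D\}$, and observe that on $\{\ell_2=1,\mathcal{C}_2=D\}$ one has $\hat{\bm}(T)\in\mathcal{S}_D:=\{\bl\in[0,1]^K:\min_{k\in D}\lambda_k\ge\lambda_1\}$ and $\om(T)\in\mathcal{X}_D$, exactly as for \crc since the recommendation rule is unchanged. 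Because $|\mathcal{D}|<\infty$, applying Theorem~\ref{thm:Varadaham bandits} with $\mathcal{S}=\mathcal{S}_D$, $W=\mathcal{X}_D$ gives
$$
\lowlim_{T\to\infty}\frac{1}{T}\log\frac{1}{\PP_{\bm}[\ell_2=1]}\ \ge\ \min_{D\in\mathcal{D}}\ \inf_{\om\in\cl(\mathcal{X}_D)}\max\{F_{\mathcal{S}_D}(\om),\,I_1(\om)\},
$$
where $I_1$ is the rate function of $\{\om(T)\}_{T\ge1}$ under \cra, supplied by Theorem~\ref{thm:cra main rate}. As in Lemma~\ref{lem4}, one removes the points where $I_1=\infty$ (the sets $\mathcal{X}_{2,i}(1)$) and restricts the inner infimum to $\mathcal{Z}_D:=\mathcal{X}_D\setminus\bigcup_{i=2}^{K}\mathcal{X}_{2,i}(1)$.

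The case analysis is then identical in shape. When $D\ne[2]$, arm $2$ has been discarded before $\mathcal{C}_2$, so $\mathcal{S}_D$ forces some coordinate of index $>2$ to dominate $\lambda_1$; bounding $F_{\mathcal{S}_D}$ from below by $(1-\sum_{k\notin D}z_k)\bpsi_2$ via the \cra analogue of the quadratic optimisation results in Appendix~\ref{app:opt A}, together with Proposition~\ref{prop ybound} ($1-\sum_{k\notin D}z_k\ge 1/\olog K$), yields $\bpsi_2/\olog K$ (for $j=2$ this is the same as the $\bxi_2/\olog K$ obtained for \crc, the relevant projection of $\bm$ being identical). When $D=[2]$ I produce two lower bounds and take their maximum: from $F_{\mathcal{S}_{[2]}}$ alone the same computation gives $\psi_2/\olog K$; using $I_1$, I invoke the \cra counterpart of Corollary~\ref{cor:cj rate} to get $I_1(\zz)\ge\underline{I}_1(\zz)$ with
$$
\underline{I}_1(\zz)=\frac{4}{3\olog K}\left[\left((1+\varphi_2)\sqrt{\alpha}-\tfrac{1}{2}\right)_+\right]^2,\qquad \alpha:=\frac{z_{\sigma(3)}}{1-\sum_{k=4}^{K}z_{\sigma(k)}},
$$
while simultaneously sharpening $F_{\mathcal{S}_{[2]}}(\zz)$ to $\tfrac{4\psi_2}{3\olog K}(1-\alpha)$. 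Both bounds are functions of the single scalar $\alpha$, so the max--min Lemma~\ref{lem:simple sol new1} evaluates $\inf_\alpha\max\{\cdot,\cdot\}$ to $\tfrac{4\psi_2(1-\alpha_2)}{3\olog K}$ with $\alpha_2$ solving \eqref{eq:A alpha 2}. Taking the minimum over the two cases gives the claimed bound $\tfrac{\min\{\max\{4\psi_2(1-\alpha_2),3\psi_2\},3\bpsi_2\}}{3\olog K}$.

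The step I expect to be the real work --- and the only departure from \crc --- is proving the lower bound $\underline{I}_1$ on the \cra allocation rate function, where the coefficient $1+\varphi_2$ replaces the $1+\zeta_2$ of \crc. This is exactly where the aggressive criterion \eqref{A} enters: the transition $\mathcal{C}_3\to\mathcal{C}_2$ now hinges on the \emph{average} empirical reward of the two kept arms exceeding that of the worst by $G(\beta)$, so the event that this transition happens late --- leaving $\om(T)$ with mass $\sim\alpha$ on the third arm --- is controlled by a deviation of $\tfrac{\mu_1+\mu_2}{2}-\mu_3=\varphi_2$ rather than the single gap $\mu_2-\mu_3=\zeta_2$. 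Carrying this substitution through the G\"artner--Ellis-type argument behind Theorem~\ref{thm:cra main rate}, and verifying lower semicontinuity of the resulting $I_1$, is the main obstacle; once $\underline{I}_1$ is available the scalar optimisation and the remaining estimates go through exactly as in Lemma~\ref{lem4}.
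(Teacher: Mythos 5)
Your proposal follows the same route as the paper's proof in Appendix~\ref{subapp:a2}: decompose over $\mathcal{D}$, apply Theorem~\ref{thm:Varadaham bandits} with $\mathcal{S}_D$ and $\mathcal{X}_D$, excise $\cup_{i=2}^K\mathcal{X}_{2,i}(1)$ via Theorem~\ref{thm:cra main rate}(a), handle $D\neq[2]$ with the \cra analogue of the quadratic optimisation plus Proposition~\ref{prop ybound}, and for $D=[2]$ take the max of the $F$-only bound and the combined $F$/$\underline{I}_1$ bound via Lemma~\ref{lem:simple sol new1}. You also correctly identify that the only genuinely new ingredient relative to Lemma~\ref{lem4} is the \cra rate-function lower bound in which $\varphi_2$ replaces $\zeta_2$.

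However, there is a concrete error precisely at that new ingredient: your $\underline{I}_1$ carries the \crc prefactor $\tfrac{4}{3\olog K}$, whereas the correct \cra prefactor (Corollary~\ref{cor:aj rate} with $j=2$, $\theta=1$, i.e.\ $\tfrac{2j\olog(j+1)}{\theta(j+1)\olog K}$) is $\tfrac{16}{9\olog K}$; see~(\ref{eq:c227}). The extra factor $\tfrac{2j}{j+1}=\tfrac{4}{3}$ comes from solving the $(p+1)$-variable quadratic program with the \emph{averaged} constraint $\lambda_{p+1}\le\tfrac{1}{p}\sum_{k\le p}\lambda_k-G$ (as in Proposition~\ref{prop:cra G}), whose optimal value is $\tfrac{p}{p+1}[(\cdot)_+]^2$ rather than the $\tfrac{1}{2}[(\cdot)_+]^2$ of the two-coordinate \crc relaxation. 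With your prefactor, balancing $\tfrac{4\psi_2}{3\olog K}(1-\alpha)$ against $\tfrac{4}{3\olog K}[((1+\varphi_2)\sqrt{\alpha}-\tfrac12)_+]^2$ gives the equation $3\psi_2(1-\alpha_2)=3[(\cdot)_+]^2$, not the equation~(\ref{eq:A alpha 2}) that defines the $\alpha_2$ appearing in the statement, so the argument as written does not prove the lemma as stated. Once the prefactor is corrected the scalar min--max yields exactly~(\ref{eq:A alpha 2}) and the rest of your argument goes through. A minor side remark: for $D\neq[2]$ the bound is $\bpsi_2/\olog K$ with $\bpsi_2=\tfrac12(\mu_1-\mu_3)^2$, which is obtained from Corollary~\ref{cor:cra G new}, not by identification with $\bxi_2$; also, the paper does not prove Theorem~\ref{thm:cra main rate} by a G\"artner--Ellis argument but by bootstrapping Theorem~\ref{thm:Varadaham bandits} at the earlier elimination time (Theorem~\ref{thm:Aj}).
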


\begin{proof}

The proof follows the same steps as those of the proof in Appendix~\ref{subapp:c2}. \\
Applying Theorem~\ref{thm:Varadaham bandits} with ${\cal S}={\cal S}_D, W={\cal X}_D $, where
$$
\mathcal{S}_{D}=\left\{\bl\in  [0,1]^K:\min_{k\in [D]}\lambda_k\ge \lambda_1\right\},
$$
we get
\begin{equation}\label{eq:c222}
    \lowlim_{T\to \infty}\frac{1}{T}\log \frac{1}{\PP_{\bm}[\ell_2=1]}\ge \min_{D\in {\cal D}} \inf_{\om \in \cl ({\cal X}_D)} \max\{ F_{{\cal S}_{D}}(\om ), I_1(\om )\},
\end{equation}

\noindent
Then we exclude the points in $\cup_{i=2}^K{\cal X}_{2,i}(1)$ using Theorem~\ref{thm:cra main rate} (a) in Appendix~\ref{subapp:local} and we define 
$$
{\cal Z}_D (1,1)={\cal X}_D\setminus  \cup_{i=2}^K{\cal X}_{2,i}(1).
$$
Consider two cases: (a) $D\neq [2]$; (b) $D=[2]$.\\

\noindent
\underline{\bf(a) The case where $D\neq [2]$.} Using Corollary \ref{cor:cra G new} with $\beta =1,j=2$ in Appendix \ref{app:opt A} and Proposition \ref{prop ybound} with $\theta =1,\,j=2,\,i=3$ in Appendix \ref{subapp:infinite I} yields that 
$$
\inf_{\om \in \cl ({\cal X}_D)} \max\{ F_{{\cal S}_{D}}(\om ), I_1(\om )\}\ge\frac{\bar{\psi}_2}{\olog K}.
$$
\noindent
\underline{\bf (b) The case where $D= [2]$.} We show that both $\frac{\psi_2}{\olog K}$ and $ \frac{4\psi_2(1-\alpha_2)}{3\olog K}$ are lower bounds for $\lowlim_{T\rightarrow \infty}\frac{1}{T}\log \frac{1}{\PP_{\bm}\left[ \hat{\bm}(T)\in \mathcal{S}_{[2]},\om (T)\in   \mathcal{Z}_{[2]} \right]}$. The maximum of these hence becomes our lower bound. Together with the conclusion obtained in the case (a), we complete the proof of Lemma \ref{lem7}.

\noindent
\underline{Lower bounding by $ \frac{\psi_2}{\olog K}$.} Applying Proposition~\ref{prop:cra G} with $\theta =1, \beta=1, j=2$ in Appendix~\ref{app:opt A}, and Proposition~\ref{prop ybound} with $j=2,i=3$ in Appendix~\ref{subapp:finite I}, we can obtain:
$$
\inf_{\om \in \cl ({\cal X}_{[2]})} \max\{ F_{{\cal S}_{[2]}}(\om ), I_1(\om )\} \ge \frac{\psi_2}{\olog K}. 
$$
\noindent
\underline{Lower bounding by $\frac{4\psi_2(1-\alpha_2)}{3\olog K}$}.
In Theorem \ref{thm:cra main rate} (b), $j=2,\theta=\beta=1$ in Appendix \ref{subapp:local}, we show that $I_1$ is a valid rate function for an LDP upper bound (\ref{eq:rate UB}) for the process $\{\om (T)\}_{T\ge 1}$. And Corollary~\ref{cor:aj rate} in Appendix~\ref{subapp:rate A} show $I_1(\zz)>\underline{I}_1(\zz)$ for $\zz\in \mathcal{Z}_{[2]}$, where
\begin{equation}\label{eq:c227}
 \underline{I}_{1}(\zz)=\frac{16}{9\olog K}   \left[\left(  (1+\varphi_2)\sqrt{\frac{ z_{\sigma(3)}}{ 1-\sum_{k=4}^Kz_{\sigma(k)}  }}-\frac{1}{2}    \right)_+\right]^2,\,\forall \zz\in \mathcal{Z}_{[2]}.
\end{equation}

Also, we lower bound $ F_{S_{[2]}}(\zz)$ by Proposition~\ref{prop ybound} with $\theta=1,j=2,i=4$ in Appendix~\ref{subapp:finite I}:
\begin{align}\label{eq:c228}
	 	 F_{S_{[2]}}(\zz)\ge \frac{4\psi_2}{3\olog K} \left(1-   \frac{z_{\sigma(3)}}{1- \sum_{k=4}^Kz_{\sigma(k)} }\right).
\end{align}
 Observe that $F_{S_{[2]}}(\zz)$ and $I_{1}(\zz)$ are lower bounded by the functions of $\alpha =  \frac{  z_{\sigma(3)}}{1- \sum_{k=4}^K z_{\sigma (k)}} $ given in (\ref{eq:c227}) and (\ref{eq:c228}). Applying Lemma \ref{lem:simple sol new1} in Appendix~\ref{app:maxmin} yields that
\begin{align}\label{eq:c229}
\inf_{\om \in \cl ({\cal X}_{[2]})} \max\{ F_{{\cal S}_{[2]}}(\om ), \underline{I}_1(\om ) \}
&\ge  \frac{4\psi_2(1-\alpha_2)}{3\olog K}.
\end{align}
\end{proof}


\subsubsection{Upper bound for $\PP_{\bm}\left[\ell_j=1\right]$ for $j\in \{3,\ldots,K-1\}$}\label{subapp:aj}
\begin{lemma}\label{lem8}
	Let $\bm\in  [0,1]^K,\,j\in \{3,\ldots,K-1\}  $. Under \cra, 
	$$
	\lowlim_{T\rightarrow \infty}\frac{1}{T}\log \frac{1}{\PP_{\bm}\left[\ell_j= 1\right]}\ge     \frac{2\min\left\{\max \left\{ \frac{\psi_j\olog (j+1)(1-\alpha_{j})}{\olog j},\psi_j\right\},  \bar{\psi}_j\right\}}{j\olog K},	
	$$
	where $\alpha_j\in \RR$ is the real number such that
	\begin{equation}\label{eq:A alpha j}
		\frac{\psi_j\left(1-   \alpha_j \right)}{j\olog j}= \frac{j}{j+1}\left[\left(  (1+\varphi_j)\sqrt{\alpha_j }-\sqrt{\frac{1}{(j+1)\olog (j+1)}}    \right)_+\right]^2. 	
	\end{equation}
\end{lemma}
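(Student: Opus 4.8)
\noindent
\emph{Proof sketch.} The argument mirrors that of Lemma~\ref{lem5}, with the minimum-based discarding criterion (\ref{C}) replaced by the average-based one (\ref{A}), and with $\xi_j,\bar\xi_j,\zeta_j$ replaced throughout by $\psi_j,\bar\psi_j,\varphi_j$. As before, we start from $\PP_{\bm}[\ell_j=1]=\sum_{J\in\mathcal{J}}\PP_{\bm}[\ell_j=1,\mathcal{C}_j=J]$, so that it suffices to lower bound $\lowlim_T \tfrac{1}{T}\log\tfrac{1}{\PP_{\bm}[\ell_j=1,\mathcal{C}_j=J]}$ for each fixed $J\in\mathcal{J}$. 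When $\mathcal{C}_j=J$ and $\ell_j=\ell(t)=1$, the allocation $\om(t)$ is forced into $\mathcal{X}_j$, and writing $\tau=\sum_{k\notin J}N_k(t)$ and $\beta=\tfrac{(t-\tau)\olog j}{T-\tau}$, the criterion (\ref{A}) forces $\hat{\bm}(t)\in\mathcal{S}_J(\beta)$ with
$$
\mathcal{S}_J(\beta)=\left\{\bl\in[0,1]^K:\frac{\sum_{k\in J,k\neq 1}\lambda_k}{j-1}-\lambda_1\ge G(\beta)\right\},
$$
while $\om(t)\in\mathcal{X}_J(\tfrac{t}{T},\beta)$ (same definition as in the proof of Lemma~\ref{lem5}). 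Summing over $t\ge\theta_0T$ and $\tau\ge\tilde\theta_0 T$ and invoking Theorem~\ref{thm:11} from Appendix~\ref{app:partition} (with $\mathcal{E}=\{\ell_j=1,\mathcal{C}_j=J\}$, $\mathcal{S}_{\theta,\gamma}=\mathcal{S}_J(\tfrac{(\theta-\gamma)\olog j}{1-\gamma})$ if $G(\tfrac{(\theta-\gamma)\olog j}{1-\gamma})\le 1$ and $\emptyset$ otherwise, and $W_{\theta,\gamma}=\mathcal{X}_J(\theta,\tfrac{(\theta-\gamma)\olog j}{1-\gamma})$), legitimate because Theorem~\ref{thm:cra main rate} establishes the LDP upper bound (\ref{eq:rate UB}) for $\{\om(\theta T)\}_{T\ge 1}$ with rate function $I_\theta$, reduces the problem to lower bounding
$$
\inf_{\theta,\beta\in(0,1]\cap\QQ}\ \inf_{\zz\in\cl(\mathcal{Z}_J(\theta,\beta))}\ \theta\max\{F_{\mathcal{S}_J(\beta)}(\zz),I_\theta(\zz)\},
$$
where $\mathcal{Z}_J(\theta,\beta)=\mathcal{X}_J(\theta,\beta)\setminus\cup_{i=j}^K\mathcal{X}_{j,i}(\theta)$, the excluded points being those on which $I_\theta=\infty$ by Theorem~\ref{thm:cra main rate}~(a).

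\noindent
I would then split into cases exactly as in Lemma~\ref{lem5}. In case $J\neq[j]$, I lower bound $\theta F_{\mathcal{S}_J(\beta)}(\zz)$ using Assumption~\ref{ass1} (Pinsker), the identity $\theta z_k=\tfrac{(1-\theta\sum_{k'\notin J}z_{k'})\beta}{j\olog j}$ valid on $\mathcal{Z}_J(\theta,\beta)$, Corollary~\ref{cor:cra G new} in Appendix~\ref{app:opt A} (which yields the constant $\bar\psi_j$, since the constraint defining $\mathcal{S}_J$ involves arms indexed outside $\{2,\dots,j\}$ when $J\neq[j]$), and finally Proposition~\ref{prop ybound} with $i=j+1$ to bound $1-\theta\sum_{k\notin J}z_k\ge\tfrac{1}{\olog K}$; this produces the bound $\tfrac{2\bar\psi_j}{j\olog K}$. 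In case $J=[j]$, I produce two lower bounds. The first, $\tfrac{2\psi_j}{j\olog K}$, comes again from $\theta F_{\mathcal{S}_{[j]}(\beta)}(\zz)$ alone via Proposition~\ref{prop:cra G} in Appendix~\ref{app:opt A} together with Proposition~\ref{prop ybound} ($i=j+1$). For the second bound, I factor $1-\theta\sum_{k\notin[j]}z_k=(1-\theta\sum_{k\ge j+2}z_{\sigma(k)})(1-\alpha)$ with $\alpha=\tfrac{\theta z_{\sigma(j+1)}}{1-\theta\sum_{k\ge j+2}z_{\sigma(k)}}$ and apply Proposition~\ref{prop ybound} with $i=j+2$, giving $\theta F_{\mathcal{S}_{[j]}(\beta)}(\zz)\ge\tfrac{2\psi_j\olog(j+1)}{j\olog j\olog K}(1-\alpha)$; on the other hand Corollary~\ref{cor:aj rate} in Appendix~\ref{subapp:rate A} shows $\theta I_\theta(\zz)\ge\tfrac{\olog(j+1)}{\olog K}\cdot\tfrac{j}{j+1}[((1+\varphi_j)\sqrt{\alpha}-\sqrt{\tfrac{1}{(j+1)\olog(j+1)}})_+]^2$, expressed in terms of the same $\alpha$.

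\noindent
The final step is to minimize the maximum of these two functions of $\alpha$ over $\alpha\in\RR$; by Lemma~\ref{lem:simple sol new1} in Appendix~\ref{app:maxmin} this minimax value equals $\tfrac{\olog(j+1)}{\olog K}\cdot\tfrac{2\psi_j(1-\alpha_j)}{j\olog j}$, where $\alpha_j$ is the balancing point defined by (\ref{eq:A alpha j}). Taking the maximum of $\tfrac{2\psi_j}{j\olog K}$ and $\tfrac{2\psi_j\olog(j+1)(1-\alpha_j)}{j\olog j\olog K}$ in case $J=[j]$, and combining with the bound $\tfrac{2\bar\psi_j}{j\olog K}$ from case $J\neq[j]$, yields the stated lower bound. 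The main obstacle, as in the \crc case, is organizing the reduction through Theorem~\ref{thm:11} so that the continuum of admissible rounds $t$ and allocations $\om(t)$ collapses to a clean optimization over $(\theta,\beta,\zz)$, and then correctly propagating the averaging-induced quantity $\varphi_j$ (rather than $\zeta_j$) through the rate-function estimate of Corollary~\ref{cor:aj rate}; the rest is the routine max-min computation handled by Lemma~\ref{lem:simple sol new1}. \eproof
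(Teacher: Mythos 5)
Your proposal follows essentially the same route as the paper's proof: the same decomposition over $J\in\mathcal{J}$, the same reduction via Theorem~\ref{thm:11} to an optimization over $(\theta,\beta,\zz)$ on $\mathcal{Z}_J(\theta,\beta)$, the same case split $J\neq[j]$ versus $J=[j]$ using Corollary~\ref{cor:cra G new}, Proposition~\ref{prop:cra G}, Proposition~\ref{prop ybound}, Corollary~\ref{cor:aj rate}, and Lemma~\ref{lem:simple sol new1}. One small transcription slip: Corollary~\ref{cor:aj rate} gives $\theta\,\underline{I}_{\theta}(\zz)=\frac{2j\olog(j+1)}{(j+1)\olog K}\bigl[(\cdots)_+\bigr]^2$, so your stated bound $\frac{\olog(j+1)}{\olog K}\cdot\frac{j}{j+1}\bigl[(\cdots)_+\bigr]^2$ is missing a factor of $2$; with the correct factor the balancing equation is exactly (\ref{eq:A alpha j}) and your final conclusion stands as written.
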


\begin{proof}
We proceed as in Appendix~\ref{subapp:cj}. We have:
\begin{equation}\label{eq:c231}
    \lowlim_{T\rightarrow \infty}\frac{1}{T}\log \frac{1}{\PP_{\bm}\left[\ell_j= 1\right]}\ge\min_{J\in {\cal J}}   \lowlim_{T\rightarrow \infty}\frac{1}{T}\log \frac{1}{\PP_{\bm}\left[\ell_j= 1,{\cal C}_j=J\right]}.
\end{equation}
We then introduce 
\begin{align*}
      \mathcal{S}_J(\beta)&=\left\{ \bl\in  [0,1]^K: \frac{\sum_{k\in J,k\neq 1}\lambda_k}{j-1}-\lambda_1\ge G(\beta)   \right\}\\
       \mathcal{X}_{J}(\theta,\beta)&=\left\{\zz\in \mathcal{X}_j :(\forall k\in J,\  x_k=\max_{k'\in [K]}x_{k'}),\ \frac{\theta \sum_{x\in J}z_k\olog j   }{1-\theta\sum_{k\notin J}x_{k}}=\beta\right\}.
\end{align*}
Theorem~\ref{thm:11} yields that for each $J\in {\cal J}$,
\begin{equation}\label{eq:c232}
\lowlim_{T\rightarrow\infty}\frac{1}{T}\log \frac{1}{\PP_{\bm}\left[\ell_j= 1,{\cal C}_j=J \right]}\ge  \inf_{\theta,\beta\in (0,1]\cap \QQ} \theta \inf_{\zz\in \cl({\cal Z}_{J}(\theta,\beta))} \max\{F_{{\cal S}_J(\beta)}(\zz), I_{\theta}(\zz)\},
\end{equation}
where ${\cal Z}_J (\theta,\beta)={\cal X}_J(\theta,\beta)\setminus  \cup_{i=j}^K{\cal X}_{j,i}(\theta).$

\noindent
\underline{\bf (a) The case where $J\neq [j]$.} Corollary~\ref{cor:cra G new} in Appendix \ref{app:opt A} and Proposition~\ref{prop ybound} with $i=j+1$ in Appendix~\ref{subapp:finite I} yields:
$$
		 \theta F_{{\cal S}_J(\beta)}(\zz)   \ge \frac{2\bar{\psi}_j}{j\olog K}.
$$
\noindent
\underline{\bf (b) The case where $J=[j]$.} We show both $\frac{2\psi_j}{j\olog K}$ and $\frac{2\psi_j\olog (j+1)(1-\alpha_j)}{j\olog j\olog K}$ are lower bounds of (\ref{eq:c232}). The maximum of these becomes our lower bound. 

\noindent
\underline{Lower bounding $\frac{2\psi_j}{j\olog K}$}. By Proposition~\ref{prop:cra G} and Proposition~\ref{prop ybound}, Proposition~\ref{prop:cra G} in Appendix~\ref{app:opt A} and Proposition~\ref{prop ybound} with $i=j+1$ in Appendix~\ref{subapp:finite I}, we have
	\begin{align}
		\theta F_{S_{J}(\beta)}(\zz) \ge \frac{2\psi_j}{j\olog K}. 
	\end{align}
	
\noindent
\underline{Lower bounding by $\frac{2\psi_j\olog (j+1)(1-\alpha_j)}{j\olog j\olog K}$}. A similar argument as above implies
\begin{equation}\label{eq:lll}
  (\ref{eq:c232})  \ge \frac{2\olog (j+1)}{\olog K} \inf_{\alpha\in \RR} \max\left\{ \frac{\psi_j\left(1-   \alpha \right)}{j\olog j} ,    \frac{j}{j+1} \left[\left(  (1+\varphi_j)\sqrt{\alpha }-\sqrt{\frac{1}{(j+1)\olog (j+1)}}    \right)_+\right]^2 \right\}.
\end{equation}
	By Lemma \ref{lem:simple sol new1} in Appendix \ref{app:maxmin}, (\ref{eq:c232}) is lower bounded by $\frac{2\psi_j\olog (j+1)(1-\alpha_j)}{j\olog j\olog K}$.

\end{proof}

\newpage
\section{Optimization Problems}\label{app:opt}

This section provides results related to the various optimization problems we encounter in the paper. In \ref{app:opt C}, we compute the $\xi_j$'s appearing in the performance guarantees of \sr and \crc, and prove other useful results. In \ref{app:opt A}, we  focus on computing the $\psi_j$'s, useful for the performance analysis of \cra. 

\subsection{Optimization problems for \sr and \crc}\label{app:opt C}
 Let $j\in \{2,\ldots, K\}$ and let $\bm\in \RR^K$ such that $\mu_1>\mu_2\ge\ldots\ge \mu_K$. Denote by $\xi_j$ the optimal value of the following optimization problem:
 \begin{align}\label{opt:1}
	\inf\left\{ \sum_{k=1}^j (\lambda_k-\mu_k)^2:\bl\in [0,1]^K, \lambda_1\le \min_{k\neq 1}\lambda_k\right\}.
\end{align}

\medskip
\noindent
We first show Proposition \ref{prop:xi}, restated below for convenience, and deduce some related results.
 
\begin{proposition}\label{prop:xi}
	We have:
	$$
	\xi_j=\left\{
	\begin{array}{ll}
	\sum_{k=1,j} \left(\mu_k-\frac{\mu_1+\mu_j}{2}\right)^2,&\hbox{ if }\mu_{j-1}\ge \frac{\mu_1+\mu_j}{2},\\
	\sum_{k=1,j-1,j} \left(\mu_k-  \frac{\mu_1+\mu_{j-1}+\mu_j}{3} \right)^2,&\hbox{ if }\mu_{j-1}< \frac{\mu_1+\mu_j}{2},\mu_{j-2}\ge \frac{\mu_1+\mu_{j-1}+\mu_j}{3}, \\
	\vdots&\vdots\\
	\sum_{k=1}^j \left(\mu_k-  \frac{\sum_{i=1}^j\mu_i}{j} \right)^2,&\hbox{ if }\mu_{j-1}< \frac{\mu_1+\mu_j}{2},\ldots, \mu_2< \frac{\mu_1+\mu_3+\ldots+\mu_j}{j-1}.
	\end{array}
	\right.
	$$
\end{proposition}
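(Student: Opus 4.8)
The plan is to read $\xi_j$ as (minus the square of) the Euclidean distance from $(\mu_1,\dots,\mu_j)$ to a polyhedron and to compute the corresponding projection explicitly via the KKT conditions. First I would reduce (\ref{opt:1}): since the objective involves only $\lambda_1,\dots,\lambda_j$, and the coordinates $\lambda_k$ with $k>j$ can be set to $1$ at zero cost (they then satisfy $\lambda_k\ge\lambda_1$), while in the regime $\bm\in[0,1]^j$ of interest (cf. Assumption \ref{ass1}) every candidate optimum produced below has all coordinates in $[0,1]$, it suffices to minimize the strictly convex function $g(\bl)=\sum_{k=1}^j(\lambda_k-\mu_k)^2$ over $C=\{\bl\in\RR^j:\lambda_1\le\lambda_k,\ k=2,\dots,j\}$. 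This problem has a unique minimizer $\bl^\star$, characterized (the constraints being affine and $g$ convex, so the KKT conditions are necessary and sufficient) by: multipliers $\gamma_k\ge 0$ with $\lambda_k^\star=\mu_k+\gamma_k/2$ for $k\ge 2$, with $\lambda_1^\star=\mu_1-\tfrac12\sum_{k\ge 2}\gamma_k$, and with $\gamma_k(\lambda_k^\star-\lambda_1^\star)=0$.

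Second, I would extract the combinatorial structure of the active set $A=\{k\ge 2:\gamma_k>0\}$. For $k\in A$, complementary slackness gives $\lambda_k^\star=\lambda_1^\star$ and then $\lambda_1^\star=\mu_k+\gamma_k/2>\mu_k$; for $k\ge 2$, $k\notin A$, we get $\lambda_k^\star=\mu_k$ and feasibility forces $\mu_k\ge\lambda_1^\star$. Hence $A=\{k\ge 2:\mu_k<\lambda_1^\star\}$, which, because $\mu_2\ge\dots\ge\mu_j$, is an interval $\{m,m+1,\dots,j\}$ for some $m\in\{2,\dots,j\}$ ($A=\emptyset$ is impossible, as it would give $\lambda_1^\star=\mu_1$ and $\mu_k\ge\mu_1$ for all $k$, contradicting $\mu_1>\mu_2$). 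Conversely, for $m\in\{2,\dots,j\}$ define $\bl^{(m)}$ by $\lambda_1=\lambda_m=\dots=\lambda_j=c_m:=\frac{\mu_1+\mu_m+\dots+\mu_j}{\,j-m+2\,}$ and $\lambda_k=\mu_k$ for $2\le k\le m-1$; solving the stationarity equations shows $\bl^{(m)}$ satisfies the KKT conditions precisely when $\mu_{m-1}\ge c_m\ge\mu_m$ (dual feasibility on $A$ gives $c_m\ge\mu_m$, primal feasibility on the inactive coordinates gives $\mu_{m-1}\ge c_m$, using monotonicity of $\bm$, with $\mu_1$ in the role of "$\mu_{m-1}$" when $m=2$), and then $g(\bl^{(m)})=\sum_{k\in\{1,m,m+1,\dots,j\}}(\mu_k-c_m)^2$, which is exactly the $m$-th expression displayed in the proposition. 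Thus $\bl^\star=\bl^{(m)}$ for the value(s) of $m$ satisfying $\mu_{m-1}\ge c_m\ge\mu_m$, and at least one such $m$ exists.

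It remains to identify which $m$ occurs and to line it up with the case list; I expect this bookkeeping to be the only real work, the KKT step being routine. I would use the one-line identities $c_{m-1}=\frac{(j-m+2)c_m+\mu_{m-1}}{j-m+3}$ and $c_m=\frac{(j-m+1)c_{m+1}+\mu_m}{j-m+2}$, exhibiting $c_{m-1}$ as a convex combination of $c_m$ and $\mu_{m-1}$, and $c_m$ as a convex combination of $c_{m+1}$ and $\mu_m$. These yield the equivalences $\mu_{m-1}\ge c_m\Leftrightarrow\mu_{m-1}\ge c_{m-1}$ and $c_m\ge\mu_m\Leftrightarrow c_{m+1}\ge\mu_m$, whence, combined with $\mu_{m-2}\ge\mu_{m-1}$ and $\mu_m\ge\mu_{m+1}$, the propagation properties "$\mu_{m-1}\ge c_m$ implies $\mu_{m-2}\ge c_{m-1}$" and "$c_m\ge\mu_m$ implies $c_{m+1}\ge\mu_{m+1}$". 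Consequently $\{m:\mu_{m-1}\ge c_m\}=\{2,\dots,m_2\}$ and $\{m:c_m\ge\mu_m\}=\{m_1,\dots,j\}$ for suitable $m_1,m_2$, both sets nonempty since $\mu_1>c_2=\tfrac{\mu_1+\mu_2}{2}$ and $c_j=\tfrac{\mu_1+\mu_j}{2}\ge\mu_j$ always hold; since $\bl^\star$ has a valid active-set index, $\{m_1,\dots,m_2\}\neq\emptyset$, i.e. $m_1\le m_2$, and all $\bl^{(m)}$ with $m\in\{m_1,\dots,m_2\}$ coincide (they equal the unique $\bl^\star$). Finally, the hypotheses attached to the $m$-th row of the proposition are "$\mu_{m'-1}<c_{m'}$ for all $m'\in\{m+1,\dots,j\}$" together with "$\mu_{m-1}\ge c_m$", which by the prefix property of $\{m:\mu_{m-1}\ge c_m\}$ hold if and only if $m=m_2$; hence exactly one row applies, namely $m=m_2\in\{m_1,\dots,m_2\}$, and it returns $\xi_j=g(\bl^{(m_2)})$, which is the claimed formula.
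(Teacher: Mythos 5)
Your proposal is correct and follows essentially the same route as the paper: both treat $\xi_j$ as the squared distance to the polyhedron $\{\lambda_1\le\lambda_k\}$ and identify the minimizer by verifying the KKT conditions of this convex program, with the optimal point averaging $\mu_1$ with a tail block $\mu_m,\dots,\mu_j$. The only difference is one of completeness — the paper simply exhibits the candidate for the smallest index $i$ with $\mu_i<\frac{\mu_1+\sum_{i<k\le j}\mu_k}{j-i+1}$ and asserts the KKT check, whereas you derive the interval structure of the active set and carry out the case-matching explicitly — so your write-up is a more detailed version of the same argument.
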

\begin{proof}
The objective function and the functions defining the constraints in (\ref{opt:1}) are all convex. There exists $\bl\in \RR^K$ s.t. all the constraints are strict (Slater condition). Hence we can identify the solution of (\ref{opt:1}) by just verifying the KKT conditions. The Lagrangian of the problem is
$$
\mathcal{L}_{\bm}(\bl,\eta_2,\ldots, \eta_j)=\frac{1}{2}\sum_{k=1}^j (\lambda_k-\mu_k)^2+\sum_{k=2}^j\eta_k(\lambda_1-\lambda_k),\,\hbox{for } (\bl,\be)\in \RR^K\times \RR^{j-1}_{\ge 0}.
$$
Let $(\bl^\star, \be^\star)$ be a saddle point of $\mathcal{L}$. It satisfies KKT conditions:
\begin{align}
\lambda_1^\star\le\lambda^\star_k,\,\,\hbox{for }k=2,\ldots,j,\tag{Primal Feasibility}\\
\eta^\star_k\ge 0,\,\,\hbox{for }k=2,\ldots,j,\tag{Dual Feasibility}\\
\lambda^\star_1-\mu_1+\sum_{k=2}^j\eta^\star_k=0;  \lambda^\star_k-\mu_k-\eta^\star_k=0,\,\hbox{for }k=2,\ldots,j,   \tag{Stationarity}\\
\eta^\star_k(\lambda_1^\star-\lambda_k^\star)=0,\,\hbox{for }k=2,\ldots,j.\tag{Complementarity}
\end{align} 
Let $i\in \{2,\ldots,j\}$ be the smallest index such that
$$
\mu_i<\frac{\mu_1+\sum_{i<k\le j}\mu_k}{j-i+1}.
$$
One can easily see the point $(\bl^\star,\be^\star)$ defined in (\ref{eq:blstar}) satisfies the KKT conditions listed above.
\begin{equation}\label{eq:blstar}
    \lambda^\star_k=\left\{
    \begin{array}{ll}
      \frac{\mu_1+\sum_{k=i}^j\mu_k}{j-i+2} ,  &\text{if }k=1,i,\ldots,j,  \\
        \mu_k, & \text{otherwise,}
    \end{array}
    \right.
     \eta^\star_k=\left\{
    \begin{array}{ll}
     \frac{\mu_1+\sum_{k=i}^j\mu_k}{j-i+2}-\mu_k  &\text{if }k=1,i,\ldots,j,  \\
        0, & \text{otherwise.}
    \end{array}
    \right.
\end{equation}

\end{proof}
We are now interested in quantifying the impact of $\bm$ on the value of $\xi_j$. We investigate this impact in the following two propositions.  

\begin{proposition}\label{prop:pxi}
Assume that $\xi_j=\sum_{b\in B}(\mu_b-A)^2$ for some $B\subseteq [j]$ and for $A=\frac{\sum_{b\in B} \mu_b}{\left|B\right|}$. Let $S$ be such that $S_1=\sum_{b\in B,b\neq 1} S_b$, where $S_b\ge 0$ for all $b\neq 1$. Consider another parameter $\bm'$ defined as $\mu_1'=\mu_1+S_1$ and $\mu_b'=\mu_b-S_b$ for all $b\in B,b \neq 1$. Then (i) $\frac{\sum_{b\in B}\mu_b'}{\left|B\right|}=A$; (ii)
    $$
    \inf\left\{\sum_{k\in B}(\lambda'_k-\mu_k')^2:\bl'\in  [0,1]^K,\lambda_1'\le \min_{b\in B}\lambda_b'  \right\}=\sum_{k\in B}(\mu'_b-A)^2.
    $$   
\end{proposition}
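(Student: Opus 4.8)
The plan is to treat the two claims separately: (i) is a one-line identity, and (ii) amounts to producing an optimality certificate for a single explicit candidate by convexity and KKT.

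Claim (i) is immediate. Since $S_1=\sum_{b\in B,\,b\neq 1}S_b$,
\[
\sum_{b\in B}\mu_b'\;=\;(\mu_1+S_1)+\sum_{b\in B,\,b\neq 1}(\mu_b-S_b)\;=\;\sum_{b\in B}\mu_b\;=\;|B|\,A ,
\]
and dividing by $|B|$ gives $\frac{\sum_{b\in B}\mu_b'}{|B|}=A$.

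For (ii) the first thing I would record is the structural consequence of the hypothesis that $\mu_b\le A$ for every $b\in B$ with $b\neq 1$, and that $A\in[0,1]$. Indeed, saying $\xi_j=\sum_{b\in B}(\mu_b-A)^2$ with $A$ the mean over $B$ means the point that clamps the $B$-coordinates to $A$ attains the infimum in the convex program (\ref{opt:1}); feasibility of that point forces $A\in[0,1]$, and the description of the optimal multipliers in (\ref{eq:blstar}) (dual feasibility $\eta_b^\star=A-\mu_b\ge 0$ for the active constraints $\lambda_1\le\lambda_b$) gives $\mu_b\le A$. I would then introduce the candidate $\hat{\bl}$ with $\hat\lambda_k=A$ for all $k\in B$ (its other coordinates play no role in the program of (ii)). It is feasible ($A\in[0,1]$ and $\hat\lambda_1=A\le A=\hat\lambda_b$), with objective value $\sum_{k\in B}(A-\mu_k')^2$, which yields the ``$\le$'' direction. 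For the matching lower bound, the program in (ii) is convex and Slater holds, so it suffices to verify its KKT conditions at $\hat{\bl}$ with multipliers $\hat\eta_b:=2(A-\mu_b')$ for $b\in B\setminus\{1\}$: dual feasibility $\hat\eta_b\ge 0$ holds because $\mu_b'=\mu_b-S_b\le\mu_b\le A$; complementary slackness holds since $\hat\lambda_1-\hat\lambda_b=0$; stationarity in $\lambda_b$ reads $2(A-\mu_b')-\hat\eta_b=0$ by definition of $\hat\eta_b$; and stationarity in $\lambda_1$ reads $2(A-\mu_1')+\sum_{b\in B\setminus\{1\}}\hat\eta_b=2\big(|B|A-\sum_{b\in B}\mu_b'\big)$, which vanishes precisely by part (i). Hence $\hat{\bl}$ is globally optimal and the program in (ii) has value $\sum_{k\in B}(A-\mu_k')^2$, proving the claim.

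The only step that is not pure bookkeeping is the inequality $\mu_b\le A$ for $b\in B\setminus\{1\}$, and this is exactly what makes the perturbation harmless: decreasing the already-below-average means $\mu_b$ ($b\neq 1$) while increasing $\mu_1$ by the total decrement keeps the mean over $B$ equal to $A$ and keeps every element of $B\setminus\{1\}$ at or below that mean, so clamping all of $B$ to the common value $A$ remains the optimal solution of the restricted program.
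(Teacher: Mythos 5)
Your proof is correct and follows essentially the same route as the paper's: both arguments reduce to checking that the perturbation keeps every $\mu_b'$ for $b\in B\setminus\{1\}$ at or below the relevant average, so that the point clamping all coordinates indexed by $B$ to the common value $A$ remains optimal. The paper does this by verifying the case condition of Proposition \ref{prop:xi} for $\bm'$ and re-invoking that proposition, whereas you verify the equivalent KKT certificate for the program in (ii) directly; the underlying content is identical.
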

\begin{proof}
(i) is trivial. We now prove (ii). Using Proposition \ref{prop:xi} and the fact that $\xi_j=\sum_{b\in B}(\mu_b-A)^2$, we get that 
    \begin{equation}\label{eq:pxi1}
        \forall b\neq 1,b\in B,\, \mu_b<\frac{\mu_1+\sum_{k>b}\mu_k}{j-b+1}.
    \end{equation}
Also, as $S_b\ge 0$, 
\begin{align}
\nonumber    \mu'_b&=\mu_b-S_b\le \mu_b< \frac{\mu_1+\sum_{k>b}\mu_k}{j-b+1}\\
\nonumber&=\frac{1}{j-b+1}\left(\mu_1-S_1+\sum_{k>b}\mu_k+\sum_{k>b}S_k   \right)\\
\nonumber&=  \frac{\mu'_1+\sum_{k>b}\mu'_k}{j-b+1},
\end{align}
where the second inequality is due to (\ref{eq:pxi1}). By (i) and Proposition \ref{prop:xi} again, we conclude the proof.
\end{proof}
\medskip
\noindent

\begin{proposition}\label{prop:larger C}
	Consider the optimization problem (\ref{opt:1}) instantiated with another $\bm'\in  [0,1]^K$ which satisfies that $\mu_1'\ge \mu_1$ and $\mu'_k\le \mu_k$ for all $k=2,\ldots,j$, and denote its value by $\xi'_j$. Then $\xi'_j\ge \xi_j$.
\end{proposition}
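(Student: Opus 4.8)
The plan is to bound $\xi_j'$ from below \emph{directly}: take an optimal solution $\bl^\star$ of the $\bm'$-instance of~(\ref{opt:1}) (one exists since the feasible set is compact and the objective continuous), and use it to manufacture a feasible point $\hat{\bl}$ of the $\bm$-instance whose objective value is at most $\xi_j'$. Since $\hat{\bl}$ is feasible for the $\bm$-problem this yields $\xi_j \le \sum_{k=1}^j(\hat\lambda_k-\mu_k)^2 \le \xi_j'$.

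The first step is to record two facts about $\bl^\star$: (i) $\lambda^\star_1 \le \mu_1'$, and (ii) $\lambda^\star_k \ge \mu_k'$ for $k=2,\dots,j$; i.e.\ an optimizer never overshoots $\bm'$ in the relevant directions. Indeed, if $\lambda^\star_1 > \mu_1'$, then moving $\lambda^\star_1$ towards $\mu_1'$ keeps $\bl^\star$ feasible (it stays in $[0,1]$ since $\mu_1'\ge 0$, and stays below every other coordinate) while strictly decreasing $(\lambda_1-\mu_1')^2$, a contradiction; likewise, if $\lambda^\star_k < \mu_k'$ for some $k\in\{2,\dots,j\}$, increasing $\lambda^\star_k$ towards $\mu_k'$ preserves feasibility (it stays $\le 1$ since $\mu_k'\le 1$, and the constraints $\lambda_1\le\lambda_l$ are not violated) and strictly decreases $(\lambda_k-\mu_k')^2$. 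These are exactly the stationarity and dual-feasibility consequences of the KKT system written in the proof of Proposition~\ref{prop:xi}, applied with parameter $\bm'$.

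Next, define $\hat{\bl}$ by clipping $\bl^\star$ toward $\bm$: $\hat\lambda_1:=\min\{\lambda^\star_1,\mu_1\}$, $\hat\lambda_k:=\max\{\lambda^\star_k,\mu_k\}$ for $k=2,\dots,j$, and $\hat\lambda_k:=\lambda^\star_k$ for $k>j$. Then $\hat{\bl}\in[0,1]^K$ (each coordinate lies between two points of $[0,1]$), and $\hat\lambda_1\le\lambda^\star_1\le\lambda^\star_k\le\hat\lambda_k$ for every $k\ne 1$, so $\hat{\bl}$ is feasible for the $\bm$-instance. Now compare the objectives coordinatewise. For $k=1$: if $\lambda^\star_1\le\mu_1$ then $|\hat\lambda_1-\mu_1|=\mu_1-\lambda^\star_1\le\mu_1'-\lambda^\star_1=|\lambda^\star_1-\mu_1'|$ (using $\mu_1\le\mu_1'$ and (i)), and if $\lambda^\star_1>\mu_1$ then $\hat\lambda_1=\mu_1$ and that term vanishes. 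Symmetrically, for $k\ge 2$: if $\lambda^\star_k\ge\mu_k$ then $|\hat\lambda_k-\mu_k|=\lambda^\star_k-\mu_k\le\lambda^\star_k-\mu_k'=|\lambda^\star_k-\mu_k'|$ (using $\mu_k'\le\mu_k$ and (ii)), and if $\lambda^\star_k<\mu_k$ then $\hat\lambda_k=\mu_k$ and that term vanishes. Summing, $\sum_{k=1}^j(\hat\lambda_k-\mu_k)^2\le\sum_{k=1}^j(\lambda^\star_k-\mu_k')^2=\xi_j'$, which completes the argument.

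No step here is genuinely hard; the only point needing a little care is the first one (an optimizer does not overshoot $\bm'$), which either follows from the KKT conditions already set up for Proposition~\ref{prop:xi} (stationarity gives $\lambda^\star_1=\mu_1'-\sum_{k\ge 2}\eta^\star_k\le\mu_1'$ and $\lambda^\star_k=\mu_k'+\eta^\star_k\ge\mu_k'$ with $\eta^\star_k\ge 0$, and active box multipliers only reinforce these inequalities) or from the short perturbation argument above. If one prefers to avoid referring to the optimizer at all, an alternative is to interpolate $\bm(s)=(1-s)\bm+s\bm'$ and show $s\mapsto\xi_j(s)$ is nondecreasing via Danskin's theorem, using the same sign facts $\mu_1(s)-\lambda^\star_1(s)\ge 0$ and $\mu_k(s)-\lambda^\star_k(s)\le 0$ together with $\mu_1'-\mu_1\ge 0\ge\mu_k'-\mu_k$; but the clipping construction above seems cleaner.
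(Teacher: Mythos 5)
Your proof is correct, but it takes a different route from the paper's. The paper argues entirely on the dual side: it writes down the Lagrange dual functions $g_{\bm}(\be)$ and $g_{\bm'}(\be)$ of the two instances of (\ref{opt:1}), observes that they differ only through the terms $\eta_k(\mu_1-\mu_k)$ versus $\eta_k(\mu_1'-\mu_k')$, so that $\mu_1-\mu_k\le\mu_1'-\mu_k'$ together with $\be\ge 0$ gives $g_{\bm}\le g_{\bm'}$ pointwise, and then invokes strong duality (Slater) to transfer this to the primal values. You instead work purely on the primal side: you take an optimizer $\bl^\star$ of the $\bm'$-instance, establish the sign facts $\lambda_1^\star\le\mu_1'$ and $\lambda_k^\star\ge\mu_k'$ by a perturbation argument, and clip $\bl^\star$ toward $\bm$ to exhibit an explicit feasible point of the $\bm$-instance with objective at most $\xi_j'$; all the inequalities you use ($\hat\lambda_1\le\lambda_1^\star\le\lambda_k^\star\le\hat\lambda_k$, the coordinatewise comparison of $|\hat\lambda_k-\mu_k|$ with $|\lambda_k^\star-\mu_k'|$) check out. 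What each approach buys: the paper's duality computation is short once the Lagrangian from Proposition \ref{prop:xi} is in hand, but it quietly drops the box constraints $\bl\in[0,1]^K$ from the Lagrangian (harmless here because the unconstrained minimizers are averages of the $\mu_k$'s, hence in $[0,1]$, but it is an implicit step); your clipping argument is more elementary, handles the box constraints explicitly, produces a concrete feasible witness, and would extend verbatim to any objective that is coordinatewise monotone on either side of $\mu_k$, not just the quadratic.
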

\begin{proof}
Consider the Lagrangians of the two optimization problems: $\mathcal{L}_{\bm}$ and $\mathcal{L}_{\bm'}$. The corresponding Lagrange dual functions are:
	$g_{\bm}(\be)=\min_{\bl\in \RR^K}\mathcal{L}_{\bm}(\bl,\be)$ and $g_{\bm'}(\be)=\min_{\bl\in \RR^K}\mathcal{L}_{\bm'}(\bl,\be)$ and one can easily verify that
	\begin{align*}
	g_{\bm}(\be)&=\frac{1}{2}\left[ (\sum_{k=2}^j\eta_k)^2+\sum_{k=2}^j\eta_k^2\right]+\sum_{k=2}^j\eta_k(\mu_1-\mu_k-\sum_{i\neq k}\eta_i),\\
	g_{\bm'}(\be)&=\frac{1}{2}\left[ (\sum_{k=2}^j\eta_k)^2+\sum_{k=2}^j\eta_k^2\right]+\sum_{k=2}^j\eta_k(\mu'_1-\mu'_k-\sum_{i\neq k}\eta_i).
	\end{align*}
	Recall $\mu_1-\mu_k\le \mu_1'-\mu_k'$ and $\be\in \RR^{j-1}_{\ge 0}$, hence $g_{\bm}(\be)\le g_{\bm'}(\be)$ for all $\be\in \RR^{j-1}_{\ge 0}$. For (\ref{opt:1}), Slater condition holds clearly, hence strong duality follows (see e.g. \citep{boyd2004convex} Chapter 5.5.3). Thus, $\xi_j=\max_{\be\in \RR^{j-1}_+} g_{\bm}(\be)\le \max_{\be\in \RR^{j-1}_+} g_{\bm'}(\be)=\xi'_j$.
\end{proof}

\medskip
\noindent
The following result relates the function $G$ to $\xi_j$ and is instrumental in the proof of Theorem \ref{thm:crc}.
\begin{proposition}\label{prop:crc G}
	$\forall \beta\in (0,1],\, \bm\in [0,1]^K$, and $2\le j\le K$, one has
	\begin{equation}\label{eq:crc G}
	\beta\inf  \left\{  \sum_{k=1}^j (\lambda_k-\mu_k)^2 : \bl\in  [0,1]^K, \lambda_1\le \min_{k=2,\ldots,j} \lambda_k-G(\beta)\right\}\ge \xi_j.
	\end{equation}
\end{proposition}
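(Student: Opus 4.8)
The plan is to prove the inequality by exhibiting an explicit feasibility-preserving map from the feasible set of the left-hand minimization (the one with the enlarged margin $G(\beta)$) to the feasible set of the problem defining $\xi_j$, under which the objective is multiplied by exactly the factor $\beta$. Write $s=\sqrt{\beta}\in(0,1]$, so that $G(\beta)=\tfrac1s-1=\tfrac{1-s}{s}$ and hence $s\,G(\beta)=1-s$; this identity is the whole reason the margin $G(\beta)$ is ``just right''. Two trivial reductions first: if $G(\beta)>1$ the left-hand feasible set is empty (all coordinates must lie in $[0,1]$), so its infimum is $+\infty$ and the bound is immediate; and the coordinates $\lambda_k$ with $k>j$ never enter the objective and may be fixed to $\mu_k$, so both problems are effectively minimizations over $\lambda_1,\dots,\lambda_j\in[0,1]$ with constraint $\lambda_1\le\lambda_k$ (resp.\ $\lambda_1\le\lambda_k-G(\beta)$) for $k=2,\dots,j$. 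The case $\beta=1$ is just $s=1$, where the map below is the identity.

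\textbf{The map.} Given any $\bl$ feasible for the left-hand problem, i.e.\ $\bl\in[0,1]^K$ with $\lambda_1\le\lambda_k-G(\beta)$ for all $k=2,\dots,j$, I would define $\lambda'_k:=s\,\lambda_k+(1-s)\,\mu_k$ for $k=1,\dots,j$ (and $\lambda'_k:=\mu_k$ for $k>j$), and then verify three things. (i) Each $\lambda'_k$ is a convex combination of $\lambda_k\in[0,1]$ and $\mu_k\in[0,1]$, so $\bl'\in[0,1]^K$. (ii) Multiplying the constraint by $s>0$ gives $s(\lambda_k-\lambda_1)\ge s\,G(\beta)=1-s$, whence $\lambda'_k-\lambda'_1=s(\lambda_k-\lambda_1)+(1-s)(\mu_k-\mu_1)\ge(1-s)\bigl(1+\mu_k-\mu_1\bigr)\ge 0$, where the final step uses $\mu_1\le 1$ and $\mu_k\ge 0$; thus $\lambda'_1\le\min_{k=2,\dots,j}\lambda'_k$, i.e.\ $\bl'$ is feasible for the problem defining $\xi_j$. (iii) $\lambda'_k-\mu_k=s(\lambda_k-\mu_k)$, so $\sum_{k=1}^j(\lambda'_k-\mu_k)^2=\beta\sum_{k=1}^j(\lambda_k-\mu_k)^2$.

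\textbf{Conclusion.} From (i)--(ii), $\bl'$ is admissible for $\xi_j$, so by definition of $\xi_j$ and (iii), $\xi_j\le\sum_{k=1}^j(\lambda'_k-\mu_k)^2=\beta\sum_{k=1}^j(\lambda_k-\mu_k)^2$. Since this holds for every feasible $\bl$ of the left-hand problem, taking the infimum over such $\bl$ yields $\xi_j\le\beta\,\inf\{\dots\}$, which is exactly \eqref{eq:crc G}. There is essentially no obstacle to this argument; the only subtlety is pinpointing where $\bm\in[0,1]^K$ is used, namely to guarantee $1+\mu_k-\mu_1\ge 0$ in step (ii) (this is the same boundedness input that underlies Assumption~\ref{ass1} and the discussion preceding it), and checking that the margin identity $s\,G(\beta)=1-s$ makes the feasibility computation close exactly.
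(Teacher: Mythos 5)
Your proof is correct, and it takes a genuinely different (and cleaner) route than the paper's. The paper first invokes Proposition~\ref{prop:xi} to pin down the structure of the optimizer of $\xi_j$ (a subset $B\subseteq[j]$ of active coordinates averaging to $A$), then relaxes the constraint via $\mu_1-\mu_b\le 1$, performs a coordinate shift by $(\mu_1-A)G(\beta)$ and $(A-\mu_b)G(\beta)$, and finally evaluates the shifted problem exactly through the perturbation result Proposition~\ref{prop:pxi}, closing with the identity $\beta(1+G(\beta))^2=1$. You instead bypass the explicit solution of $\xi_j$ entirely: the affine contraction $\lambda_k\mapsto s\lambda_k+(1-s)\mu_k$ with $s=\sqrt\beta$ maps the $G(\beta)$-margin feasible set into the $\xi_j$ feasible set (the key identity $sG(\beta)=1-s$ plays the role of the paper's $\beta(1+G(\beta))^2=1$, and $1+\mu_k-\mu_1\ge0$ plays the role of the paper's relaxation step $\mu_1-\mu_b\le1$) while scaling the objective by exactly $\beta$; comparison of infima then gives the claim in one line. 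Your argument is more self-contained, since it needs neither the KKT computation of Proposition~\ref{prop:xi} nor Proposition~\ref{prop:pxi}, and it adapts verbatim to arbitrary index sets $J$ (as needed for Corollary~\ref{cor:crc G new}, where one would still invoke Proposition~\ref{prop:larger C} to pass to $\bxi_j$). The only cosmetic point: when you fix $\lambda'_k$ for $k>j$, choose a value at least $\lambda'_1$ (e.g.\ $1$) if one works with the appendix formulation (\ref{opt:1}) whose constraint ranges over all $k\neq1$; with the main-text definition of $\xi_j$ over $[0,1]^j$ this is moot. What each approach buys: yours is shorter and more robust; the paper's yields, as a by-product, the exact value of the left-hand side (it shows the bound is attained up to the inequality $\ge$), which is occasionally informative but not needed for the theorem.
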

\begin{proof}
    Let $B\subseteq [j]$ s.t. $\xi_j=\sum_{b\in B}(\mu_b-A)^2$, where $A=\frac{\sum_{b\in B}\mu_b}{\left|B\right|}$. Using the fact that $\sum_{k\notin B}(\mu_k-\lambda_k)^2\ge 0$ for all $\bl\in \RR^K$, one can deduce that  
\begin{align}\label{eq:ncrc 1}
\nonumber & \text{l.h.s. of (\ref{eq:crc G})}\ge \beta\inf\left\{  \sum_{b\in B} (\lambda_b-\mu_b)^2 :  \lambda_1  \le  \lambda_b-G(\beta),\forall b\in B,b\neq 1\right\}\\
\nonumber & \ge \beta\inf\left\{  \sum_{b\in B} (\lambda_b-\mu_b)^2 :  \lambda_1  \le  \lambda_b-(\mu_1-\mu_b)G(\beta),\forall b\in B,b\neq 1\right\}\\
&=\beta\inf\left\{  \sum_{b\in B} (\lambda_b-\mu_b)^2 :  \lambda_1 +(\mu_1-A)G(\beta) \le  \lambda_k- (A-\mu_b)G(\beta),\forall b\in B,b\neq 1\right\}
\end{align}
where the second inequality comes from $1\ge \mu_1-\mu_k $. Now introduce $\bl'$ and $\bm'$ as 
\begin{align*}
\lambda_1'=\lambda_1+(\mu_1-A)G(\beta)&,\,\,\lambda'_b=\lambda_b-(A-\mu_b)G(\beta),\forall b\neq 1,b\in B;\\
\mu_1'=\mu_1-(\mu_1-A)G(\beta)&,\,\,\mu'_b=\mu_b+(A-\mu_b)G(\beta),\forall b\neq 1,b\in B.
\end{align*}    
These allow us to write the r.h.s. of (\ref{eq:ncrc 1}) as the value of the following optimization problem:
\begin{equation}\label{eq:ncrc 2}
    \beta\inf \left\{\sum_{b\in B}(\lambda'_b-\mu'_b)^2: \lambda_1'\le \min_{b\in B}\lambda_b' \right\}.
\end{equation}
Applying Proposition \ref{prop:pxi} with $S_b=(A-\mu_b)G(\beta)$ for all $b\in B,b\neq 1$ yields that the value of (\ref{eq:ncrc 2}) is 
\begin{equation}\label{eq:ncrc 3}
    \beta \sum_{b\in B}(\mu'_b-A)^2 =\beta \left( (\mu_1+(\mu_1-A)G(\beta)-A)^2+\sum_{b\in B,b\neq 1}(A-\mu_b+(A-\mu_b)G(\beta))^2 \right).
\end{equation}
Recall that $G(\beta)=1/\sqrt{\beta}-1$. Hence, (\ref{eq:ncrc 3}) is larger than $\sum_{b\in B}(\mu_b-A)^2=\xi_j$.
\end{proof}

In Proposition \ref{prop:crc G}, the top-$j$ arms only are considered. We can prove similar results for any $J\in \mathcal{J}$, by combining Proposition \ref{prop:larger C} to the arguments of the previous proof.

\begin{corollary}\label{cor:crc G new}
	$\forall \beta\in (0,1],\, \bm\in [0,1]^K$, $2\le j\le K$, and $J\in \mathcal{J},\, J\neq [j] $, one has
	\begin{equation}\label{eq:crc G new}
		\beta\inf   \left\{  \sum_{k=1}^K (\lambda_k-\mu_k)^2 : \bl\in [0,1]^K, \lambda_1\le \min_{k\in J,k\neq 1} \lambda_k-G(\beta)\right\}\ge \bxi_j.
	\end{equation}
\end{corollary}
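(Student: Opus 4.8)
The plan is to reduce the corollary to Proposition~\ref{prop:crc G} by first discarding the arms outside $J$, and then to pass from the index set $J$ to the ``canonical'' index set $\{1,\ldots,j-1,j+1\}$ using the monotonicity statement in Proposition~\ref{prop:larger C}. Only the range $2\le j\le K-1$ is nontrivial (for $j=K$ there is no $J\in\mathcal{J}$ with $J\neq[j]$), so $\mu_{j+1}$ is well defined throughout.

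\textbf{Step 1 (discard arms outside $J$ and invoke Proposition~\ref{prop:crc G}).} Since $(\lambda_k-\mu_k)^2\ge 0$ for every $k\notin J$, the left-hand side of \eqref{eq:crc G new} is at least $\beta\,\inf\{\sum_{k\in J}(\lambda_k-\mu_k)^2:\bl\in[0,1]^K,\ \lambda_1\le\min_{k\in J,\,k\neq 1}\lambda_k-G(\beta)\}$. Write $J=\{1<k_2<\cdots<k_j\}$ and relabel the coordinates of $\bl$ indexed by $J$ as $1,2,\ldots,j$ (with the distinguished index $1$ fixed); the coordinates outside $J$ are then unconstrained and absent from the objective, so they do not affect the value. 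The resulting quantity is exactly the left-hand side of the inequality in Proposition~\ref{prop:crc G} evaluated at the instance $\bm^{(J)}:=(\mu_1,\mu_{k_2},\ldots,\mu_{k_j})$ (extended arbitrarily inside $[0,1]^K$), which is nonincreasing since $\mu_1>\mu_2\ge\mu_{k_2}$ and $k_2<\cdots<k_j$ gives $\mu_{k_2}\ge\cdots\ge\mu_{k_j}$. Hence this quantity is at least $\xi_j(\bm^{(J)})$, the value of \eqref{opt:1} for the instance $\bm^{(J)}$.

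\textbf{Step 2 (compare $\bm^{(J)}$ with the canonical instance via Proposition~\ref{prop:larger C}).} By the same free-coordinate reduction, $\bxi_j=\xi_j(\bm^{(0)})$ with $\bm^{(0)}:=(\mu_1,\mu_2,\ldots,\mu_{j-1},\mu_{j+1})$, which is also nonincreasing. Both instances share the first coordinate $\mu_1$. For $2\le r\le j-1$ we have $k_r\ge r$ (the integers $1=k_1<k_2<\cdots<k_r$ are distinct), so $(\bm^{(J)})_r=\mu_{k_r}\le\mu_r=(\bm^{(0)})_r$; and since $J\neq[j]$ forces $k_j\ge j+1$, we get $(\bm^{(J)})_j=\mu_{k_j}\le\mu_{j+1}=(\bm^{(0)})_j$. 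Thus $\bm^{(J)}$ arises from $\bm^{(0)}$ by keeping the first coordinate and only decreasing coordinates $2,\ldots,j$, so Proposition~\ref{prop:larger C} (taking base instance $\bm^{(0)}$ and modified instance $\bm^{(J)}$) yields $\xi_j(\bm^{(J)})\ge\xi_j(\bm^{(0)})=\bxi_j$. Combining with Step 1 proves \eqref{eq:crc G new}.

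\textbf{Main obstacle.} The reduction to Proposition~\ref{prop:crc G} and the relabelling bookkeeping are routine; the delicate point is the combinatorial domination in Step 2, namely that for every $j$-subset $J\ni 1$ with $J\neq[j]$ the sorted tail of means over $J$ is dominated coordinatewise by $(\mu_2,\ldots,\mu_{j-1},\mu_{j+1})$. This is exactly where the hypothesis $J\neq[j]$ enters (it forces the largest element of $J$ to be at least $j+1$) and where Proposition~\ref{prop:larger C} is ``combined with the arguments of the previous proof.'' One should also check explicitly that neither optimization value is affected by the arbitrary extension of $\bm^{(J)}$ or by the unconstrained coordinates of $\bl$ outside $J$, which is immediate since those coordinates contribute $0$ to the objective and carry no constraint.
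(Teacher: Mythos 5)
Your proof is correct and follows essentially the same route as the paper: reduce the $G(\beta)$-constrained problem to the unconstrained-threshold problem by "repeating the argument of Proposition~\ref{prop:crc G}" (which you do cleanly by relabelling $J$ and invoking that proposition as a black box for the sorted instance $\bm^{(J)}$), then use the monotonicity in Proposition~\ref{prop:larger C} together with $k_r\ge r$ and $k_j\ge j+1$ to dominate by $\bxi_j$. The bookkeeping you add (sortedness of $\bm^{(J)}$, the identification $\bxi_j=\xi_j(\bm^{(0)})$, and the coordinatewise domination forced by $J\neq[j]$) is exactly what the paper leaves implicit.
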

\begin{proof}
Let $J\in \mathcal{J},\, J\neq [j]$ be fixed, we denote the indexes in $J$ by $\{\tilde{1},\tilde{2},\ldots, \tilde{j}\}$ such that $\tilde{1}<\tilde{2}<\ldots<\tilde{j}$. One can repeat the argument in the proof of Proposition \ref{prop:crc G} to obtain that the l.h.s. of (\ref{eq:crc G new}) is larger than  
\begin{equation}\label{eq:crc G new1}
\inf\left\{   \sum_{k=1}^K(\lambda_k-\mu_k)^2: \bl\in [0,1]^K,\,   \lambda_1\le \min_{k\in J,k\neq 1}\lambda_k  \right\}.	
\end{equation}
Since every $J\in \mathcal{J}$ includes $1$, $\tilde{1}=1$. Also, since $J\neq [j]$, we have $\tilde{2}\ge 2,\ldots, \tilde{j} \ge j+1$. Because we assume that $\mu_1>\mu_2\ge \ldots\ge \mu_K$, Proposition \ref{prop:larger C} yields that the value of (\ref{eq:crc G new1}) is larger than $\bxi_j$.
\end{proof}

\medskip
\subsection{Optimization problem for \cra}\label{app:opt A}
The following proposition is the analogue of Proposition \ref{prop:larger C} for $\psi_j$.

\begin{proposition}\label{prop:larger A}
	Let $\bm'\in [0,1]^K$ such that $\mu_1'\ge \mu_1$ and $\mu'_k\le \mu_k$ for all $k=2,\ldots,j$. Define $\psi'_j=\frac{j-1}{j}(\mu'_{1}-\frac{\sum_{k=2}^j\mu'_{k}}{j-1})^2$. Then, $\psi'_j\ge \psi_j$.
\end{proposition}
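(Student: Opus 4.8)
The plan is to reduce the claim to a one-line monotonicity argument, exploiting the sign information provided by the standing assumption $\mu_1>\mu_2\ge\cdots\ge\mu_K$ (in force throughout this appendix).

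First I would write both quantities in a common form. Set
$$c(\bm):=\mu_1-\frac{1}{j-1}\sum_{k=2}^j\mu_k,\qquad c(\bm'):=\mu_1'-\frac{1}{j-1}\sum_{k=2}^j\mu_k',$$
so that $\psi_j=\frac{j-1}{j}\,c(\bm)^2$ and $\psi_j'=\frac{j-1}{j}\,c(\bm')^2$. Since $\frac{j-1}{j}>0$ and $x\mapsto x^2$ is nondecreasing on $[0,\infty)$, it suffices to establish the chain $c(\bm')\ge c(\bm)\ge 0$.

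The nonnegativity $c(\bm)\ge0$ is immediate from the ordering of $\bm$: $\mu_1\ge\mu_k$ for every $k\in\{2,\ldots,j\}$, hence $\mu_1\ge\frac{1}{j-1}\sum_{k=2}^j\mu_k$. For $c(\bm')\ge c(\bm)$ I would simply combine the two hypotheses: $\mu_1'\ge\mu_1$ increases the leading term, while $\mu_k'\le\mu_k$ for $k=2,\ldots,j$ gives $\sum_{k=2}^j\mu_k'\le\sum_{k=2}^j\mu_k$ and so the subtracted average does not grow; adding these yields $c(\bm')\ge c(\bm)$ (the coordinates of $\bm'$ beyond index $j$ play no role). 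Squaring — legitimate precisely because both sides are $\ge0$ — and multiplying by $\frac{j-1}{j}$ gives $\psi_j'\ge\psi_j$.

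There is essentially no obstacle here; the only point that needs to be recorded carefully is the nonnegativity $c(\bm)\ge 0$ before squaring, since $x\mapsto x^2$ is not monotone on all of $\RR$ — this is exactly where the ordering assumption on $\bm$ enters. Note that, unlike the proof of Proposition~\ref{prop:larger C}, no Lagrangian duality is needed, because the closed-form expression for $\psi_j$ makes the comparison entirely elementary.
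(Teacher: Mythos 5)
Your proof is correct and follows essentially the same route as the paper's: both arguments rest on showing that the quantity inside the square is at least as large for $\bm'$ as for $\bm$ and is nonnegative, then squaring. If anything, you are slightly more explicit than the paper in recording the nonnegativity $c(\bm)\ge 0$ (which comes from the standing ordering $\mu_1>\mu_2\ge\cdots\ge\mu_K$) before invoking monotonicity of $x\mapsto x^2$.
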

\begin{proof}
	The result simply follows from the following inequality:
	$$
	\psi'_j=\frac{j-1}{j}(\mu'_{1}-\frac{\sum_{k=2}^j\mu'_{k}}{j-1})^2\ge \frac{j-1}{j}(\mu_{1}-\frac{\sum_{k=2}^j\mu'_{k}}{j-1})^2\ge \frac{j-1}{j}(\mu_{1}-\frac{\sum_{k=2}^j\mu_{k}}{j-1})^2=\psi_j.
	$$
\end{proof}

\medskip
\noindent
We use the following result in the proof of Theorem \ref{thm:cra}.

\begin{proposition}\label{prop:cra G}
	$\forall \beta\in (0,1],\bm\in [0,1]^K \hbox{ with }\mu_1>\mu_2\ge \ldots \ge \mu_K $, and $2\le j\le K$, one has
	\begin{equation}\label{eq:cra G}
	\beta\inf   \left\{  \sum_{k=1}^j (\lambda_k-\mu_k)^2 : \bl\in [0,1]^K, \lambda_1\le \frac{\sum_{k=2,\ldots,j} \lambda_k}{j-1}-G(\beta)\right\}\ge \psi_j,
	\end{equation}
	where $\psi_j=	\frac{j-1}{j}(\mu_{1}-\frac{\sum_{k=2}^j\mu_{k}}{j-1})^2,\,\forall j\in \{2,\ldots, K\}$, as introduced in Section \ref{sec:anal cr}. 
\end{proposition}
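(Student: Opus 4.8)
The plan is to lower bound the infimum by first relaxing the box constraint $\bl\in[0,1]^K$ (enlarging the feasible set can only decrease the infimum), and then to recognise the relaxed problem as the squared Euclidean distance from a point to a half-space of $\RR^j$, which has a closed form. Throughout write $\overline{\mu}=\frac{1}{j-1}\sum_{k=2}^{j}\mu_k$, so that $\psi_j=\frac{j-1}{j}(\mu_1-\overline{\mu})^2$. Since $\lambda_{j+1},\dots,\lambda_K$ occur neither in the objective nor in the constraint, it is enough to consider $(\lambda_1,\dots,\lambda_j)\in\RR^j$ subject to the single constraint $(j-1)\lambda_1-\sum_{k=2}^{j}\lambda_k\le-(j-1)G(\beta)$.

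First I would compute $\inf\{\sum_{k=1}^j(\lambda_k-\mu_k)^2:(j-1)\lambda_1-\sum_{k=2}^j\lambda_k\le-(j-1)G(\beta)\}$ over $\RR^j$. This half-space has bounding hyperplane with normal $v=(j-1,-1,\dots,-1)$, $\|v\|^2=(j-1)^2+(j-1)=j(j-1)$. Evaluating the affine form at $\bm$ gives $(j-1)\mu_1-\sum_{k=2}^j\mu_k=(j-1)(\mu_1-\overline{\mu})>0$ (because $\mu_1>\mu_k$ for $k\ge2$), while $-(j-1)G(\beta)\le0$ since $\beta\le1$; hence $\bm$ lies strictly outside the half-space, so the minimiser is the orthogonal projection of $\bm$ onto the bounding hyperplane and the infimum equals
\[
\frac{\bigl((j-1)(\mu_1-\overline{\mu}+G(\beta))\bigr)^2}{j(j-1)}=\frac{j-1}{j}\bigl(\mu_1-\overline{\mu}+G(\beta)\bigr)^2 .
\]
Equivalently, the KKT point is $\lambda_1=\mu_1-(j-1)\eta$, $\lambda_k=\mu_k+\eta$ with $\eta=\tfrac1j(\mu_1-\overline{\mu}+G(\beta))\ge0$, yielding the same value; the box constraint in the original problem can only make the infimum larger, so it suffices to beat this relaxed value.

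It then remains to check $\beta\cdot\frac{j-1}{j}(\mu_1-\overline{\mu}+G(\beta))^2\ge\frac{j-1}{j}(\mu_1-\overline{\mu})^2=\psi_j$. Dividing by the positive factor $\frac{j-1}{j}$ and using that $\mu_1-\overline{\mu}>0$ and $\mu_1-\overline{\mu}+G(\beta)>0$, this is equivalent to $\sqrt\beta\,(\mu_1-\overline{\mu}+G(\beta))\ge\mu_1-\overline{\mu}$; substituting the identity $\sqrt\beta\,G(\beta)=1-\sqrt\beta$ and rearranging, it becomes $(1-\sqrt\beta)\bigl(1-(\mu_1-\overline{\mu})\bigr)\ge0$, which holds since $\beta\le1$ and $0\le\mu_1-\overline{\mu}\le\mu_1\le1$ — the only place where the assumption $\bm\in[0,1]^K$ is used.

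The argument is essentially routine; the point requiring care is that one should not run a Lagrangian/KKT argument directly inside the box, because for small $\beta$ the term $G(\beta)$ is large and the unconstrained minimiser leaves $[0,1]^K$, so the relaxation to $\RR^j$ is essential, and one must keep track of signs so that the final quadratic inequality has the right orientation. As an alternative, one could mimic the change-of-variables proof of Proposition~\ref{prop:crc G} (using Proposition~\ref{prop:larger A} in place of Proposition~\ref{prop:larger C}), but the single linear constraint here makes the direct projection computation shorter.
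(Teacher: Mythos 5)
Your proof is correct and follows essentially the same route as the paper's: both reduce to the relaxed problem over $\RR^j$ with the single linear constraint, obtain the value $\frac{j-1}{j}\bigl(\mu_1-\overline{\mu}+G(\beta)\bigr)^2$ (you by orthogonal projection onto the half-space, the paper by writing out the KKT conditions — the same computation), and then conclude via the identity $\sqrt{\beta}\,G(\beta)=1-\sqrt{\beta}$ together with $0\le \mu_1-\overline{\mu}\le 1$. Your explicit remark that dropping the box constraint only decreases the infimum is a point the paper leaves implicit, and is a welcome clarification.
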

\begin{proof}
The Lagrangian of the optimization problem (\ref{eq:cra G}) is:
$$
\mathcal{L}(\bl,\eta)=\frac{\beta}{2}\sum_{k=1}^j (\lambda_k-\mu_k)^2+\eta(\lambda_1-\frac{\sum_{k=2}^j \lambda_k}{j-1}+G(\beta)),\,\hbox{for } (\bl,\eta)\in \RR^j\times \RR_{\ge 0}.
$$
Denote the saddle point of $\mathcal{L}$ by $( \bl^\star,\eta^\star)$. The KKT conditions are satisfied: 
\begin{align}
\lambda^\star_1\le \frac{\sum_{k=2}^j\lambda^\star_k}{j-1}-G(\beta)\hbox{ and }\eta^\star\ge 0,\tag{Feasibility}\\
	\beta (\lambda_1^\star-\mu_1)+\eta^\star=0,\hbox{ and }	\beta (\lambda_k^\star-\mu_k)-\frac{\eta^\star}{j-1}=0,\forall k\neq 1,\tag{Stationarity}\\
\eta^\star \left(	\lambda^\star_1 - \frac{\sum_{k=2}^j\lambda_k^\star }{j-1}+G( \beta ) \right)=0. \tag{Complementarity}
\end{align}
One can simply verify that if $\eta^\star=0$, stationarity and feasibility cannot hold simultaneously. Thus $\eta^\star>0$ and complementarity yield that $\lambda^\star_1 - \sum_{k=2}^j\lambda_k^\star /(j-1)+G( \beta )=0$. In conjunction with stationarity, we have  
\[
\eta^\star=\frac{\beta(j-1)}{j}\left(\mu_1-\frac{\sum_{k=2}^j\mu_k}{j-1}+G(\beta)\right), 
\]
and hence the value of (\ref{eq:cra G}) is 
\begin{equation}\label{eq:opt4}
	\frac{(j-1)\beta}{j}\left(\mu_1-\frac{\sum_{k=2}^j\mu_k}{j-1}+G(\beta)\right)^2.
\end{equation}
Recall that $G(\beta)=1/\sqrt{\beta}-1$ and $\bm\in [0,1]^K$. We deduce that $G(\beta)\ge (1/\sqrt{\beta}-1)(\mu_1-\frac{\sum_{k=2}^j\mu_k}{j-1})$, which is equivalent to $\mu_1-\frac{\sum_{k=2}^j\mu_k}{j-1}+G(\beta)\ge \frac{1}{\sqrt{\beta}}(\mu_1-\frac{\sum_{k=2}^j\mu_k}{j-1} )$ and hence (\ref{eq:opt4}) is larger than $\frac{j-1}{j}(\mu_1-\frac{\sum_{k=2}^j\mu_k}{j-1})^2$. 
\end{proof}
\noindent
As we obtained Corollary \ref{cor:crc G new} in Appendix  \ref{app:opt C}, combining Proposition \ref{prop:larger A} and the proof of Proposition \ref{prop:cra G} yields the following corollary.

\begin{corollary}\label{cor:cra G new}
	$\forall \beta\in (0,1]$, $\bm\in [0,1]^K \hbox{ with }\mu_1>\mu_2\ge \ldots \ge \mu_K $, $2\le j\le K$, and $J\in \mathcal{J},\, J\neq [j] $, one has
	\begin{equation}\label{eq:cra G new}
		\beta\inf   \left\{  \sum_{k\in [K]} (\lambda_k-\mu_k)^2 : \bl\in  [0,1]^K, \lambda_1\le \frac{\sum_{k\in J,k\neq 1} \lambda_k}{j-1}-G(\beta)\right\}\ge \bpsi_j.
	\end{equation}
\end{corollary}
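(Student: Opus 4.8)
The plan is to transcribe the argument used for Corollary~\ref{cor:crc G new}, with Proposition~\ref{prop:larger C} replaced by its \cra-counterpart Proposition~\ref{prop:larger A}. First I would discard the coordinates outside $J$: since $(\lambda_k-\mu_k)^2\ge 0$ for every $k\notin J$ and those coordinates are otherwise unconstrained, dropping them only decreases the objective, so the left-hand side of \eqref{eq:cra G new} is at least
$$
\beta\inf\left\{\sum_{k\in J}(\lambda_k-\mu_k)^2:\bl\in[0,1]^K,\ \lambda_1\le\frac{\sum_{k\in J,k\neq1}\lambda_k}{j-1}-G(\beta)\right\}.
$$
Writing $J=\{\tilde1<\tilde2<\cdots<\tilde j\}$ with $\tilde1=1$ (every $J\in\mathcal{J}$ contains $1$), this is precisely the optimization problem of Proposition~\ref{prop:cra G}, only with the index set $\{2,\ldots,j\}$ replaced by $J\setminus\{1\}$; equivalently, one re-runs the KKT computation in the proof of Proposition~\ref{prop:cra G} verbatim, the index $1$ still being the distinguished one. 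That computation uses nothing about the means beyond $\mu_1=\max_k\mu_k$ (so that $\mu_1-\tfrac{1}{j-1}\sum_{k\in J,k\neq1}\mu_k\in[0,1]$), which holds since $\mu_1>\mu_2\ge\cdots\ge\mu_K$. Hence the displayed infimum is at least $\psi'_j:=\frac{j-1}{j}\bigl(\mu_1-\frac{\sum_{k\in J,k\neq1}\mu_k}{j-1}\bigr)^2$.

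It then remains to show $\psi'_j\ge\bpsi_j$, and this is where the hypothesis $J\neq[j]$ enters. The $j-1$ distinct integers $\tilde2<\cdots<\tilde j$ all lie in $\{2,\ldots,K\}$ but cannot exhaust $\{2,\ldots,j\}$, which forces $\tilde i\ge i$ for $i=2,\ldots,j-1$ and $\tilde j\ge j+1$. Monotonicity of the means then gives $\mu_{\tilde i}\le\mu_i$ for $i\le j-1$ and $\mu_{\tilde j}\le\mu_{j+1}$. Introducing the instance $\bm''$ with $\mu''_1=\mu_1$, $\mu''_i=\mu_i$ for $i\le j-1$ and $\mu''_j=\mu_{j+1}$ --- for which $\frac{j-1}{j}\bigl(\mu''_1-\frac{\sum_{k=2}^j\mu''_k}{j-1}\bigr)^2=\bpsi_j$ by the definition of $\bpsi_j$ --- we have $\mu'_1\ge\mu''_1$ and $\mu'_k\le\mu''_k$ for $k=2,\ldots,j$, where $\bm'$ denotes the instance with $\mu'_1=\mu_1$, $\mu'_i=\mu_{\tilde i}$. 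Proposition~\ref{prop:larger A} (applied to $\bm'$ relative to $\bm''$) then yields $\psi'_j\ge\bpsi_j$, and chaining the inequalities gives \eqref{eq:cra G new}.

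The argument is essentially a bookkeeping exercise combining three earlier results, so I do not anticipate a genuine obstacle. The only point needing a moment's care is the combinatorial claim that $J\neq[j]$ forces the index tuple $(\tilde2,\ldots,\tilde j)$ to dominate $(2,\ldots,j-1,j+1)$ componentwise --- this is exactly what makes $\bpsi_j$, rather than $\psi_j$, the correct bound, and it mirrors the role of Proposition~\ref{prop:larger C} in the proof of Corollary~\ref{cor:crc G new}.
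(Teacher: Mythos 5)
Your proposal is correct and follows essentially the same route as the paper: drop the unconstrained coordinates outside $J$, rerun the KKT computation of Proposition~\ref{prop:cra G} with $J\setminus\{1\}$ in place of $\{2,\ldots,j\}$ to obtain the bound $\frac{j-1}{j}\bigl(\mu_1-\frac{\sum_{k\in J,k\neq 1}\mu_k}{j-1}\bigr)^2$, and then use the combinatorial fact that $J\neq[j]$ forces $\tilde{i}\ge i$ for $i\le j-1$ and $\tilde{j}\ge j+1$ together with Proposition~\ref{prop:larger A} to conclude the bound $\bpsi_j$. Your write-up is in fact slightly more explicit than the paper's (which leaves the componentwise domination and the intermediate instance implicit, and contains a typo writing $\bxi_j$ where $\bpsi_j$ is meant).
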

\begin{proof}
	Let $J\in \mathcal{J},\, J\neq [j]$ be fixed. We denote the indexes in $J$ by $\{\tilde{1},\tilde{2},\ldots, \tilde{j}\}$ such that $\tilde{1}<\tilde{2}<\ldots<\tilde{j}$. One can repeat the arguments of the proof of Proposition \ref{prop:cra G} to obtain that the l.h.s. of (\ref{eq:cra G new}) is larger than  
	\begin{equation}\label{eq:cra G new1}
				\frac{j-1}{j}(\mu_{\tilde{1}}-\frac{\sum_{k=2}^j\mu_{\tilde{k}}}{j-1})^2.
	\end{equation}
	Note that every $J\in \mathcal{J}$ containing $1$ satisfies $\tilde{1}=1$, and that since $J\neq [j]$, we have $\tilde{2}\ge 2,\ldots, \tilde{j} \ge j+1$. As we assume that $\mu_1>\mu_2\ge \ldots\ge \mu_K$, Proposition \ref{prop:larger A} yields that the value of (\ref{eq:cra G new1}) is larger than $\bxi_j$.
\end{proof}

\newpage
\section{LDP for the sampling process under \sred}\label{app:allocation}

In this section, we are interested in deriving an LDP for the process $\{\om(\theta T)\}_{T\ge 1}$ for a fixed $\theta\in (0,1]\cap \mathbb{Q}$ under \sred. More precisely, we look for a function $I_\theta (\cdot)$ which satisfies an LDP upper bound (\ref{eq:rate UB}) on $\mathcal{X}_j$ for some fixed $j\in \{1 ,\ldots,K\}$, where 
\begin{align}\label{eq:xj}
	\mathcal{X}_j =  \left\{ \xx\in \Sigma : \exists \sigma: [K] \mapsto [K] \hbox{ s.t. } x_{\sigma (1)}=\ldots=x_{\sigma (j)}>x_{\sigma (j+1)}> \ldots > x_{\sigma (K)}> 0\right\}.
\end{align}
For convenience, we define $x_{\sigma(K+1)}=0$. For any $j$, we also define
\begin{equation}\label{eq:yj2}
\mathcal{X}_{j,i}(\theta)=	\left\{\xx\in \mathcal{X}_j: 	\theta  x_{\sigma (i)}i\olog i> 1-\theta \sum_{k=i+1}^Kx_{\sigma(k)}	   \right\},\,\forall i\in \{j,\ldots,K\},
\end{equation}
where the permutation $\sigma$ depends on $\xx$ as in the definition of ${\cal X}_j$ (\ref{eq:xj}).

It is important to remark that when $\theta T$ is not an integer, $\om(\theta T)$ is not defined. Hence in the following, when we write $\lowlim_{T\to\infty} f(\mathbb{P}_{\bm}[\om(\theta T)\in F])$, we actually mean $\lowlim_{T\to\infty: \theta T\in \mathbb{N}} f(\mathbb{P}_{\bm}[\om(\theta T)\in F])$.

Deriving an LDP upper bound (\ref{eq:rate UB}) is not easy in general, and to this aim, we first introduce a useful sufficient condition in \ref{subapp:local}.

\subsection{A sufficient condition towards an LDP upper bound (\ref{eq:rate UB})}\label{subapp:local}
The following condition will be useful in our analysis, in particular in this section. This condition is similar to those presented in Chapter 2 in \cite{varadhan2016large}. We say that 
$\{Y(t)\}_{t\ge 1}$ satisfies an {\it LDP local upper bound with rate function $I$ at point $\yy\in {\cal Y}$} if:\\
	\begin{equation}
		\lowlim_{\delta\rightarrow 0}\lowlim_{t\rightarrow \infty}\frac{1}{t}\log \frac{1}{\PP[Y(t)\in B(\yy,\delta)]} \ge I(\yy), \label{eq:local UB}
	\end{equation}
	where $B(\yy,\delta)$ is the open ball with center $\yy$ and radius $\delta$.

\begin{lemma}\label{lem:local}
	Suppose $\mathcal{Y}$ is compact and $\{Y(t)\}_{t\ge 1}$ satisfies an LDP local upper bound (\ref{eq:local UB}) with a lower semi-continuous function $I$ at all $\yy\in \mathcal{Y}$, then  $\{Y(t)\}_{t\ge 1}$ satisfies an LDP upper bound (\ref{eq:rate UB}).
\end{lemma}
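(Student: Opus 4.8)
The plan is to deduce the global LDP upper bound (\ref{eq:rate UB}) from the local one (\ref{eq:local UB}) by a standard compactness covering argument. First I would fix an arbitrary closed set $C \subseteq \mathcal{Y}$ and a target level $s < \inf_{\yy \in C} I(\yy)$; it suffices to prove $\lowlim_{t\to\infty} \frac{1}{t}\log\frac{1}{\PP[Y(t)\in C]} \ge s$, and then let $s \uparrow \inf_C I$. Since every $\yy \in C$ satisfies the local bound (\ref{eq:local UB}) with $I(\yy) > s$, for each such $\yy$ I can choose a radius $\delta_\yy > 0$ such that $\lowlim_{t\to\infty}\frac{1}{t}\log\frac{1}{\PP[Y(t)\in B(\yy,\delta_\yy)]} \ge s$ (this uses that the outer $\lowlim_{\delta\to 0}$ in (\ref{eq:local UB}) exceeds $s$, so some small enough $\delta_\yy$ works).

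Next I would invoke compactness: the balls $\{B(\yy,\delta_\yy)\}_{\yy\in C}$ form an open cover of $C$, and since $\mathcal{Y}$ is compact and $C$ is closed, $C$ is compact, so there is a finite subcover $B(\yy_1,\delta_{\yy_1}),\ldots,B(\yy_m,\delta_{\yy_m})$. Then $\PP[Y(t)\in C] \le \sum_{i=1}^m \PP[Y(t)\in B(\yy_i,\delta_{\yy_i})] \le m\max_{i\in[m]}\PP[Y(t)\in B(\yy_i,\delta_{\yy_i})]$. Taking logarithms, dividing by $-t$, and using that the $\lowlim$ of a finite minimum equals the minimum of the $\lowlim$'s (and that the $\frac{1}{t}\log m$ term vanishes), I get
$$
\lowlim_{t\to\infty}\frac{1}{t}\log\frac{1}{\PP[Y(t)\in C]} \ge \min_{i\in[m]} \lowlim_{t\to\infty}\frac{1}{t}\log\frac{1}{\PP[Y(t)\in B(\yy_i,\delta_{\yy_i})]} \ge s.
$$
Letting $s \uparrow \inf_{\yy\in C} I(\yy)$ then gives (\ref{eq:rate UB}) for closed sets, which is exactly the LDP upper bound.

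I would also need to check the compactness-of-sublevel-sets requirement implicit in calling $I$ a rate function for an upper bound: since $\mathcal{Y}$ is compact and $I$ is lower semicontinuous, each sublevel set $\mathcal{K}_s = \{y : I(y)\le s\}$ is closed in a compact space, hence compact, so condition (i) of Definition \ref{def:rate} holds automatically; and lower semicontinuity of $I$ is assumed in the hypothesis. (If the lemma only claims the upper bound (\ref{eq:rate UB}) and not a full LDP, this last check is not even needed, but it is cheap to record.)

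The argument is entirely routine; the only mild subtlety — and the one place to be careful — is the passage from the local statement (\ref{eq:local UB}), which has the nested form $\lowlim_{\delta\to 0}\lowlim_{t\to\infty}$, to the existence of a \emph{single} radius $\delta_\yy$ for which $\lowlim_{t\to\infty}\frac{1}{t}\log\frac{1}{\PP[Y(t)\in B(\yy,\delta_\yy)]} \ge s$. This works because the inner quantity $g_\yy(\delta) := \lowlim_{t\to\infty}\frac{1}{t}\log\frac{1}{\PP[Y(t)\in B(\yy,\delta)]}$ is nonincreasing in $\delta$ (larger ball, larger probability), so $\lowlim_{\delta\to 0} g_\yy(\delta) = \sup_{\delta>0} g_\yy(\delta) \ge I(\yy) > s$, and hence $g_\yy(\delta_\yy) > s$ for all sufficiently small $\delta_\yy > 0$. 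That monotonicity observation is the only step worth spelling out; everything else is bookkeeping.
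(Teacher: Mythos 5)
Your proposal is correct and follows essentially the same route as the paper's proof: pick a radius at each point of $C$ from the local bound, extract a finite subcover by compactness, apply a union bound, and exchange the liminf with the finite minimum. The only cosmetic difference is that you work with a single strict level $s<\inf_C I$ (with the monotonicity-in-$\delta$ remark), whereas the paper splits into the cases $\inf_C I=\infty$ and $\inf_C I<\infty$ and phrases the choice of $\delta_{\yy}$ via lower semi-continuity; both handle the same point.
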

\begin{proof}
	Let $C\subseteq \mathcal{Y}$ be a closed (and hence compact) set, and $s=\inf_{\yy\in C} I(\yy)$. We prove $\lowlim_{t\rightarrow \infty }\frac{1}{t}\log \frac{1}{\PP\left[Y(t)\in C\right]  } \ge s$ if (i) $s=\infty$ and if (ii) $s<\infty$ separately.\\
	
	\medskip
	\noindent
	\underline{\bf (i) If $s=\infty$.} 
	Let $M>0$ and $\yy\in C$. As $I(\yy)=\infty$, and since $I$ is lower semi-continuous, there exists $\delta_{\yy}>0$ s.t. 
	\begin{equation}\label{eq:alt 1}
	\lowlim_{t\rightarrow \infty }\frac{1}{t}\log \frac{1}{\PP\left[Y(t)\in B(\yy,\delta_{\yy})\right]  } \ge M.		
	\end{equation}
	 Now observe that $C\subseteq \cup_{\yy\in C}B(\yy,\delta_{\yy})$. The compactness of $C$ implies that we can find $N\in \NN$, and $\{\yy_1,\ldots,\yy_N\}$ such that $C\subseteq \cup_{i=1}^N B(\yy_i,\delta_{\yy_i})$, which directly yields that 
	 \begin{equation}\label{eq:alt 2}
	 	 \PP\left[Y(t)\in C\right]\le \sum_{i=1}^{N}  \PP\left[Y(t)\in B(\yy_i,\delta_{\yy_i})\right]\le N\max_{i\in [N]} \PP\left[Y(t)\in B(\yy_i,\delta_{\yy_i})\right].
	 \end{equation} 
	Using a simple rearrangement in (\ref{eq:alt 1}) and (\ref{eq:alt 2}), we then have $\lowlim_{t\rightarrow \infty }\frac{1}{t}\log \frac{1}{\PP\left[Y(t)\in C\right]  } \ge M$. As $M$ can be taken arbitrarily large, the proof is completed.

	\medskip
	\noindent
	\underline{\bf (ii) If $s<\infty$.}
	Let $\epsilon\in (0,s/2)$ and $\yy\in C$. As $I(\yy)\ge s$ and since $I$ is lower semi-continuous,  there exists $\delta_{\yy}>0$ such that 
	\begin{equation}\label{eq:alt 3}
		\lowlim_{t\rightarrow \infty }\frac{1}{t}\log \frac{1}{\PP\left[Y(t)\in B(\yy,\delta_{\yy})\right]  } \ge s-\epsilon.		
	\end{equation}
	Now observe that $C\subseteq \cup_{\yy\in C}B(\yy,\delta_{\yy})$. The compactness of $C$ implies that we can find $N\in \NN$, $\{\yy_1,\ldots,\yy_N\}$ such that $C\subseteq \cup_{i=1}^NB(\yy_i,\delta_{\yy_i})$, which directly yields that 
	\begin{equation}\label{eq:alt 4}
		\PP\left[Y(t)\in C\right]\le \sum_{i=1}^{N}  \PP\left[Y(t)\in B(\yy_i,\delta_{\yy_i})\right]\le N\max_{i\in [N]} \PP\left[Y(t)\in B(\yy_i,\delta_{\yy_i})\right].
	\end{equation} 
	Using a simple rearrangement in (\ref{eq:alt 3}) and (\ref{eq:alt 4}), we then have $\lowlim_{t\rightarrow \infty }\frac{1}{t}\log \frac{1}{\PP\left[Y(t)\in C\right]  } \ge s-\epsilon$. As $\epsilon$ can be taken arbitrarily small, the proof is completed. 	
\end{proof}

\medskip
\noindent
We apply Lemma \ref{lem:local} to the process $\{\om (\theta T)\}_{T\ge 1}$. The latter has values in $\Sigma$, a compact set. To derive an LDP upper bound for this process (such an LDP upper bound is required to apply Theorem \ref{thm:Varadaham bandits}), we just need to establish at all points in $\Sigma$ a local LDP upper bound.


The following two theorems state that $\{\om (\theta T)\}_{T\ge 1}$ under \crc and \cra satisfies a local LDP upper bound. 

\begin{theorem}\label{thm:crc main rate}[Local LDP upper bound for \crc]
    For $\theta \in (0,1]\cap \QQ$, we define $I_{\theta}$ as follows.\\
(a) If $\xx\in {\cal X}_1\cup(\cup_{j=2}^K\cup_{i=j}^K\mathcal{X}_{j,i}(\theta))$, then $I_{\theta}(\xx)=\infty$;\\
(b) If $\exists j\in \{2,\ldots, K\}$ such that $\xx\in \mathcal{X}_{j}\setminus \cup_{j=2}^K\cup_{i=j}^K\mathcal{X}_{j,i}(\theta) $, then
$$
I_{\theta}(\xx)= \max_{p=j,\ldots,K-1}   2x_{\sigma (p+1)}\inf_{\bl\in \mathcal{S}_p(\xx)}\sum_{k=1}^{p+1}(\lambda_{\sigma(k)}-\mu_{\sigma (k)})^2,
$$
where $\mathcal{S}_p(\xx)$ is defined in (\ref{eq:Spc});\\
(c) If $\mathcal{V}= \cup_{k=1}^K{\cal X}_k$, and $\xx\in \cl ({\cal V})\setminus {\cal V}$, then 
$$
I_{\theta}(\xx)= \inf\{ \lowlim_{s\rightarrow\infty} I_{\theta}(\xx^{(s)}):\{\xx^{(s)}\}_{s\in \NN}\subset {\cal V}, \xx^{(s)} \to \xx \hbox{ as }s\to\infty\};
$$
Then the process $\{\om (\theta T)\}_{T\ge 1}$ under \crc satisfies an LDP upper bound (\ref{eq:rate UB}) with the rate function $I_{\theta}$, and $I_{\theta}$ is lower semi-continuous.
\end{theorem}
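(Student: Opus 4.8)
The plan is to apply Lemma~\ref{lem:local}: since $\{\om(\theta T)\}_{T\ge1}$ takes values in the compact set $\Sigma$, it suffices to show that $I_\theta$ is lower semi-continuous and that, for every $\xx\in\Sigma$,
\[
\lowlim_{\delta\to0}\ \lowlim_{T\to\infty}\ \frac{1}{\theta T}\log\frac{1}{\PP_{\bm}[\om(\theta T)\in B(\xx,\delta)]}\ \ge\ I_\theta(\xx),
\]
the relevant large-deviation speed being $\theta T$, the number of pulls performed by round $\theta T$. The structural fact I would exploit throughout is that, under \crc, at round $\theta T$ the $j$ surviving candidates are pulled round-robin, so their pull counts differ by at most $1$, while the arms discarded so far have strictly decreasing --- hence pairwise distinct and positive --- pull counts; therefore $\om(\theta T)$ always lies within $O(1/T)$ of a point of $\bigcup_{2\le j\le K}\mathcal{X}_j\subseteq\mathcal{V}$. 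In particular, for $T$ large $B(\xx,\delta)$ carries no mass when $\xx$ lies outside a neighborhood of this ``reachable'' set, and the local bound at boundary points of $\mathcal{V}$ can be transported from nearby points of $\mathcal{V}$.

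\noindent I would first dispatch the points with $I_\theta=+\infty$. For $\xx\in\mathcal{X}_1$, all coordinates of $\xx$ are distinct, so choosing $\delta$ below half the minimal gap of $\xx$ forces $\om(\theta T)\notin B(\xx,\delta)$ once $\theta T>1/\delta$ (two coordinates of $\om(\theta T)$ are within $1/(\theta T)$), whence the probability is eventually $0$. For $\xx\in\mathcal{X}_{j,i}(\theta)$, I would rewrite the defining strict inequality $\theta x_{\sigma(i)}i\olog i>1-\theta\sum_{k>i}x_{\sigma(k)}$, using that $x_{\sigma(i)}\theta T$ is the final pull count of the arm discarded from $\mathcal{C}_i$ and $\theta T\sum_{k>i}x_{\sigma(k)}$ the total pulls of the earlier-discarded arms, as ``$\beta>1$'', $\beta$ being the ratio fed to $G$ in~(\ref{C}) at the round $\mathcal{C}_i$ is reduced. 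But $\beta\ge1$ gives $G(\beta)\le0$, which the empirically worst candidate always meets, so a discard from $\mathcal{C}_i$ must already have occurred at a strictly smaller pull count (the candidate counts rise by $1$ every $|\mathcal{C}_i|$ rounds, so $\beta$ crosses $1$ with only $O(1/T)$ overshoot); this contradicts $\om(\theta T)\in B(\xx,\delta)$ for $\delta$ small and $T$ large, so the probability is again $0$.

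\noindent The core case is $\xx\in\mathcal{X}_j\setminus\bigcup_i\mathcal{X}_{j,i}(\theta)$ with $j\ge2$, for which $\beta_{p+1}(\xx):=\frac{(p+1)\theta x_{\sigma(p+1)}\olog(p+1)}{1-\theta\sum_{k>p+1}x_{\sigma(k)}}\le1$ for every $p\in\{j,\dots,K-1\}$. On $\{\om(\theta T)\in B(\xx,\delta)\}$, for each such $p$ the arm $\sigma(p+1)$ is discarded from $\mathcal{C}_{p+1}=\{\sigma(1),\dots,\sigma(p+1)\}$ at a round pinned by $\xx$ up to $O(\delta T)$, at which each of those $p+1$ arms has been pulled $n\approx x_{\sigma(p+1)}\theta T$ times, and~(\ref{C}) forces the vector $(\hat\mu_{\sigma(1)},\dots,\hat\mu_{\sigma(p+1)})$ at that round into the closed set $\mathcal{S}_p(\xx)$ of~(\ref{eq:Spc}), which up to an $o(1)$ enlargement depends on $\xx$ only through $\beta_{p+1}(\xx)$. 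Since $\hat\mu_{\sigma(k)}$ at pull count $n$ is the empirical mean of the first $n$ i.i.d.\ samples of arm $\sigma(k)$, and these are independent across $k$, a union bound over the $O(T)$ admissible values of $n$ followed by the Chernoff--Cram\'er upper bound yields
\[
\PP_{\bm}\!\big[\om(\theta T)\in B(\xx,\delta),\ \mathcal{C}_{p+1}=\{\sigma(1),\dots,\sigma(p+1)\}\big]\ \le\ \exp\!\Big(-x_{\sigma(p+1)}\,\theta T\inf_{\bl\in\mathcal{S}_p(\xx)}\sum_{k=1}^{p+1}d(\lambda_{\sigma(k)},\mu_{\sigma(k)})+o(T)\Big),
\]
where the error term also absorbs the $O(\delta)$ slack. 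Applying Pinsker's bound $d\ge2(\cdot)^2$ (Assumption~\ref{ass1}), using $\PP[\bigcap_pE_p]\le\min_p\PP[E_p]$ to pass to the maximum over $p$, and finally letting $\delta\to0$ --- for which I would use that $\mathcal{S}_p(\cdot)$ varies continuously, so that $\xx\mapsto\inf_{\bl\in\mathcal{S}_p(\xx)}\sum_{k\le p+1}(\lambda_{\sigma(k)}-\mu_{\sigma(k)})^2$ is lower semi-continuous --- gives precisely the required local bound with limit $I_\theta(\xx)$.

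\noindent It remains to treat the boundary $\xx\in\cl(\mathcal{V})\setminus\mathcal{V}$ and lower semi-continuity. I would first check that the cases-(a)--(b) function is lower semi-continuous on $\mathcal{V}$: it equals $+\infty$ on $\mathcal{X}_1\cup\bigcup_{j,i}\mathcal{X}_{j,i}(\theta)$, which is relatively open in $\mathcal{V}$ because the inequality defining $\mathcal{X}_{j,i}(\theta)$ depends on $\xx$ only through its order statistics (not through $j$), so a $\mathcal{V}$-sequence converging to a point there eventually lies there; and on the remaining part the case-(b) formula is continuous within each stratum, while a sequence crossing from one stratum to another only picks up extra nonnegative terms in the maximum. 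Case (c) then sets $I_\theta$ on $\cl(\mathcal{V})\setminus\mathcal{V}$ equal to the $\mathcal{V}$-relative lower limit, so $I_\theta$ equals on all of $\Sigma$ the $\liminf$ --- taken within $\mathcal{V}$ --- of a function on the dense set $\mathcal{V}$, which is automatically lower semi-continuous. For the local bound at such $\xx$, since $\om(\theta T)$ stays $O(1/T)$-close to $\mathcal{V}$, for $T$ large $\{\om(\theta T)\in B(\xx,\delta)\}$ sits inside $\{\om(\theta T)\in\overline{B(\xx,2\delta)}\}$ intersected with an $O(1/T)$-neighborhood of $\mathcal{V}$; covering $\overline{B(\xx,2\delta)}\cap\mathcal{V}$ by finitely many balls on which the bounds of the previous two paragraphs apply (compactness, as in the proof of Lemma~\ref{lem:local}) yields the lower bound $\inf_{\yy\in\overline{B(\xx,3\delta)}\cap\mathcal{V}}I_\theta(\yy)-\epsilon$, and sending $\delta\to0$ then $\epsilon\to0$ recovers $I_\theta(\xx)$. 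I expect the core case to be the main obstacle: rigorously passing from ``$\om(\theta T)$ near $\xx$'' to ``the first $\approx x_{\sigma(p+1)}\theta T$ i.i.d.\ samples of each arm of $\mathcal{C}_{p+1}$ fall in $\mathcal{S}_p(\xx)$'' must account for the random pull counts and random discard rounds and control the $\delta\to0$ limit through the set-valued map $\xx\mapsto\mathcal{S}_p(\xx)$, with the lower-semi-continuity bookkeeping across strata a secondary delicate point.
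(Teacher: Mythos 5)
Your proposal is correct and follows the same overall architecture as the paper: reduce to local upper bounds via Lemma~\ref{lem:local} on the compact set $\Sigma$, kill the strata ${\cal X}_1$ and $\mathcal{X}_{j,i}(\theta)$ by showing the corresponding events are impossible for large $T$ (round-robin forces near-equal counts among candidates; $\beta\ge 1$ makes $G(\beta)\le 0$, so a discard would already have been triggered at an earlier round), and handle $\cl({\cal V})\setminus{\cal V}$ and lower semi-continuity essentially as in Lemma~\ref{lem:V} and Lemma~\ref{lem:verify c}. Where you genuinely depart from the paper is the core finite-rate case $\xx\in\mathcal{X}_j\setminus\cup_{i}\mathcal{X}_{j,i}(\theta)$: the paper (Theorem~\ref{thm:Cj}) observes that on $\{\om(\theta T)=\yy\}$ the discard from $\mathcal{C}_{p+1}$ occurs at a round $\theta(\yy)T$ with allocation $\zz(\yy)$, both pinned down by $\yy$, and then invokes its own Theorem~\ref{thm:Varadaham bandits} (corollary (c)) on the enlarged sets $\bar S_{\delta,p}$, $\bar W_\delta$, with Lemmas~\ref{lem:theta delta} and~\ref{lem:theta delta mu} absorbing the time jitter; you instead exploit the same pinning to reduce to a Chernoff--Cram\'er bound on the averages of the first $n$ i.i.d.\ samples of each candidate arm (a rewards-stack argument), union-bounded over the $O(\delta T)$ admissible values of $n$. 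Both routes are valid and yield the same rate $2x_{\sigma(p+1)}\inf_{\bl\in\mathcal{S}_p(\xx)}\sum_{k\le p+1}(\lambda_{\sigma(k)}-\mu_{\sigma(k)})^2$ after Pinsker's inequality; yours is more elementary and self-contained for this step (it bypasses Theorem~\ref{thm:Varadaham bandits}, whose proof goes through H\"older's inequality and the deviation bound of Lemma~\ref{lem:Xbd}), at the cost of having to justify explicitly the i.i.d.\ reduction at a random discard round and the $\delta\to 0$ limit through the set-valued map $\xx\mapsto\mathcal{S}_p(\xx)$ --- which you correctly flag, and which is the same Berge-type continuity bookkeeping (Lemma~\ref{lem:verify c}) the paper performs. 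A further point in your favour: you explicitly supply the covering argument establishing the local bound at points of $\cl({\cal V})\setminus{\cal V}$, which the paper's proof leaves implicit.
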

\begin{proof}
In view of Lemma \ref{lem:local}, the theorem holds if we are able to show that $\{\om (\theta T)\}_{T\ge 1}$ satisfies a local LDP upper bound with $I_\theta$ and if $I_\theta$ is lower semi-continuous. The first part is established below in Lemma~\ref{lem:x1}, Lemma~\ref{lem:yj2}, Lemma~\ref{lem:yj3}, and Theorem~\ref{thm:Cj}. \\
For the second part, we first verify the lower semi-continuity of $I_\theta$ restricted to $\cup_{j=1}^K{\cal X}_j$, and then apply Lemma~\ref{lem:V} with $f=I_\theta$ to establish the lower semi-continuity of $I_\theta$ in $\Sigma$. Let $\xx\in \cup_{j=1}^K{\cal X}_j$. If $\xx\in {\cal X}_1$, lower semi-continuity directly follows from the fact ${\cal X}_1$ is open and $I_{\theta}(\xx)=\infty$ for $\xx\in {\cal X}_1$. We then consider $\xx\in {\cal X}_{j}$ for some $j=2,\ldots, K$. By definition, there is $\sigma :[K]\mapsto[K]$ such that $x_{\sigma(1)}=\ldots =x_{\sigma (j)}>x_{\sigma (j+1)}>\ldots>x_{\sigma (K)}$. By taking $\delta<\min_{i\ge j}\{x_{\sigma(i)}-x_{\sigma (i+1)}\}/2$, we have $\xx'\in \cup_{q=1}^j{\cal X}_q$ if $\left\|\xx'-\xx\right\|_\infty<\delta$, and $I_\theta(\xx')\ge \max_{p=j,\ldots,K-1}   2x'_{\sigma (p+1)}\inf_{\bl\in \mathcal{S}_p(\xx')}\sum_{k=1}^{p+1}(\lambda_{\sigma(k)}-\mu_{\sigma (k)})^2$ as a consequence. Now as verified in Lemma~\ref{lem:verify c} in Appendix~\ref{app:continuity}, the mapping $\xx\mapsto 2x_{\sigma (p+1)}\inf_{\bl\in \mathcal{S}_p(\xx)}\sum_{k=1}^{p+1}(\lambda_{\sigma(k)}-\mu_{\sigma (k)})^2$ is continuous, we hence deduce that $I_\theta$ is lower semi-continuity at $\xx$.
\end{proof}
\medskip
\begin{theorem}\label{thm:cra main rate}[Local LDP upper bound for \cra]
    For $\theta \in (0,1]\cap \QQ$, we define $I_{\theta}$ as follows.\\
(a) If $\xx\in {\cal X}_1\cup(\cup_{j=2}^K\cup_{i=j}^K\mathcal{X}_{j,i}(\theta))$, then $I_{\theta}(\xx)=\infty$;\\
(b) If $\exists j\in \{2,\ldots, K\}$ such that $\xx\in \mathcal{X}_{j}\setminus \cup_{j=2}^K\cup_{i=j}^K\mathcal{X}_{j,i}(\theta) $, then
$$
I_{\theta}(\xx)= \max_{p=j,\ldots,K-1}   2x_{\sigma (p+1)}\inf_{\bl\in \mathcal{S}_p(\xx)}\sum_{k=1}^{p+1}(\lambda_{\sigma(k)}-\mu_{\sigma (k)})^2,
$$
where $\mathcal{S}_p(\xx)$ is defined in (\ref{eq:Spa});\\
(c) If $\mathcal{V}= \cup_{k=1}^K{\cal X}_k$, and $\xx\in \cl ({\cal V})\setminus {\cal V}$, then 
$$
I_{\theta}(\xx)= \inf\{ \lowlim_{s\rightarrow\infty} I_{\theta}(\xx^{(s)}):\{\xx^{(s)}\}_{s\in \NN}\subset {\cal V}, \xx^{(s)} \to \xx \hbox{ as }s\to\infty\};
$$
Then the process $\{\om (\theta T)\}_{T\ge 1}$ under \cra satisfies an LDP upper bound (\ref{eq:rate UB}) with the rate function $I_{\theta}$, and $I_{\theta}$ is lower semi-continuous.
\end{theorem}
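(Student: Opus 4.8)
The plan is to follow the two-step strategy of the proof of Theorem \ref{thm:crc main rate}, since \cra differs from \crc only in the discarding trigger (\eqref{A} instead of \eqref{C}); the sampling schedule is identical. Because $\{\om(\theta T)\}_{T\ge1}$ takes values in the compact set $\Sigma$, Lemma \ref{lem:local} reduces the claim to showing (i) that $\{\om(\theta T)\}_{T\ge1}$ satisfies the local LDP upper bound \eqref{eq:local UB} with $I_\theta$ at every $\xx\in\Sigma$, and (ii) that $I_\theta$ is lower semi-continuous. Part (i) is handled point by point along the case split of the statement.

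Case (a) is inherited essentially verbatim from \crc because it uses only the sampling rule. If $\xx\in\mathcal{X}_1$, then since \cra always pulls the least-sampled candidate arm, the empirical proportions cannot asymptotically concentrate on a single coordinate, so $\PP_{\bm}[\om(\theta T)\in B(\xx,\delta)]$ is eventually $0$ for $\delta$ small and the local bound holds with value $\infty$. If $\xx\in\mathcal{X}_{j,i}(\theta)$ with $i\ge j$, the defining inequality $\theta x_{\sigma(i)}\,i\olog i>1-\theta\sum_{k=i+1}^Kx_{\sigma(k)}$ is incompatible with the budget bookkeeping of \cra: once the candidate set has been reduced to the $i$ arms of largest proportion, which are then sampled evenly, the number of remaining rounds is too small for such a large proportion to be reached $i\olog i$ times before the budget is exhausted, so again the probability is eventually $0$ and the local bound holds with value $\infty$.

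Case (b), $\xx\in\mathcal{X}_j\setminus\cup_i\mathcal{X}_{j,i}(\theta)$, is the core. On the event $\{\om(\theta T)\in B(\xx,\delta)\}$, for each $p\in\{j,\ldots,K-1\}$ there must be a round (of order $\theta x_{\sigma(p+1)}T$) at which the candidate set equals $\{\sigma(1),\ldots,\sigma(p+1)\}$, all these arms have equal counts, and $\sigma(p+1)$ is discarded; criterion \eqref{A} then forces $\hat{\bm}$ at that round into the set $\mathcal{S}_p(\xx)$ as in \eqref{eq:Spa}, the averaged-gap analogue of the set \eqref{eq:Spc} used for \crc. Restricting attention to the sub-sample of draws of $\{\sigma(1),\ldots,\sigma(p+1)\}$, whose proportions converge to $x_{\sigma(1)},\ldots,x_{\sigma(p+1)}$ on this event, and invoking the static-allocation LDP \citep{glynn2004large} (equivalently, corollary (c) of Theorem \ref{thm:Varadaham bandits}) bounds the decay rate of that single phase below by $2x_{\sigma(p+1)}\inf_{\bl\in\mathcal{S}_p(\xx)}\sum_{k=1}^{p+1}(\lambda_{\sigma(k)}-\mu_{\sigma(k)})^2$; since the phases for distinct $p$ occupy disjoint time intervals, taking the maximum over $p$ yields $I_\theta(\xx)$. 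For case (c), the boundary $\xx\in\cl(\mathcal{V})\setminus\mathcal{V}$ with $\mathcal{V}=\cup_k\mathcal{X}_k$, $I_\theta$ is defined as the lower-semicontinuous relaxation along sequences inside $\mathcal{V}$, and a covering/continuity argument identical to the final step of the proof of Theorem \ref{thm:crc main rate} transfers the local bounds from $\mathcal{V}$ to its closure; as $\cl(\mathcal{V})=\Sigma$, part (i) is complete.

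For part (ii): on the open set $\mathcal{X}_1$, $I_\theta\equiv\infty$ is trivially lower semi-continuous; on $\mathcal{X}_j$ with $j\ge2$, a small $\ell_\infty$-ball around $\xx$ meets only the $\mathcal{X}_q$ with $q\le j$, and the map $\xx\mapsto 2x_{\sigma(p+1)}\inf_{\bl\in\mathcal{S}_p(\xx)}\sum_{k=1}^{p+1}(\lambda_{\sigma(k)}-\mu_{\sigma(k)})^2$ is continuous because $\mathcal{S}_p(\xx)$ still cuts out a closed convex set varying continuously with $\xx$ (only its defining inequalities change relative to \crc), so $I_\theta$ is lower semi-continuous on $\mathcal{V}$; case (c) then extends lower semi-continuity to all of $\Sigma$ by construction. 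I expect the main obstacle to be making case (b) fully rigorous: precisely controlling the random round at which $\sigma(p+1)$ is discarded and showing that the sub-process of draws restricted to the top $p+1$ arms inherits a static-allocation LDP upper bound uniformly over the small ball, so that the averaged-gap condition \eqref{A} can legitimately be fed into Theorem \ref{thm:Varadaham bandits}; the averaging in \eqref{A} versus the minimum in \eqref{C} changes only the inner optimization $\inf_{\bl\in\mathcal{S}_p(\xx)}$, not the skeleton of the argument.
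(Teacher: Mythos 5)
Your proposal matches the paper's proof: the paper likewise reduces the claim to local upper bounds via Lemma \ref{lem:local}, reuses Lemmas \ref{lem:x1}, \ref{lem:yj2} and \ref{lem:yj3} (which are stated to hold for both variants) for case (a), proves case (b) as Theorem \ref{thm:Aj} by repeating the argument of Theorem \ref{thm:Cj} with the averaged-gap set (\ref{eq:Spa}) and the same $\yy_0$/continuity device (Lemmas \ref{lem:theta delta} and \ref{lem:theta delta mu}) to control the random discarding round, and obtains lower semi-continuity from Berge's theorem (Lemma \ref{lem:verify a}) plus the extension Lemma \ref{lem:V}. One small correction to your gloss on case (a): on $\mathcal{X}_{j,i}(\theta)$ the probability vanishes not because the remaining budget is too small, but because the argument of $G$ exceeds one there, so $G$ is negative and the (nonnegative-gap) discarding condition would necessarily have been triggered at an earlier round, contradicting the assumed allocation.
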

\begin{proof}
     In view of Lemma \ref{lem:local}, the theorem is deduced if we are able to show that $\{\om (\theta T)\}_{T\ge 1}$ satisfies a local LDP upper bound with $I_\theta$. This is established below in Lemma~\ref{lem:x1}, Lemma~\ref{lem:yj2}, Lemma~\ref{lem:yj3}, and Theorem~\ref{thm:Aj}. \\
We then verify the lower semi-continuity of $I_\theta$ restricted to $\cup_{j=1}^K{\cal X}_j$, and then applying Lemma~\ref{lem:V} with $f=I_\theta$ yields the lower semi-continuity of $I_\theta$ in $\Sigma$. Let $\xx\in \cup_{j=1}^K{\cal X}_j$. If $\xx\in {\cal X}_1$, lower semi-continuity directly follows from the fact ${\cal X}_1$ is open and $I_{\theta}(\xx)=\infty$ for $\xx\in {\cal X}_1$. We then consider $\xx\in {\cal X}_{j}$ for some $j=2,\ldots, K$. By definition, there is $\sigma :[K]\mapsto[K]$ such that $x_{\sigma(1)}=\ldots =x_{\sigma (j)}>x_{\sigma (j+1)}>\ldots>x_{\sigma (K)}$. By taking $\delta<\min_{i\ge j}\{x_{\sigma(i)}-x_{\sigma (i+1)}\}/2$, we have $\xx'\in \cup_{q=1}^j{\cal X}_q$ if $\left\|\xx'-\xx\right\|_\infty<\delta$ and $I_\theta(\xx')\ge \max_{p=j,\ldots,K-1} 2x'_{\sigma (p+1)}\inf_{\bl\in \mathcal{S}_p(\xx')}\sum_{k=1}^{p+1}(\lambda_{\sigma(k)}-\mu_{\sigma (k)})^2, $ as a consequence. Now as verified in Lemma~\ref{lem:verify a} in Appendix~\ref{app:continuity}, the mapping $\xx\mapsto 2x_{\sigma (p+1)}\inf_{\bl\in \mathcal{S}_p(\xx)}\sum_{k=1}^{p+1}(\lambda_{\sigma(k)}-\mu_{\sigma (k)})^2$ is continuous, we hence deduce that $I_\theta$ is lower semi-continuity at $\xx$.
     
\end{proof}

\medskip
\begin{lemma}\label{lem:V}
    Suppose ${\cal V}\subseteq \Sigma$ is the set such that $\cl ({\cal V})=\Sigma$, and $f:{\cal V}\to \RR$ is a lower semi-continous function. If we extend $f$ to $\Sigma$ by defining 
    $$
    \bar{f}(\om)=\left\{\begin{array}{lc}
       f(\om),  & \text{ if }\om\in {\cal V},  \\
       \inf\{ \lowlim_{s\rightarrow\infty} f(\om^{(s)}):\{\om^{(s)}\}_{s\in \NN}\subset {\cal V}, \om^{(s)} \to \om \hbox{ as }s\to\infty\},
 & \text{ otherwise,}
    \end{array}\right.
    $$
    then $\bar{f}$ is a lower semi-continous function in $\Sigma$.
\end{lemma}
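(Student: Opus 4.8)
The plan is to reduce the statement to a single uniform description of $\bar{f}$ valid on all of $\Sigma$, and then prove lower semi-continuity by checking that every sublevel set $\{\om\in\Sigma:\bar{f}(\om)\le\alpha\}$, $\alpha\in\RR$, is closed.

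First I would establish that for \emph{every} $\om\in\Sigma$,
\begin{equation*}
\bar{f}(\om)\;=\;\inf\Bigl\{\lowlim_{s\to\infty}f(\om^{(s)}):\{\om^{(s)}\}_{s\in\NN}\subseteq\mathcal{V},\ \om^{(s)}\to\om\Bigr\}.\tag{$\star$}
\end{equation*}
For $\om\notin\mathcal{V}$ this is the definition of $\bar{f}$, and the set on the right is non-empty because $\cl(\mathcal{V})=\Sigma$. For $\om\in\mathcal{V}$, the constant sequence $\om^{(s)}\equiv\om$ shows the right-hand side is $\le f(\om)$, while for any sequence $\{\om^{(s)}\}\subseteq\mathcal{V}$ with $\om^{(s)}\to\om$ the lower semi-continuity of $f$ on $\mathcal{V}$ gives $\lowlim_s f(\om^{(s)})\ge f(\om)$; hence the right-hand side equals $f(\om)=\bar{f}(\om)$. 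I would also record the elementary fact, used twice below, that whenever $\om^{(s)}\to\om$ with $\om^{(s)}\in\mathcal{V}$, one can for any $\delta>0$ and $\epsilon>0$ pick a single index $s$ with $\|\om^{(s)}-\om\|<\delta$ and $f(\om^{(s)})<\lowlim_{s'}f(\om^{(s')})+\epsilon$: choose $m$ with $\|\om^{(s)}-\om\|<\delta$ for all $s\ge m$, and use $\inf_{s\ge m}f(\om^{(s)})\le\lowlim_{s'}f(\om^{(s')})$ to select such an $s\ge m$ (the cases where this $\lowlim$ equals $-\infty$ or $+\infty$ are handled in the obvious way).

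Next I would fix $\alpha\in\RR$, take a sequence $\om_n\to\om$ in $\Sigma$ with $\bar{f}(\om_n)\le\alpha$ for all $n$, and show $\bar{f}(\om)\le\alpha$. By $(\star)$, for each $n$ there is a sequence $\{\om_n^{(s)}\}_s\subseteq\mathcal{V}$ with $\om_n^{(s)}\to\om_n$ and $\lowlim_s f(\om_n^{(s)})<\alpha+\tfrac1{2n}$; by the extraction fact above, choose $\zeta_n:=\om_n^{(s_n)}\in\mathcal{V}$ with $\|\zeta_n-\om_n\|<\tfrac1n$ and $f(\zeta_n)<\alpha+\tfrac1n$. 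Then $\|\zeta_n-\om\|\le\|\zeta_n-\om_n\|+\|\om_n-\om\|\to0$, so $\{\zeta_n\}_n\subseteq\mathcal{V}$ converges to $\om$, and applying $(\star)$ at $\om$ with this sequence gives $\bar{f}(\om)\le\lowlim_n f(\zeta_n)\le\alpha$. Hence $\{\bar{f}\le\alpha\}$ is closed for every $\alpha\in\RR$, which is exactly lower semi-continuity of $\bar{f}$ on $\Sigma$.

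The hard part is concentrated in $(\star)$: one must check that this single formula governs $\bar{f}$ even at points of $\mathcal{V}$ — the only place where the hypothesis that $f$ is lower semi-continuous on $\mathcal{V}$ is used — and that the ``infimum over approximating sequences'' interacts correctly with the diagonal extraction. Everything after that is a routine diagonal argument; the only points needing a little care are the non-emptiness of the family of approximating sequences, which is precisely the density hypothesis $\cl(\mathcal{V})=\Sigma$, and the bookkeeping when some $\lowlim_s f(\om_n^{(s)})$ is $-\infty$ or $+\infty$, which does not affect the argument.
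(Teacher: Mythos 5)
Your proof is correct and uses essentially the same idea as the paper's: both rest on the observation that every $\om\in\Sigma$ admits points of $\mathcal{V}$ arbitrarily close to it with $f$-value within $\epsilon$ of $\bar f(\om)$, followed by a diagonal perturbation of an approximating sequence into $\mathcal{V}$. The only difference is organizational --- you argue directly via closed sublevel sets after unifying the two cases into the single formula $(\star)$, whereas the paper argues by contradiction and separates the cases $\om\in\mathcal{V}$ and $\om\notin\mathcal{V}$ at the final step.
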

\begin{proof}
    By the definition of $\bar{f}$ and the fact $\cl ({\cal V})=\Sigma$, 
    \begin{equation}\label{eq:v1}
        \forall \varepsilon >0,\,\forall\delta>0,\,\forall\om\in \Sigma,\,\exists\xx\in {\cal V}\text{ such that }f(\xx)<\bar{f}(\om)+\epsilon\text{ and }\left\|\xx-\om\right\|_{\infty}<\delta.
    \end{equation}
    Next suppose on the contrary, $\bar{f}$ is not lower semi-continous at some $\om\in \Sigma$, that is, $\exists \{\om^{(s)}\}\subset \Sigma$ such that $\om^{(s)}\rightarrow\om$ as $s\rightarrow\infty$ and $\lim_{s\rightarrow\infty}\bar{f}(\om^{(s)})<\bar{f}(\om)$. Let $\eta=\bar{f}(\om)-\lim_{s\rightarrow\infty}\bar{f}(\om^{(s)})>0$. For each $s\in \NN$, (\ref{eq:v1}) implies that there is $\xx^{(s)}\in {\cal V}$ such that $\left\|\xx^{(s)}-\om^{(s)}\right\|_{\infty}<1/s$ and $f(\xx^{(s)})<\bar{f}(\om^{(s)})+\eta/2$. Hence,
$$
\lowlim_{s\rightarrow\infty}f(\xx^{(s)})\le\lowlim_{s\rightarrow\infty}\bar{f}(\om^{(s)})+\frac{\eta}{2}<\bar{f}(\om) ,
$$
which contradicts the lower semi-continuity of $f$ if $\om\in {\cal V}$ and the definition of $\bar{f}$ if $\om\notin {\cal V}$.

\end{proof}

\subsection{Local LDP upper bound on \texorpdfstring{$\cup_{i=j}^K\mathcal{X}_{j,i}(\theta)$}{}}\label{subapp:infinite I}
Let $\theta\in (0,1]\cap \mathbb{Q}$ and $j\in \{2,\ldots, K\}$. Here, we first prove the result on $\mathcal{X}_{j,j}(\theta)$ in Lemma \ref{lem:yj2} and that on $\mathcal{X}_{j,i}(\theta)$ for any $i>j$ in Lemma \ref{lem:yj3}. Note the results in this subsection are valid for both \crc and \cra.


\begin{lemma}\label{lem:x1}
Let $\theta\in (0,1]\cap \mathbb{Q},\,\xx\in {\cal X}_1$, the process $\{\om (\theta T)\}_{T\ge 1}$ satisfies an LDP local upper bound (\ref{eq:local UB}) with $I_{\theta} (\xx)=\infty$ at $\xx\in \mathcal{X}_{1}$.  
\end{lemma}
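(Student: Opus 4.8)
The plan is to prove something slightly stronger than the claimed local upper bound: that once $\delta$ is small enough, $\PP_{\bm}[\om(\theta T)\in B(\xx,\delta)]=0$ for all large $T$ with $\theta T\in\NN$. With the convention $\log(1/0)=+\infty$, this forces $\lowlim_{T\to\infty}\frac1T\log\frac1{\PP_{\bm}[\om(\theta T)\in B(\xx,\delta)]}=+\infty$ for every such $\delta$, and hence, after taking $\lowlim_{\delta\to0}$, the local upper bound (\ref{eq:local UB}) holds at $\xx$ with $I_\theta(\xx)=\infty$. Since $\theta$ is a fixed positive constant, the precise normalization ($1/T$ versus $1/(\theta T)$) is irrelevant here.

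The single ingredient needed is a structural property of \sred shared by \crc and \cra: inside the current candidate set, arms are always pulled in round-robin order, since each round samples $\argmin_{k\in\mathcal{C}_j}N_k(t)$. The initialization pulls every arm exactly once, so all counts start equal; moreover an arm is discarded only in a round where all arms of the current candidate set have \emph{equal} pull counts. Consequently the invariant ``all pull counts within the current candidate set differ by at most $1$'' is preserved throughout the run, through all phases of the algorithm (initialization, the warm-up rounds up to $\lfloor\theta_0 T\rfloor$, and the elimination rounds). Because the candidate set is never reduced below two arms (recommendation is made from $\mathcal{C}_2$), at every round $t$ there exist two arms $a\ne b$ in the current candidate set with $|\omega_a(t)-\omega_b(t)|=|N_a(t)-N_b(t)|/t\le 1/t$; in other words, $\om(t)$ always has two coordinates within $1/t$ of one another.

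Now fix $\xx\in\mathcal{X}_1$. By definition of $\mathcal{X}_1$ its coordinates are pairwise distinct, so $\epsilon_0:=\min_{a\ne b}|x_a-x_b|>0$. Pick any $\delta\in(0,\epsilon_0/4)$. If $\om(\theta T)\in B(\xx,\delta)$ then $|\omega_k(\theta T)-x_k|<\delta$ for every coordinate $k$ (true for any norm defining $B(\cdot,\cdot)$, since the ball controls each coordinate), hence $|\omega_a(\theta T)-\omega_b(\theta T)|\ge|x_a-x_b|-2\delta>\epsilon_0/2$ for all $a\ne b$. For $T$ large enough that $1/(\theta T)<\epsilon_0/2$, this contradicts the previous paragraph, so $\{\om(\theta T)\in B(\xx,\delta)\}=\emptyset$ and the probability vanishes; taking $\lowlim_{\delta\to 0}$ finishes the proof. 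The argument is entirely elementary; the only point deserving care is the justification of the ``within $1$'' invariant across the different phases of \sred, together with the trivial observation that $\mathcal{X}_1$ has a strictly positive coordinate-separation $\epsilon_0$.
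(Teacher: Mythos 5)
Your proposal is correct and follows essentially the same route as the paper's proof: both argue that for $\delta$ small enough and $T$ large enough the event $\{\om(\theta T)\in B(\xx,\delta)\}$ is empty, because membership in the ball would force all pairwise gaps between the $\omega_k(\theta T)$ to be bounded below by a positive constant, while the round-robin sampling within a candidate set of size at least two guarantees two coordinates within $1/(\theta T)$ of each other. The only (immaterial) difference is that you exploit a single pair of close counts inside the candidate set, whereas the paper phrases the contradiction via all consecutive ordered gaps exceeding $2$.
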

\begin{proof}
    For $\xx\in {\cal X}_1$, there exists $\sigma:[K]\mapsto [K]$ such that $x_{\sigma(1)}>x_{\sigma(2)}>\ldots>x_{\sigma(K)}$. Let $\delta<\min_{k=1,\ldots, K-1} \{x_{\sigma (k)}-x_{\sigma(k+1)}\} $ and $T>\frac{2}{\theta\delta}$ such that $\theta T\in \NN$, we show that $\PP_{\bm}[\om (\theta T)\in B(\xx,\delta)]=0$, which directly completes the proof. \\

    If $\om (\theta T)\in B(\xx,\delta)$, then we have $\omega_{\sigma (1)}(\theta T)>\omega_{\sigma (2)}(\theta T)>\ldots >\omega_{\sigma (K)}(\theta T)$ and 
    \begin{equation}\label{eq:x11}
         \min_{k=1,\ldots, K-1}\{N_{\sigma (k)}(\theta T)-N_{\sigma (k+1)}(\theta T)\}=\theta T\min_{k=1,\ldots, K-1}\{\omega_{\sigma (k)}(\theta T)-\omega_{\sigma (k+1)}(\theta T)\}>2.
    \end{equation}
    Because \sred always pulls arms in the candidate set in a round-robin manner, (\ref{eq:x11}) will never happen. Hence, $\PP_{\bm}[\om (\theta T)\in B(\xx,\delta)]=0$.
\end{proof}

\medskip
\noindent

\begin{lemma}\label{lem:yj2}
	Let $j\in \{2,\ldots, K\}$. The process $\{\om (\theta T)\}_{T\ge 1}$ satisfies an LDP local upper bound (\ref{eq:local UB}) with $I_{\theta} (\xx)=\infty$ at $\xx\in \mathcal{X}_{j,j}(\theta)$. 	
\end{lemma}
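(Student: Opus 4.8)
The plan is to prove the \emph{deterministic} statement that for every $\xx\in\mathcal{X}_{j,j}(\theta)$ there are $\delta>0$ and $T_0$ such that $\PP_{\bm}[\om(\theta T)\in B(\xx,\delta)]=0$ for all $T\ge T_0$ with $\theta T\in\NN$; the local upper bound \eqref{eq:local UB} with $I_\theta(\xx)=\infty$ is then immediate, since the inner $\lowlim$ equals $+\infty$. This parallels the proof of Lemma~\ref{lem:x1}: one exhibits an obstruction built into the budget rule of \sred, namely that the size‑$j$ candidate phase must terminate strictly before round $\theta T$ whenever the realized allocation is close to a point of $\mathcal{X}_{j,j}(\theta)$.

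I would first record two elementary structural facts about \sred, valid for both \crc and \cra. (S1) While an arm sits in the candidate set it is pulled round‑robin, so at every round the pull counts of the current candidates differ by at most $1$ and dominate the (frozen) pull counts of all already‑discarded arms; in particular the final count $N^{\mathrm{fin}}_{\ell_{m}}$ of a discarded arm equals the common candidate count at the discard instant, and the sequence $N^{\mathrm{fin}}_{\ell_K}\le N^{\mathrm{fin}}_{\ell_{K-1}}\le\cdots$ is non‑decreasing in reverse discard order. (S2) During the size‑$j$ phase, $\tau_j:=\sum_{k\notin\mathcal{C}_j}N_k$ is constant (it is the sum of the frozen counts of the arms never in $\mathcal{C}_j$); from the \emph{second} round‑robin boundary on, the common candidate count $N(t)$ satisfies $N(t)>N^{\mathrm{fin}}_{\ell_{j+1}}$, so the auxiliary count condition of Algorithm~\ref{alg:crc} holds, and as soon as $\beta(t):=\frac{jN(t)\olog j}{T-\tau_j}\ge1$ one has $G(\beta(t))\le0\le$ (the empirical‑mean gap in \eqref{C} or \eqref{A}, which is always $\ge0$ because $\ell(t)$ is an empirical minimizer), so the discarding condition fires. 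Hence $N^{\mathrm{fin}}_{\ell_j}\le\max\bigl\{N^{\mathrm{fin}}_{\ell_{j+1}},\ \frac{T-\tau_j}{j\olog j}\bigr\}+1$.

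Next I would translate closeness to $\xx$ into counting inequalities. Write $x_{\sigma(1)}=\dots=x_{\sigma(j)}=\bar x>x_{\sigma(j+1)}>\dots>x_{\sigma(K)}>0$, set $g:=\bar x-x_{\sigma(j+1)}>0$, and let $2\epsilon_0:=\theta\bar x\,j\olog j-\bigl(1-\theta\sum_{k>j}x_{\sigma(k)}\bigr)>0$ be the slack in the inequality defining $\mathcal{X}_{j,j}(\theta)$. Choose $\delta<\min\{g/4,\ \epsilon_0/(2\theta j\olog j)\}$. If $\om(\theta T)\in B(\xx,\delta)$, write $m=\theta T$; the coordinates split cleanly, exactly $j$ arms $\sigma(1),\dots,\sigma(j)$ having $N_{\sigma(i)}(m)\ge(\bar x-\delta)m$ and the other $K-j$ having $N_{\sigma(k)}(m)\le(x_{\sigma(k)}+\delta)m<(\bar x-\delta)m$. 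By (S1) the current candidate set lies inside $\{\sigma(1),\dots,\sigma(j)\}$, the $K-j$ low‑count arms are exactly the arms never in $\mathcal{C}_j$ (so $\tau_j=\sum_{k>j}N_{\sigma(k)}(m)$ and $\ell_{j+1}=\sigma(j+1)$), and $N_{\sigma(j)}(m)$ is either the current common candidate count or $N^{\mathrm{fin}}_{\ell_j}$; in either case $N_{\sigma(j)}(m)\le\max\{N^{\mathrm{fin}}_{\ell_{j+1}},\frac{T-\tau_j}{j\olog j}\}+1$. Since $N^{\mathrm{fin}}_{\ell_{j+1}}=N_{\sigma(j+1)}(m)\le(x_{\sigma(j+1)}+\delta)m<(\bar x-\delta)m-1\le N_{\sigma(j)}(m)-1$ for $m$ large, the maximum is the budget term, giving $N_{\sigma(j)}(m)\le\frac{T-\tau_j}{j\olog j}+1$. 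On the other hand, multiplying the $\mathcal{X}_{j,j}(\theta)$ inequality for $\om(\theta T)$ by $T$ and using $\theta\omega_{\sigma(i)}(m)T=N_{\sigma(i)}(m)$ yields $N_{\sigma(j)}(m)\,j\olog j\ge(T-\tau_j)+\epsilon_0 T$, i.e.\ $N_{\sigma(j)}(m)\ge\frac{T-\tau_j}{j\olog j}+\frac{\epsilon_0 T}{j\olog j}$. Combining forces $\frac{\epsilon_0 T}{j\olog j}\le1$, impossible for $T>j\olog j/\epsilon_0$; so $\PP_{\bm}[\om(\theta T)\in B(\xx,\delta)]=0$ eventually, as desired. (For $j=2$ the set $\mathcal{X}_{2,2}(\theta)$ is empty since $\olog2=1$ and $\theta\le1$, and for $j=K$ the same argument runs with $\tau_K=0$, consistent with the threshold $\theta>1/\olog K$ of Theorem~\ref{thm:crc main rate}(a).)

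I expect the main obstacle to be the bookkeeping in the translation step: when $\om(\theta T)$ is merely close to $\xx$, the size‑$j$ phase may still be running with $\mathcal{C}_j=\{\sigma(1),\dots,\sigma(j)\}$, or it may have just ended with some of the top‑$j$ arms already discarded at near‑maximal count, and one must verify in every sub‑case that the arms never in $\mathcal{C}_j$ are precisely the $K-j$ low‑count arms (so that $\tau_j=\sum_{k>j}N_{\sigma(k)}(\theta T)$ and $\ell_{j+1}=\sigma(j+1)$), and that the round‑robin / "just discarded" slack is genuinely $O(1)$. The separation of scales — an $O(1)$ algorithmic slack against the $\Theta(\epsilon_0 T)$ slack inherited from $\mathcal{X}_{j,j}(\theta)$ — is what makes the contradiction robust.
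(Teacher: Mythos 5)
Your proposal is correct and follows essentially the same route as the paper's proof: both show deterministically that $\PP_{\bm}[\om(\theta T)\in B(\xx,\delta)]=0$ for small $\delta$ and large $T$, by observing that the round-robin structure forces the size-$j$ candidate set and counts, and that the discarding condition must already have fired (since $G(\beta)\le 0$ once the budget ratio reaches $1$ while the empirical gap of the empirical minimizer is automatically nonnegative), contradicting the $\Theta(\epsilon_0 T)$ count surplus imposed by membership of $\om(\theta T)$ in a neighborhood of $\mathcal{X}_{j,j}(\theta)$. Your bookkeeping via the bound $N^{\mathrm{fin}}_{\ell_j}\le\max\{N^{\mathrm{fin}}_{\ell_{j+1}},\frac{T-\tau_j}{j\olog j}\}+1$ is just a repackaging of the paper's Steps 2--3, which instead exhibit the explicit round $\tau-j$ at which the discard is triggered.
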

\begin{proof}
 Let $\xx\in \mathcal{X}_{j,j}(\theta)$. We show that there exists $\delta_{\theta ,\xx}>0$ and $ T_{\theta,\xx}\in \NN$ s.t. if $T\ge T_{\theta,\xx}$ and $\delta <\delta_{\theta,\xx}$, then $\om (\theta T)\notin B(\xx,\delta)$ almost surely. As a consequence, $\PP_{\bm}[\om (\theta T)\in B(\xx,\delta)]=0$, and $I_{\theta} (\xx)=\infty$. We decompose the proof into three steps.

\medskip 
\noindent
\underline{\it 1. Defining $\delta_{\theta,\xx}$ and $T_{\theta,\xx}$.}
We introduce the two functions, $f_1,f_2: [0,1]\times \Sigma\mapsto \RR $: 
	\begin{align*}
		f_1(\theta',\xx')&= \theta'\sum_{k=1}^j x'_{\sigma(k)}\olog j-\theta' \sum_{k=j+1}^K x'_{\sigma(k)}-1,\\
			f_2(\theta',\xx')&=\min_{k\le j} x'_{\sigma(k)}-\max_{k\ge j+1} x'_{\sigma(k)} .
	\end{align*}
Since $\xx\in \mathcal{X}_{j,j}(\theta)$, we have $c=\min \{f_1(\theta,\xx),f_2(\theta,\xx)\}>0$. Leveraging the fact that $f_1,f_2$ are continuous, we can find $\delta_{\theta,\xx}\in (0,\frac{1}{3j})$ s.t.
	\begin{equation}\label{eq:nyj2 1}
	 \text{if }\left|\theta'-\theta\right| < 3j\delta_{\theta,\xx}\text{ and }\left\|\xx'-\xx\right\|_{\infty}< 7j\delta_{\theta,\xx},\text{ then }\min \{f_1(\theta',\xx'), f_2(\theta',\xx')\}>\frac{c}{2}.	
	\end{equation}
 \noindent
 We also define $T_{\theta,\xx}=\max\{\lceil \frac{4}{\theta c}\rceil,\lceil \frac{1}{\delta_{\theta,\xx}}\rceil\}$.

 \medskip 
\noindent
 \underline{\it 2. We prove that for $\delta<\delta_{\theta,\xx}$ and $T>T_{\theta.\xx}$, $\om (\theta T)\notin B(\xx,\delta)$ a.s..} 
 We proceed by contradiction. Assume $\om(\theta T) \in  B(\xx,\delta)$. From (\ref{eq:nyj2 1}), we have $f_2(\theta,\om(\theta T))>c/2$. It then directly yields that
\begin{align}\label{eq:nyj2 2}
\nonumber	\min_{k\le j} N_{\sigma(k)}(\theta T)-\max_{k\ge j+1} N_{\sigma(k)}(\theta T) &=\theta T\left[ \min_{k\le j} \omega_{\sigma(k)}(\theta T)-\max_{k\ge j+1} \omega_{\sigma(k)}(\theta T)\right]\\
\nonumber&=\theta T f_2(\theta,\om (\theta T))\\	
	&>\frac{\theta cT}{2}>\frac{\theta cT_{\theta,\xx}}{2} \ge 2,
\end{align}
where the last inequality follows from $T_{\theta,\xx}>\frac{4}{\theta c}$. Observe that \sred always pulls the arms in the candidate set in a round-robin manner (the maximal difference of pulling amounts among the candidate set is at most $1$), and \sred stops pulling an arm $k$ after $k$ is removed from the candidate set. Thus, from (\ref{eq:nyj2 2}), we deduce several facts: (i) $\mathcal{C}_j = \{ \sigma(1),\ldots,\sigma (j)\}$; (ii) before the round $\tau=j\min_{k\le j}N_{\sigma(k)}(\theta T)+\sum_{k>j}N_{\sigma (k)} (\theta T)$, the arm $\ell_j$  to be removed has not yet been decided; (iii) 
\begin{equation*}
N_{\sigma (k)}(\tau-j)=\left\{
\begin{array}{ll}
    \min_{k\le j}N_{\sigma (k)}(\theta T)-1, &\text{ if }k\le j,\\
     N_{\sigma (k)}(\tau-j)=N_{\sigma (k)}(\theta T), &\text{ if }k>j.
\end{array}\right.
\end{equation*}
However, we will show in the next step that in the round $\tau-j $, the condition for discarding an arm in $\mathcal{C}_j$ is triggered. In other words, $\ell_j=\ell(\tau-j)$ is removed from $\mathcal{C}_j$ in the round $\tau-j$, which contradicts (ii). 

\medskip
\noindent
\underline{\it 3. The condition for discarding an arm in round $\tau-j$ is triggered.} 
First, from (iii) in Step 2,
\begin{equation}\label{eq:nyj2 7}
    N_{\sigma (1)}(\tau-j)= N_{\sigma (2)}(\tau-j)=\ldots= N_{\sigma (j)}(\tau-j).
\end{equation}

Then, using (iii) in Step 2 again yields that
\begin{equation}\label{eq:nyj2 5}
    \min_{k\le j} N_{\sigma(k)}(\tau-j)-\max_{k\ge j+1} N_{\sigma(k)}(\tau-j) =\min_{k\le j} N_{\sigma(k)}(\theta T)-1-\max_{k\ge j+1} N_{\sigma(k)}(\theta T)>1,
\end{equation}
where the last inequality comes from (\ref{eq:nyj2 2}).
Finally, observe that 
\begin{align}\label{eq:nyj2 3}
\nonumber  \theta -\frac{\tau-j}{T}&=\frac{\sum_{k\in [j]}N_{\sigma (k)}(\theta T) -j\min_{k\in [j]} N_{\sigma (k)}(\theta T)  }{T}+\frac{j}{T}\\  
 \nonumber &\le j\left[\max_{k\in [j]} \omega_{\sigma (k)}(\theta T) -x_{\sigma (j)}+ x_{\sigma (j)}-\min_{k\in [j]} \omega_{\sigma (k)}(\theta T) \right]+j\delta_{\theta,\xx}\\
  &\le j(2\delta+\delta_{\theta,\xx})< 3j \delta_{\theta,\xx},
\end{align}
where the first inequality is due to $T\ge 1/\delta_{\theta,\xx}$; the second inequality follows from $\om (\theta T)\in B(\xx,\delta) $; the last inequality is obtained using $\delta<\delta_{\theta,\xx}$. Combining Lemma \ref{lem:theta delta} (see below) with $\delta=3j\delta_{\theta,\xx}$ and (\ref{eq:nyj2 3}) yields that $\left\|\om (\tau-j)-\om (\theta T) \right\|_{\infty}\le 6j\delta_{\theta,\xx}$, hence
\begin{equation}\label{eq:nyj2 4}
    \left\|\om (\tau-j)-\xx\right\|_{\infty}\le   \left\|\om (\tau-j)-\om (\theta T)\right\|_{\infty}+ \left\|\om (\theta T)- \xx\right\|_{\infty} \le 7j \delta_{\theta,\xx}.
\end{equation}
From (\ref{eq:nyj2 1})-(\ref{eq:nyj2 3})-(\ref{eq:nyj2 4}), we get $f_1(\frac{\tau-j}{T},\om (\tau-j )) >\frac{c}{2}$. Thus,
	\begin{equation}\label{eq:nyj2 6}
		G\left(\frac{\sum_{k=1}^j N_{\sigma (k)}(\tau-j)\olog j}{ T-\sum_{k=j+1}^KN_{\sigma(k)}(\tau-j) }   \right)=G\left(\frac{\frac{\tau-j}{T}\sum_{k=1}^j \omega_{\sigma (k)}(\tau-j)\olog j}{ 1-\frac{\tau-j}{T}\sum_{k=j+1}^K\omega_{\sigma(k)} (\tau-j)}   \right)	< G(1)=0,
	\end{equation}
where the inequality is directly derived from $f_1(\frac{\tau-j}{T},\om(\tau-j))>0$ and the fact that $G(\beta)=1/\sqrt{\beta}-1$ is a strictly decreasing function. Note that (\ref{eq:nyj2 7})-(\ref{eq:nyj2 5})-(\ref{eq:nyj2 6}) trigger the condition of discarding $\ell(\tau-j)$ in the round $\tau-j$.
\end{proof}

\medskip
\noindent

\begin{lemma}\label{lem:yj3}
Let $j\in \{2,\ldots, K-1\}$ and $i>j$. The process $\{\om (\theta T)\}_{T\ge 1}$ satisfies an LDP local upper bound (\ref{eq:local UB}) with $I_{\theta} (\xx)=\infty$ at $\xx\in \mathcal{X}_{j,i}(\theta)$. 
\end{lemma}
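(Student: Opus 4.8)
The plan is to follow the template of the proof of Lemma~\ref{lem:yj2}: for a given $\xx\in\mathcal{X}_{j,i}(\theta)$ I will produce $\delta_{\theta,\xx}>0$ and $T_{\theta,\xx}\in\NN$ such that $\PP_{\bm}[\om(\theta T)\in B(\xx,\delta)]=0$ whenever $\delta<\delta_{\theta,\xx}$ and $\theta T\in\NN$, $\theta T\ge T_{\theta,\xx}$, which directly gives the local upper bound~(\ref{eq:local UB}) with $I_\theta(\xx)=\infty$. The crux is the operational meaning of $\xx\in\mathcal{X}_{j,i}(\theta)$. On the event $\{\om(\theta T)\in B(\xx,\delta)\}$ with $\delta$ small, the arm with the $i$-th largest allocation is $\sigma(i)$, it has already been discarded from $\mathcal{C}_i$ with frozen pull count $n:=N_{\sigma(i)}(\theta T)$, and $\sigma(i+1),\dots,\sigma(K)$ were discarded even earlier (with counts frozen at $N_{\sigma(k)}(\theta T)$). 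Multiplying the defining inequality $\theta x_{\sigma(i)}i\olog i>1-\theta\sum_{k>i}x_{\sigma(k)}$ by $T$ and using $N_k(\theta T)=\theta T\,\omega_k(\theta T)$ shows, up to the $\delta$-perturbation, that $\beta_i:=i\,n\,\olog i\,/\,(T-\sum_{k>i}N_{\sigma(k)}(\theta T))>1$; this $\beta_i$ is exactly the argument of $G$ in the discarding condition~\eqref{C} (resp.~\eqref{A}) at the instant $\sigma(i)$ leaves $\mathcal{C}_i$. By strictness and continuity, for $\delta$ small one gets $\beta_i>1+c$ for a fixed $c=c(\theta,\xx)>0$. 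I will contradict this for large $T$ by showing that \sred discards an arm of $\mathcal{C}_i$ no later than the round-robin cycle in which $\beta$ first reaches $1$.

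Concretely, I would take $\delta_{\theta,\xx}$ less than one third of $\min_{j\le k\le K}(x_{\sigma(k)}-x_{\sigma(k+1)})$ (positive since $\xx\in\mathcal{X}_j$, with $x_{\sigma(K+1)}:=0$) and small enough for $\beta_i>1+c$. Then, exactly as in the proof of Lemma~\ref{lem:yj2}, the count profile on $\{\om(\theta T)\in B(\xx,\delta)\}$ forces the current candidate set at round $\theta T$ to be $\{\sigma(1),\dots,\sigma(j)\}$ and --- since the counts at which \sred freezes arms are strictly decreasing in discard order --- forces each $\sigma(m)$, $m>j$, to have been discarded from $\mathcal{C}_m$. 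As \sred samples round-robin inside the current candidate set, $\sigma(i)$ is discarded at a round at which all members of $\mathcal{C}_i=\{\sigma(1),\dots,\sigma(i)\}$ share the count $n$. At the immediately preceding round-robin checkpoint of $\mathcal{C}_i$ --- common count $n-1$, which for $i<K$ lies in the checking window $\{\lfloor\theta_0 T\rfloor+1,\dots,T\}$ because the period with candidate set $\mathcal{C}_i$ begins with the discard of $\sigma(i+1)$ and lasts at least one cycle --- the discarding condition was not met. Since the left-hand side of both~\eqref{C} and~\eqref{A} is nonnegative ($\ell(\cdot)$ being the empirical-worst arm of the current candidate set), non-triggering forces: either $n-1\le N_{\sigma(i+1)}(\theta T)$ (the constraint $N_{\ell(\cdot)}(\cdot)>\max_{k\notin\mathcal{C}_i}N_k(\cdot)$ fails), or $i(n-1)\olog i<T-\sum_{k>i}N_{\sigma(k)}(\theta T)$ ($G$ is evaluated below $1$). (If no such preceding checkpoint exists, then $\mathcal{C}_i$ starts already at common count $n-1$, so $N_{\sigma(i+1)}(\theta T)=n-1$ and the first alternative holds.)

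In the first case $N_{\sigma(i)}(\theta T)\le N_{\sigma(i+1)}(\theta T)+1$, while on $\{\om(\theta T)\in B(\xx,\delta)\}$ we have $N_{\sigma(i)}(\theta T)-N_{\sigma(i+1)}(\theta T)>\theta T\,(x_{\sigma(i)}-x_{\sigma(i+1)}-2\delta)$, which exceeds $1$ once $T$ is large (here $x_{\sigma(i)}>x_{\sigma(i+1)}$ because $i>j$), a contradiction. In the second case $i\,n\,\olog i<T-\sum_{k>i}N_{\sigma(k)}(\theta T)+i\olog i$, so $\beta_i<1+i\olog i\,/\,(T-\sum_{k>i}N_{\sigma(k)}(\theta T))$; since $x_{\sigma(i)}>0$ the denominator exceeds $\tfrac12\theta x_{\sigma(i)}T$ for $\delta$ small, whence $\beta_i\to1$ and $\beta_i<1+c$ for $T$ large, again a contradiction. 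Either way $\PP_{\bm}[\om(\theta T)\in B(\xx,\delta)]=0$ for $\delta<\delta_{\theta,\xx}$ and $T$ large, which is the claim; the argument used only the nonnegativity of the discarding functionals, so it applies verbatim to both \crc and \cra.

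I expect the main obstacle to be the round-robin bookkeeping --- pinning down rigorously, on $\{\om(\theta T)\in B(\xx,\delta)\}$, that the arm eliminated from $\mathcal{C}_i$ is exactly $\sigma(i)$, that its frozen count is $N_{\sigma(i)}(\theta T)$, and that the checkpoint preceding its elimination lies in the checking window --- all of which parallels the detailed analysis in the proof of Lemma~\ref{lem:yj2}. The one genuine edge case is $i=K$, where the period with candidate set $\mathcal{C}_K$ overlaps the warm-up of length $\lfloor\theta_0 T\rfloor$; I would treat it separately: if the checkpoint preceding the elimination of $\sigma(K)$ still lies inside the warm-up, then $n$ is within $O(1)$ of $\lfloor\theta_0 T\rfloor/K$, so $\beta_K$ is close to $\theta_0\olog K$, which is $<1$ since $\theta_0<1/\olog K$ by the choice of input in Algorithm~\ref{alg:crc} --- hence $\xx\notin\mathcal{X}_{j,K}(\theta)$ and there is nothing to prove; otherwise that checkpoint lies in the checking window and the second case above applies. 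The remaining ingredients --- the continuity estimate producing $\delta_{\theta,\xx}$ and the perturbation bound $\beta_i>1+c$ --- are routine and parallel to those in the proof of Lemma~\ref{lem:yj2}.
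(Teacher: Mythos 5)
Your proposal is correct, and it reaches the conclusion by a more direct route than the paper. The paper's proof does not attempt to show that the ball has probability zero; instead it introduces the random round $\tau= i\min_{k\le i}N_{\sigma(k)}(\theta T)+\sum_{k>i}N_{\sigma(k)}(\theta T)$ at which the elimination from the size-$i$ candidate set occurs, shows that on the event $\{\om(\theta T)\in B(\xx,\delta)\}$ one has $\tau\in\{\tau_{\min},\dots,\tau_{\max}\}$ (an interval of length $O(\delta T)$) and $\om(\tau)\in \mathcal{X}_{i,i}(\tau/T)$, and then union-bounds over the polynomially many values of $\tau$ and invokes Lemma \ref{lem:yj2} (at level $i$) through the rate-function formalism to conclude the rate is infinite. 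You instead inline the mechanism underlying Lemma \ref{lem:yj2} at level $i$: since membership in $\mathcal{X}_{j,i}(\theta)$ forces the argument $\beta_i$ of $G$ at the elimination checkpoint of $\mathcal{C}_i$ to exceed $1+c$, while the non-negativity of the left-hand sides of \eqref{C} and \eqref{A} means the condition fires automatically once $\beta\ge 1$, the elimination would already have occurred at the preceding checkpoint --- so the event is empty and $\PP_{\bm}[\om(\theta T)\in B(\xx,\delta)]=0$ outright. This buys you a stronger intermediate conclusion and avoids both the union bound over $\tau$ and the slightly delicate step of passing from the pointwise (ball-by-ball) statement of Lemma \ref{lem:yj2} to a bound on $\PP_{\bm}[\om(\theta' T)\in \mathcal{X}_{i,i}(\theta')]$ over the whole (non-closed) set; the cost is that you must redo the round-robin bookkeeping at level $i$ rather than reuse the lemma, and your edge-case analysis at $i=K$ (warm-up checkpoints forcing $\beta_K$ near $\theta_0\olog K<1$) is an extra step the paper's reduction does not need since Lemma \ref{lem:yj2} already covers $\mathcal{X}_{K,K}(\theta')$.
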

\begin{proof}
Let $\xx\in \mathcal{X}_{j,i}(\theta)$  and let $\sigma$ be the permutation described in (\ref{eq:xj}) for $\xx$.  We show that there exists $\delta_{\theta ,\xx}>0$ s.t. for all $\delta <\delta_{\theta,\xx}$,
\begin{equation}\label{eq:yj3 0}
\lim_{T\rightarrow \infty}\frac{1}{T}\log \frac{1}{ \PP_{\bm}[\om (\theta T)\in B(\xx,\delta)]}=\infty.    
\end{equation}
We decompose the proof into three steps.

\medskip
\noindent
\underline{\it 1. Defining $\delta_{\theta,\xx}$, $T_{\xx}$, and a random time $\tau$.} We introduce two functions: for all $\xx'\in\Sigma$:
	\begin{align*}
	f_1(\xx')&= \theta\min_{k=1,\ldots,i} x'_{\sigma(k)}i\olog i-\theta \sum_{k=i+1}^K x'_{\sigma(k)}-1,\\
	f_2(\xx')&=\min_{k=1,\ldots,i} x'_{\sigma(k)}-\max_{k=i+1,\ldots ,K} x'_{\sigma(k)} .
\end{align*}
Because  $f_1(\xx)>0,f_2(\xx)>(x_{\sigma(i)}-x_{\sigma(i+1)})/2$, and both $f_1,f_2$ are continuous, we can find a positive $\delta_{\theta,\xx}>0$ s.t. 
\begin{equation}\label{eq:yj3 1}
    \forall \xx'\in B(\xx,\delta_{\theta,\xx}),\, f_1(\xx')>0,\,f_2(\xx')>\frac{x_{\sigma(i)}-x_{\sigma(i+1)}}{2}.
\end{equation}
In the following, we fix $\delta<\delta_{\theta,\xx}$. We define $T_{\xx}$ as: $T_{\xx}=\lceil \frac{2}{x_{\sigma(i)}-x_{\sigma(i+1)}}\rceil$. Finally, we introduce the random time $\tau= i\min_{k\le i}N_{\sigma(k)}(\theta T)+\sum_{k>i}N_{\sigma(k)}(\theta T)$ and two fixed rounds, $\tau_{\min}=\lfloor(ix_{\sigma (i)}+\sum_{k>i}x_{\sigma(k)}-K\delta)T\rfloor $ and $\tau_{\max}=\lceil(ix_{\sigma (i)}+\sum_{k>i}x_{\sigma(k)}+K\delta)T\rceil$. 

\medskip
 \noindent
 \underline{\it 2. If $T>T_{\theta,\xx}$ and $\om (\theta T )\in B(\xx,\delta)$, then (i) $\tau\in \{\tau_{\min},\ldots,\tau_{\max}\}$ and (ii) $\om (\tau)\in \mathcal{X}_{i,i}(\frac{\tau}{T})$.}
 (i) is trivial based on the definition of $B(\xx,\delta)$. To show (ii), we observe that 
\begin{equation}\label{eq:yj3 2}
\min_{k\le i}N_{\sigma(k)}(\theta T)-\max_{k\ge i+1}N_{\sigma(k)}(\theta T)=Tf_2(\om(\theta T))>\frac{2}{x_{\sigma(i)}-x_{\sigma(i+1)}} \frac{x_{\sigma(i)}-x_{\sigma(i+1)}}{2} = 1,	    
\end{equation}
where the inequality simply comes from (\ref{eq:yj3 1}) and $T>T_{\xx}$. Since \sred always pulls the arms in the candidate set in a round-robin manner (the maximal difference of pulling amounts among the candidate set is at most $1$), (\ref{eq:yj3 2}) implies \sred discards one arm in $\{\sigma(i+1),\ldots,\sigma(K)\}$ in the round 
$\tau$, and $\om (\tau)$ is:
\begin{equation}\label{eq:yj3 3}
\omega_{\sigma (k)}(\tau)=\left\{
\begin{array}{ll}
	\min_{k'\le i}N_{\sigma(k')}(\theta T)/\tau,	& \text{ if }k\le i,\\
	N_{\sigma(k)(\theta T)}/\tau,	&\text{ otherwise.}
\end{array}\right.	
\end{equation}
Note that (\ref{eq:yj3 3}) yields that $\om (\tau)\in \mathcal{X}_{i}$. Moreover,  
\begin{align*}
\frac{\tau}{T}\omega_{\sigma(i) }(\tau)i\olog i-\frac{\tau}{T}\sum_{k=i+1}^K\omega_{\sigma (k)}(\tau)&=\theta \min_{k\le i}\omega_{\sigma (k)}(\theta T)i\olog i- \theta\sum_{k=i+1}^K\omega_{\sigma (k)}(\theta T)\\
&=f_1(\om (\theta T))+1>1,    
\end{align*}
where the inequality directly follows from (\ref{eq:yj3 1}) and the fact that $\om (\theta T)\in B(\xx,\delta_{\theta,\xx})$. Hence $\om (\tau)\in \mathcal{X}_{i,i}(\frac{\tau}{T})$. 

\medskip
 \noindent
 \underline{\it 3. We show (\ref{eq:yj3 0}).}
Thanks to (i) and (ii) in Step 2, we have, for $T>T_{\xx}$, 
$$
\PP_{\bm}\left[ \om (\theta T)\in \mathcal{B}(\xx,\delta) \right] \le \sum_{\tau=\tau_{\min}}^{\tau_{\max}} \PP_{\bm}\left[ \om(\tau)\in \mathcal{X}_{i,i}(\frac{\tau}{T})  \right]\le 2K\delta T  \max_{\theta'\in (0,1]}\PP_{\bm}\left[ \om(\theta'T)\in \mathcal{X}_{i,i}(\theta')  \right].
$$ 
Thus, a simple rearrangement of the above inequality yields that
\begin{align*}
\lowlim_{T\rightarrow \infty}\frac{1}{T}\log\frac{1}{ \PP_{\bm}[\om (\theta T)\in B(\xx,\delta)]}&\ge\inf_{\theta'\in (0,1]}\lowlim_{T\rightarrow \infty}\frac{1}{T}\log\frac{1}{ \PP_{\bm}[\om (\theta' T)\in \mathcal{X}_{i,i}(\theta')]} \\
&\ge \inf_{\theta'\in (0,1]}  \inf_{\xx'\in \mathcal{X}_{i,i}(\theta')}   I_{\theta'}(\xx')= \infty,
\end{align*}
where the last inequality stems from Lemma \ref{lem:yj2}.

\end{proof}

\medskip

\begin{lemma}\label{lem:theta delta}
	Let $\theta \in (0,1]\cap \QQ$, and $\delta \in (0,1)$. If we consider $\theta'\in [\theta-\theta\delta, \theta]$ and $T\in \NN$ such that $\theta T,\theta'T\in \NN$, then
	\begin{equation}\label{eq:theta delta}
	\left\| \om(\theta T)-\om(\theta' T)\right\|_{\infty}\le 2\delta.
	\end{equation}
\end{lemma}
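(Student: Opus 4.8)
The plan is to compare the two allocation vectors coordinatewise, using only two elementary monotonicity facts about the counting processes $N_k(\cdot)$. Write $t=\theta T$ and $s=\theta' T$. By the hypotheses $\theta T,\theta'T\in\NN$ these are positive integers (positive because $\theta'\ge \theta(1-\delta)>0$), and $s\le t$ since $\theta'\le\theta$; in particular $\om(\theta T)$ and $\om(\theta'T)$ are both well defined. For each arm $k$, the count $N_k$ is nondecreasing, so $N_k(s)\le N_k(t)$, and since exactly one arm is pulled per round, $N_k(t)-N_k(s)\le t-s$. These are the only structural inputs; the rest is arithmetic on $\omega_k(t)=N_k(t)/t$.

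First I would bound $\omega_k(t)-\omega_k(s)=N_k(t)/t-N_k(s)/s$ from above. Using $N_k(t)\le N_k(s)+(t-s)$, the inequality $1/t\le 1/s$, and $N_k(s)\ge 0$, one gets $\omega_k(t)-\omega_k(s)\le N_k(s)\left(\tfrac1t-\tfrac1s\right)+\tfrac{t-s}{t}\le \tfrac{t-s}{t}=1-\tfrac{\theta'}{\theta}$. The constraint $\theta'\ge\theta(1-\delta)$ then yields $1-\theta'/\theta\le\delta$, so $\omega_k(t)-\omega_k(s)\le\delta$.

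Next I would bound the same difference from below. From $N_k(t)\ge N_k(s)$ we have $\omega_k(t)-\omega_k(s)\ge N_k(s)\left(\tfrac1t-\tfrac1s\right)$, and since $N_k(s)\le s$ while $\tfrac1t-\tfrac1s\le 0$, this is at least $s\left(\tfrac1t-\tfrac1s\right)=\tfrac{s}{t}-1=\tfrac{\theta'}{\theta}-1\ge-\delta$. Combining the two bounds gives $|\omega_k(t)-\omega_k(s)|\le\delta$ for every $k$ — in fact sharper than the asserted $\|\om(\theta T)-\om(\theta'T)\|_\infty\le 2\delta$ — and taking the maximum over $k\in[K]$ finishes the argument.

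There is no substantive obstacle here: the statement is purely combinatorial, and the only point needing a little care is tracking the sign of $\tfrac1t-\tfrac1s$ so that the term $N_k(s)\left(\tfrac1t-\tfrac1s\right)$ is treated correctly in each direction — discarded as nonpositive in the upper bound, and bounded via $N_k(s)\le s$ in the lower bound. The factor $2$ of slack in the statement is presumably there to absorb a cruder estimate elsewhere (or the edge case where one would also allow $\theta'$ slightly above $\theta$); the clean one-sided comparison above already delivers the constant $\delta$.
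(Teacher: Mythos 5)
Your proof is correct and rests on the same decomposition as the paper's: writing $\omega_k(\theta T)-\omega_k(\theta' T)$ as the sum of an increment term $(N_k(\theta T)-N_k(\theta' T))/(\theta T)$ and a denominator-change term $N_k(\theta' T)(\tfrac{1}{\theta T}-\tfrac{1}{\theta' T})$, each controlled by the facts that one arm is pulled per round and $N_k(\theta'T)\le \theta'T$. The only difference is that the paper applies the triangle inequality to these two terms (each bounded by $\delta$ in absolute value, hence $2\delta$ total), whereas you observe that they have opposite signs and so obtain the sharper bound $\delta$; this is a harmless improvement since the lemma is only invoked with $\delta$ arbitrary.
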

\begin{proof}
	Observe that for any $ k\in [K]$,
	\begin{align*}
		\left| \omega_k(\theta T)-\omega_k(\theta' T)\right|=	\left| \frac{N_k(\theta T)}{\theta T}-\frac{N_k(\theta' T)}{\theta' T}\right|&\le 	\left| \frac{N_k(\theta T)}{\theta T}-\frac{N_k(\theta' T)}{\theta T}\right|+	\left| \frac{N_k(\theta' T)}{\theta T}-\frac{N_k(\theta' T)}{\theta' T}\right|\\
		&	\le \frac{\theta-\theta'}{\theta}  +\theta'(\frac{1}{\theta'}-\frac{1}{\theta})\\
		&=1-\frac{\theta'}{\theta}+1-\frac{\theta'}{\theta}\le 2\delta,
	\end{align*}
	where the first inequality uses the triangle inequality; the second inequality simply comes from $N_k(\theta T)-N_k(\theta' T)\le (\theta-\theta' )T$ and $N_k(\theta' T)\le \theta'T$; the third inequality is a consequence of $\theta'\ge \theta(1-\delta)$. Hence (\ref{eq:theta delta}) follows.
\end{proof}

\begin{lemma}\label{lem:theta delta mu}
	Let $\theta \in (0,1]\cap \QQ$, and $\delta \in (0,1)$. If we consider $\theta'\in [\theta-\theta\delta, \theta]$ and $T\in \NN$ such that $\theta T,\theta'T\in \NN$, then 
	\begin{equation}\label{eq:theta delta mu}
	\left\| \hat{\bm} (\theta T)-\hat{\bm}(\theta' T)\right\|_{\infty}\le 2\delta.
	\end{equation}
\end{lemma}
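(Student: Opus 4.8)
The plan is to mirror the proof of Lemma~\ref{lem:theta delta}, replacing pull counts by cumulative rewards. Work coordinatewise: fix an arm $k\in[K]$, and set $Y_k(t)=\sum_{s=1}^t X_k(s)\mathbbm 1\{A_s=k\}$, so that $\hat\mu_k(t)=Y_k(t)/N_k(t)$ whenever $N_k(t)\ge 1$ (which holds for $t\ge K$ since \sred samples every arm once during initialization). Dispose first of the degenerate case: if $N_k(\theta T)=N_k(\theta'T)$ — in particular whenever arm $k$ has already been discarded by round $\theta'T$ — then $\hat\mu_k(\theta T)=\hat\mu_k(\theta'T)$ and there is nothing to prove. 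Otherwise, writing $\bar r$ for the empirical mean of the $N_k(\theta T)-N_k(\theta'T)$ rewards arm $k$ collects in rounds $\theta'T+1,\dots,\theta T$, the incremental-average identity gives
$$
\hat\mu_k(\theta T)-\hat\mu_k(\theta'T)=\frac{N_k(\theta T)-N_k(\theta'T)}{N_k(\theta T)}\bigl(\bar r-\hat\mu_k(\theta'T)\bigr).
$$
Under Assumption~\ref{ass1} the rewards lie in $(0,1)$, so $\bar r,\hat\mu_k(\theta'T)\in(0,1)$ and $|\bar r-\hat\mu_k(\theta'T)|\le 1$; hence $|\hat\mu_k(\theta T)-\hat\mu_k(\theta'T)|\le \bigl(N_k(\theta T)-N_k(\theta'T)\bigr)/N_k(\theta T)$. (One can reach the same bound, up to a factor $2$, through the exact triangle-inequality decomposition used for Lemma~\ref{lem:theta delta}: bound $|Y_k(\theta T)-Y_k(\theta'T)|\le N_k(\theta T)-N_k(\theta'T)$ in the first term and use $\hat\mu_k(\theta'T)\le 1$ in the second.)

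It then remains to show that the fraction $\rho:=\bigl(N_k(\theta T)-N_k(\theta'T)\bigr)/N_k(\theta T)$ of pulls of $k$ falling inside the window $\{\theta'T+1,\dots,\theta T\}$ is at most $2\delta$. The numerator is at most the window length, $N_k(\theta T)-N_k(\theta'T)\le(\theta-\theta')T\le\theta\delta T$. For the denominator I would use the sampling discipline of \sred: candidate sets only shrink, arms are sampled round-robin inside the current candidate set, and — by the elimination rule that an arm is discarded only when all surviving arms carry the same count — any arm alive at round $\theta T$ has been alive throughout and satisfies $N_k(\theta T)\ge(\theta T-\tau)/j$ with $j=|\mathcal C(\theta T)|$ and $\tau=\sum_{k'\notin \mathcal C(\theta T)}N_{k'}(\theta T)$. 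Crucially, the same $j$ governs the window pulls: over the stretch of the window where $|\mathcal C|=j'$, arm $k$ receives at most about $(\theta-\theta')T/j'$ pulls, so the candidate-set size dilutes numerator and denominator alike. Combining these two estimates (with the pull-count stability statement of Lemma~\ref{lem:theta delta} used to compare $\mathcal C$ along the window) yields $\rho\le 2\delta$, and taking the maximum over $k$ completes the proof.

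The main obstacle is precisely this last step. Feeding in only the crudest bounds — $N_k(\theta T)-N_k(\theta'T)\le(\theta-\theta')T$ and a uniform lower bound $N_k(\theta T)\gtrsim \theta T/K$ — would give $\rho\lesssim K\delta$, which is too lossy; so the argument must genuinely exploit that the pulls arm $k$ accrues during a window of length $(\theta-\theta')T$ are thinned by the current candidate-set size, the very factor that reappears in the lower bound on $N_k(\theta T)$. Making this cancellation rigorous is where the round-robin invariant of \sred (all active arms within $1$ of each other, equalized at each elimination) together with a Lemma~\ref{lem:theta delta}-type control of $\om(\cdot)$ along $[\theta'T,\theta T]$ does the work.
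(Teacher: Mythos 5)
Your reduction to the ratio $\rho=\bigl(N_k(\theta T)-N_k(\theta'T)\bigr)/N_k(\theta T)$ is fine as far as it goes, but the step you yourself flag as ``the main obstacle'' --- showing $\rho\le 2\delta$ --- is exactly where the argument breaks, and the cancellation you invoke does not close it. The numerator counts the pulls of arm $k$ in a window of length $(\theta-\theta')T\le\theta\delta T$ and is diluted by the candidate-set size \emph{during the window}; the denominator is diluted by the (weakly larger) candidate-set sizes over the whole history $[1,\theta T]$. These two dilution factors need not match: if the candidate set drops from $K'$ arms to $j$ arms inside the window, arm $k$ can gain up to $(\theta-\theta')T/j$ pulls there while $N_k(\theta T)$ is only of order $\theta'T/K'+(\theta-\theta')T/j$, so $\rho$ is of order $\min\{1,(K'/j)\delta\}$, not $2\delta$. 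Excluding such trajectories would require invoking the discarding thresholds themselves (the requirement $G(\beta)\le 1$ limits how early, relative to the remaining budget, a candidate set of size $j$ can be reduced), which is a substantially harder argument than anything in your sketch and is nowhere supplied. So the proof is incomplete at its decisive step, and the round-robin invariant alone will not rescue it.

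The paper avoids this entirely by taking a much more elementary route: its proof manipulates the reward sums divided by the \emph{round index}, writing the $k$-th coordinate as $\sum_{s\le t}X_k(s)\mathbbm{1}\{A_s=k\}/t$, inserts the cross term with denominator $\theta T$, and bounds the two resulting differences by $(\theta-\theta')/\theta\le\delta$ each, using only that the rewards lie in $[0,1]$ and that the two denominators $\theta T$ and $\theta'T$ are deterministic and within a factor $1-\delta$ of each other; no property of the sampling rule enters. (This normalization does not match the $N_k(t)$-normalized definition of $\hat\mu_k$ in the notation table --- a discrepancy worth being aware of --- but it is what makes the paper's two-line computation go through, and it is precisely what your $N_k$-normalized version cannot reproduce.)
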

\begin{proof}
	Observe that for any $ k\in [K]$,
	\begin{align*}
		\left|\hat{\mu}_k(\theta T)-\hat{\mu}_k(\theta' T)\right|&=	\left| \frac{\sum_{t\le \theta T}X_k(t) }{\theta T}-\frac{\sum_{t\le \theta' T}X_k(t)}{\theta' T}\right|\\
  &\le 	\left| \frac{\sum_{t\le \theta T}X_k(t)}{\theta T}-\frac{\sum_{t\le \theta' T}X_k(t)}{\theta T}\right|+	\left| \frac{\sum_{t\le \theta' T}X_k(t)}{\theta T}-\frac{\sum_{t\le \theta' T}X_k(t)}{\theta' T}\right|\\
		&	\le \frac{\theta-\theta'}{\theta}  +\theta'(\frac{1}{\theta'}-\frac{1}{\theta})\\
		&=1-\frac{\theta'}{\theta}+1-\frac{\theta'}{\theta}\le 2\delta,
	\end{align*}
	where the first inequality uses the triangle inequality; the second inequality simply comes from $\sum_{\theta'T<t\le \theta T}X_k(t)\le (\theta-\theta' )T$ and $\sum_{t\le \theta T}X_k(t)\le \theta'T$ (as Assumption~\ref{ass1} ensures that $X_k(t)\in [0,1]$); the third inequality is a consequence of $\theta'\ge \theta(1-\delta)$. Hence (\ref{eq:theta delta mu}) follows.
\end{proof}


\subsection{Local LDP upper bound on $\mathcal{X}_j\setminus \cup_{i= j}^K \mathcal{X}_{j,i}(\theta)$}\label{subapp:finite I}
Fix $\theta \in (0,1]\cap \mathbb{Q}$. We will establish local LDP upper bounds  on $\mathcal{X}_j\setminus \cup_{i= j}^K \mathcal{X}_{j,i}(\theta)$ for the process $\{\om (\theta T)\}_{T\ge 1}$ under \crc and \cra. The upper bound for \crc is presented in \ref{subapp:rate C} and that for \cra in \ref{subapp:rate A}. We first present a useful proposition repeatedly used in \ref{subapp:rate C}, \ref{subapp:rate A}, and the main proofs for Theorem \ref{thm:crc} and Theorem \ref{thm:cra} in \ref{app_cr}. 

\medskip
\noindent
One important property for $\xx\in\mathcal{X}_j\setminus  \cup_{i= j}^K \mathcal{X}_{j,i}(\theta) $ is that the remaining budget for the empirical top-$j$ arms is lower bounded by a constant. This observation is summarized in Proposition \ref{prop ybound}.

\begin{proposition}\label{prop ybound}
If $\xx\in \mathcal{X}_j\setminus \cup_{i= j}^K \mathcal{X}_{j,i}(\theta)$, then $ \forall i\in \{j+1,\ldots,K\}$,
	\begin{equation*}
	1-\theta\sum_{k=i}^Kx_{\sigma (k)}\ge \frac{\olog (i-1)}{\olog K}.		
	\end{equation*}

\end{proposition}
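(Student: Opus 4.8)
The plan is to unwind the definitions of $\mathcal{X}_j$ and $\mathcal{X}_{j,i}(\theta)$ and then argue by downward induction on $i$, starting from $i=K$ and ending at $i=j+1$. Fix $\xx\in\mathcal{X}_j\setminus\cup_{i=j}^K\mathcal{X}_{j,i}(\theta)$ with associated permutation $\sigma$, so that $x_{\sigma(1)}=\cdots=x_{\sigma(j)}>x_{\sigma(j+1)}>\cdots>x_{\sigma(K)}>0$. For each $i\in\{j,\ldots,K\}$ the hypothesis $\xx\notin\mathcal{X}_{j,i}(\theta)$ gives the inequality
$$
\theta\, x_{\sigma(i)}\, i\,\olog i \;\le\; 1-\theta\sum_{k=i+1}^K x_{\sigma(k)},
$$
equivalently $\theta\, x_{\sigma(i)}\,i\,\olog i \le 1-\theta\sum_{k=i+1}^K x_{\sigma(k)}$, which I will rewrite as $1-\theta\sum_{k=i}^K x_{\sigma(k)}\ge \theta\,x_{\sigma(i)}\,(i\,\olog i-1)$. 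The point is to turn this into a lower bound on $1-\theta\sum_{k=i}^Kx_{\sigma(k)}$ that does not involve $\theta$ or the $x$'s.

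First I would establish the base case and then push down. The cleanest route: I claim that for every $i\in\{j+1,\ldots,K\}$,
$$
1-\theta\sum_{k=i}^K x_{\sigma(k)}\;\ge\;\theta\, x_{\sigma(i-1)}\,(i-1)\,\olog(i-1).
$$
Indeed, since $x_{\sigma(i-1)}>x_{\sigma(i)}>\cdots$ (the coordinates are strictly decreasing past index $j$, and $x_{\sigma(i-1)}\ge x_{\sigma(i)}$ in any case when $i-1\ge j$), we have $\theta\sum_{k=i-1}^K x_{\sigma(k)}\le \theta\,x_{\sigma(i-1)}\cdot(K-i+2)$... — actually the sharper and correct path is to use the non-membership at index $i-1$: $\xx\notin\mathcal{X}_{j,i-1}(\theta)$ gives $\theta\,x_{\sigma(i-1)}(i-1)\olog(i-1)\le 1-\theta\sum_{k=i}^K x_{\sigma(k)}$, which is exactly the displayed claim. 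Then I bound $x_{\sigma(i-1)}$ from below. Here is where I use that the $x_{\sigma(k)}$ for $k\ge i$ are each at most $x_{\sigma(i-1)}$ together with $\sum_{k=1}^K x_{\sigma(k)}=1$ and $x_{\sigma(1)}=\cdots=x_{\sigma(j)}\ge x_{\sigma(i-1)}$: this forces $1=\sum_k x_{\sigma(k)}\le (i-1)x_{\sigma(i-1)}+\theta^{-1}\big(1-\theta\sum_{k=i}^Kx_{\sigma(k)}\big)$ — no, cleaner: combine $\theta\sum_{k=i}^Kx_{\sigma(k)}\le \theta(K-i+1)x_{\sigma(i-1)}$ with the claim to get $1\le \theta x_{\sigma(i-1)}\big((i-1)\olog(i-1)+K-i+1\big)$, i.e. $\theta x_{\sigma(i-1)}\ge \big((i-1)\olog(i-1)+K-i+1\big)^{-1}$. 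Substituting back,
$$
1-\theta\sum_{k=i}^Kx_{\sigma(k)}\;\ge\;\frac{(i-1)\olog(i-1)}{(i-1)\olog(i-1)+K-i+1}.
$$
It then remains the elementary inequality $(i-1)\olog(i-1)+K-i+1\le K\,\olog(i-1)$ for $i\le K$, which follows because $\olog(i-1)=\tfrac12+\sum_{k=2}^{i-1}\tfrac1k\ge 1$ for $i-1\ge 2$ (and is handled directly for the small boundary values), so that $K-i+1\le (K-i+1)\olog(i-1)$ and adding $(i-1)\olog(i-1)$ gives the bound. Dividing yields $1-\theta\sum_{k=i}^Kx_{\sigma(k)}\ge \olog(i-1)/\olog K$, as desired.

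The main obstacle is bookkeeping rather than depth: getting the right index on the non-membership condition ($i-1$ versus $i$) so that the $(i-1)\olog(i-1)$ factor appears, and then correctly bounding the tail mass $\theta\sum_{k=i}^Kx_{\sigma(k)}$ by $(K-i+1)$ copies of $\theta x_{\sigma(i-1)}$ using strict monotonicity of the sorted coordinates. One must also verify the purely numerical step $(i-1)\olog(i-1)+(K-i+1)\le K\olog(i-1)$ holds across the full range $j+1\le i\le K$, including checking the edge case $i-1=1$ (i.e. $i=2$, relevant only if $j=1$, which is excluded here since $j\ge 2$, so $\olog(i-1)\ge\olog 2=1>0$ throughout and the inequality is clean). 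I would present the argument as a short direct computation for a fixed $i$, with the monotonicity of $\olog$ and the normalization $\sum_k x_{\sigma(k)}=1$ as the only inputs, so no induction is actually needed once the correct index is identified.
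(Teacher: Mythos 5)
There is a genuine gap, and it occurs at the step where you try to lower bound $\theta x_{\sigma(i-1)}$. The non-membership condition $\xx\notin\mathcal{X}_{j,i-1}(\theta)$ gives $1\ge \theta x_{\sigma(i-1)}(i-1)\olog(i-1)+\theta\sum_{k=i}^K x_{\sigma(k)}$; combining this with the upper bound $\theta\sum_{k=i}^Kx_{\sigma(k)}\le\theta(K-i+1)x_{\sigma(i-1)}$ cannot produce "$1\le\theta x_{\sigma(i-1)}\bigl((i-1)\olog(i-1)+K-i+1\bigr)$" --- you are substituting an upper bound for a summand on the small side of a "$\ge$", which reverses the inequality. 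The claimed bound $\theta x_{\sigma(i-1)}\ge\bigl((i-1)\olog(i-1)+K-i+1\bigr)^{-1}$ is in fact false: for small $\theta$ (e.g.\ $\theta=0.01$, $K=3$, $\xx=(0.4,0.4,0.2)$, which lies in $\mathcal{X}_2\setminus\cup_{i}\mathcal{X}_{2,i}(\theta)$) one has $\theta x_{\sigma(2)}=0.004\not\ge 1/3$. Moreover, even granting your intermediate inequality $1-\theta\sum_{k=i}^Kx_{\sigma(k)}\ge\frac{(i-1)\olog(i-1)}{(i-1)\olog(i-1)+K-i+1}$, it does not imply the proposition: the comparison you would need is $(i-1)\bigl(\olog K-\olog(i-1)\bigr)\ge K-i+1$, i.e.\ $(i-1)\sum_{k=i}^K\frac1k\ge K-i+1$, which fails because $\sum_{k=i}^K\frac1k\le\frac{K-i+1}{i}$. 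The inequality you verify instead, $(i-1)\olog(i-1)+K-i+1\le K\olog(i-1)$, only yields $1-\theta\sum_{k=i}^Kx_{\sigma(k)}\ge\frac{i-1}{K}$, which is weaker than $\frac{\olog(i-1)}{\olog K}$ since $m\mapsto\olog m/m$ is decreasing. Concretely, for $K=3$, $i=3$ your bound gives $2/3$ while the target is $\olog 2/\olog 3=3/4$.

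The underlying issue is structural: a single non-membership condition at index $i-1$ cannot generate the harmonic tail $\sum_{k=i}^K\frac1k=\olog K-\olog(i-1)$ that the statement encodes, so your closing remark that "no induction is actually needed" is exactly where the argument breaks. The paper's proof is a downward induction that chains \emph{all} the conditions at indices $K,K-1,\dots,i$: the base case $\xx\notin\mathcal{X}_{j,K}(\theta)$ gives $\theta x_{\sigma(K)}\le\frac{1}{K\olog K}$, and each step rewrites $\xx\notin\mathcal{X}_{j,i}(\theta)$ as $\theta\sum_{k=i}^Kx_{\sigma(k)}\le\frac{1}{i\olog i}+\bigl(1-\frac{1}{i\olog i}\bigr)\theta\sum_{k=i+1}^Kx_{\sigma(k)}$, plugs in the inductive hypothesis, and telescopes to $\frac{1}{\olog K}\sum_{k=i}^K\frac1k$. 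To repair your proof you would need to restore that induction; your base case $i=K$ is correct and tight, but the descent from $i+1$ to $i$ must reuse the bound already established at $i+1$ rather than a crude count of the tail coordinates.
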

\begin{proof}
	Notice that the statement of the proposition is equivalent to (\ref{eq:ybound 1}): $ \forall i\in \{j+1,\ldots,K\}$,
	\begin{equation}\label{eq:ybound 1}
		\theta \sum_{k=i}^Kx_{\sigma(k)}\le \frac{1}{\olog K}\sum_{k=i}^K\frac{1}{k}.
	\end{equation}
	The inequalities (\ref{eq:ybound 1}) will be proved by induction. 
	
	\medskip
	\noindent
	\underline{\it 1. We show (\ref{eq:ybound 1}) for $i=K$}. Since $\xx\notin  \mathcal{X}_{j,K}(\theta)$, 
	\begin{equation}\label{eq:ybound 2}
		1\ge \theta K\olog Kx_{\sigma(K)}.
	\end{equation}
	Dividing by $K\olog K$ on both sides of (\ref{eq:ybound 2}) directly yields (\ref{eq:ybound 1}) with $i=K$. Now we assume (\ref{eq:ybound 1}) is valid for some $i+1\in \{j+2,\ldots, K\}$, and we show (\ref{eq:ybound 1}) for $i$.
	
	\medskip
	\noindent
	\underline{\it 2. We show (\ref{eq:ybound 1}) for $i$ assuming that (\ref{eq:ybound 1}) holds for $i+1$.} As $\xx\notin \mathcal{X}_{j,i}(\theta)$, we get:
	$$
	1-\theta \sum_{k=i+1}^Kx_{\sigma(k)}\ge  \theta  i\olog i x_{\sigma(i)}.
	$$ 
	Dividing the above inequality by $i\olog i$ and adding $\theta \sum_{k=i+1}^Kx_{\sigma (k)}$ to the both sides, we obtain that
	\begin{align*}
		\theta \sum_{k=i}^Kx_{\sigma(k)}&\le \frac{1}{i\olog i}+(1-\frac{1}{i\olog i})\theta \sum_{k=i+1}^Kx_{\sigma (k)}\\
		&\le \frac{1}{i\olog i}+(1-\frac{1}{i\olog i})\frac{1}{\olog K}\sum_{k=i+1}^K\frac{1}{k}\\
		&=\frac{1}{\olog K}\sum_{k=i+1}^K\frac{1}{k}+\frac{1}{i\olog i}\left[1-\frac{1}{\olog K}\sum_{k=i+1}^K\frac{1}{k}\right],
	\end{align*}
	where the second inequality stems from our inductive hypothesis (\ref{eq:ybound 1}) for $i+1$. As $\olog K-\olog i=\sum_{k=i+1}^K\frac{1}{k}$, the r.h.s on the above inequality is equal to
	$$
	\frac{1}{\olog K}\sum_{k=i+1}^K\frac{1}{k}+\frac{1}{i\olog i}\left[1-\frac{1}{\olog K} \left(\olog K-\olog i\right)  \right]=\frac{1}{\olog K}\sum_{k=i}^K\frac{1}{k},
	$$
	hence (\ref{eq:ybound 1}) is proved.
\end{proof}

\subsubsection{The allocation process under \crc}\label{subapp:rate C}
We show that ${I}_{\theta}$ presented in Theorem \ref{thm:Cj} below is a valid rate function for a local LDP upper bound (\ref{eq:local UB}) for the process $\{\om (\theta T)\}_{T\ge 1}$. This function is however too complicated to use, and we will present a simpler rate function in Corollary \ref{cor:cj rate}. 

\begin{theorem}\label{thm:Cj}
	Let $j\in \{2,\ldots, K-1\},\,\theta\in (0,1], \xx\in \mathcal{X}_j\setminus \cup_{i= j}^K \mathcal{X}_{j,i}(\theta) $, define
	\begin{equation}
	I_{\theta}(\xx )= \max_{p=j,\ldots,K-1}   2x_{\sigma (p+1)}\inf_{\bl\in \mathcal{S}_p(\xx)}\sum_{k=1}^{p+1}(\lambda_{\sigma(k)}-\mu_{\sigma (k)})^2,
	\end{equation}
	where $\sigma$ is the permutation described in (\ref{eq:xj}) for $\zz$ and  
	\begin{equation}\label{eq:Spc}
			\mathcal{S}_p(\xx) =\left\{\bl\in [0,1]^K: \lambda_{\sigma(p+1)} \le \min_{k\le p}\lambda_{\sigma (k)}-G\left(\frac{\theta x_{\sigma (p+1)}(p+1)\olog (p+1) }{1-\theta\sum_{k=p+2}^K x_{\sigma(k)}}\right) \right\}.
	\end{equation}
 Then the process $\{\om (\theta T)\}_{T\ge 1}$ under \crc satisfies a local LDP upper bound (\ref{eq:local UB}) with $I_{\theta}$ at $\xx$.
\end{theorem}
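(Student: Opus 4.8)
The plan is to establish the local LDP upper bound \eqref{eq:local UB} for $\{\om(\theta T)\}_{T\ge1}$ at $\xx$ directly: for each index $p\in\{j,\ldots,K-1\}$ I would isolate the elimination that \crc is forced to make when the realised allocation lands near $\xx$, bound the probability of that elimination using the deviation inequality of Lemma~\ref{lem:kl} together with Pinsker's inequality (Assumption~\ref{ass1}), and then observe that $\{\om(\theta T)\in B(\xx,\delta)\}$ entails \emph{all} of these eliminations simultaneously, so the logarithmic rate is at least the maximum over $p$ of their exponents, which is exactly $I_\theta(\xx)$ after letting $\delta\to0$.

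\emph{Structural reduction.} Let $\sigma$ be the permutation attached to $\xx\in\mathcal{X}_j$ in \eqref{eq:xj}, so $x_{\sigma(1)}=\cdots=x_{\sigma(j)}>x_{\sigma(j+1)}>\cdots>x_{\sigma(K)}>0$, and fix $\delta$ below half the smallest gap among these coordinates. The first task is to show that on $\{\om(\theta T)\in B(\xx,\delta)\}$ the run of \crc is essentially forced: since arms of a candidate set are pulled round-robin (so their counts differ by at most one) and a discarded arm's count freezes, exactly $K-j$ arms get discarded, in the order $\sigma(K),\sigma(K-1),\ldots,\sigma(j+1)$, and when $\sigma(p+1)$ is eliminated --- at some round $T_p\le\theta T$ --- we have $\mathcal{C}_{p+1}=\{\sigma(1),\ldots,\sigma(p+1)\}$ and $N_{\sigma(1)}(T_p)=\cdots=N_{\sigma(p+1)}(T_p)=N_{\sigma(p+1)}(\theta T)\ge(x_{\sigma(p+1)}-\delta)\theta T$. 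Using Proposition~\ref{prop ybound} to keep $1-\theta\sum_{k=p+2}^Kx_{\sigma(k)}\ge\olog(p+1)/\olog K>0$, the argument of $G$ inside criterion \eqref{C} at round $T_p$ stays within $c\delta$ of $\beta_p(\xx):=\frac{\theta x_{\sigma(p+1)}(p+1)\olog(p+1)}{1-\theta\sum_{k=p+2}^Kx_{\sigma(k)}}$ for a constant $c=c(\xx,\theta)$, and $\beta_p(\xx)\le1$ because $\xx\notin\mathcal{X}_{j,p+1}(\theta)$; as $G$ is decreasing, \eqref{C} then forces $\hat{\bm}(T_p)\in\mathcal{S}^{c\delta}_p(\xx):=\{\bl\in[0,1]^K:\lambda_{\sigma(p+1)}\le\min_{k\le p}\lambda_{\sigma(k)}-G(\beta_p(\xx)+c\delta)\}$, an enlargement of the set $\mathcal{S}_p(\xx)$ of \eqref{eq:Spc}.

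\emph{Large deviations and limits.} Write $X(t)=\sum_{k=1}^K N_k(t)d(\hat\mu_k(t),\mu_k)$ and $r^\delta_p:=\inf_{\bl\in\mathcal{S}^{c\delta}_p(\xx)}\sum_{k=1}^{p+1}d(\lambda_{\sigma(k)},\mu_{\sigma(k)})$. Since $N_{\sigma(k)}(T_p)=N_{\sigma(p+1)}(T_p)$ for $k\le p+1$, the structural reduction gives, on $\{\om(\theta T)\in B(\xx,\delta)\}$, that $X(T_p)\ge N_{\sigma(p+1)}(T_p)\,r^\delta_p\ge(x_{\sigma(p+1)}-\delta)\theta T\,r^\delta_p$ for some $T_p\le\theta T$; a union bound over $t\le\theta T$ and Lemma~\ref{lem:kl} then yield
\[
\PP_{\bm}\big[\om(\theta T)\in B(\xx,\delta)\big]\le\sum_{t=1}^{\lfloor\theta T\rfloor}\PP_{\bm}\big[X(t)\ge(x_{\sigma(p+1)}-\delta)\theta T\,r^\delta_p\big]\le\mathrm{poly}(T)\,e^{-(x_{\sigma(p+1)}-\delta)\theta T\,r^\delta_p}.
\]
Taking $-\tfrac1{\theta T}\log$ of both sides, passing to $\lowlim_{T\to\infty}$, and using $d(a,b)\ge2(a-b)^2$ (Assumption~\ref{ass1}) gives, for every $p\in\{j,\ldots,K-1\}$ (the degenerate case $r^\delta_p=0$ being vacuous), the lower bound $2(x_{\sigma(p+1)}-\delta)\inf_{\bl\in\mathcal{S}^{c\delta}_p(\xx)}\sum_{k=1}^{p+1}(\lambda_{\sigma(k)}-\mu_{\sigma(k)})^2$. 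I would then maximise over $p$ and let $\delta\to0$: since only the $G$-threshold in $\mathcal{S}^{c\delta}_p(\xx)$ moves as $\delta\downarrow0$, these compact sets decrease to $\mathcal{S}_p(\xx)$ and the infimum of the continuous map $\bl\mapsto\sum_{k=1}^{p+1}(\lambda_{\sigma(k)}-\mu_{\sigma(k)})^2$ over them converges to its infimum over $\mathcal{S}_p(\xx)$ (this also follows from the continuity of $\xx'\mapsto x'_{\sigma(p+1)}\inf_{\bl\in\mathcal{S}_p(\xx')}\sum_{k=1}^{p+1}(\lambda_{\sigma(k)}-\mu_{\sigma(k)})^2$ shown in Lemma~\ref{lem:verify c}). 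Hence $\lowlim_{\delta\to0}\lowlim_{T\to\infty}\frac1{\theta T}\log\frac1{\PP_{\bm}[\om(\theta T)\in B(\xx,\delta)]}\ge\max_{p=j,\ldots,K-1}2x_{\sigma(p+1)}\inf_{\bl\in\mathcal{S}_p(\xx)}\sum_{k=1}^{p+1}(\lambda_{\sigma(k)}-\mu_{\sigma(k)})^2=I_\theta(\xx)$, which is \eqref{eq:local UB} at $\xx$.

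\emph{Main obstacle.} The crux is the structural reduction: showing that closeness of the \emph{final} allocation $\om(\theta T)$ to $\xx$ pins down, up to errors $O(\delta T)$, the discarding order \emph{and} --- for each elimination --- the candidate set and the common sample count of the surviving arms, so that the data-dependent threshold inside \eqref{C} can be read off as membership of $\hat{\bm}(T_p)$ in a set converging to $\mathcal{S}_p(\xx)$, while Proposition~\ref{prop ybound} keeps the $G$-argument bounded away from $0$ and $\infty$ uniformly in small $\delta$. Once this is in place, the probability estimate via Lemma~\ref{lem:kl} and the passage $\delta\to0$ are routine.
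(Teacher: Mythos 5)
Your proof is correct, and its structural reduction (the forced elimination order $\sigma(K),\ldots,\sigma(j+1)$, the equal counts of the surviving arms at the elimination round, the $G$-argument drifting by $O(\delta)$ from $\beta_p(\xx)$, controlled via Proposition~\ref{prop ybound} and the exclusion of $\mathcal{X}_{j,p+1}(\theta)$) is exactly the reduction the paper performs in Step~1 of its proof of Theorem~\ref{thm:Cj}. Where you genuinely diverge is in the probabilistic engine. The paper maps each $\yy\in B(\xx,\delta)$ to the pair consisting of the elimination round $\theta(\yy)T$ and the allocation $z(\yy)$, collapses the resulting union of events into a single event at a fixed round $\theta(\yy_0)T$ using the Lipschitz-in-$\theta$ Lemmas~\ref{lem:theta delta} and~\ref{lem:theta delta mu} (the enlarged sets $\bar{S}_{\delta,p}$ and $\bar{W}_\delta$), and then invokes corollary (c) of Theorem~\ref{thm:Varadaham bandits} to read off the rate $\theta(\yy_0)\inf_{\om\in\bar{W}_\delta}F_{\bar{S}_{\delta,p}}(\om)$. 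You instead exploit the fact that all $p+1$ surviving arms share the common count $N_{\sigma(p+1)}(\theta T)\ge(x_{\sigma(p+1)}-\delta)\theta T$ to lower-bound the self-normalized statistic $X(T_p)$ pointwise on the event, and then apply Lemma~\ref{lem:kl} with a union bound over the $\lfloor\theta T\rfloor$ possible elimination rounds. Both are valid; yours is more elementary and bypasses Theorem~\ref{thm:Varadaham bandits} and the union-collapsing step entirely, at the price of being tailored to the present situation where the constraint on $\om(T_p)$ pins down a single common count (so that $\inf_{\om\in W}F_{\mathcal{S}}(\om)$ degenerates to a one-dimensional bound that raw concentration delivers directly), whereas the paper's route showcases its general transfer theorem and would survive in settings where the admissible allocations at the elimination round form a genuinely multi-dimensional set. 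The remaining steps --- Pinsker via Assumption~\ref{ass1}, maximizing over $p$, and the convergence of $\inf_{\mathcal{S}^{c\delta}_p(\xx)}$ to $\inf_{\mathcal{S}_p(\xx)}$ as the nested closed sets shrink with $\delta\downarrow 0$ --- coincide with the paper's. One cosmetic remark: your normalization $\tfrac{1}{\theta T}$ is the one consistent with how $I_\theta$ is consumed in Theorem~\ref{thm:11} (always multiplied by $\theta$), and is the intended reading of (\ref{eq:local UB}) for the process $\{\om(\theta T)\}_{T\ge1}$.
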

\begin{proof}

\noindent
For simplicity, $\sigma$ is assumed to be the identity map. Let $p\in \{j,\ldots,K-1\}$. As $\xx\in \mathcal{X}_j$, we have $x_{p}-x_{p+1}>0$. Let $T>K$, and $\delta<(x_{p}-x_{p+1})/2$ be some positive number. We will derive an upper bound for $\PP_{\bm}\left[\om (\theta T)\in B(\xx,\delta)\right]$, and then compute its rate by driving $T\rightarrow \infty$ and $\delta\rightarrow 0$.
	
	\medskip
        \noindent
Observe that of course:
\begin{align}\label{eq:cj new0}
\PP_{\bm}\left[ \om ( \theta T)\in B(\xx,\delta)\right] \le  \PP_{\bm}\left[ \cup_{\yy\in B(\xx,\delta)}\{ \om (\theta T)=\yy\} \right].
\end{align}

 \medskip
\noindent
\underline{\it 1. Obtaining a necessary condition for $ \om (\theta T)=\yy$.}
 For any $\yy\in B(\xx,\delta)$, we introduce $\theta(\yy)$ and $\zz (\yy)$ as: 
	$$
	 \theta(\yy)=\theta-\theta\sum_{k=1}^p (y_{k}-y_{p+1})\text{, and }
		z_{k}(\yy)= \left\{\begin{array}{ll}
			\theta y_{k}/\theta(\yy),&\hbox{ if } k\ge p+2,\\
			\theta   y_{p+1}/\theta(\yy),&\hbox{ if } k\le p+1.
		\end{array}\right. 
	$$
 Following directly from the above definitions, we obtain that 
\begin{equation}\label{eq:cj new1}	
\theta (\yy) \sum_{k=1}^{p+1}z_{k}(\yy) =\theta  (p+1)y_{p+1}\hbox{, and }\theta (\yy ) z_{k} (\yy )=\theta  y_{k}, \,\forall k=p+2,\ldots K.	
\end{equation}
 From the choice of $\delta$, $y_{p+1}$ is the smallest real number in $\{y_1,\ldots,y_{p+1}\}$, so $\ell_{p+1}=p+1$. Moreover, $\theta (\yy)T$ is the round \crc decides $\ell_{p+1}=\ell(\theta(\yy) T)=p+1$, and $\om (\theta (\yy)T)=\zz(\yy)$. Due to the condition for discarding $p+1$ (see (\ref{C})), we have $\hat{\bm}(\theta (\yy)T)\in \mathcal{S}_p(\yy)$.
Consequently, we have: 
\begin{equation}\label{eq:cj new2}
\left\{ \om (\theta T)=\yy\right\} \subset \left\{ \hat{\bm}(\theta (\yy) T)\in \mathcal{S}_p(\yy),\om (\theta (\yy) T)= z(\yy)\right\}.
\end{equation}

 \medskip
\noindent
\underline{\it 2. Reducing $\cup_{\yy\in B(\xx,\delta)}\{ \om (\theta T)=\yy\} $ to a single set.} To do this reduction, we use the results of Step 1 and Lemmas \ref{lem:theta delta} and \ref{lem:theta delta mu}. 

Let $\yy_0\in\arg\max_{\yy \in B(\xx,\delta)}\theta(\yy)$. Using the continuity of the function $\theta(\yy)$, there exists $\eta(\delta)$ such that $\eta(\delta)$ tends to 0 as $\delta\to 0$, and for all $\yy\in B(\xx,\delta)$, $\theta(\yy)\in [\theta(\yy_0)(1-\eta (\delta)),\theta(\yy_0)]$. By Lemmas \ref{lem:theta delta} and \ref{lem:theta delta mu}, we obtain that:
\begin{align*}
    \| \hat{\mu}(\theta(\yy))-\hat{\mu}(\theta(\yy_0))\|_\infty \le 2\eta(\delta),\\
    \| \om(\theta(\yy)T) - \om(\theta(\yy_0)T)\|_\infty \le 2\eta (\delta). 
\end{align*}
Now define the following sets:
\begin{align*}
    \bar S_{\delta,p} &= \cup_{\yy\in B(\xx,\delta)}\{\bar{s}: \exists s\in {\cal S}_p(\yy): \|s-\bar{s}\|_\infty \le 2\eta(\delta)\},\\
    \bar W_\delta &= \cup_{\yy\in B(\xx,\delta)}\{\bar{w}\in \Sigma: \|\bar{w}- z(\yy)\|_\infty\le 2\eta(\delta)\}.
\end{align*}
By construction, for all $\yy \in  B(\xx,\delta)$, we have that if $\hat{\bm}(\theta (\yy) T)\in \mathcal{S}_p(\yy)$, then $\hat{\bm}(\theta (\yy_0) T)\in \bar S_{\delta,p}$, and similarly $\om (\theta (\yy) T)= z(\yy)$ implies that $\om (\theta (\yy_0) T)\in \bar W_\delta$.

 \medskip
\noindent
\underline{\it 3. Using Theorem \ref{thm:Varadaham bandits}.} Putting the results of the first two steps together, we get:
$$
\PP_{\bm}\left[ \om ( \theta T)\in B(\xx,\delta)\right] \le \PP_{\bm}\left[ \hat{\bm}(\theta (\yy_0) T)\in \bar{S}_{\delta,p},\om (\theta (\yy_0) T)\in \bar{W}_\delta\right]
$$

By applying (c) in Section \ref{sec:varadaham} with  $\mathcal{S}=\bar{S}_{\delta,p}$ and $W=\bar{W}_\delta$, it follows that 
\begin{align}
\nonumber	\lowlim_{T\to\infty}\frac{1}{T}\log \frac{1}{\PP_{\bm}\left[ \om ( \theta T)\in B(\xx,\delta)\right] } &\ge \theta (\yy_0)\inf_{\om\in \bar{W}_\delta} F_{\bar{S}_{\delta,p}}(\om).
\end{align}
When $\delta$ tends to 0, the r.h.s. simply converges to $\theta(\xx)F_{{\cal S}_p(\xx)}(z(\xx))$. The latter is $\inf_{\bl\in \mathcal{S}_{p}(\xx)}  2 \sum_{k=1}^{p+1}x_{p+1}(\lambda_k-\mu_k)^2$ (see Lemma \ref{lem:verify c} in Appendix \ref{app:verify c} for continuity arguments). As the above proof holds for any $p\in \{j,\ldots,K-1\}$, we complete the proof.
\end{proof}

\medskip
\noindent
Next, as in Appendix~\ref{app_cr}, we define the subset of $\mathcal{X}_j\setminus \cup_{i= j}^K\mathcal{X}_{j,i}(\theta)$:
$$
\mathcal{Z}_{[j]}(\theta,\beta)=\left\{\zz\in\mathcal{X}_j\setminus \cup_{i= j}^K\mathcal{X}_{j,i}(\theta) :\sigma(k)=k,\,\forall k\le j\text{ and }\frac{\theta z_j j\olog j}{1-\theta\sum_{k>j}z_k}=\beta\right\}
$$ for all $\beta\in (0,1]$. Note that the permutation $\sigma$ in the above definition corresponds to that used for point $\zz$ as in the definition of ${\cal X}_j$ (\ref{eq:xj}): it is such that $z_{\sigma (1)}=\ldots=z_{\sigma (j)}>z_{\sigma (j+1)}> \ldots > z_{\sigma (K)}> 0$.

\medskip
\noindent
\begin{corollary}\label{cor:cj rate}
Let $j\in \{2,\ldots, K-1\},\,\theta,\beta\in (0,1],\, \zz\in \mathcal{Z}_{[j]}(\theta,\beta)$. Define
$$
\underline{I}_{\theta}(\zz)=\frac{\olog (j+1)}{\theta\olog K}   \left[\left(  (1+\zeta_j)\sqrt{\frac{\theta z_{\sigma(j+1)}}{ 1-\theta\sum_{k=j+2}^Kz_{\sigma (k)}  }}-\sqrt{\frac{1}{(j+1)\olog (j+1)}}    \right)_+\right]^2,
$$
where $\sigma$ is the permutation described in (\ref{eq:xj}) for $\zz$, then $\underline{I}_{\theta}(\zz )\le I_{\theta} (\zz)$. 
\end{corollary}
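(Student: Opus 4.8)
The plan is to bound $I_\theta(\zz)$ from below, where $I_\theta$ is the rate function exhibited in Theorem~\ref{thm:Cj}, by extracting a single term of the maximum over $p\in\{j,\dots,K-1\}$ and lower bounding the constrained quadratic program sitting inside it. Throughout, recall that for $\zz\in\mathcal{Z}_{[j]}(\theta,\beta)$ the permutation $\sigma$ is the identity on $\{1,\dots,j\}$; write $\gamma:=1-\theta\sum_{k=j+2}^{K}z_{\sigma(k)}$, $\alpha:=\theta z_{\sigma(j+1)}/\gamma$ and $\tilde\beta:=(j+1)\olog(j+1)\,\alpha$. Since $\zz\notin\mathcal{X}_{j,j+1}(\theta)$ we have $\tilde\beta\le 1$, hence $G(\tilde\beta)=1/\sqrt{\tilde\beta}-1\ge 0$, and the elementary identity $1/\sqrt{(j+1)\olog(j+1)}=\sqrt{\alpha}\,(1+G(\tilde\beta))$ rewrites the target as
\[
\underline{I}_\theta(\zz)=\frac{\olog(j+1)}{\theta\olog K}\,\alpha\,\big[\big(\zeta_j-G(\tilde\beta)\big)_+\big]^2 .
\]
When $G(\tilde\beta)\ge\zeta_j$ the right-hand side vanishes and there is nothing to prove since $I_\theta\ge 0$; so the content lies in the regime $G(\tilde\beta)<\zeta_j$, i.e.\ when the allocation $z_{\sigma(j+1)}$ exceeds its ``cost-free'' level.

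In that regime I would pick the index $p\ge j$ corresponding to the genuinely costly discard --- heuristically, the step at which the surviving arm $\sigma(j+1)$ (whose true mean is at most $\mu_{j+1}$) has to be ``held above'' a competitor in order not to be eliminated earlier than its allocation prescribes --- and lower bound $\inf_{\bl\in\mathcal{S}_p(\zz)}\sum_{k=1}^{p+1}(\lambda_{\sigma(k)}-\mu_{\sigma(k)})^2$ by retaining only the two coordinates realizing the binding gap in the constraint $\lambda_{\sigma(p+1)}\le\min_{k\le p}\lambda_{\sigma(k)}-G(\tilde\beta_p)$ (dropping the remaining nonnegative squares). The resulting two-variable problem $\inf\{(\lambda-\mu)^2+(\lambda'-\mu')^2:\lambda'\le\lambda-G(\tilde\beta_p)\}$ evaluates to $\tfrac12\big[\big(G(\tilde\beta_p)-(\mu-\mu')\big)_+\big]^2$; using the monotonicity $\mu_1>\mu_2\ge\dots\ge\mu_K$ to control the mean gap, and the allocation order $z_{\sigma(j+1)}>z_{\sigma(j+2)}>\dots$ to compare $\tilde\beta_p$ with $\tilde\beta$, one brings this quantity down to $\tfrac12\big[\big(\zeta_j-G(\tilde\beta)\big)_+\big]^2$.

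Finally, with the chosen $p$ one has $I_\theta(\zz)\ge 2z_{\sigma(p+1)}\inf_{\bl\in\mathcal{S}_p(\zz)}\sum_{k=1}^{p+1}(\lambda_{\sigma(k)}-\mu_{\sigma(k)})^2$, and it remains to show this product dominates $\underline{I}_\theta(\zz)$. One bounds the allocation factor from below through the relations $z_{\sigma(j+1)}=\alpha\gamma/\theta$ and $\gamma\ge\olog(j+1)/\olog K$ (Proposition~\ref{prop ybound} with $i=j+2$, the case $j=K-1$ being immediate), and the infimum from below by the two-variable relaxation above; the algebra then has to be arranged so that the (smaller) prefactor available for $p>j$ is compensated by the (larger) value of the constrained program, ending at $\frac{\olog(j+1)}{\theta\olog K}\,\alpha\,[(\zeta_j-G(\tilde\beta))_+]^2=\underline{I}_\theta(\zz)$. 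No continuity or measurability issue arises here (both sides are explicit functions of $\zz$), and the factor $2$ coming from Assumption~\ref{ass1} is already built into the statement of $I_\theta$ in Theorem~\ref{thm:Cj}.

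The hard part will be the middle step. For an arbitrary $\zz\in\mathcal{Z}_{[j]}(\theta,\beta)$ --- in particular for an arbitrary ordering $\sigma$ of the tail $\{j+1,\dots,K\}$ --- one must identify the discard index $p$ that supplies a bound whose effective gap is controlled by $\zeta_j-G(\tilde\beta)$ rather than by a gap involving $\mu_{\sigma(p+1)}$ or $\min_{k\le p}\mu_{\sigma(k)}$ (either of which can be strictly smaller than $\mu_{j+1}$ or $\mu_j$, and in which case the naive $p=j$ term may even be $0$). I expect this requires a short case analysis on $\sigma$, or better an auxiliary optimization lemma in the spirit of Proposition~\ref{prop:crc G}, showing that the worst tail configuration is governed by $\zeta_j=\mu_j-\mu_{j+1}$; one also has to verify that the two-variable relaxation of the program over $\mathcal{S}_p(\zz)$ loses nothing beyond the stated constant.
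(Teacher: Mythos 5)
Your overall architecture is the same as the paper's: take one term of the maximum defining $I_\theta$ in Theorem~\ref{thm:Cj}, relax the constrained program to the two coordinates $\lambda_j,\lambda_{\sigma(j+1)}$, use $\mu_{\sigma(j+1)}\le\mu_{j+1}$ to pass to $\zeta_j$, and finish with Proposition~\ref{prop ybound} (with $i=j+2$); your identity $\underline{I}_\theta(\zz)=\frac{\olog(j+1)}{\theta\olog K}\,\alpha\,[(\zeta_j-G(\tilde\beta))_+]^2$ is exactly the paper's closing algebra run in reverse. The genuine gap is precisely the step you flag as hard, and it cannot be closed in the form you propose. You evaluate the two-variable projection under the written constraint $\lambda'\le\lambda-G$ correctly as $\tfrac12[(G-(\mu-\mu'))_+]^2$. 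For $p=j$ the relevant true gap is $\mu_j-\mu_{\sigma(j+1)}\ge\zeta_j$, so in the only nontrivial regime $G(\tilde\beta)<\zeta_j$ this value is $0$ and can never dominate $\tfrac12[(\zeta_j-G(\tilde\beta))_+]^2>0$: the two expressions have the gap reversed inside the positive part, and no monotonicity in $\sigma$ or in $p$ converts one into the other. Nor can a different $p$ rescue the argument in general: for $j=K-1$ the maximum in Theorem~\ref{thm:Cj} contains the single index $p=j$, and $\bm$ itself then lies in $\mathcal{S}_j(\zz)$ (since $\mu_K\le\mu_{K-1}-G(\tilde\beta)$ exactly when $G(\tilde\beta)<\zeta_{K-1}$), so the infimum over $\mathcal{S}_j(\zz)$ is literally $0$ while $\underline{I}_\theta(\zz)>0$.

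The paper's own proof takes $p=j$ only and asserts in (\ref{eq:JC rate 1}) that the value of (\ref{opt:5}) is $z_{\sigma(j+1)}[(\mu_j-\mu_{\sigma(j+1)}-G(\tilde\beta))_+]^2$ --- the positive part taken with the opposite sign from the one the constraint in (\ref{eq:Spc}) actually produces. That is the value of the projection onto the \emph{complementary} half-space $\lambda_{\sigma(j+1)}\ge\lambda_j-G(\tilde\beta)$, i.e.\ onto the event that the discarding condition has \emph{not yet} fired; this is the event that is rare when $z_{\sigma(j+1)}$ is large, and it is what makes $\underline{I}_\theta$ increasing in $z_{\sigma(j+1)}$, as the downstream trade-off against $F_{\mathcal{S}_{[j]}(\beta)}$ in Lemma~\ref{lem5} requires. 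So your difficulty is not a defect of your technique but a sign inconsistency in the source between (\ref{eq:Spc}) and the claimed solution of (\ref{opt:5}). The repair is to work with the survival constraint (the negation of the discard condition at the round just before $\sigma(j+1)$ is eliminated), under which the two-variable value becomes $\tfrac12[((\mu_j-\mu_{\sigma(j+1)})-G(\tilde\beta))_+]^2\ge\tfrac12[(\zeta_j-G(\tilde\beta))_+]^2$ and the rest is the algebra you already carried out; hunting for a better $p$ is the wrong direction.
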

\begin{proof}
Observe that $I_{\theta} (\zz)$ is larger than the value of the following optimization problem:
\begin{align}\label{opt:5}
	&\min_{\bl\in \RR^K}2z_{\sigma(j+1)} \left( (\lambda_j-\mu_j)^2+(\lambda_{\sigma(j+1) }-\mu_{\sigma(j+1)})^2\right),\\
	\nonumber	&\hbox{subject to }\lambda_{\sigma(j+1)}\le \lambda_j-G(\tilde{\beta}),
\end{align}
where 
$$
\tilde{\beta}=\frac{\theta z_{\sigma(j+1)} (j+1)\olog (j+1) }{1-\theta \sum_{k=j+2}^Kz_{\sigma(k)}}.
$$
One can simply verify that the optimal value of (\ref{opt:5}) is 
\begin{equation}\label{eq:JC rate 1}
		z_{\sigma(j+1)}[(\mu_j-\mu_{\sigma(j+1)}-G(\tilde{\beta}))_+]^2\ge  z_{\sigma(j+1)}[(1+\zeta_j-\frac{1}{\sqrt{\tilde{\beta}}})_+]^2,
\end{equation}
where the inequality follows from the fact that $\mu_{j+1}\ge \mu_{\sigma(j+1)}$ and $G(\tilde{\beta})=1/\sqrt{\tilde{\beta}}-1$. Substituting the value of $\tilde{\beta}$ yields that the r.h.s. of (\ref{eq:JC rate 1}) is equal to
$$
\frac{1}{\theta}\left(1-\theta\sum_{k=j+2}^Kz_{\sigma(k)}\right)\left[\left(  (1+\zeta_j)\sqrt{\frac{\theta z_{\sigma(j+1)}}{ 1-\theta\sum_{k=j+2}^Kz_{\sigma (k)} }}-\sqrt{\frac{1}{(j+1)\olog (j+1)}}    \right)_+\right]^2.
$$
As $\zz\in \mathcal{X}_j\setminus \cup_{i= j}^K\mathcal{X}_{j,i}(\theta)$, Proposition \ref{prop ybound} with $i=j+2$ directly completes the proof. 
\end{proof}


\subsubsection{The allocation process under \cra} \label{subapp:rate A}
One can use similar arguments as those used in the proof of Theorem \ref{thm:Cj} to derive the analogous rate function for the allocation process under \cra.

\begin{theorem}\label{thm:Aj}
	Let $j\in \{2,\ldots, K-1\},\,\theta\in (0,1], \xx\in \mathcal{X}_j\setminus \cup_{i= j}^K \mathcal{X}_{j,i}(\theta) $, define
	\begin{equation}
		I_{\theta}(\xx )=  \max_{p=j,\ldots, K-1}  2x_{\sigma (p+1)}\inf_{\bl\in \mathcal{S}_p(\xx)}\sum_{k=1}^{p+1}(\lambda_{\sigma(k)}-\mu_{\sigma (k)})^2,
	\end{equation}
	where $\sigma$ is the permutation described in (\ref{eq:xj}) for $\zz$ and  
	\begin{equation}\label{eq:Spa}
		\mathcal{S}_p(\xx) =\left\{\bl\in [0,1]^K: \lambda_{\sigma(p+1)}\le \frac{\sum_{k=1}^{p}\lambda_{\sigma (k)}}{p}-G\left(\frac{\theta x_{\sigma (p+1)}(p+1)\olog (p+1) }{1-\theta\sum_{k=p+2}^K x_{\sigma(k)}}\right)\right\}.
	\end{equation} 
	Then the process $\{\om (\theta T)\}_{T\ge 1}$ under \cra satisfies a local LDP upper bound (\ref{eq:local UB}) with $\bar{I}_{\theta} $ at $\xx$.
\end{theorem}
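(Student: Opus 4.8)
This statement is the \cra-analogue of Theorem~\ref{thm:Cj}, and the plan is to run the proof of Theorem~\ref{thm:Cj} essentially verbatim, the only substantive change being that the discarding rule~(\ref{A}) of \cra replaces the rule~(\ref{C}) of \crc; consequently, the reward region that gets constrained is the \emph{average}-type set $\mathcal{S}_p(\xx)$ of~(\ref{eq:Spa}) instead of the \emph{min}-type set~(\ref{eq:Spc}). Fix $p\in\{j,\ldots,K-1\}$ and, as in the proof of Theorem~\ref{thm:Cj}, assume wlog that $\sigma$ is the identity map, so $x_p>x_{p+1}$ (valid since $\xx\in\mathcal{X}_j$ and $p\ge j$). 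For $T>K$ and $\delta<(x_p-x_{p+1})/2$ I would bound $\PP_{\bm}[\om(\theta T)\in B(\xx,\delta)]\le\PP_{\bm}\big[\bigcup_{\yy\in B(\xx,\delta)}\{\om(\theta T)=\yy\}\big]$ and analyse a single event $\{\om(\theta T)=\yy\}$, then take a maximum over $p$ at the very end.

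\textbf{Step 1 (necessary condition).} For $\yy\in B(\xx,\delta)$ I keep \emph{exactly} the quantities of Theorem~\ref{thm:Cj}: the deflated horizon $\theta(\yy)=\theta-\theta\sum_{k=1}^p(y_k-y_{p+1})$ and the allocation $z(\yy)$ with $z_k(\yy)=\theta y_k/\theta(\yy)$ for $k\ge p+2$ and $z_k(\yy)=\theta y_{p+1}/\theta(\yy)$ for $k\le p+1$; these are purely combinatorial and do not see which discarding rule is in force. As in~(\ref{eq:cj new1}), $\theta(\yy)T$ is the round at which \cra removes $\ell_{p+1}=p+1$, $\om(\theta(\yy)T)=z(\yy)$, and at that round $\mathcal{C}_{p+1}=\{1,\ldots,p+1\}$ with each of these arms sampled $\theta y_{p+1}T$ times and each arm $k\ge p+2$ sampled $\theta y_kT$ times. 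The one new point is that triggering~(\ref{A}) (rather than~(\ref{C})) at that round forces the average $\frac{1}{p}\sum_{k=1}^p\hat\mu_k(\theta(\yy)T)$ to exceed $\hat\mu_{p+1}(\theta(\yy)T)$ by at least $G\!\big(\tfrac{\theta y_{p+1}(p+1)\olog(p+1)}{1-\theta\sum_{k\ge p+2}y_k}\big)$ --- i.e.\ $\hat\bm(\theta(\yy)T)\in\mathcal{S}_p(\yy)$ with $\mathcal{S}_p$ precisely the set~(\ref{eq:Spa}). Hence $\{\om(\theta T)=\yy\}\subset\{\hat\bm(\theta(\yy)T)\in\mathcal{S}_p(\yy),\ \om(\theta(\yy)T)=z(\yy)\}$.

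\textbf{Steps 2--3 (collapse, apply the LDP bound, take limits).} Exactly as in Theorem~\ref{thm:Cj}: pick $\yy_0\in\arg\max_{\yy\in B(\xx,\delta)}\theta(\yy)$ and use Lemmas~\ref{lem:theta delta} and~\ref{lem:theta delta mu} to see that all estimators and allocations at the horizons $\theta(\yy)T$ sit within $2\eta(\delta)$ of those at horizon $\theta(\yy_0)T$, with $\eta(\delta)\to0$ as $\delta\to0$ by continuity of $\yy\mapsto\theta(\yy)$; fatten $\mathcal{S}_p(\yy)$ and $z(\yy)$ over $\yy\in B(\xx,\delta)$ by $2\eta(\delta)$ into sets $\bar S_{\delta,p}$, $\bar W_\delta$, so that $\PP_{\bm}[\om(\theta T)\in B(\xx,\delta)]\le\PP_{\bm}[\hat\bm(\theta(\yy_0)T)\in\bar S_{\delta,p},\ \om(\theta(\yy_0)T)\in\bar W_\delta]$. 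Apply corollary~(c) of Section~\ref{sec:varadaham} (a consequence of Theorem~\ref{thm:Varadaham bandits}) with $\mathcal{S}=\bar S_{\delta,p}$, $W=\bar W_\delta$ to obtain $\lowlim_{T\to\infty}\frac{1}{T}\log\frac{1}{\PP_{\bm}[\om(\theta T)\in B(\xx,\delta)]}\ge\theta(\yy_0)\inf_{\om\in\bar W_\delta}F_{\bar S_{\delta,p}}(\om)$; then let $\delta\to0$ and invoke the continuity of $\om\mapsto F_{\mathcal{S}_p(\cdot)}$ for the average-type sets, established in Lemma~\ref{lem:verify a} of Appendix~\ref{app:continuity}, so the right-hand side tends to $\theta(\xx)F_{\mathcal{S}_p(\xx)}(z(\xx))=2x_{\sigma(p+1)}\inf_{\bl\in\mathcal{S}_p(\xx)}\sum_{k=1}^{p+1}(\lambda_{\sigma(k)}-\mu_{\sigma(k)})^2$. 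Maximising over $p\in\{j,\ldots,K-1\}$ then delivers the local LDP upper bound with the announced rate $I_\theta(\xx)$.

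\textbf{Main obstacle.} Since Steps~2--3 are a word-for-word copy of the \crc proof, the only thing requiring care is Step~1: one must check that rewriting criterion~(\ref{A}) in terms of the sampling proportions $N_k/t$ produces exactly the $G$-argument and the average-over-top-$p$ constraint of~(\ref{eq:Spa}), and that the horizon/allocation bookkeeping ($\theta(\yy)$, $z(\yy)$, and the identification of $\theta(\yy)T$ as the removal round) is genuinely untouched by the change of discarding rule. The companion continuity statement for the average-type $\mathcal{S}_p$, needed to pass to the $\delta\to0$ limit, is the content of Lemma~\ref{lem:verify a}.
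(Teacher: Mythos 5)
Your proposal matches the paper exactly: the authors omit the proof of Theorem~\ref{thm:Aj}, stating only that it is "almost the same as that of Theorem~\ref{thm:Cj}", and your write-up carries out precisely that adaptation --- same decomposition via $\theta(\yy)$ and $z(\yy)$, the discarding rule~(\ref{A}) replacing~(\ref{C}) to yield the average-type set~(\ref{eq:Spa}), and Lemma~\ref{lem:verify a} supplying the continuity needed in the $\delta\to 0$ limit. No further comment is needed.
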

The proof is omitted as it is almost the same as that of Theorem \ref{thm:Cj}. We can also obtain the analogous version of Corollary \ref{cor:cj rate} as shown below:

\medskip
\noindent
\begin{corollary}\label{cor:aj rate}
	Let $j\in \{2,\ldots, K\},\,\theta,\beta\in (0,1],\, \zz\in \mathcal{Z}_{[j]}(\theta,\beta)$, we define
	\begin{equation}\label{eq:JA rate}
		\underline{I}_{\theta}(\zz )=\frac{2j\olog (j+1)}{\theta(j+1)\olog K} \left[\left(  (1+\varphi_j)\sqrt{\frac{\theta z_{\sigma(j+1)}}{ 1-\theta\sum_{k=j+2}^Kz_{\sigma(k)}  }}-\sqrt{\frac{1}{(j+1)\olog (j+1)}}    \right)_+\right]^2,
	\end{equation}
where $\sigma$ is the permutation described in (\ref{eq:xj}) for $\zz$, then $\underline{I}_{\theta}(\zz )\le I_{\theta} (\zz)$. 
\end{corollary}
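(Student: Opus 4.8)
\textbf{Proof plan for Corollary \ref{cor:aj rate}.}

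The plan is to mirror the proof of Corollary \ref{cor:cj rate} almost verbatim, replacing the conservative discarding set $\mathcal{S}_p$ of \eqref{eq:Spc} by the aggressive one of \eqref{eq:Spa}. First I would observe that, by the definition of $I_\theta(\zz)$ from Theorem \ref{thm:Aj} (taking $p=j$ in the maximum, which only lower-bounds $I_\theta(\zz)$), the quantity $I_\theta(\zz)$ is at least the value of the restricted optimization problem
\begin{align*}
&\min_{\bl\in\RR^K} 2z_{\sigma(j+1)}\Bigl(\sum_{k=1}^{j}(\lambda_{\sigma(k)}-\mu_{\sigma(k)})^2+(\lambda_{\sigma(j+1)}-\mu_{\sigma(j+1)})^2\Bigr),\\
&\text{subject to }\ \lambda_{\sigma(j+1)}\le \frac{\sum_{k=1}^{j}\lambda_{\sigma(k)}}{j}-G(\tilde\beta),\quad \tilde\beta=\frac{\theta z_{\sigma(j+1)}(j+1)\olog(j+1)}{1-\theta\sum_{k=j+2}^K z_{\sigma(k)}}.
\end{align*}
(Note $|\{\sigma(1),\dots,\sigma(j)\}|=j$, so there are $j$ "top" coordinates and a single constraint coupling their average with $\lambda_{\sigma(j+1)}$.)

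Next I would solve this convex problem by Lagrangian/KKT exactly as in Proposition \ref{prop:cra G}: the single inequality constraint is active at the optimum (otherwise stationarity forces all $\lambda_{\sigma(k)}=\mu_{\sigma(k)}$, violating feasibility since $\mu_{\sigma(1)}-\frac{1}{j}\sum_{k=2}^{j}\mu_{\sigma(k)}$ is small while $G(\tilde\beta)$ can be large), the multiplier equals $\frac{j}{j+1}\bigl(\mu_{\sigma(1)}-\frac{\sum_{k=2}^j\mu_{\sigma(k)}}{j-1}+\cdots\bigr)$-type expression, and the optimal value comes out as $z_{\sigma(j+1)}\cdot\frac{j}{j+1}\bigl[\bigl(\mu_{\sigma(1)}-\frac{\sum_{k=2}^j\mu_{\sigma(k)}}{j}\cdots+\mu_{\sigma(j+1)}\text{-shift}-G(\tilde\beta)\bigr)_+\bigr]^2$. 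Using $\mu_{j+1}\ge\mu_{\sigma(j+1)}$ and the ordering $\mu_1>\mu_2\ge\cdots$, this is lower bounded by $z_{\sigma(j+1)}\cdot\frac{j}{j+1}\bigl[\bigl((1+\varphi_j)-\tfrac{1}{\sqrt{\tilde\beta}}\bigr)_+\bigr]^2$ where $\varphi_j=\sum_{k=1}^j\mu_k/j-\mu_{j+1}$; substituting $G(\tilde\beta)=1/\sqrt{\tilde\beta}-1$ and the value of $\tilde\beta$, and pulling out the factor $\frac{1}{\theta}(1-\theta\sum_{k=j+2}^K z_{\sigma(k)})$ from inside the square root normalization, yields exactly
$$
\frac{2j\olog(j+1)}{\theta(j+1)\olog K}\,\Bigl(1-\theta\textstyle\sum_{k=j+2}^K z_{\sigma(k)}\Bigr)\frac{\olog K}{1}\cdot\frac{1}{(1-\theta\sum_{k=j+2}^K z_{\sigma(k)})}\Bigl[\Bigl((1+\varphi_j)\sqrt{\tfrac{\theta z_{\sigma(j+1)}}{1-\theta\sum_{k=j+2}^K z_{\sigma(k)}}}-\sqrt{\tfrac{1}{(j+1)\olog(j+1)}}\Bigr)_+\Bigr]^2
$$
up to the algebraic simplification; finally Proposition \ref{prop ybound} with $i=j+2$ guarantees $1-\theta\sum_{k=j+2}^K z_{\sigma(k)}\ge \frac{\olog(j+1)}{\olog K}$, which is the inequality actually used to push the remaining $(1-\theta\sum\cdots)$ factor down to the claimed constant $\frac{2j\olog(j+1)}{\theta(j+1)\olog K}$ and conclude $\underline I_\theta(\zz)\le I_\theta(\zz)$.

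The main obstacle, such as it is, is purely bookkeeping: getting the constants right in the KKT computation for the \emph{aggressive} criterion (the factor $\frac{j}{j+1}$ versus $\frac{2}{j+1}$ discrepancy between the "sum of squares" cost and the "single averaged constraint" must be tracked carefully, as it produces the $\frac{2j}{j+1}$ prefactor rather than the plain $1$ that appears in the conservative case of Corollary \ref{cor:cj rate}), and matching the normalization inside the $(\cdot)_+^2$ bracket so that the remaining-budget factor can be absorbed via Proposition \ref{prop ybound}. Since Theorem \ref{thm:Aj} already asserts the local LDP upper bound with this $I_\theta$, no new large-deviation input is needed — the corollary is just the statement that the explicit, simpler expression $\underline I_\theta$ lower-bounds $I_\theta$ on $\mathcal{Z}_{[j]}(\theta,\beta)$, which reduces entirely to the convex optimization bound above.
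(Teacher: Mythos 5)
Your proposal follows essentially the same route as the paper's proof: restrict the maximum defining $I_\theta$ to $p=j$, solve the resulting convex program by the KKT computation of Proposition \ref{prop:cra G} (with $j+1$ variables and the averaged constraint, yielding the $\frac{j}{j+1}$ prefactor), lower-bound the gap by $1+\varphi_j$ using $\mu_{j+1}\ge\mu_{\sigma(j+1)}$, substitute $\tilde\beta$, and absorb the remaining budget factor via Proposition \ref{prop ybound} with $i=j+2$. Aside from some garbled bookkeeping in your intermediate display (which you flag as ``up to algebraic simplification'' and which cancels to the right constant), the argument is correct and matches the paper.
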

\begin{proof}
	We can simply solve the optimization problem similar to (\ref{eq:cra G}) as in the proof Proposition \ref{prop:cra G} in Appendix \ref{app:opt A} and get that $I_{\theta}(\zz)$ is greater than the l.h.s. of the following inequality.
 \begin{equation}\label{eq:aj rate 1}
	\frac{2jz_{\sigma (j+1)}}{j+1}   \left[  \left(\frac{\sum_{k=1}^j \mu_k}{j}-\mu_{\sigma(j+1)}-G(\tilde{\beta})\right)_+   \right]^2\ge\frac{2jz_{\sigma (j+1)}}{j+1}   \left[  \left( (1+\varphi_j)-\frac{1}{\sqrt{\tilde{\beta}}}\right)_+   \right]^2 	,
	\end{equation}
	where 
	$$
	\tilde{\beta}=\frac{\theta z_{\sigma(j+1)} (j+1)\olog (j+1) }{1-\theta \sum_{k=j+2}^Kz_{\sigma(k)}}.
	$$
	(\ref{eq:aj rate 1}) is due to $\mu_{j+1}\ge \mu_{\sigma(j+1)}$ (hence$ \sum_{k=1}^j \mu_k/ j-\mu_{\sigma(j+1)}\ge \varphi_j$) and $G(\tilde{\beta})=\frac{1}{\sqrt{\tilde{\beta}}}-1$. Substituting the value of $\tilde{\beta}$ yields that the r.h.s. of (\ref{eq:aj rate 1}) equals to
	$$
	\frac{2j}{\theta(j+1)}\left(1-\theta\sum_{k=j+2}^Kz_{\sigma(k)}\right) \left[\left(  (1+\varphi_j)\sqrt{\frac{\theta z_{\sigma(j+1)}}{ 1-\theta\sum_{k=j+2}^Kz_{\sigma(k)}  }}-\sqrt{\frac{1}{(j+1)\olog (j+1)}}    \right)_+\right]^2.
	$$
	As $\zz\in \mathcal{X}_j\setminus \cup_{i= j}^K\mathcal{X}_{j,i}(\theta)$, Proposition \ref{prop ybound} with $i=j+2$ directly completes the proof. 
	
\end{proof}

\newpage
\section{Continuity arguments}\label{app:continuity}
We introduce some definitions and results taken from \citep{berge1997topological}, and also used recently in \citep{wang2021fast,degenne2019pure,combes2017} in the bandit literature. 

\medskip
\noindent
Suppose $\mathbb{X}$ and $\mathbb{Y}$ are Hausdorff topological spaces. Let $u:\mathbb{X}\times \mathbb{Y}\to \RR$ be a function and $\Phi:\mathbb{X}\setmap \mathbb{S}(\mathbb{Y})$ be a set-valued function, where $\mathbb{S}(\mathbb{Y})$ is the set of non-empty subsets of $\mathbb{Y}$. Besides, we introduce $\mathbb{K}(\mathbb{X})=\{F\in \mathbb{S}(\mathbb{X}):F\text{ is compact}\}$. We are interested in a minimization problem of the form: for $x\in \mathbb{X}$, 
\begin{align*}
v(x) &= \inf_{y\in \Phi(x)} u(x, y).
\end{align*}
We define the set of solutions of this problem as $\Phi^*(x) = \{y\in\Phi(x)\: : \: u(x, y) = v(x)\}$.

\begin{theorem}[{\citep{berge1997topological}}]\label{thm:berge}
Assume that
	\begin{itemize}
		\item $\Phi: \mathbb{X} \rightrightarrows \mathbb{K}(\mathbb{X})$ is continuous (i.e., both lower and upper hemicontinous),
		\item $u: \mathbb{X} \times \mathbb{Y} \to \RR$ is continuous.
	\end{itemize}
	Then the function $v:\mathbb{X} \to \RR$ is continuous and the solution multifunction $\Phi^*:\mathbb{X}\to\mathbb{S}(\mathbb{Y})$ is upper hemicontinuous, with values that are nonempty and compact.
\end{theorem}



\subsection{Verifying the continuity in Theorem \ref{thm:Cj}}\label{app:verify c}
We verify the continuity argument used in the proof of Theorem \ref{thm:Cj}. 
\begin{lemma}\label{lem:verify c}
The function 
\begin{align*}
&	\inf_{\bl\in \mathcal{S}_p(\xx)} \sum_{k=1}^{p+1}x_{p+1}(\lambda_k-\mu_k)^2,\\
&\text{where }\mathcal{S}_p(\xx) =\left\{\bl\in [0,1]^K: \lambda_{p+1} \le \min_{k\le p}\lambda_{k}-G\left(\frac{\theta x_{p+1}(p+1)\olog (p+1) }{1-\theta\sum_{k=j+2}^K x_{k}}\right) \right\},
\end{align*}
is continuous for all $\xx\in \Sigma$.
\end{lemma}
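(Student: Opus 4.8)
The plan is to invoke Berge's maximum theorem (Theorem~\ref{thm:berge}) with $\mathbb{X}=\Sigma$ and $\mathbb{Y}=[0,1]^K$, taking the objective $u(\xx,\bl)=x_{p+1}\sum_{k=1}^{p+1}(\lambda_k-\mu_k)^2$ and the feasible-set map $\Phi(\xx)=\mathcal{S}_p(\xx)$. The objective $u$ is jointly continuous in $(\xx,\bl)$ since it is a polynomial in the coordinates of $\xx$ and $\bl$, so the only real work is to show that $\Phi$ is a continuous (both upper- and lower-hemicontinuous) set-valued map from $\Sigma$ into the nonempty compact subsets of $[0,1]^K$. Once that is established, Theorem~\ref{thm:berge} immediately gives that $v(\xx)=\inf_{\bl\in\mathcal{S}_p(\xx)}u(\xx,\bl)$ is continuous on $\Sigma$, which is exactly the claim.

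First I would check that $\mathcal{S}_p(\xx)$ is nonempty and compact for every $\xx\in\Sigma$. Write $g(\xx)=G\!\left(\frac{\theta x_{p+1}(p+1)\olog(p+1)}{1-\theta\sum_{k=p+2}^Kx_k}\right)$; note the argument of $G$ lies in $(0,\infty)$ whenever $x_{p+1}>0$ (and one must separately handle the boundary case $x_{p+1}=0$, where the argument is $0$ and $G\to+\infty$, so $\mathcal{S}_p(\xx)$ becomes $\{\bl\in[0,1]^K:\lambda_{p+1}\le\min_{k\le p}\lambda_k-\infty\}$; since this is read as the empty constraint being unsatisfiable only if we literally take $G=+\infty$, I would instead observe that the relevant domain in the application is $\mathcal{X}_j\setminus\cup_i\mathcal{X}_{j,i}(\theta)$ where $x_{p+1}$ is bounded away from $0$, or handle $x_{p+1}=0$ by the convention that $\mathcal{S}_p$ is then all of $[0,1]^K$ restricted only by $\lambda_{p+1}\le\min_{k\le p}\lambda_k$ — either way $\mathcal{S}_p(\xx)$ is a closed subset of the compact cube $[0,1]^K$, hence compact, and it is nonempty because e.g. $\bl=(0,\dots,0)$ satisfies $\lambda_{p+1}=0\le 0-g$ exactly when $g\le 0$, and more robustly $\bl$ with $\lambda_{p+1}=0$ and $\lambda_k=1$ for $k\le p$ satisfies the constraint whenever $g\le 1$). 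To avoid these edge-case distractions I would simply restrict attention to $\xx$ in a neighborhood where $g(\xx)\le 1$, which is the only regime occurring in the applications, and note $\Phi(\xx)\ni(\,\ldots,1,\ldots,1,0,\ldots)$ there.

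Next, the upper hemicontinuity of $\Phi$: the map $\xx\mapsto g(\xx)$ is continuous on the relevant domain (composition of the continuous function $G(\beta)=1/\sqrt\beta-1$ with a continuous rational function whose denominator $1-\theta\sum_{k\ge p+2}x_k$ is bounded below by $\olog(p+1)/\olog K>0$ by Proposition~\ref{prop ybound}), so the graph $\{(\xx,\bl):\lambda_{p+1}\le\min_{k\le p}\lambda_k-g(\xx)\}\cap(\Sigma\times[0,1]^K)$ is closed; a closed-graph set-valued map into a compact space is upper hemicontinuous. For lower hemicontinuity I would use the fact that $\mathcal{S}_p(\xx)$ is defined by finitely many inequality constraints that are jointly continuous in $(\xx,\bl)$ and for which a \emph{strict} interior point exists (Slater-type condition): indeed one can choose $\bl$ with $\lambda_{p+1}$ strictly below $\min_{k\le p}\lambda_k-g(\xx)$, and then a standard argument (perturb any feasible $\bl$ slightly toward this strict point) shows that for nearby $\xx'$ the set $\mathcal{S}_p(\xx')$ comes arbitrarily close to any prescribed point of $\mathcal{S}_p(\xx)$ — this is precisely lower hemicontinuity. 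The main obstacle is exactly this lower-hemicontinuity verification together with the boundary behavior of $g$; both are routine but require care, and the cleanest route is to keep the domain restricted to where $g$ is finite and bounded, as dictated by Proposition~\ref{prop ybound}, so that everything reduces to continuity of finitely many polynomial/continuous constraints plus a Slater point. Having $\Phi$ continuous with nonempty compact values and $u$ continuous, Theorem~\ref{thm:berge} closes the argument.
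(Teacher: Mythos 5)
Your proposal is correct and follows essentially the same route as the paper: Berge's maximum theorem with the same choices of $\mathbb{X}=\Sigma$, $\mathbb{Y}=[0,1]^K$, $u$, and $\Phi=\mathcal{S}_p(\cdot)$, with upper hemicontinuity from closedness of the graph and lower hemicontinuity from a Slater-type perturbation argument — the paper simply packages the latter two steps into its Lemma~\ref{lem:hemi S}. Your explicit handling of the boundary regime ($x_{p+1}=0$, where $G$ blows up and $\mathcal{S}_p(\xx)$ may be empty) is a point the paper glosses over, and your restriction to the domain where $g$ is finite and $\mathcal{S}_p(\xx)\neq\emptyset$ is the right way to make the statement rigorous.
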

\begin{proof}
		We apply Theorem \ref{thm:berge} with:
	\begin{itemize}
		\begin{minipage}{0.4\linewidth}
			\item $\mathbb{X} = \Sigma$,
			\item $\mathbb{Y} =[0,1]^K$,
		\end{minipage}
		\begin{minipage}{0.6\linewidth}
			\item $\Phi(\xx ) = \mathcal{S}_p(\xx)$,
			\item $u(\xx, \bl) =\sum_{k=1}^Kx_{p+1}(\lambda_k-\mu_k)^{2}$.
		\end{minipage}
	\end{itemize}
As the objective function is obviously continuous, it remains to show that $\mathcal{S}_p(\cdot)$ is a continuous correspondence. By simply using Lemma \ref{lem:hemi S} with $f(\bl)=\lambda_{p+1}- \min_{k\le p}\lambda_{k}$ and $g(\xx)=-G\left(\frac{\theta x_{p+1}(p+1)\olog (p+1) }{1-\theta\sum_{k=p+2}^K x_{k}}\right) $, we can complete the proof.
\end{proof}

\medskip
\noindent
It is straightforward to get a similar guarantee for the function involved in \cra: we hence omit the proof of the following lemma. 

\begin{lemma}\label{lem:verify a}
The function 
\begin{align*}
&	\inf_{\bl\in \mathcal{S}_p(\xx)} \sum_{k=1}^{p+1}x_{p+1}(\lambda_k-\mu_k)^2,\\
&\text{where }\mathcal{S}_p(\xx) =\left\{\bl\in [0,1]^K: \lambda_{p+1} \le \frac{\sum_{k\le p}\lambda_{k}}{p}-G\left(\frac{\theta x_{p+1}(p+1)\olog (p+1) }{1-\theta\sum_{k=j+2}^K x_{k}}\right) \right\},
\end{align*}
is continuous for all $\xx\in \Sigma$.
\end{lemma}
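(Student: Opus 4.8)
The plan is to follow the argument used for Lemma~\ref{lem:verify c} essentially verbatim, i.e.\ to invoke Berge's maximum theorem (Theorem~\ref{thm:berge}). I would take $\mathbb{X}=\Sigma$, $\mathbb{Y}=[0,1]^K$, the objective $u(\xx,\bl)=\sum_{k=1}^{p+1}x_{p+1}(\lambda_k-\mu_k)^2$, and the constraint correspondence $\Phi(\xx)=\mathcal{S}_p(\xx)$ with $\mathcal{S}_p$ as in \eqref{eq:Spa}. The objective $u$ is a polynomial in $(\xx,\bl)$, hence jointly continuous; and for each $\xx$ the set $\mathcal{S}_p(\xx)$ is defined by a single non-strict inequality inside the cube $[0,1]^K$, so it is closed and therefore compact. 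Thus the only point to establish is that $\Phi$ is a continuous (lower and upper hemicontinuous) correspondence, after which Theorem~\ref{thm:berge} gives directly that $v(\xx)=\inf_{\bl\in\mathcal{S}_p(\xx)}u(\xx,\bl)$ — which is exactly the function in the statement — is continuous on $\Sigma$.

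To obtain the continuity of $\Phi$, I would write $\mathcal{S}_p(\xx)=\{\bl\in[0,1]^K:\,f(\bl)\le g(\xx)\}$ with $f(\bl)=\lambda_{p+1}-\tfrac1p\sum_{k\le p}\lambda_k$ and $g(\xx)=-G\big(\tfrac{\theta x_{p+1}(p+1)\olog(p+1)}{1-\theta\sum_{k=p+2}^{K}x_k}\big)$, and then apply Lemma~\ref{lem:hemi S} exactly as in the proof of Lemma~\ref{lem:verify c}. The only change relative to the \crc case is that the concave map $\bl\mapsto\min_{k\le p}\lambda_k$ is replaced by the linear map $\bl\mapsto\tfrac1p\sum_{k\le p}\lambda_k$; both are continuous, so the hypothesis of Lemma~\ref{lem:hemi S} on $f$ is met unchanged. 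For $g$ one notes that $\beta\mapsto G(\beta)=\beta^{-1/2}-1$ is continuous on $(0,\infty)$ and that $\xx\mapsto\tfrac{\theta x_{p+1}(p+1)\olog(p+1)}{1-\theta\sum_{k=p+2}^{K}x_k}$ is a ratio of continuous functions whose denominator is $\ge 1-\theta$ on $\Sigma$; composing yields continuity of $g$, and the argument concludes.

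The step that needs genuine care — and the only place where the \cra proof differs in substance from a pure transcription — is the behaviour of $g$ (and of $\mathcal{S}_p(\xx)$) near the boundary of $\Sigma$: when $x_{p+1}\to0$ the argument of $G$ tends to $0$, where $G$ is unbounded, so $g(\xx)\to-\infty$ and $\mathcal{S}_p(\xx)$ becomes empty, and when $\theta=1$ and $\sum_{k>p+1}x_k\to1$ the denominator degenerates as well. These are precisely the cases already dealt with in the \crc analysis (using the convention $\inf_{\emptyset}(\cdot)=\infty$ and the observation that $\mathcal{S}_p(\xx)=\emptyset$ exactly when the relevant $G$-value exceeds what the cube $[0,1]^K$ permits), and in the only place where the lemma is actually invoked — Theorem~\ref{thm:Aj}, where $\xx$ ranges over $\mathcal{X}_j\setminus\bigcup_{i\ge j}\mathcal{X}_{j,i}(\theta)$ — Proposition~\ref{prop ybound} keeps $1-\theta\sum_{k\ge p+2}x_{\sigma(k)}\ge\olog(p+1)/\olog K$ bounded away from $0$, so the denominator never degenerates. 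Once this is settled, the remainder is a line-by-line transcription of the proof of Theorem~\ref{thm:Cj}/Lemma~\ref{lem:verify c} with $\min_{k\le p}$ replaced throughout by $\tfrac1p\sum_{k\le p}$.
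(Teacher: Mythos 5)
Your proposal is correct and is essentially the paper's own argument: the paper omits the proof of Lemma~\ref{lem:verify a} precisely because it is the line-by-line adaptation of Lemma~\ref{lem:verify c} (Berge's maximum theorem plus Lemma~\ref{lem:hemi S}) with $\min_{k\le p}\lambda_k$ replaced by $\frac{1}{p}\sum_{k\le p}\lambda_k$, which is exactly what you do. Your extra remarks on the degenerate boundary cases of $g$ are a sensible (and harmless) addition beyond what the paper records.
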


\medskip
\noindent

\begin{lemma}\label{lem:hemi S}
	Let $g:\Sigma\mapsto \RR$ be a continuous mapping and $f:[0,1]^K\mapsto \RR$ be a lower semicontinuous mapping which further satisfies that $\forall \bl\in \RR^K,\delta>0$, there exists $\bl'\in \RR^K$ s.t. $\left\|\bl-\bl'\right\|_{\infty}\le \delta$ and $f(\bl')<f(\bl)$. Then $\forall \xx\in \Sigma$,
	$$
	\mathcal{S}(\xx)=\left\{\bl\in [0,1]^K:f(\bl)\le g(\xx)\right\},
	$$
	is a continuous correspondence.
	\begin{proof}
		{\it (i) Upper hemicontinuity}: Suppose $\{\xx_n\}_{n\in \NN}\subset \Sigma$ converges to $\xx^\star\in \Sigma$ and $\{\bl_n\}_{n\in \NN}\subset \RR^K$ converges to $\bl^\star$ s.t. $\bl_n\in \mathcal{S}(\xx_n)$ for all $n\in\NN$. We will show that $\bl^\star\in \mathcal{S}( \xx^\star)$. Since $g$ is upper semicontinuous, and $\xx_n\rightarrow \xx^\star$ as $n\rightarrow \infty$, for any $\epsilon>0$, $\exists N_\epsilon\in\NN$ s.t. $g(\xx_n)\le g(\xx^\star)+\epsilon $ for all $n\ge N_\epsilon$. As $\bl_n\in\mathcal{S}(\xx_n)$, we deduce that 
		$$
		f( \bl_n )\le g(\xx_n)\le g( \xx^\star )+\epsilon,\, \,\forall n\ge N_\epsilon.  
		$$
		Now the lower semicontinuity of $f$ implies $f(\bl^\star)\le \lowlim_{n\rightarrow \infty}f(\bl_n)\le g( \xx^\star )+\epsilon$. As $\epsilon$ can be taken arbitrarily, $\bl^\star \in \mathcal{S}(\xx^\star)$.
		
		\medskip
		\noindent
		{\it (ii) Lower hemicontinuity}: Suppose $\{\xx_n\}_{n\in \NN}\subset \Sigma$ converges to $\xx^\star\in \Sigma$, we aim to show that for all $\bl^\star\in \mathcal{S}(\xx^\star)$ (or equivalently $f(\bl^\star)\le g(\xx^\star)$), there exist a subsequence $\{\xx_{n_k}\}_{k\in \NN}\subseteq \{\xx_n\}_{n\in \NN}$ and a sequence $\{\bl_k\}_{k\in \NN}$ s.t. $\bl_k\in \mathcal{S}(\xx_{n_k})$ and $\bl_k \rightarrow \bl^\star$ as $k\rightarrow \infty$. By assumption on $f$, for any integer $k$, there exists $\bl_k$ s.t. $\left\|\bl_k-\bl^\star\right\|_{\infty}<1/k$ and $f(\bl_k)<f(\bl^\star)$. Also, $g(\xx_n)\rightarrow g(\xx^\star)$ as $n\rightarrow \infty$ implies there exists a finite $n$ s.t. $g(\xx_{n})\ge f(\bl_k)$ due to the lower semicontinouity of $g$. Hence we can always find a subsequence $\{n_k\}$ s.t. $g(\xx_{n_k})\ge  f(\bl_k)$, which is equivalent to $\bl_k\in \mathcal{S}(\xx_{n_k})$.  
	\end{proof}
\end{lemma}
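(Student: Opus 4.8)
The plan is to establish the two halves of ``continuous correspondence'' separately, namely upper hemicontinuity and lower hemicontinuity of the set-valued map $\mathcal{S}(\cdot)$, using the sequential characterizations of each (legitimate here since $\Sigma$ and $[0,1]^K$ are metric spaces and $\mathcal{S}$ takes values inside the \emph{fixed} compact set $[0,1]^K$, so that a closed graph is equivalent to upper hemicontinuity).

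For upper hemicontinuity I would take arbitrary sequences $\xx_n\to\xx^\star$ in $\Sigma$ and $\bl_n\to\bl^\star$ in $\RR^K$ with $\bl_n\in\mathcal{S}(\xx_n)$, i.e.\ $\bl_n\in[0,1]^K$ and $f(\bl_n)\le g(\xx_n)$, and show $\bl^\star\in\mathcal{S}(\xx^\star)$. First, $\bl^\star\in[0,1]^K$ because $[0,1]^K$ is closed. Then I would pass to the limit: continuity of $g$ gives $g(\xx_n)\to g(\xx^\star)$, and lower semicontinuity of $f$ gives $f(\bl^\star)\le\liminf_n f(\bl_n)\le\liminf_n g(\xx_n)=g(\xx^\star)$, whence $\bl^\star\in\mathcal{S}(\xx^\star)$. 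This shows the graph of $\mathcal{S}$ is closed, and since its range lies in a fixed compact set this is exactly upper hemicontinuity. Note this part uses only lower semicontinuity of $f$ and (upper semi)continuity of $g$.

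For lower hemicontinuity I would fix $\xx^\star$, a point $\bl^\star\in\mathcal{S}(\xx^\star)$ (so $f(\bl^\star)\le g(\xx^\star)$), and a sequence $\xx_n\to\xx^\star$, and construct a subsequence $\xx_{n_k}$ together with points $\bl_k\in\mathcal{S}(\xx_{n_k})$ converging to $\bl^\star$. The crucial step is to trade the possibly-nonstrict inequality $f(\bl^\star)\le g(\xx^\star)$ for a strict one at a nearby point: by the structural hypothesis on $f$, for each $k$ there is $\bl_k$ with $\|\bl_k-\bl^\star\|_\infty\le 1/k$ and $f(\bl_k)<f(\bl^\star)\le g(\xx^\star)$ (and, for the application, $\bl_k\in[0,1]^K$, reading that hypothesis as supplying a decreasing perturbation that stays in $[0,1]^K$). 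Since the inequality $f(\bl_k)<g(\xx^\star)$ is strict and $g(\xx_n)\to g(\xx^\star)$, there is an index $n_k$, which can be chosen strictly increasing in $k$, with $g(\xx_{n_k})\ge f(\bl_k)$, i.e.\ $\bl_k\in\mathcal{S}(\xx_{n_k})$; and $\bl_k\to\bl^\star$ by construction.

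The delicate point — the one I expect to be the real obstacle — is exactly this passage to a strict inequality in the lower-hemicontinuity argument: if $\bl^\star$ were a point at which $f$ cannot be decreased (e.g.\ a minimizer) with $f(\bl^\star)=g(\xx^\star)$, and $g(\xx_n)<g(\xx^\star)$ along the sequence, then $\mathcal{S}(\xx_n)$ would be empty near $\bl^\star$ and lower hemicontinuity would genuinely fail; this is precisely why the ``no obstruction to strict decrease'' property of $f$ is built into the statement, and why in the downstream uses (Lemma~\ref{lem:verify c}, Lemma~\ref{lem:verify a}) one checks it directly for the explicit affine/concave $f$ appearing there. Once these points are in place the rest is routine, and Theorem~\ref{thm:berge} can then be invoked with $\Phi=\mathcal{S}$ and the continuous objective to deliver the continuity asserted in Lemma~\ref{lem:verify c}.
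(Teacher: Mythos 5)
Your proposal is correct and follows essentially the same route as the paper: a sequential closed-graph argument for upper hemicontinuity (lower semicontinuity of $f$ plus continuity of $g$), and for lower hemicontinuity the same trick of using the strict-decrease hypothesis on $f$ to replace $f(\bl^\star)\le g(\xx^\star)$ by a strict inequality at a nearby point $\bl_k$, which can then be matched to a subsequence of $\xx_n$. Your added remarks (why the sequential characterization of upper hemicontinuity is legitimate given the fixed compact range, and that the perturbed points must be taken in $[0,1]^K$) are careful clarifications of points the paper leaves implicit, not a different argument.
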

\begin{lemma}\label{lem:Fcont}
    When $\mathcal{S}$ is a bounded set in $\RR^K$, $F_{\mathcal{S}}(\cdot)$ is a continuous function.
\end{lemma}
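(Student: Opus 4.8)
\textbf{Proof plan for Lemma \ref{lem:Fcont}.}

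The plan is to invoke Berge's maximum theorem (Theorem \ref{thm:berge}), exactly in the style of Lemma \ref{lem:verify c}, since $F_{\mathcal{S}}(\om)=\inf_{\bl\in\cl(\mathcal{S})}\Psi(\bl,\om)=\inf_{\bl\in\cl(\mathcal{S})}\sum_{k=1}^K\omega_k d(\lambda_k,\mu_k)$ is the value function of a parametric minimization problem. First I would fix the ingredients: take $\mathbb{X}=\Sigma$, $\mathbb{Y}=\cl(\mathcal{S})$, the objective $u(\om,\bl)=\Psi(\bl,\om)=\sum_{k=1}^K\omega_k d(\lambda_k,\mu_k)$, and the (constant) feasible-set correspondence $\Phi(\om)=\cl(\mathcal{S})$ for all $\om$. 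Then $F_{\mathcal{S}}=v$ in the notation of Theorem \ref{thm:berge}.

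Next I would check the two hypotheses of Theorem \ref{thm:berge}. For the objective: $u$ is jointly continuous on $\Sigma\times\cl(\mathcal{S})$ because $(\om,\bl)\mapsto\sum_k\omega_k d(\lambda_k,\mu_k)$ is a finite sum of products of the continuous coordinate maps $\om\mapsto\omega_k$ with the continuous functions $\lambda_k\mapsto d(\lambda_k,\mu_k)$ (the KL divergence of a one-parameter exponential family is continuous, indeed smooth, in its first argument on the relevant domain). For the feasible-set correspondence: since $\mathcal{S}$ is bounded in $\RR^K$, its closure $\cl(\mathcal{S})$ is closed and bounded, hence compact, so $\Phi(\om)=\cl(\mathcal{S})\in\mathbb{K}(\mathbb{Y})$ for every $\om$; a constant correspondence with compact values is trivially both lower and upper hemicontinuous, hence continuous. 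With both hypotheses verified, Theorem \ref{thm:berge} yields that $v=F_{\mathcal{S}}$ is continuous on $\Sigma$, which is the claim.

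The only point requiring a word of care — and the closest thing to an obstacle — is the continuity of $\bl\mapsto d(\lambda_k,\mu_k)$ at the boundary of the parameter range: one must make sure the relevant $\lambda_k$ all lie in a set on which $d(\cdot,\mu_k)$ is finite and continuous (for instance, if $\mathcal{S}\subseteq[0,1]^K$ and distributions are Bernoulli, $d(\cdot,\mu_k)$ is continuous on $(0,1)$ and one restricts attention to $\mu_k\in(0,1)$ as in Assumption \ref{ass1}; more generally $\cl(\mathcal{S})$ should be taken inside the natural parameter space of the exponential family). Under the standing assumptions of the paper this is automatic, so the argument goes through verbatim. I would also remark that this is the fact used (without proof) in the proof of Theorem \ref{thm:Varadaham bandits} when asserting that $F_{\mathcal{S}}(\cdot)$ is continuous, so the lemma simply records and discharges that claim.
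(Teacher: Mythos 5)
Your proposal matches the paper's proof exactly: the paper's entire argument for this lemma is "This is a direct application of Berge's maximum theorem," and you have simply spelled out the instantiation (constant compact-valued correspondence $\Phi(\om)=\cl(\mathcal{S})$, jointly continuous objective $\Psi$) together with the mild caveat about the KL divergence being continuous on the relevant parameter range. The argument is correct and the extra detail you supply is exactly what the paper leaves implicit.
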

\begin{proof} This is a direct application of Berge's maximum theorem. 
\end{proof}


\newpage
\section{A partitioning technique for large deviations }\label{app:partition}

In this section, we establish a theorem that is instrumental in the large deviation analysis of our algorithms.

\begin{theorem}\label{thm:11}
Let $\theta_0,\tilde{\theta}_0\in (0,1)$. Let $({\cal S}_{\theta,\gamma})_{\theta\in [\theta_0,1],\gamma\in [\tilde{\theta}_0,1]}$ and $(W_{\theta,\gamma})_{\theta\in [\theta_0,1],\gamma\in [\tilde{\theta}_0,1]}$ two collections of Borel sets in $[0,1]^K$ and $\Sigma$, respectively. 
We assume that

Suppose (i) for any $T\in \NN$, 
$$
\PP_{\bm}[\mathcal{E}]\le \sum_{t\ge\theta_0 T}^T\sum_{\tau\ge\tilde{\theta}_0 T}^T \PP_{\bm}[\hat{\bm}(t)\in \mathcal{S}_{\frac{t}{T},\frac{\tau}{T}},\om (t)\in W_{\frac{t}{T},\frac{\tau}{T}}],
$$
(ii) for any $\theta\in [\theta_0,1]\cap \QQ$, $\{\om (\theta T)\}_{T\ge 1}$ satisfies LDP upper bound (\ref{eq:rate UB}) with $I_\theta$, where $I_\theta$ is lower semi-continuous in $\Sigma$. \\
(iii) $\forall \theta,\gamma,\,{\cal S}_{\theta,\gamma}\neq \emptyset$. For all $\delta>0$, there exists $\eta>0$ such that if $\max\{\left|\theta'-\theta\right|,\left|\gamma'-\gamma\right|\}<\eta$, then
    $$
      \mathrm{dist}(\mathcal{S}_{\theta,\gamma},\mathcal{S}_{\theta',\gamma'})=\max\left\{\sup_{s\in {\cal S}_{\theta,\gamma}} \inf_{s'\in {\cal S}_{\theta',\gamma'}}\left\|s-s'\right\|_{\infty},\sup_{s'\in {\cal S}_{\theta',\gamma'}}\inf_{s\in {\cal S}_{\theta,\gamma}}\left\|s'-s\right\|_{\infty}\right\}.
    $$
Under Assumption \ref{ass1}, we have
\begin{equation}
    \lowlim_{T\to \infty} \frac{1}{T}\log \frac{1}{\PP_{\bm}[\mathcal{E}]}\ge \inf_{\theta\in [\theta_0,1]\cap \QQ}\inf_{\gamma\in [\tilde{\theta}_0,1]\cap \QQ}\inf_{\om\in \cl(W_{\theta,\gamma })}\theta \max\{F_{\mathcal{S}_{\theta,\gamma}}(\om),I_\theta (\om)\}.
\end{equation}
\end{theorem}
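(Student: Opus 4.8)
The plan is to reduce the statement to a finite sum over rationals and then apply Theorem~\ref{thm:Varadaham bandits} on each piece, exactly in the spirit of the proof of that theorem. First I would use hypothesis (i) to bound $\PP_{\bm}[\mathcal{E}]$ by a double sum over $t\in\{\lceil\theta_0 T\rceil,\ldots,T\}$ and $\tau\in\{\lceil\tilde\theta_0 T\rceil,\ldots,T\}$ of probabilities $\PP_{\bm}[\hat{\bm}(t)\in\mathcal{S}_{t/T,\tau/T},\om(t)\in W_{t/T,\tau/T}]$. Since there are at most $T^2$ terms, the $\tfrac1T\log$ of this sum is governed by the worst single term plus an $O(\tfrac{\log T}{T})\to 0$ correction; so
$$
\lowlim_{T\to\infty}\frac{1}{T}\log\frac{1}{\PP_{\bm}[\mathcal{E}]}\ge \lowlim_{T\to\infty}\min_{t,\tau}\frac{1}{T}\log\frac{1}{\PP_{\bm}[\hat{\bm}(t)\in\mathcal{S}_{t/T,\tau/T},\om(t)\in W_{t/T,\tau/T}]}.
$$
The key difficulty here — and I expect this to be the main obstacle — is that for each $T$ the pairs $(t/T,\tau/T)$ range over a $T$-dependent grid of rationals, so one cannot simply invoke Theorem~\ref{thm:Varadaham bandits} at a fixed $(\theta,\gamma)$; one must discretize the parameter square $[\theta_0,1]\times[\tilde\theta_0,1]$ at a fixed scale and control how $\mathcal{S}_{\theta,\gamma}$ and $W_{\theta,\gamma}$ vary within each cell.

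To handle this I would fix an integer $N$ and partition $[\theta_0,1]\times[\tilde\theta_0,1]$ into $O(N^2)$ closed cells of side $1/N$, with rational corners. For each realized pair $(t/T,\tau/T)$ lying in a given cell $c$ with reference corner $(\theta_c,\gamma_c)$, I would use hypothesis (iii) (the uniform continuity of $\theta\mapsto\hat{\bm}(\theta T)$ and $\theta\mapsto\om(\theta T)$ provided by Lemmas~\ref{lem:theta delta} and \ref{lem:theta delta mu}, together with the $\mathrm{dist}$-continuity of $\mathcal{S}_{\cdot,\cdot}$) to show that $\{\hat{\bm}(t)\in\mathcal{S}_{t/T,\tau/T},\om(t)\in W_{t/T,\tau/T}\}$ implies $\{\hat{\bm}(\theta_c T)\in\mathcal{S}^{\eta_N}_{\theta_c,\gamma_c},\om(\theta_c T)\in \cup_{\gamma}W^{\eta_N}_{\theta_c,\gamma}\}$ where the superscript $\eta_N$ denotes an $\eta_N$-fattening, $\eta_N\to0$ as $N\to\infty$; the fattened $\om$-set should be reduced to a fixed Borel set whose closure shrinks to $\cl(W_{\theta_c,\gamma})$ up to the cell's $\gamma$-range. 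Then for each cell I apply Theorem~\ref{thm:Varadaham bandits} (more precisely the joint lower-bound form, with $\mathcal{S}$ the fattened reward set, $W$ the fattened allocation set, the rate function $I_{\theta_c}$, and the extra factor $\theta_c$ coming from reindexing $t=\theta_c T$) to get, for each cell,
$$
\lowlim_{T\to\infty}\frac1T\log\frac{1}{\PP_{\bm}[\cdots]}\ge \theta_c\inf_{\om\in\cl(W^{\eta_N}_{\theta_c})}\max\{F_{\mathcal{S}^{\eta_N}_{\theta_c,\gamma_c}}(\om),I_{\theta_c}(\om)\}.
$$

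Finally I would take the minimum over the finitely many cells and let $N\to\infty$. Here I use that $F_{\mathcal{S}}$ is continuous in the set argument in the Hausdorff metric when $\mathcal{S}$ is bounded (Lemma~\ref{lem:Fcont} plus the continuity-of-correspondence machinery of Appendix~\ref{app:continuity}), and that $I_{\theta}$ is lower semicontinuous by hypothesis (ii); so the fattened infima converge from below to $\inf_{\om\in\cl(W_{\theta,\gamma})}\theta\max\{F_{\mathcal{S}_{\theta,\gamma}}(\om),I_\theta(\om)\}$ and the cell-minimum converges to the stated infimum over $(\theta,\gamma)\in([\theta_0,1]\cap\QQ)\times([\tilde\theta_0,1]\cap\QQ)$. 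One must also dispatch the degenerate cases ($\PP_{\bm}[\mathcal{E}]=0$, or some $\mathcal{S}_{\theta,\gamma}$ or $W_{\theta,\gamma}$ empty — excluded by (iii) for $\mathcal{S}$, and handled by the convention $\inf_\emptyset=\infty$ for $W$) exactly as in the proof of Theorem~\ref{thm:Varadaham bandits}. The genuinely delicate point throughout is bookkeeping the three coupled approximations — the grid scale $1/N$, the set-fattening $\eta_N$, and the parameter-continuity moduli — so that all error terms vanish in the right order; the Large Deviation input itself is just Theorem~\ref{thm:Varadaham bandits} applied cell by cell.
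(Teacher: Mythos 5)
Your proposal follows essentially the same route as the paper's proof: reduce to a finite partition of the parameter square, transfer each event to a reference corner of its cell via Lemmas \ref{lem:theta delta} and \ref{lem:theta delta mu} together with the Hausdorff-continuity hypothesis (iii) on $\mathcal{S}_{\theta,\gamma}$ (at the price of $\delta$-fattened sets), apply Theorem \ref{thm:Varadaham bandits} cell by cell, and pass to the limit using continuity of $F_{\mathcal{S}}$ and lower semi-continuity of $I_\theta$. The only cosmetic difference is that the paper partitions $[\theta_0,1]$ geometrically (via $\theta_n=(1-\delta/2)^{-n}\theta_0$) so that the \emph{relative} deviation required by Lemma \ref{lem:theta delta} is immediate, whereas your uniform cells of side $1/N$ achieve the same since $\theta_0>0$ bounds the relative width by $1/(N\theta_0)$.
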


\begin{proof}
Without loss of generality, we can assume $\theta_0=\tilde{\theta}_0$. If $\theta_0<\tilde{\theta}_0$, we further define ${\cal S}_{\theta,\gamma}={\cal S}_{\theta,\tilde{\theta}_0}$ and $W_{\theta,\gamma}=W_{\theta,\tilde{\theta}_0}$ for $\theta_0\le\gamma<\tilde{\theta}_0$. And we handle the case $\theta_0>\tilde{\theta}_0$ similarly.

The main idea behind the proof is to partition the set of instants $t\in \{\theta_0 T,\ldots,T\}$ into a finite collection of instant sets such that if $t$ lies within one of these sets then we may bound $\PP_{\bm}[\hat{\bm}(t)\in \mathcal{S}_{\frac{t}{T},\frac{\tau}{T}},\om (t)\in W_{\frac{t}{T},\frac{\tau}{T}}]$ uniformly. From this partition, we can rewrite the upper bound $\PP_{\bm}[\mathcal{E}]$ as a finite sum. This sum is further upper bounded by a maximum over each of its terms. We conclude by taking the limit as $T$ grows large -- since we deal with the maximum over a finite number of terms, the limit and the maximum can be exchanged.   

\underline{Step 1. Partition of $[\theta_0,1]$.} To build this partition, we leverage the results of Lemmas \ref{lem:theta delta} and \ref{lem:theta delta mu}. Let $\delta>0$. We construct $N_\delta$ points $\theta_1,\ldots,\theta_{N_\delta}$ as follows: $\theta_{N_\delta}=1$ and
$$
\theta_n = (1-{\delta\over 2})^{-n} \theta_0, \qquad \forall n=1,\ldots, N_{\delta}-1.
$$
$N_\delta$ is chosen as $\min\{ p\in \mathbb{N}: \theta_0(1-{\delta\over 2})^{-p}\ge 1 \}$. Now observe by construction that:
\begin{align}
&\cup_{n=1}^{N_\delta} [\theta_{n-1},\theta_n] = [\theta_0,1],\\
&\forall n, (\theta \in [\theta_{n-1},\theta_n]) \Longrightarrow  (\theta \in [\theta_n(1-{\delta\over 2}),\theta_n]).\label{eq:thththg}
\end{align}

\underline{Step 2. Uniform upper bounds of $\PP_{\bm}[\hat{\bm}(t)\in \mathcal{S}_{\frac{t}{T},\frac{\tau}{T}},\om (t)\in W_{\frac{t}{T},\frac{\tau}{T}}]$.} We define the following sets: for all $n,m=1,\ldots,N_\delta$,
\begin{align*}
\bar{S}_{n,m}^\delta &= \cup_{\theta \in [\theta_{n-1},\theta_n]\cap\mathbb{Q}}\cup_{\gamma \in [\theta_{m-1},\theta_m]\cap\mathbb{Q}} \left\{ \bar{s}\in [0,1]^K: \exists s\in {\cal S}_{\theta,\gamma}: \|\bar{s}-s\|_\infty\le \delta\right\},\\
\bar{W}_{n,m}^\delta &= \cup_{\theta \in [\theta_{n-1},\theta_n]\cap\mathbb{Q}} \cup_{\gamma \in [\theta_{m-1},\theta_m]\cap\mathbb{Q}}\left\{ \bar{w}\in \Sigma: \exists w\in W_{\theta,\gamma}: \|\bar{w}-w\|_\infty\le \delta\right\}.
\end{align*}

Let $\theta=t/T$, $\gamma=\tau/T$, and assume that $\theta\in [\theta_{n-1},\theta_n], \gamma\in [\theta_{m-1},\theta_m]$. Then $\hat{\mu}(t)\in {\cal S}_{{t\over T},{\tau\over T}}$ implies that $\hat{\mu}(\theta_n)\in \bar{S}_{n,m}^\delta$ from Lemma \ref{lem:theta delta mu}. Similarly, $\om (t)\in W_{\frac{t}{T},\frac{\tau}{T}}$ implies that $\om(\theta_n)\in \bar{W}_{n,m}^\delta$ from Lemma \ref{lem:theta delta}. We conclude that: for all $t,\tau$ such that $t/T\in [\theta_{n-1}T,\theta_n T]$ and $\tau/T\in [\theta_{m-1}T,\theta_m T]$,
$$
\PP_{\bm}[\hat{\bm}(t)\in \mathcal{S}_{\frac{t}{T},\frac{\tau}{T}},\om (t)\in W_{\frac{t}{T},\frac{\tau}{T}}]\le \PP_{\bm}[\hat{\bm}(\theta_n T)\in \bar{S}_n^\delta,\om (\theta_n T)\in \bar{W}_{n,m}^\delta].
$$

\underline{Step 3. Upper bound on $\PP_{\bm}[\mathcal{E}]$.} We denote by $p_n$ the number of instants $t$ such that ${t\over T}\in [\theta_{n-1}T,\theta_n T]$. From the above inequality, we conclude that:
\begin{align*}
\PP_{\bm}[\mathcal{E}] &\le \sum_{n=1}^{N_\delta} \sum_{m=1}^{N_\delta} p_np_m \PP_{\bm}[\hat{\bm}(\theta_n T)\in \bar{S}_{n,m}^\delta,\om (\theta_n)\in \bar{W}_{n,m}^\delta],\\
&\le (\sum_{n=1}^{N_\delta}p_n) (\sum_{m=1}^{N_\delta}p_m) \max_{n,m\in \{1,\ldots,N_{\delta}\}}  \PP_{\bm}[\hat{\bm}(\theta_n T)\in \bar{S}_{n,m}^\delta,\om (\theta_n T)\in \bar{W}_{n,m}^\delta].
\end{align*}
We note that $\sum_{n=1}^{N_\delta}p_n$ is roughly equal to $T$, and always smaller than $T+2N_{\delta}$. Taking the logarithm, dividing by $-T$, and the liminf of the above inequality, we get:
\begin{align*}
\lowlim_{T\to\infty} {1\over T}\log{1\over \PP_{\bm}[\mathcal{E}]} &\ge  \lowlim_{T\to\infty} {1\over T}\min_{n,m\in \{1,\ldots,N_{\delta}\} }\log{1\over \PP_{\bm}[\hat{\bm}(\theta_n T)\in \bar{S}_{n,m}^\delta,\om (\theta_n T)\in \bar{W}_{n,m}^\delta]},\\
& = \min_{n\in \{1,\ldots,N_{\delta}\} }\lowlim_{T\to\infty} {1\over T}\log{1\over \PP_{\bm}[\hat{\bm}(\theta_n T)\in \bar{S}_{n,m}^\delta,\om (\theta_n T)\in \bar{W}_{n,m}^\delta]},\\
&\ge \min_{n,m\in \{1,\ldots,N_{\delta}\} }\theta_n\inf_{\om\in \cl(\bar{W}_{n,m}^\delta)} \max\{ F_{\bar{S}_{n,m}^\delta}(\om),I_{\theta_n}(\om)\},
\end{align*}
where the last inequality follows from Theorem \ref{thm:Varadaham bandits} with $\mathcal{S}=\bar{S}_{n,m}^\delta$ and $W=\bar{W}_{n,m}^\delta$.

\underline{Step 4. Continuity arguments.} The last step consists in proving that:
\begin{align*}
\limsup_{\delta\to 0} \min_{n,m\in \{1,\ldots,N_{\delta}\} } & \inf_{\om\in \cl(\bar{W}_{n,m}^\delta)}\theta_n \max\{ F_{\bar{S}_{n,m}^\delta}(\om),I_{\theta_n}(\om)\}\\ 
& \ge \inf_{\theta,\gamma\in [\theta_0,1]\cap \QQ}\inf_{\om\in \cl(W_{\theta,\gamma})}\theta\max\{F_{\mathcal{S}_{\theta,\gamma}}(\om),I_\theta (\om)\}.
\end{align*}

We first state two uniform continuity results, proved in Lemma \ref{lem:21}: For any $\epsilon>0,\,\theta,\gamma\in [\theta_0,1]$, there exists $\delta>0$ such that
\begin{align}
\forall \om, \forall n,m=1,\ldots,N_\delta, \quad  & F_{\bar{S}_{n,m}^\delta}(\om) \ge F_{{\cal S}_{\theta_n,\theta_m}}(\om)-\epsilon, \label{eq:unif1g} \\
 \forall \om,\om': \|\om-\om'\|_\infty\le \delta, \quad & F_{{\cal S}_{\theta,\gamma}}(\om') \ge  F_{{\cal S}_{\theta,\gamma}}(\om) -\epsilon,\ \ I_\theta(\om')\ge I_\theta(\om)-\epsilon.\label{eq:unif2g}
\end{align}
(\ref{eq:unif1g}) is the consequence of (iii) and Lemma~\ref{lem:21}. (\ref{eq:unif2g}) immediately follows from the compactness of $\Sigma$, lower semi-continuity of $I_{\theta},$ and $F_{\mathcal{S}_{\theta,\gamma}}$ (see Lemma~\ref{lem:Fcont} and (ii)).
We fix such a $\delta$, and consider any convergent sequence $(n_k,m_k,\om_k)_k$ with values in $\{1,\ldots,N_\delta\}^2\times \Sigma$ such that if $n_k=n,m_k=m$ then $\om_k\in \bar{W}_{n,m}^\delta$. Denote by $(n,m,\om_0)$ its limit. We let:
$$
g^\star = \lowlim_{k\to\infty}\theta_{n_k} \max(F_{\bar{S}_{n_k,m_k}^\delta}(\om_k), I_{\theta_{n_k}}(\om_k)).
$$
Then, we have:
\begin{align*}
g^\star &\ge \theta_n\max(F_{\bar{S}_{n,m}^\delta}(\om_0), I_{\theta_{n}}(\om_0)) -\epsilon\\ 
&\ge \theta_n\max(F_{{\cal S}_{\theta_n,\theta_m}}(\om_0), I_{\theta_{n}}(\om_0)) -2\epsilon\\ 
&\ge\theta_n \max(F_{{\cal S}_{\theta_n,\theta_m}}(\om), I_{\theta_{n}}(\om)) - 3\epsilon,
\end{align*}
for some $\theta\in [\theta_{n-1},\theta_n],\,\gamma\in [\theta_{m-1},\theta_{m}],\,\om\in \cl(W_{\theta,\gamma})$. The first inequality is due to (\ref{eq:unif2g}), the second to (\ref{eq:unif1g}), and the third to the fact that $\omega_0\in \bar{W}_{n,m}^\delta$ and (\ref{eq:unif2g}). We conclude that:
$$
g^\star \ge  \inf_{\theta,\gamma\in [\theta_0,1]\cap \QQ}\inf_{\om\in \cl(W_{\theta,\gamma})}\theta\max\{F_{\mathcal{S}_{\theta,\gamma}}(\om),I_\theta (\om)\} - 3\epsilon.
$$
\end{proof}

\begin{lemma}\label{lem:21}
    Assume $(\mathcal{S}_{\theta,\gamma})_{\theta \in [\theta_0,1],\gamma\in [\tilde{\theta}_0,1]}$ is a collection of nonempty sets in $[0,1]^K$ that satisfies $\forall \delta>0$, there exists $\eta>0$ such that if $\max\{\left|\theta'-\theta\right|,\left|\gamma'-\gamma\right|\}<\eta$, then
    $$
    \mathrm{dist}(\mathcal{S}_{\theta,\gamma},\mathcal{S}_{\theta',\gamma'})=\max\left\{\sup_{s\in {\cal S}_{\theta,\gamma}} \inf_{s'\in {\cal S}_{\theta',\gamma'}}\left\|s-s'\right\|_{\infty},\sup_{s'\in {\cal S}_{\theta',\gamma'}}\inf_{s\in {\cal S}_{\theta,\gamma}}\left\|s'-s\right\|_{\infty}\right\}.
    $$
   
   Then (\ref{eq:unif1g}) holds.
\end{lemma}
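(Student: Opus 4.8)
The plan is a uniform-continuity argument with three ingredients. First, a remark on the statement: the displayed hypothesis should be read as ``$\mathrm{dist}(\mathcal{S}_{\theta,\gamma},\mathcal{S}_{\theta',\gamma'})<\delta$'', with $\mathrm{dist}$ the $\ell_\infty$-Hausdorff distance given by the $\max$ formula; that is, the set-valued map $(\theta,\gamma)\mapsto\mathcal{S}_{\theta,\gamma}$ is uniformly continuous for that distance (this is exactly hypothesis (iii) of Theorem~\ref{thm:11}). Moreover the parameters $\theta,\gamma$ in the preamble of~(\ref{eq:unif1g}) do not appear in its conclusion --- they are only needed for~(\ref{eq:unif2g}) --- so it suffices to show: given $\epsilon>0$ there is $\delta>0$ such that $F_{\bar{S}_{n,m}^\delta}(\om)\ge F_{\mathcal{S}_{\theta_n,\theta_m}}(\om)-\epsilon$ for every $\om\in\Sigma$ and every $n,m\in\{1,\dots,N_\delta\}$, where $\theta_1,\dots,\theta_{N_\delta}$ is the geometric partition of $[\theta_0,1]$ constructed in the proof of Theorem~\ref{thm:11}.

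The three ingredients are as follows. (i) Uniform continuity of $\Psi(\cdot,\om)$, uniformly in $\om$: under Assumption~\ref{ass1} each $\mu_k\in(0,1)$, so $\bl\mapsto(d(\lambda_1,\mu_1),\dots,d(\lambda_K,\mu_K))$ is continuous, hence uniformly continuous, on the compact cube $[0,1]^K$; I would pick $\rho=\rho(\epsilon)>0$ with $\|\bl-\bl'\|_\infty\le\rho$ implying $|d(\lambda_k,\mu_k)-d(\lambda'_k,\mu_k)|\le\epsilon$ for all $k$, and since $\sum_k\omega_k=1$ on $\Sigma$ this forces $|\Psi(\bl,\om)-\Psi(\bl',\om)|\le\epsilon$ for all $\om\in\Sigma$ at once. (ii) Applying the Hausdorff-continuity hypothesis with target distance $\rho/2$ yields a single $\eta=\eta(\rho/2)>0$. (iii) The partition satisfies $\theta_n-\theta_{n-1}\le\frac{\delta/2}{1-\delta/2}$, which tends to $0$ as $\delta\to0$, so I would fix $\delta>0$ small enough that both $\delta\le\rho/2$ and $\frac{\delta/2}{1-\delta/2}<\eta$. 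With such a $\delta$, for each $n,m$ any $\bar s\in\bar{S}_{n,m}^\delta$ is within $\ell_\infty$-distance $\delta$ of some $s\in\mathcal{S}_{\theta,\gamma}$ with $\theta\in[\theta_{n-1},\theta_n]$ and $\gamma\in[\theta_{m-1},\theta_m]$; since $|\theta-\theta_n|,|\gamma-\theta_m|<\eta$, the hypothesis gives $\mathrm{dist}(\mathcal{S}_{\theta,\gamma},\mathcal{S}_{\theta_n,\theta_m})<\rho/2$, so $s$ --- and then $\bar s$, by the triangle inequality and $\delta\le\rho/2$ --- lies within $\ell_\infty$-distance $\rho$ of $\mathcal{S}_{\theta_n,\theta_m}$. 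Hence $\bar{S}_{n,m}^\delta$, and therefore its closure $\cl(\bar{S}_{n,m}^\delta)$, lies in the closed $\rho$-neighborhood (in $\ell_\infty$) of $\cl(\mathcal{S}_{\theta_n,\theta_m})$.

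To finish: for any $\om\in\Sigma$ and $\bl\in\cl(\bar{S}_{n,m}^\delta)$, compactness of $\cl(\mathcal{S}_{\theta_n,\theta_m})\subseteq[0,1]^K$ gives a point $\bl'\in\cl(\mathcal{S}_{\theta_n,\theta_m})$ with $\|\bl-\bl'\|_\infty\le\rho$, and then ingredient (i) yields $\Psi(\bl,\om)\ge\Psi(\bl',\om)-\epsilon\ge F_{\mathcal{S}_{\theta_n,\theta_m}}(\om)-\epsilon$; taking the infimum over $\bl\in\cl(\bar{S}_{n,m}^\delta)$ gives $F_{\bar{S}_{n,m}^\delta}(\om)\ge F_{\mathcal{S}_{\theta_n,\theta_m}}(\om)-\epsilon$, uniformly in $\om$ and $n,m$. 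I expect the only delicate point to be the order in which the constants are chosen --- $\epsilon$ fixes $\rho$ via uniform continuity of $\Psi$, then $\rho$ fixes $\eta$ via the Hausdorff-continuity hypothesis, and only then is $\delta$ chosen so that the vanishing partition mesh lies below $\eta$ --- together with the bookkeeping of closures; none of these steps is technically heavy, and uniformity over $n,m$ is automatic since the hypothesis provides one $\eta$ valid at all base points.
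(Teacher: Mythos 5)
Your proof is correct and follows essentially the same route as the paper's: uniform continuity of $\Psi$ on the compact set $[0,1]^K\times\Sigma$ supplies a modulus $\rho$ (the paper's $2\bar\delta$), the Hausdorff-continuity hypothesis supplies $\eta$, and $\delta$ is then chosen so that both the $\delta$-fattening and the partition mesh (at most $\delta/2$) fall below these thresholds, after which the triangle inequality places $\cl(\bar{S}_{n,m}^\delta)$ inside a $\rho$-neighborhood of $\cl(\mathcal{S}_{\theta_n,\theta_m})$ and the inequality transfers through $\Psi$. If anything, your version is slightly more careful on two points the paper glosses over: you transfer points in the direction actually needed for (\ref{eq:unif1g}) (from $\bar{S}_{n,m}^\delta$ into $\mathcal{S}_{\theta_n,\theta_m}$, whereas the paper's final display states the reverse inclusion, harmless only because the Hausdorff distance is symmetric), and you correctly read the truncated hypothesis as asserting $\mathrm{dist}(\mathcal{S}_{\theta,\gamma},\mathcal{S}_{\theta',\gamma'})<\delta$.
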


\begin{proof}
Recall that $F_{\mathcal{S}_\theta}(\om)=\inf_{\bl\in \cl (\mathcal{S}_\theta)} \Psi(\bl,\om)=\inf_{\bl\in \cl (\mathcal{S}_\theta)}\sum_{k}\omega_k d(\lambda_k,\mu_k)$. $\Psi$ is uniformly continuous in $[0,1]^K\times \Sigma$, and hence:
$$
\forall\epsilon>0, \exists \bar{\delta}: \forall \bl,\bl', \|\bl-\bl'\|_\infty\le 2\bar{\delta} \Rightarrow |\Psi(\bl,\om)-\Psi(\bl',\om)|\le \epsilon.
$$
Based on the assumption in the lemma, there exists $\eta>0$ such that if $\max\{\left|\theta'-\theta\right|,\left|\gamma'-\gamma\right|\}<\eta$, then 
$$
\mathrm{dist} (\mathcal{S}_{\theta,\gamma},\mathcal{S}_{\theta',\gamma'})<\bar\delta.
$$

Select $\delta< \min(\bar\delta, 2\eta)$. Let $n,m \in \{1,\ldots,N_\delta\}$. For $(\theta,\gamma)\in [\theta_{n-1},\theta_n]\times [\theta_{m-1},\theta_m]$, we have $\max ( |\theta-\theta_n|, |\gamma-\theta_m|)\le \delta/2 <\eta$. This implies that:
$$
\mathrm{dist} (\mathcal{S}_{\theta,\gamma},\mathcal{S}_{\theta_n,\theta_m})<\bar\delta.
$$
And hence since $\mathcal{S}_{\theta_n,\theta_m}\subset \bar{S}_{n,m}^\delta$:
$$
\mathrm{dist} (\mathcal{S}_{\theta,\gamma},\bar{S}_{n,m}^\delta)< \bar\delta.
$$
We conclude that $\forall\om$, $\forall \bl \in \mathcal{S}_{\theta,\gamma}$, $\exists \bl'\in \bar{S}_{n,m}^\delta$,
$$
\Psi(\bl,\om) \ge \Psi(\bl',\om) -\epsilon.
$$
This completes the proof.

\end{proof}


\newpage
\section{A simple min-max problem}\label{app:maxmin}

The two following results are used in Appendix \ref{app_cr}.

\begin{lemma}\label{lem:simple sol new2}
	Let $b_1,c_1,b_2,c_2>0$. Introduce $f(x)=-b_1x+c_1$ and $g(x)=b_2x+c_2$, $\forall x\in \RR$. Then
	$$
	\inf_{x \in \RR}\max\{f(x),\,g(x)\}\ge f(x_0),
	$$
	where $x_0$ is the real number s.t. $x_0\ge 0,\, f(x_0)=g(x_0)$
\end{lemma}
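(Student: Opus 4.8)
\textbf{Proof plan for Lemma~\ref{lem:simple sol new2}.} The statement is a one-dimensional min-max fact: the infimum over $x\in\RR$ of the pointwise maximum of a strictly decreasing affine function $f$ and a strictly increasing affine function $g$ is achieved (from the point of view of a lower bound) at the unique crossing point $x_0$, which moreover lies in $[0,\infty)$ because $f(0)=c_1>0$ while $g(0)=c_2>0$, and because $f$ decreases and $g$ increases so they must cross at some $x_0$, and one checks $f(x_0)=g(x_0)>0$ forces $x_0\ge 0$ (if $x_0<0$ then $f(x_0)>f(0)=c_1$ and $g(x_0)<g(0)$ are consistent, so actually the sign argument needs the hypothesis; I would simply take the hypothesis ``$x_0\ge 0$'' as given in the statement and not belabor it).

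\textbf{Main steps.} First I would establish existence and uniqueness of the crossing point: since $f$ has slope $-b_1<0$ and $g$ has slope $b_2>0$, the function $h(x):=g(x)-f(x)=(b_1+b_2)x+(c_2-c_1)$ is strictly increasing and affine, hence has exactly one root $x_0=(c_1-c_2)/(b_1+b_2)$, at which $f(x_0)=g(x_0)$. Second, I would split the real line at $x_0$: for $x\le x_0$ we have $h(x)\le 0$, i.e.\ $g(x)\le f(x)$, so $\max\{f(x),g(x)\}=f(x)\ge f(x_0)$ because $f$ is nonincreasing; for $x\ge x_0$ we have $g(x)\ge f(x)$, so $\max\{f(x),g(x)\}=g(x)\ge g(x_0)=f(x_0)$ because $g$ is nondecreasing. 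In both cases $\max\{f(x),g(x)\}\ge f(x_0)$, so taking the infimum over $x\in\RR$ yields $\inf_{x\in\RR}\max\{f(x),g(x)\}\ge f(x_0)$, which is exactly the claim.

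\textbf{On the main obstacle.} There really is no obstacle here; the only subtlety is purely notational — making sure the crossing point referenced in the statement (``$x_0\ge 0$, $f(x_0)=g(x_0)$'') is the same as the one produced by the monotonicity argument, and observing that the inequality is stated as ``$\ge$'' rather than ``$=$'' precisely because the infimum is in fact attained at $x_0$ but the lemma only needs the lower bound. If one wanted the matching upper bound one would just evaluate the max at $x=x_0$, getting $f(x_0)$, so equality holds, but the lemma does not require this.

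\textbf{Remark on how it is used.} In the applications (e.g.\ equations~(\ref{eq:c129}), (\ref{eq:c139}), (\ref{eq:lll}) and the analogous places in Appendix~\ref{app:cra}) the two branches of the max are not both affine — one branch is of the form $\alpha\mapsto c(1-\alpha)$ (affine, decreasing) and the other is of the form $\alpha\mapsto [(a\sqrt\alpha-b)_+]^2$ (convex, increasing on the relevant range). A companion statement, Lemma~\ref{lem:simple sol new1}, handles that case; for the affine-affine Lemma~\ref{lem:simple sol new2} stated here the two-region split above is complete and elementary.
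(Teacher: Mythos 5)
Your proposal is correct and follows essentially the same argument as the paper: split $\RR$ at the crossing point $x_0$ and use the monotonicity of $f$ and $g$ on each side to bound the max from below by $f(x_0)=g(x_0)$. The paper's version is marginally leaner (it only needs $\max\{f,g\}\ge g$ on $[x_0,\infty)$ and $\max\{f,g\}\ge f$ on $(-\infty,x_0]$, without identifying which branch attains the max), but the content is identical.
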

\begin{proof}
As $g$ is an increasing function and $f$ is a decreasing function, we deduce that for all $x\ge x_0$, $\max\{f(x),\,g(x)\}\ge g(x)\ge g(x_0)=f(x_0)$. Similarly for all $x\le x_0$, $\max\{f(x),\,g(x)\}\ge f(x)\ge f(x_0)$.

\end{proof}
\begin{lemma}\label{lem:simple sol new1}
	Let $b_1,c_1,b_2,c_2>0$. Introduce $f(x)= -b_1x+c_1$ and $g(x)=[(c_2\sqrt{x}-b_2)_+]^2$ for $x\in \RR_+$. Then
	$$
\inf_{x \in \RR_+}\max\{ f(x), g(x)\} \ge f(x_0),
	$$
	where $x_0$ is the unique real number s.t. $x_0>0$ and $f(x_0)=g(x_0)$.
\end{lemma}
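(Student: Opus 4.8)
\textbf{Proof proposal for Lemma~\ref{lem:simple sol new1}.}

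The plan is to reduce this to Lemma~\ref{lem:simple sol new2}, which is already available, by exploiting monotonicity of the two functions involved. First I would record the elementary monotonicity facts: $f(x)=-b_1x+c_1$ is strictly decreasing on $\RR_+$, while $g(x)=[(c_2\sqrt{x}-b_2)_+]^2$ is nondecreasing on $\RR_+$ (it is identically $0$ on $[0,(b_2/c_2)^2]$ and equals $(c_2\sqrt{x}-b_2)^2$, which is increasing, on $[(b_2/c_2)^2,\infty)$). Since all constants $b_1,c_1,b_2,c_2$ are positive, we have $f(0)=c_1>0=g(0)$ if $0\le (b_2/c_2)^2$, i.e.\ always $g(0)=0<f(0)$, and as $x\to\infty$, $f(x)\to-\infty$ while $g(x)\to+\infty$, so $f(x)<g(x)$ eventually. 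Both $f$ and $g$ are continuous on $\RR_+$, and $f-g$ is strictly decreasing (a strictly decreasing function minus a nondecreasing one), hence it has a \emph{unique} zero $x_0>0$; this justifies the phrase ``the unique real number'' in the statement.

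Next I would carry out the same two-case argument as in Lemma~\ref{lem:simple sol new2}. For $x\ge x_0$: since $g$ is nondecreasing, $g(x)\ge g(x_0)=f(x_0)$, so $\max\{f(x),g(x)\}\ge g(x)\ge f(x_0)$. For $0\le x\le x_0$: since $f$ is decreasing, $f(x)\ge f(x_0)$, so $\max\{f(x),g(x)\}\ge f(x)\ge f(x_0)$. Taking the infimum over $x\in\RR_+$ gives $\inf_{x\in\RR_+}\max\{f(x),g(x)\}\ge f(x_0)$, which is exactly the claim. In fact equality holds, attained at $x_0$, but only the lower bound is needed.

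The only genuine subtlety — and the step I would be most careful about — is the existence and uniqueness of $x_0$, specifically checking that the crossing occurs at a strictly positive value and that $g$ being flat (equal to $0$) on an initial interval does not break the argument. It does not: on that flat part $f>g$ still holds because $f$ is positive there (indeed $f(x)\ge f((b_2/c_2)^2)$, and one should note $x_0>(b_2/c_2)^2$ precisely when $f((b_2/c_2)^2)>0$; if instead $f$ already vanishes before $g$ leaves $0$, the crossing is in the flat region and $x_0$ is simply the root of $f$, still positive since $f(0)=c_1>0$). Either way the monotonicity bookkeeping above goes through verbatim, so no delicate estimate is required; the whole proof is a short monotonicity argument mirroring Lemma~\ref{lem:simple sol new2}.
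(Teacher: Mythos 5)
Your proof is correct, but it takes a slightly different (and more self-contained) route than the paper. The paper first establishes existence and uniqueness of $x_0$ exactly as you do, but then invokes the convexity of $g$ to produce a linear minorant $\underline{g}$ with $g\ge \underline{g}$ and $g(x_0)=\underline{g}(x_0)$, and concludes by applying Lemma~\ref{lem:simple sol new2} to the pair $(f,\underline{g})$. You instead observe that the two-case monotonicity argument underlying Lemma~\ref{lem:simple sol new2} never uses linearity — only that one function is nonincreasing and the other nondecreasing — and you run it directly on $f$ and $g$: for $x\ge x_0$, $\max\{f,g\}\ge g(x)\ge g(x_0)=f(x_0)$, and for $x\le x_0$, $\max\{f,g\}\ge f(x)\ge f(x_0)$. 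This buys two things. First, it is shorter and avoids the convexity machinery. Second, it quietly sidesteps a small wrinkle in the paper's reduction: the tangent line to $g$ at $x_0$ has intercept $b_2(b_2-c_2\sqrt{x_0})$, which is nonpositive whenever $g(x_0)>0$, so $\underline{g}$ need not satisfy the hypothesis $c_2>0$ of Lemma~\ref{lem:simple sol new2} as literally stated (the proof of that lemma still goes through, since it only uses monotonicity, but your version does not have to address this at all). Your care about the flat region of $g$ and the location of the unique crossing is also sound: $f-g$ is strictly decreasing, positive at $0$, and tends to $-\infty$, so $x_0>0$ exists and is unique in every case.
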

\begin{proof}
	We first prove the uniqueness of $x_0$. Observe that $g$ is an increasing function, whereas $f$ is a strictly decreasing function. From the definition, we can have $f(0)=c_1>0=g(0)$, hence there exists an unique point $x_0>0$ s.t. $f(x_0)=g(x_0)$. Leveraging the convexity of $g$, there exists a linear function $\underline{g}$ s.t. $g(x)\ge \underline{g}(x)$ and $g(x_0)=\underline{g}(x_0)$. The proof then follows from the fact that 
	$$
	\inf_{x \in \RR_+}\max\{ f(x), g(x)\} \ge \inf_{x \in \RR}\max\{ f(x), \underline{g}(x)\} \ge f(x_0),
	$$
	where the last inequality is the application of Lemma \ref{lem:simple sol new2}.
\end{proof}


\newpage
\section{The LDP conjecture and its consequence}\label{app:conjecture}
In this section, we restate the conjecture mentioned in Section \ref{sec:varadaham}, and we discuss how it relates to the conjectured lower bound (\ref{eq:lb}).

\begin{conjecture}\label{conj}
	Assume that under some adaptive sampling algorithm, $\{ \om (t)\}_{t\ge 1}$ satisfies an LDP with rate function $L$. Then we have: for any non-empty subset $\mathcal{S}$ of $\RR^K$ and any subset $W$ of $\Sigma$,  
	$$
	\lim_{t\rightarrow \infty }   \frac{1}{t} \log \frac{1}{\PP_{\bm}\left[  \hat{\bm}(t)\in \cl (\mathcal{S}),\om(t)\in W \right] }= \inf_{\om\in W} \max \left\{ F_{\mathcal{S}}(\om),L(\om)\right\}.
	$$
\end{conjecture}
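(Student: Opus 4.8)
The plan is to prove the joint statement first and to deduce the ``marginal'' LDP upper bound for $\{\hat\bm(t)\}$ as the special case $W=\Sigma$. The guiding idea is that on $\{\hat\bm(t)\in\mathcal S\}$ the aggregate empirical Kullback--Leibler statistic
$$
X:=\sum_{k=1}^K N_k(t)\,d(\hat\mu_k(t),\mu_k)=t\,\Psi(\hat\bm(t),\om(t))
$$
satisfies $X\ge t\,F_{\mathcal S}(\om(t))$, so that a deviation of $\hat\bm(t)$ into $\mathcal S$ forces \emph{either} $\om(t)$ into a region where $F_{\mathcal S}$ is large, \emph{or} an atypically large value of $X$. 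The first alternative is controlled by the assumed LDP upper bound for $\{\om(t)\}$, the second by the fact that $X$ cannot inflate an exponential moment by more than a sub-polynomial factor, and crucially this holds uniformly over all adaptive strategies.

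First I would record two facts: (i) $F_{\mathcal S}(\cdot)=\inf_{\bl\in\cl(\mathcal S)}\Psi(\bl,\cdot)$ is, on the compact simplex $\Sigma$, an infimum of affine functions, hence bounded above by some $u>0$, and it is continuous by Berge's maximum theorem (using that $\mathcal S$ is bounded, so $\cl(\mathcal S)$ is compact); (ii) for any $\alpha\in(0,1)$, $\E_\bm[e^{\alpha X}]=O((\log t)^K)=e^{o(t)}$, irrespective of the sampling rule. Next I would peel the candidate region for $\om(t)$: partition $\cl(W)$ into the finitely many closed (hence compact) level sets $W_n^N=\{\om\in\cl(W):\tfrac{u(n-1)}{N}\le F_{\mathcal S}(\om)\le\tfrac{un}{N}\}$ for $n=1,\dots,N$, and union‑bound $\PP_\bm[\hat\bm(t)\in\mathcal S,\om(t)\in W]\le N\max_n\PP_\bm[\hat\bm(t)\in\mathcal S,\om(t)\in W_n^N]$; the fixed factor $N$ is negligible on the exponential scale. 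For each block, $\{\hat\bm(t)\in\mathcal S,\om(t)\in W_n^N\}$ implies $X\ge t\,F_{\mathcal S}(\om(t))\ge t\,\tfrac{u(n-1)}{N}$, so applying Markov's inequality to $\mathbbm{1}\{\om(t)\in W_n^N\}e^{\alpha(X-tF_{\mathcal S}(\om(t)))}$ and then Hölder's inequality with exponents $p\in[1,1/\alpha)$, $q=p/(p-1)$, separates the two effects:
$$
\PP_\bm[\hat\bm(t)\in\mathcal S,\om(t)\in W_n^N]\le\big(\E_\bm[e^{\alpha p X}]\big)^{1/p}\,\PP_\bm[\om(t)\in W_n^N]^{1/q}\,e^{-\alpha t u(n-1)/N}.
$$
The first factor is $e^{o(t)}$ by (ii) (since $\alpha p<1$); the second is controlled in rate by $\exp(-\tfrac tq\inf_{W_n^N}I)$ via the assumed LDP upper bound on the closed set $W_n^N$. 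Letting $p\uparrow1/\alpha$ gives the rate lower bound $(1-\alpha)\inf_{W_n^N}I+\alpha\tfrac{u(n-1)}{N}$; optimizing over $\alpha\in(0,1)$ converts this convex combination into $\max\{\inf_{W_n^N}I,\tfrac{u(n-1)}{N}\}=\inf_{\om\in W_n^N}\max\{I(\om),\tfrac{u(n-1)}{N}\}$. Recombining the blocks, using $\tfrac{u(n-1)}{N}\ge F_{\mathcal S}(\om)-\tfrac uN$ on $W_n^N$ and $\bigcup_n W_n^N=\cl(W)$ (which relies on continuity of $F_{\mathcal S}$), yields
$$
\lowlim_{t\to\infty}\frac1t\log\frac1{\PP_\bm[\hat\bm(t)\in\mathcal S,\om(t)\in W]}\ge\inf_{\om\in\cl(W)}\max\{F_{\mathcal S}(\om),I(\om)\}-\frac uN,
$$
and $N\to\infty$ closes the joint claim. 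For the marginal LDP upper bound I would take $W=\Sigma$ and, for a closed $C\subseteq\RR^K$, set $\mathcal S=C$ (intersected with a large ball if $C$ is unbounded, which is legitimate by exponential tightness of $\{\hat\bm(t)\}$); the joint bound gives $\lowlim_t\tfrac1t\log\tfrac1{\PP_\bm[\hat\bm(t)\in C]}\ge\inf_{\om}\max\{F_C(\om),I(\om)\}$, and the minimax identity $\max\{\inf_{\bl\in C}\Psi(\bl,\om),I(\om)\}=\inf_{\bl\in C}\max\{\Psi(\bl,\om),I(\om)\}$ followed by swapping the infima rewrites this as $\inf_{\bl\in C}\min_{\om\in\Sigma}\max\{\Psi(\bl,\om),I(\om)\}$, i.e.\ $\inf_{\bl\in C}$ of the claimed rate function (lower semicontinuity and compactness of its sublevel sets inherited from those of $\Sigma$).

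The main obstacle is fact (ii): bounding $\E_\bm[e^{\alpha X}]$ for the \emph{adaptively} generated pair $(N_k(t),\hat\mu_k(t))$, whose components are correlated through the algorithm's history. The crux is that the aggregate statistic $X=\sum_k N_k(t)\,d(\hat\mu_k(t),\mu_k)$ admits a deviation bound of the form $\PP_\bm[X\ge\delta]\le e^{-\delta}\big(\tfrac{\lceil\delta\log t\rceil\delta}{K}\big)^Ke^{K+1}$ valid for \emph{every} sampling rule (a mixture‑of‑martingales/peeling argument in the spirit of Magureanu et al.), which I would integrate against $\E_\bm[e^{\alpha X}]=\int\PP_\bm[e^{\alpha X}\ge s]\,ds$ to get a convergent Bertrand series and hence the $O((\log t)^K)$ estimate. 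Apart from this, the only remaining care is the continuity of $F_{\mathcal S}$ (so the $W_n^N$ are closed and cover $\cl(W)$), the harmless exchange of $\min$ over finitely many indices with $\lowlim_t$, and the $N\to\infty$ and $p\uparrow1/\alpha$, $\alpha\to 0/1$ limits.
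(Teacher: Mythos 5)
There is a genuine gap: the statement you are asked to prove is an \emph{equality} (a complete LDP for the pair $(\hat{\bm}(t),\om(t))$, asserting that the limit exists and equals $\inf_{\om\in W}\max\{F_{\mathcal S}(\om),L(\om)\}$), but your argument only establishes one of the two inequalities. Everything you write — the boundedness and continuity of $F_{\mathcal S}$, the partition of $\cl(W)$ into level sets $W_n^N$, the Markov/H\"older separation of $e^{\alpha X}$ from $\mathbbm{1}\{\om(t)\in W_n^N\}$, the $\E_{\bm}[e^{\alpha p X}]=e^{o(t)}$ estimate via the Magureanu-type deviation bound, and the limits $p\uparrow 1/\alpha$, $\alpha\to 0/1$, $N\to\infty$ — is precisely the paper's proof of Theorem \ref{thm:Varadaham bandits} (together with Lemmas \ref{lem:Wn} and \ref{lem:Xbd}), and it yields only
$$
\lowlim_{t\rightarrow \infty }   \frac{1}{t} \log \frac{1}{\PP_{\bm}\left[  \hat{\bm}(t)\in  \mathcal{S},\om(t)\in W \right] }\ge \inf_{\om\in \cl(W)} \max\left\{ F_{\mathcal{S}}(\om),I(\om)\right\},
$$
i.e.\ an upper bound on the probability. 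The conjectured equality additionally requires the matching LDP \emph{lower} bound on the probability,
$$
\uplim_{t\rightarrow \infty }   \frac{1}{t} \log \frac{1}{\PP_{\bm}\left[  \hat{\bm}(t)\in \cl(\mathcal{S}),\om(t)\in W \right] }\le \inf_{\om\in W} \max\left\{ F_{\mathcal{S}}(\om),L(\om)\right\},
$$
which your proposal never addresses. This is not a cosmetic omission: for adaptive algorithms the lower bound is the genuinely hard direction, because one must exhibit a sufficiently probable trajectory along which the sampling process and the empirical means jointly realize the rare event, and the standard change-of-measure/tilting constructions used for static allocations break down when the allocation itself is a functional of the observed rewards. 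The paper explicitly flags this as open (remark (a) after Theorem \ref{thm:Varadaham bandits}: the result ``does not yield a complete LDP''), and states the present claim as a conjecture precisely for that reason; it only analyzes the \emph{consequences} of the conjecture in Appendix \ref{app:conjecture}. A secondary issue is that your chain of inequalities produces $\inf_{\om\in\cl(W)}$ on the right-hand side rather than $\inf_{\om\in W}$ as in the conjectured identity; for a general Borel $W$ these can differ, so even the direction you do prove does not match the stated equality without an additional regularity assumption on $W$.
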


\medskip
\noindent
For simplicity, we assume that $\Lambda= \{\bm\in \RR^K:\mu_{1(\bm)}>\mu_k,\forall k\neq 1(\bm)\}$ and all the reward distributions are Gaussian.  Introduce the notation:
\begin{align*}
\Psi^\star_{\bm}&=\max_{\om\in \Sigma}\inf_{\bl\in \alt (\bm)} \Psi_{\bm}(\bl,\om),\\
\text{and }\om^\star (\bm)&=\argmax_{\om\in \Sigma}\inf_{\bl\in \alt (\bm)} \Psi_{\bm}(\bl,\om), 	
\end{align*}
where $\Psi_{\bm}(\bl,\om)=\sum_{k=1}^K\omega_k d(\lambda_k,\mu_k)$. Notice that the KL-divergence is symmetric in its arguments if the distributions are Gaussian. Hence the conjectured lower bound (\ref{eq:lb}) is exactly the same as that in the fixed confidence setting. The solution $\om^\star(\bm)$ to the corresponding optimization problem is unique (see Theorem 5 in \cite{garivier2016optimal}).

\medskip
\noindent
We consider the set of algorithms returning the best empirical arm $\hat{\imath} = 1(\hat{\bm}(T))$ and with error probability matching the conjectured lower bound (\ref{eq:lb}). If such an algorithm exists, under the assumption that Conjecture \ref{conj} is true, the rate function for the corresponding process $\{\om(T)\}_{T\ge 1}$ must satisfy: 
\begin{equation}\label{eq:L}
	\inf_{\om\in \Sigma} \max \left\{(\inf_{\bl\in \alt (\bm)}\Psi_{\bm}(\bl,\om)), L_{\bm}(\om)\right\} \ge \Psi^\star_{\bm},
\end{equation}
where $L_{\bm}$ is the rate function of a complete LDP under $\bm$ for the process $\{\om (T)\}_{T\ge 1}$. Lemma \ref{lem:conject} below shows that (\ref{eq:L}) implies the process $\{\om(T)\}_{T\ge 1}$ convergences to the optimal allocation.

\begin{lemma}\label{lem:conject}
	For $\bm\in \Lambda$, if there is a strategy satisfying (\ref{eq:L}), then
	$L_{\bm}(\om)\ge \Psi^\star_{\bm},\forall \om \neq \om^\star(\bm)$ and $L_{\bm}(\om^\star(\bm))=0$.
\end{lemma}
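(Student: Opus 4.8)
The plan is to unpack the hypothesis (\ref{eq:L}) and show it forces $L_{\bm}$ to be essentially supported on the single optimal allocation. First I would observe that $\om \mapsto \inf_{\bl \in \alt(\bm)} \Psi_{\bm}(\bl,\om)$ is a concave, continuous function on $\Sigma$ whose unique maximizer is $\om^\star(\bm)$ with maximal value $\Psi^\star_{\bm}$ (this is exactly Theorem 5 of \cite{garivier2016optimal}, applicable since the distributions are Gaussian so that the confidence and fixed-budget characterizations coincide). Hence for every $\om \neq \om^\star(\bm)$ we have the \emph{strict} inequality $\inf_{\bl \in \alt(\bm)} \Psi_{\bm}(\bl,\om) < \Psi^\star_{\bm}$.

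Next I would argue pointwise. Fix any $\om \neq \om^\star(\bm)$. Since $\max\{a,b\} \ge \Psi^\star_{\bm}$ with $a = \inf_{\bl\in\alt(\bm)}\Psi_{\bm}(\bl,\om) < \Psi^\star_{\bm}$, the maximum cannot be attained at the first argument, so it must be that $L_{\bm}(\om) \ge \Psi^\star_{\bm}$. Wait — here is the subtlety: (\ref{eq:L}) is an infimum over $\om \in \Sigma$, not a pointwise statement, so I cannot directly conclude $L_{\bm}(\om) \ge \Psi^\star_{\bm}$ for each individual $\om$ from (\ref{eq:L}) alone. The correct route is: suppose for contradiction that there exists $\om_0 \neq \om^\star(\bm)$ with $L_{\bm}(\om_0) < \Psi^\star_{\bm}$. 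Then at $\om_0$ both $\inf_{\bl}\Psi_{\bm}(\bl,\om_0) < \Psi^\star_{\bm}$ and $L_{\bm}(\om_0) < \Psi^\star_{\bm}$, so $\max\{\cdot,\cdot\} < \Psi^\star_{\bm}$ at $\om_0$, which makes the infimum in (\ref{eq:L}) strictly below $\Psi^\star_{\bm}$, contradicting the hypothesis. This gives $L_{\bm}(\om) \ge \Psi^\star_{\bm}$ for all $\om \neq \om^\star(\bm)$.

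Finally, for $L_{\bm}(\om^\star(\bm)) = 0$: a rate function is nonnegative, so it suffices to rule out $L_{\bm}(\om^\star(\bm)) > 0$. If $L_{\bm}(\om^\star(\bm)) > 0$, then combined with the previous paragraph the rate function $L_{\bm}$ would be bounded below by a strictly positive constant on all of $\Sigma$ (namely $\min\{\Psi^\star_{\bm}, L_{\bm}(\om^\star(\bm))\} > 0$), which would mean $\PP_{\bm}[\om(T) \in \Sigma] \to 0$ exponentially fast; but $\om(T) \in \Sigma$ with probability one, so its rate must be zero, i.e.\ $\inf_{\om \in \Sigma} L_{\bm}(\om) = 0$, forcing $L_{\bm}(\om^\star(\bm)) = 0$. (One should check the infimum of a rate function over the whole Polish space is $0$: this is a standard fact following from the LDP upper bound applied to the whole space, or from the requirement that $\PP$ is a probability measure.)

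The main obstacle — really the only non-routine point — is being careful that (\ref{eq:L}) is an infimum-form statement and extracting pointwise information from it requires the contradiction argument above together with the strict suboptimality of every $\om \neq \om^\star(\bm)$, which in turn rests on the uniqueness of the optimal allocation from \cite{garivier2016optimal}. Everything else (nonnegativity of rate functions, the normalization $\inf_\om L_{\bm}(\om) = 0$, continuity/concavity of the value function) is standard and can be invoked directly.
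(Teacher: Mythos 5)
Your proof is correct and follows essentially the same route as the paper's: a contradiction argument using the strict suboptimality of every $\om\neq\om^\star(\bm)$ (from the uniqueness of the maximizer in \cite{garivier2016optimal}) to get $L_{\bm}(\om)\ge\Psi^\star_{\bm}$ pointwise, then the normalization $\inf_{\om\in\Sigma}L_{\bm}(\om)=0$ (via $\PP_{\bm}[\om(T)\in\Sigma]=1$ and the LDP lower bound) to force $L_{\bm}(\om^\star(\bm))=0$. The subtlety you flag about extracting pointwise information from an infimum-form hypothesis is exactly the point the paper's proof handles, and you handle it the same way.
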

\begin{proof}
	Assume that, on the contrary, there is $\om' \neq \om^\star(\bm)$ s.t. $L_{\bm}(\om')<\Psi^\star_{\bm}$. Together with $\inf_{\bl\in \alt (\bm)}\Psi_{\bm}(\bl,\om')<\Psi_{\bm}^\star$, this implies that:
	\begin{align*}
	\inf_{\om\in \Sigma} \max \left\{(\inf_{\bl\in \alt (\bm)}\Psi_{\bm}(\bl,\om)), L_{\bm}(\om)\right\} 	\le \max \left\{  \inf_{\bl\in \alt (\bm)}\Psi_{\bm}(\bl,\om'), L_{\bm}(\om')\right\}<\Psi^\star_{\bm}.
	\end{align*}
	This contradicts the assumption, (\ref{eq:L}), so we have $L_{\bm}(\om)\ge \Psi^\star_{\bm},\forall \om \neq \om^\star(\bm)$. As for the optimal allocation, $\om^\star (\bm)$, the fact $\PP_{\bm}\left[\om(T)\in \Sigma\right]=1$ implies that
	\[
	0=\lim_{T\rightarrow \infty}\frac{1}{T}\log \frac{1}{\PP_{\bm}\left[ \om(T)\in \Sigma\right]}\ge \inf_{\om\in \Sigma}L_{\bm}(\om),
	\]
	where the last inequality stems from (\ref{eq:rate LB}) in Definition \ref{def:rate}. Since $L_{\bm}(\om)\ge \Psi^\star_{\bm},\forall \om \neq \om^\star(\bm)$, we conclude that $L_{\bm}(\om^\star(\bm))=0$.
\end{proof}

\medskip
\noindent
So far, we have investigated the consequence of matching the lower bound (\ref{eq:lb}) on a single instance (a single parameter $\bm$). Of course, we wish to get an algorithm matching (\ref{eq:lb}) for all instances. The following theorem shows that this is impossible even for two parameters.

\begin{theorem}\label{thm:conject}
	Consider $\bm,\bp\in\Lambda$ s.t. $\om^\star (\bm)\neq \om^\star (\bp)$ and $\max_{k\in [K]}d(\pi_k,\mu_k)<\Psi^\star_{\bm}$, then there is no strategy satisfying (i) and (ii) simultaneously:
\begin{itemize}
	\item[(i)] (\ref{eq:L}) holds for $\bp$
	\item[(ii)] (\ref{eq:L}) holds for $\bm$
\end{itemize}
\end{theorem}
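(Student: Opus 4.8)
The plan is to argue by contradiction, combining Lemma~\ref{lem:conject} with a change-of-measure (transportation) inequality. Suppose some strategy (necessarily returning $\hat{\imath}=1(\hat{\bm}(T))$, as in the standing assumption of this section) satisfies both (i) and (ii). Applying Lemma~\ref{lem:conject} once with the parameter $\bp$ and once with the parameter $\bm$, the rate functions $L_{\bp}$ and $L_{\bm}$ of the complete LDPs satisfied by $\{\om(T)\}_{T\ge 1}$ under $\bp$ and $\bm$ obey $L_{\bp}(\om)\ge \Psi^\star_{\bp}$ for all $\om\neq \om^\star(\bp)$, $L_{\bp}(\om^\star(\bp))=0$, and likewise $L_{\bm}(\om)\ge \Psi^\star_{\bm}$ for all $\om\neq \om^\star(\bm)$, $L_{\bm}(\om^\star(\bm))=0$, with $\Psi^\star_{\bp},\Psi^\star_{\bm}>0$ (finiteness and positivity being standard since $\bm,\bp\in\Lambda$). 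Since $\om^\star(\bm)\neq \om^\star(\bp)$, I would fix an open ball $O=B(\om^\star(\bp),r)$ with $r$ small enough that $\om^\star(\bm)\notin \cl(O)$, and set $\mathcal{E}_T=\{\om(T)\in O\}$, which is measurable with respect to the observations up to round $T$ since $\om(T)$ is a deterministic function of $(A_1,\dots,A_T)$.

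First I would control the two probabilities via the LDP upper bound~(\ref{eq:rate UB}). The set $\Sigma\setminus O$ is closed and avoids $\om^\star(\bp)$, so $\inf_{\om\in\Sigma\setminus O}L_{\bp}(\om)\ge \Psi^\star_{\bp}$ and hence $1-\PP_{\bp}[\mathcal{E}_T]$ decays exponentially in $T$; in particular $\PP_{\bp}[\mathcal{E}_T]\to 1$. Symmetrically, $\cl(O)$ is closed and avoids $\om^\star(\bm)$, so $\inf_{\om\in\cl(O)}L_{\bm}(\om)\ge \Psi^\star_{\bm}$, and since $\mathcal{E}_T\subseteq\{\om(T)\in\cl(O)\}$, for every $\varepsilon>0$ one has $\PP_{\bm}[\mathcal{E}_T]\le e^{-T(\Psi^\star_{\bm}-\varepsilon)}$ for $T$ large.

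The last ingredient is the transportation inequality of Garivier--Kaufmann type (see, e.g., \citep{garivier2016optimal}): for any event $\mathcal{E}$ measurable w.r.t.\ the first $T$ observations,
$$
\sum_{k=1}^K\E_{\bp}\!\left[N_k(T)\right]d(\pi_k,\mu_k)\ \ge\ \skl\!\left(\PP_{\bp}[\mathcal{E}],\,\PP_{\bm}[\mathcal{E}]\right),
$$
with $\skl(x,y)=x\log\tfrac{x}{y}+(1-x)\log\tfrac{1-x}{1-y}$. Applied to $\mathcal{E}=\mathcal{E}_T$, the left side is bounded by $T\max_k d(\pi_k,\mu_k)$ because $\sum_kN_k(T)=T$. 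For the right side, using $\PP_{\bp}[\mathcal{E}_T]\to1$ with $1-\PP_{\bp}[\mathcal{E}_T]$ exponentially small and $\PP_{\bm}[\mathcal{E}_T]\le e^{-T(\Psi^\star_{\bm}-\varepsilon)}$, I would estimate the four terms of $\skl$ separately to obtain $\lowlim_{T\to\infty}\tfrac1T\skl(\PP_{\bp}[\mathcal{E}_T],\PP_{\bm}[\mathcal{E}_T])\ge \Psi^\star_{\bm}$. Dividing the transportation inequality by $T$ and letting $T\to\infty$ would then yield $\Psi^\star_{\bm}\le \max_k d(\pi_k,\mu_k)$, contradicting the hypothesis $\max_k d(\pi_k,\mu_k)<\Psi^\star_{\bm}$.

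The step I expect to be the main obstacle is this last asymptotic estimate of $\skl$: one must check that the term $(1-\PP_{\bp}[\mathcal{E}_T])\log\frac{1-\PP_{\bp}[\mathcal{E}_T]}{1-\PP_{\bm}[\mathcal{E}_T]}$ is $o(T)$ — which is exactly where exponential (rather than merely $o(1)$) decay of $1-\PP_{\bp}[\mathcal{E}_T]$ is used — and that the dominant term $\PP_{\bp}[\mathcal{E}_T]\log\frac{1}{\PP_{\bm}[\mathcal{E}_T]}$ contributes $\Psi^\star_{\bm}$ in the limit; the degenerate case $\PP_{\bm}[\mathcal{E}_T]=0$ is harmless since then $\skl=+\infty$ and the contradiction is immediate. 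A minor point is that only the LDP upper bound~(\ref{eq:rate UB}) is ever invoked, so the argument does not actually need the full strength of the conjectured complete LDP beyond what Lemma~\ref{lem:conject} extracts.
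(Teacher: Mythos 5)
Your proposal is correct and follows essentially the same route as the paper's proof: Lemma~\ref{lem:conject} applied to both instances, a separating neighbourhood of $\om^\star(\bp)$ avoiding $\om^\star(\bm)$, and the change-of-measure inequality combined with the asymptotics of $\skl$. The only difference is cosmetic — you derive $\PP_{\bp}[\mathcal{E}_T]\to 1$ from the LDP upper bound on the closed complement (which is in fact the cleaner justification of the paper's equation~(\ref{eq:contradiction 1})) and you spell out the $\skl$ estimate and the degenerate case slightly more carefully.
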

\begin{proof}
Assume that, on the contrary, there is such a strategy. By the assumption on $\bm$ and $\bp$, there will be an open set $\mathcal{O}\subset \Sigma$ s.t. $\om^\star (\bp)\in \mathcal{O}$ but $\om^\star (\bm)\notin \mathcal{O}$. On the one hand, $L_{\bp}(\om^\star (\bp))=0$ by Lemma \ref{lem:conject} and (i). Recalling the LDP lower bound (\ref{eq:rate LB}) in Definition \ref{def:rate}, $L_{\bp}(\om^\star (\bp))=0$ directly implies that: 
	\begin{equation}\label{eq:contradiction 1}
\lim_{T\rightarrow\infty}	\PP_{\bp}[\om(T)\in\mathcal{O}]= 1.	
	\end{equation}
	
	\noindent
	On the other hand, (ii) and Lemma \ref{lem:conject} imply that $L_{\bm}(\om)\ge \Psi^\star_{\bm}$ if $\om \neq \om^\star (\bm)$, hence
	\begin{equation}\label{eq:contradition 2}
	\lowlim_{T\rightarrow\infty}\frac{1}{T}\log\frac{1}{\PP_{\bm}[\om(T)\in\mathcal{O}]}\ge \Psi^\star_{\bm}.
	\end{equation}
	Now applying a change-of-measure argument (see Lemma 1 in \cite{kaufmann2016complexity} or equation (6) in \cite{garivier2019explore}), one can derive
	\begin{equation}\label{eq:contradiction 3}
			\sum_{k=1}^K\E_{\bp}\left[ \omega_k(T)\right]d(\pi_k,\mu_k) \ge \frac{1}{T}\skl(\PP_{\bp}[\om(T)\in\mathcal{O}],\PP_{\bm}[\om(T)\in\mathcal{O}])
	\end{equation}
	Using the assumption that  $\max_{k\in [K]}d(\pi_k,\mu_k)<\Psi^\star_{\bm}$, the left-hand side of (\ref{eq:contradiction 3}) is strictly smaller $\Psi_{\bm}^\star$. However, by letting $T\rightarrow \infty $ on the r.h.s. of (\ref{eq:contradiction 3}), (\ref{eq:contradiction 1}) and (\ref{eq:contradition 2}) implies the limitinf is larger than $\Psi_{\bm}^\star$. 
This is a contradiction.
\end{proof}

The consequence of Theorem \ref{thm:conject} is that either our conjecture is true and in which case, for any algorithm there are two instances for which it cannot match the error lower bound (\ref{eq:lb}) or the conjecture is not true (the bound provided in Theorem \ref{thm:Varadaham bandits} is not tight). We finally note that recent results presented in \cite{degenne2023,wang2023uniformly} suggest that indeed the lower bound (\ref{eq:lb}) cannot be achieved.

\newpage
\section{Discussion on the conjectured lower bound (\ref{eq:lb})}\label{app:discussion}
In this section, we discuss two points: (i) (\ref{eq:lb}) indeed corresponds to the conjectured lower bound proposed by \cite{garivier2016optimal}, see their Section 7; (ii) however, as far as we know, there is no proof for (\ref{eq:lb}), but one can derive a lower bound by inverting $\max$ and $\inf$ in (\ref{eq:lb}).

\noindent
(i) Without loss of generality, assume that $\mu$ is such that 1 is the best arm. We start from (1) and show that this is equivalent to Garivier-Kaufmann's formula. First, it can be easily checked that in (1), we can replace $\Sigma$ by $\Sigma_{>0}=\{\om \in \Sigma:\omega_k>0,\forall k\in [K]\}$. Then, for any $\omega\in \Sigma_{>0}$, we have 
$$
\inf_{\lambda\in \alt (\bm)}  \sum_k\omega_kd(\lambda_k,\mu_k) =\min_{m\neq 1} \inf_{\mu_m<  x < \mu_1} \omega_1 d(x,\mu_1) + \omega_md(x,\mu_m).
$$
Indeed, we can decompose $\alt (\mu)$ as $\cup_{m\neq 1}\{\lambda\in \Lambda: \lambda_m> \lambda_1 \}$, and thus, we have:
$$
\inf_{\lambda\in \alt (\bm)}  \sum_k\omega_kd(\lambda_k,\mu_k) = 	\min_{m\neq 1}\inf_{\lambda_m>\lambda_1} \sum_k\omega_kd(\lambda_k,\mu_k).
$$
We conclude by observing that 
$$
\inf_{\lambda_m> \lambda_1} \omega_1 d(\lambda_1,\mu_1) + \omega_md(\lambda_m,\mu_m)=  \inf_{\mu_m< x < \mu_1} \omega_1 d(x,\mu_1) + \omega_kd(x,\mu_m),
$$
which holds for all families of distributions such that $x\mapsto d(x,y)$ is monotonic (decreasing before $y$ and increasing after $y$) -- this holds for Bernoulli, Gaussian, etc.

\medskip
\noindent
(ii) Consider a consistent algorithm, and denote by $\omega_k(\bl)$ the expected proportion of rounds where the algorithm selects arm $k$ under the probability $\PP_{\bl}$. Using the classical change-of-measure arguments, we get:
$$
\limsup_{T\rightarrow\infty}\log \frac{1}{{\PP}_{\bm}[\hat{\imath}\neq 1]} \le T \sum_k\omega_k(\bl)d(\lambda_k,\mu_k)\le T \max_{\om\in\Sigma} \sum_k\omega_kd(\lambda_k,\mu_k).
$$
We can only deduce that:
$$
\limsup_{T\rightarrow\infty}\frac{1}{T}\log\frac{1}{{\PP}_{\bm}[\hat{\imath}\neq 1]}  \le \inf_{\bl\in \alt(\bm)}\max_{\om\in \Sigma} \sum_k\omega_kd(\lambda_k,\mu_k).
$$
One cannot directly apply Sion's minimax theorem to derive (1) (as $\alt(\bm)$ is not a convex domain).

\newpage
\section{Numerical experiments}\label{app:exp}

We consider various problem instances to numerically evaluate the performance of \sred. In these instances, we vary the number of arms from 5 to 55; we use Bernoulli distributed rewards, and vary the shape of the arm-to-reward mapping. For each instance, we compare \crc and \cra to \sr\citep{audibert2010best}, \sh \citep{karnin2013almost}, and \ugape \citep{gabillon2012best}. As discussed in Section \ref{sec:related}, \ugape requires prior knowledge about a parameter depending on the underlying problem. We hence implement its heuristic version which estimates the parameter on the fly, such modification was suggested in previous works e.g. \citep{audibert2010best,karnin2013almost}. We implement all algorithms in {\tt Julia 1.7.3} and run all experiments on a machine with Apple M1 with 16 GB RAM.\footnote{Our Julia implementation can be found at \href{https://github.com/rctzeng/NeurIPS2023-CR}{\color{blue}https://github.com/rctzeng/NeurIPS2023-CR}.} The error probabilities averaged over $40,000$ independent runs. In all experiments, we set $\theta_0=10^{-5}$ for \sred. 

We vary the shape of the arm-to-reward function and consider one shape in each of the subsections below. We present the error probability of all algorithms in tables and figures. In the latter, the error probability is presented using the log scale, which sometimes makes the curves for some algorithms close to each other. In the tables, we present the error probability for a few budgets only, and there, we can see a clearer separation between the performance of the various algorithms.

Observe that our algorithms, \sred, perform better than the other algorithms for all arm-to-reward function shapes.

\subsection{One group of suboptimal arms}\label{num:1}
This instance is considered by \citep{karnin2013almost}: $\mu_1=0.5$ and $\mu_k=0.45$ for all $k\geq 2$. We can see that the performances of \sr, \crc, and \cra are significantly better than \ugape and \sh. 
\begin{figure}[h!]
		\centering
		\includegraphics[width=0.5\linewidth]{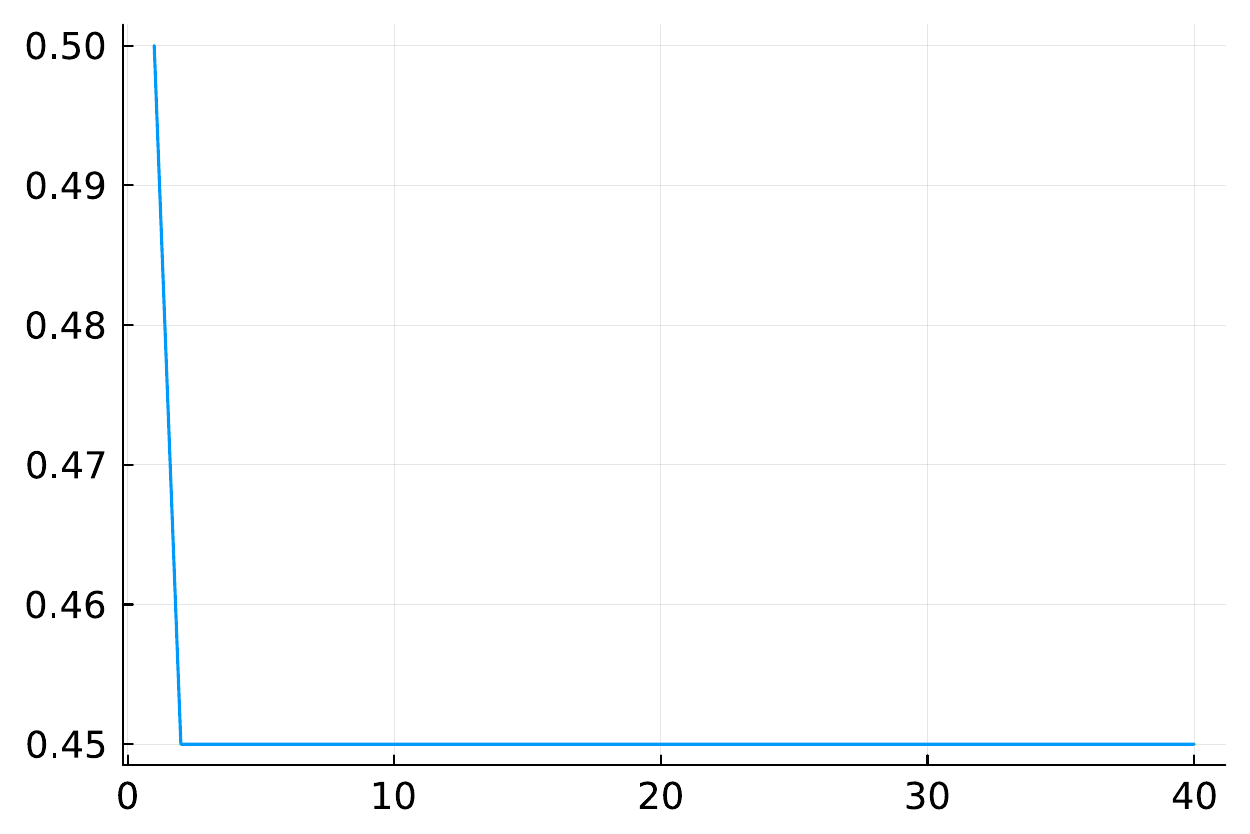}
		\caption{(One group of suboptimal arms) $\bm$ with $K=40$.}\label{fig:1_group-mu}
\end{figure}

\begin{table}[htb!]
\caption{(One group of suboptimal arms) error probabilities (in \%).\label{tab:1_group-mu}}
\centering
\begin{tabular}{lllccc} \toprule
  $K=10$ & & & $T=6,400$ & $T=7,200$ & $T=8,000$\\\midrule
  & \ugape & \citep{gabillon2012best} & $34.66$ & $33.22$ & $32.15$\\\hline
  & \sh &\cite{karnin2013almost} & $19.64$ & $16.85$ & $14.46$\\\hline
  & \sr &\cite{audibert2010best} & $7.86$ & $5.86$ & $4.29$\\\hline
  & \crc &(this paper) & $\mathbf{7.29}$ & $\mathbf{5.47}$ & $4.17$\\\hline
  & \cra &(this paper) & $7.37$ & $5.52$ & $\mathbf{4.07}$\\\bottomrule
\end{tabular}
\begin{tabular}{lllccc} \toprule
  $K=20$ & & & $T=12,000$ & $T=14,000$ & $T=16,000$\\\midrule
  & \ugape & \citep{gabillon2012best} & $42.77$ & $39.93$ & $38.62$\\\hline
  & \sh &\cite{karnin2013almost} & $25.17$ & $20.86$ & $17.94$\\\hline
  & \sr &\cite{audibert2010best} & $9.43$ & $6.59$ & $4.35$\\\hline
  & \crc &(this paper) & $\mathbf{8.39}$ & $\mathbf{5.92}$ & $\mathbf{4.08}$\\\hline
  & \cra &(this paper) & $8.80$ & $6.16$ & $4.42$\\\bottomrule
\end{tabular}
\begin{tabular}{lllccc} \toprule
  $K=40$ & & & $T=30,000$ & $T=35,000$ & $T=40,000$\\\midrule
  & \ugape & \citep{gabillon2012best} & $42.84$ & $40.46$ & $38.11$\\\hline
  & \sh &\cite{karnin2013almost} & $23.26$ & $19.24$ & $16.12$\\\hline
  & \sr &\cite{audibert2010best} & $6.51$ & $4.48$ & $3.14$\\\hline
  & \crc &(this paper) & $\mathbf{5.96}$ & $\mathbf{3.99}$ & $\mathbf{2.89}$\\\hline
  & \cra &(this paper) & $6.56$ & $4.27$ & $3.06$\\\bottomrule
\end{tabular}
\end{table}

\begin{figure}[h!]
	\centering
	\subcaptionbox{$K=10$ \label{fig:1_group-K10}}{
		\includegraphics[width=0.49\textwidth]{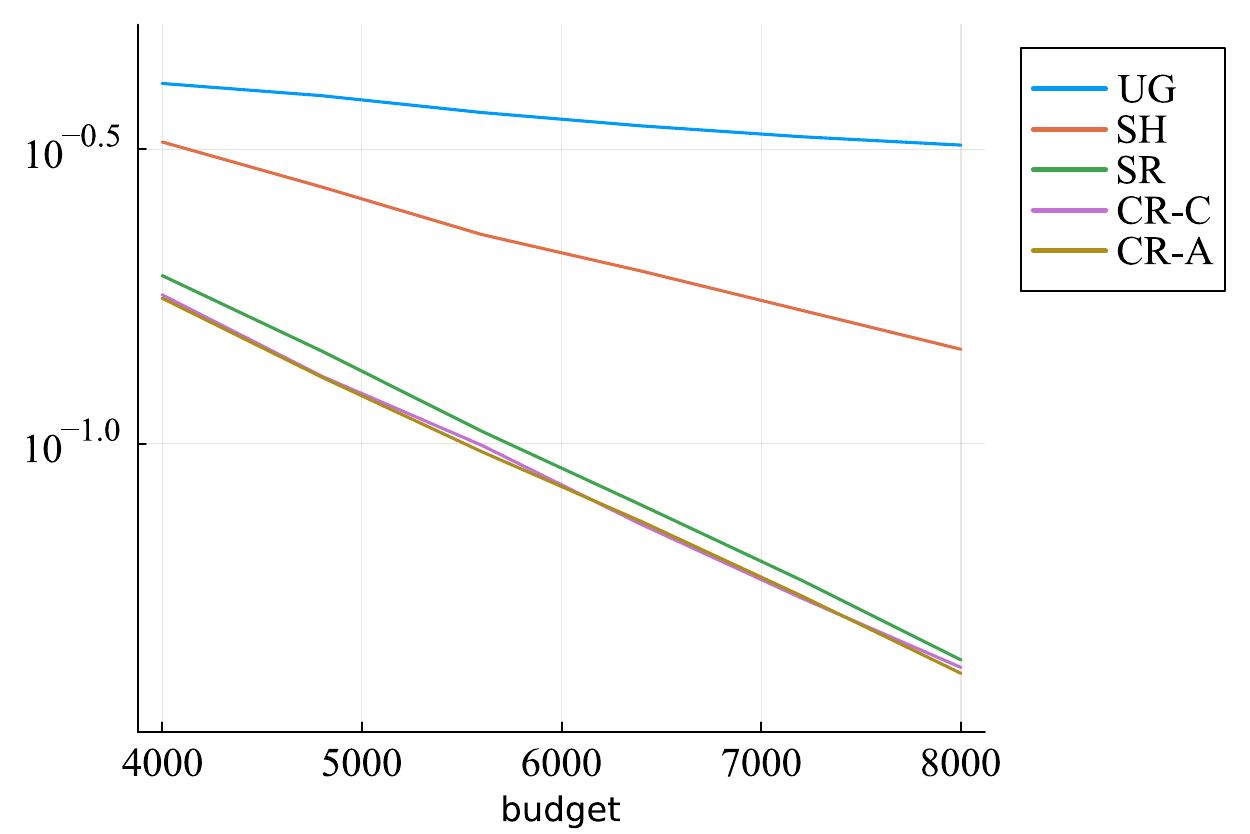}}
	\subcaptionbox{$K=20$ \label{fig:1_group-K20}}{
		\includegraphics[width=0.49\textwidth]{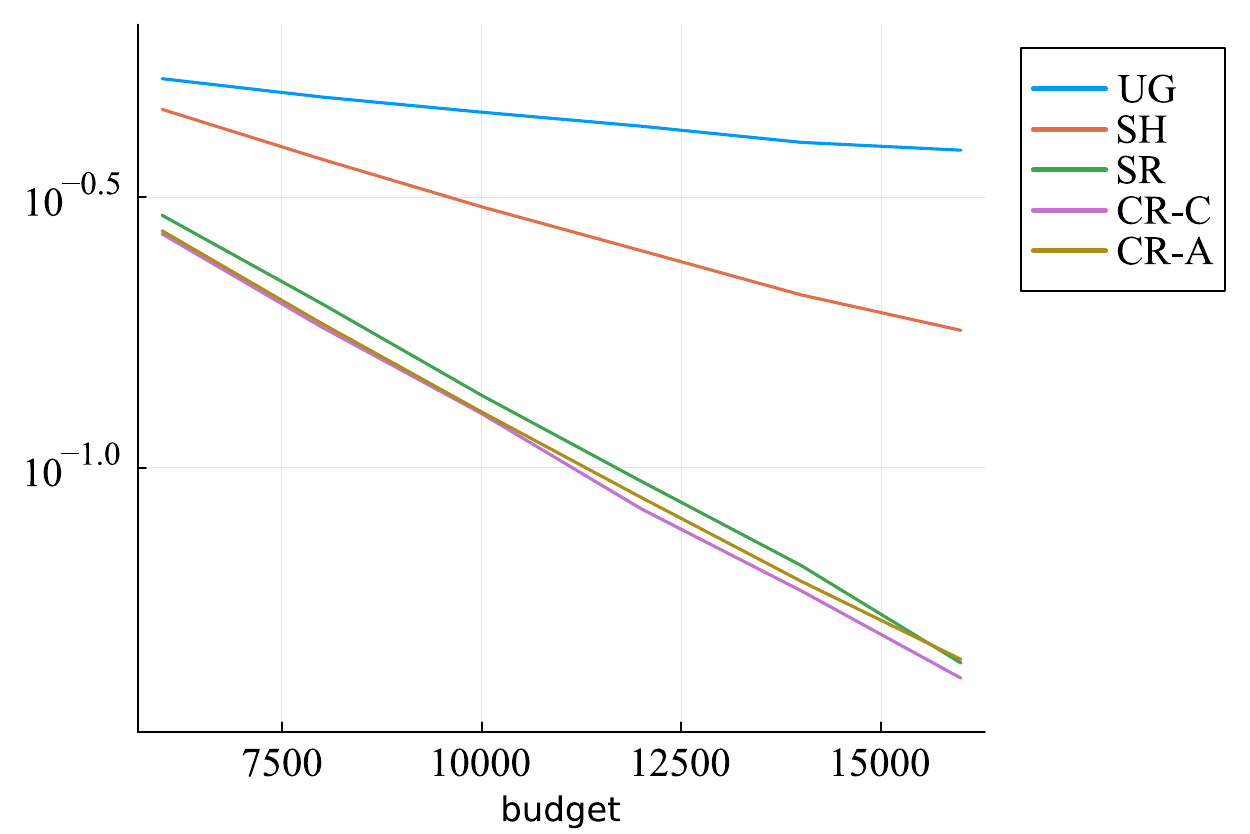}}
	\subcaptionbox{$K=40$ \label{fig:1_group-K40}}{
		\includegraphics[width=0.49\textwidth]{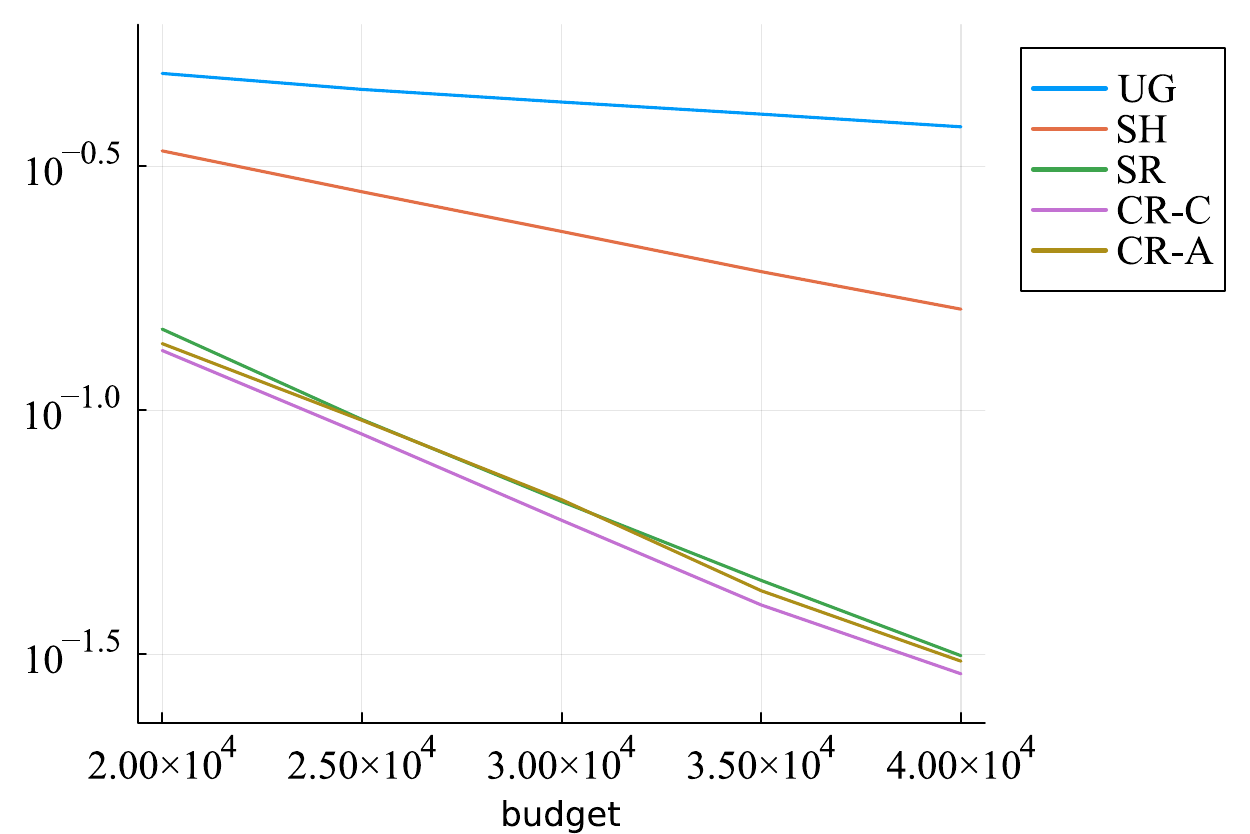}}
	\caption{(One group of suboptimal arms) error probabilities averaged over $40,000$ independent runs.}
\end{figure}

\clearpage\pagebreak
\subsection{Two groups of suboptimal arms}\label{num:2}
In this instance, we set $\mu_1=0.5$, $\mu_k=0.45$ for $k=2,\cdots,\lfloor \frac{K-1}{2}\rfloor$, and $\mu_k=0.4$ for $k=\lfloor \frac{K-1}{2}\rfloor+1,\cdots,K$. Compared to \ref{num:1}, where \crc is always the best, \cra becomes relatively better here.

\begin{figure}[h!]
	\centering
	\includegraphics[width=0.5\linewidth]{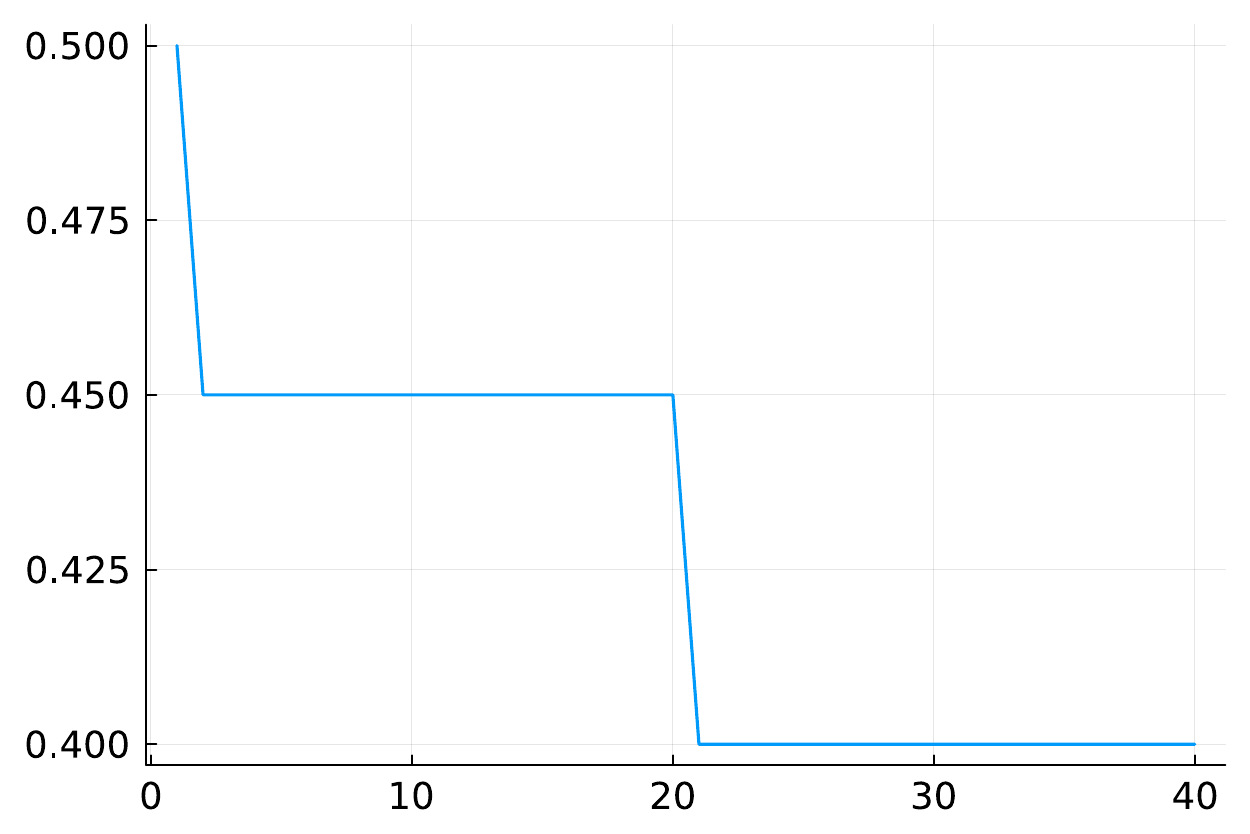}
	\caption{(Two groups of suboptimal arms) $\bm$ with $K=40$.}\label{fig:2_group-mu}
\end{figure}

\begin{table}[htb!]
\caption{(Two groups of suboptimal arms) error probabilities (in \%).
\label{tab:2_group-mu}}
\centering
\begin{tabular}{lllccc} \toprule
  $K=10$ & & & $T=5,600$ & $T=6,800$ & $T=8,000$\\\midrule
  & \ugape & \citep{gabillon2012best} & $25.50$ & $23.02$ & $20.97$\\\hline
  & \sh &\cite{karnin2013almost} & $11.12$ & $7.71$ & $5.35$\\\hline
  & \sr &\cite{audibert2010best} & $4.05$ & $2.30$ & $1.20$\\\hline
  & \crc &(this paper) & $3.68$ & $2.05$ & $1.13$\\\hline
  & \cra &(this paper) & $\mathbf{3.44}$ & $\mathbf{1.88}$ & $\mathbf{0.99}$\\\bottomrule
\end{tabular}
\begin{tabular}{lllccc} \toprule
  $K=20$ & & & $T=4,000$ & $T=7,000$ & $T=10,000$\\\midrule
  & \ugape & \citep{gabillon2012best} & $48.49$ & $40.68$ & $36.12$\\\hline
  & \sh & \cite{karnin2013almost} & $42.62$ & $25.69$ & $16.15$\\\hline
  & \sr &\cite{audibert2010best} & $28.26$ & $12.03$ & $5.03$\\\hline
  & \crc &(this paper) & $26.54$ & $10.62$ & $4.52$\\\hline
  & \cra &(this paper) & $\mathbf{26.00}$ & $\mathbf{10.61}$ & $\mathbf{4.27}$\\\bottomrule
\end{tabular}
\begin{tabular}{lllccc} \toprule
  $K=40$ & & & $T=15,000$ & $T=20,000$ & $T=25,000$\\\midrule
  & \ugape & \citep{gabillon2012best} & $46.04$ & $41.80$ & $38.30$\\\hline
  & \sh &\cite{karnin2013almost} & $27.24$ & $18.98$ & $13.82$\\\hline
  & \sr &\cite{audibert2010best} & $9.97$ & $5.03$ & $2.43$\\\hline
  & \crc &(this paper) & $\mathbf{9.12}$ & $4.49$ & $\mathbf{2.24}$\\\hline
  & \cra &(this paper) & $9.26$ & $\mathbf{4.78}$ & $2.38$\\\bottomrule
\end{tabular}
\end{table}

\begin{figure}[h!]
	\centering
	\subcaptionbox{$K=10$ \label{fig:2_group-K10}}{
		\includegraphics[width=0.49\textwidth]{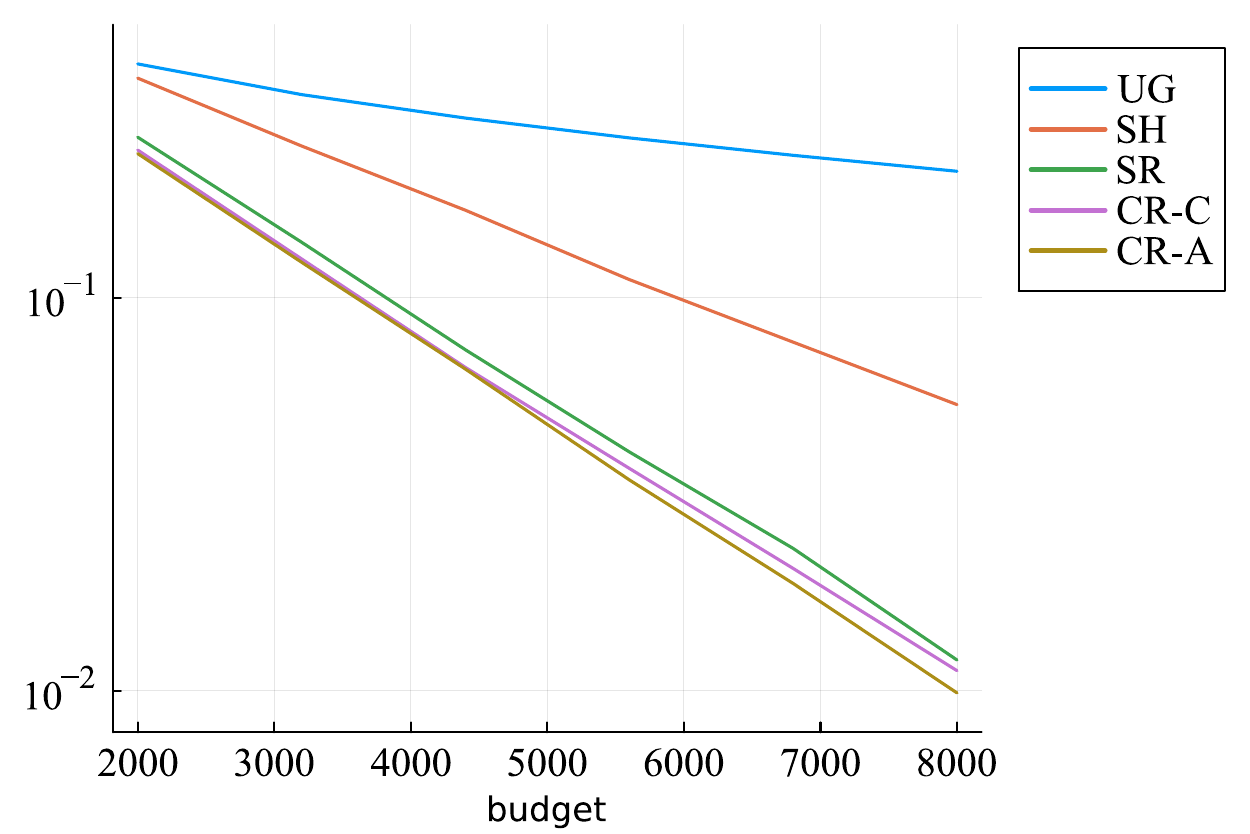}}
	\subcaptionbox{$K=20$ \label{fig:2_group-K20}}{
		\includegraphics[width=0.49\textwidth]{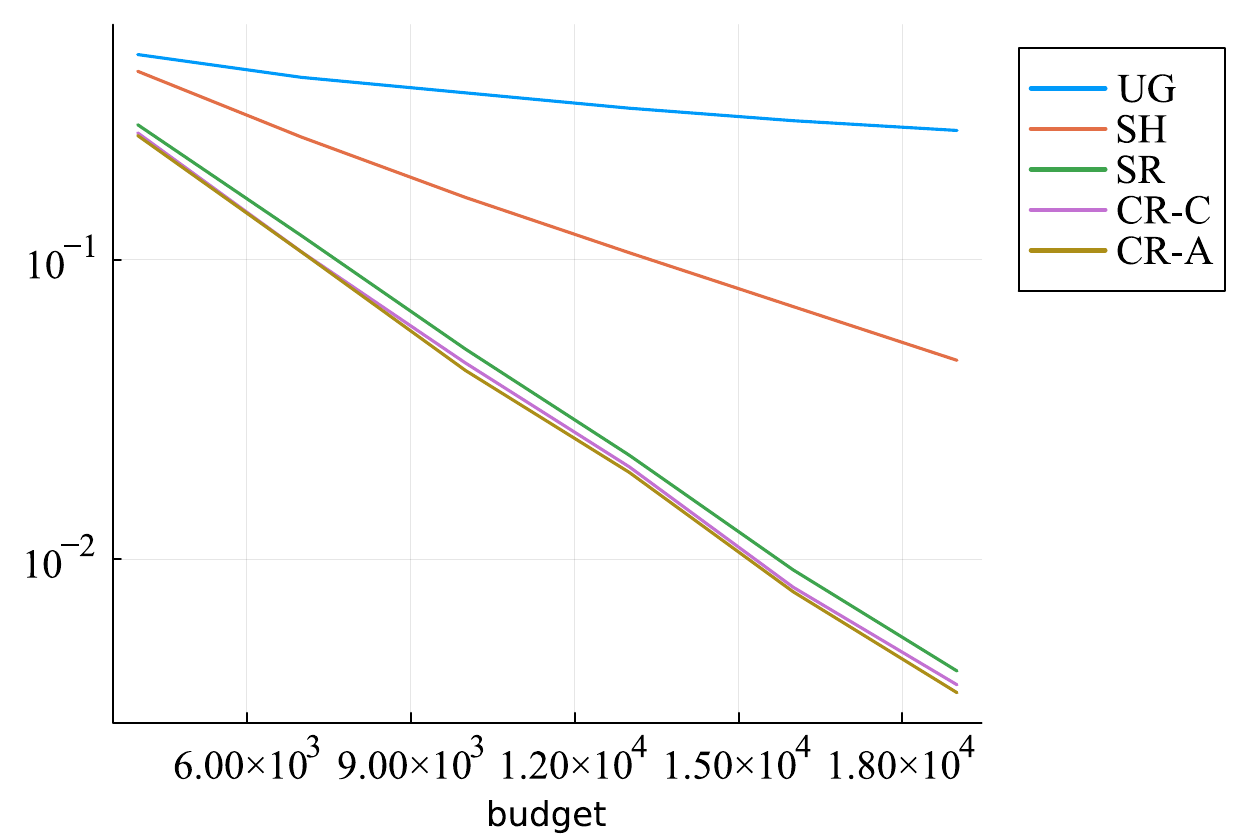}}
	\subcaptionbox{$K=40$ \label{fig:2_group-K40}}{
		\includegraphics[width=0.49\textwidth]{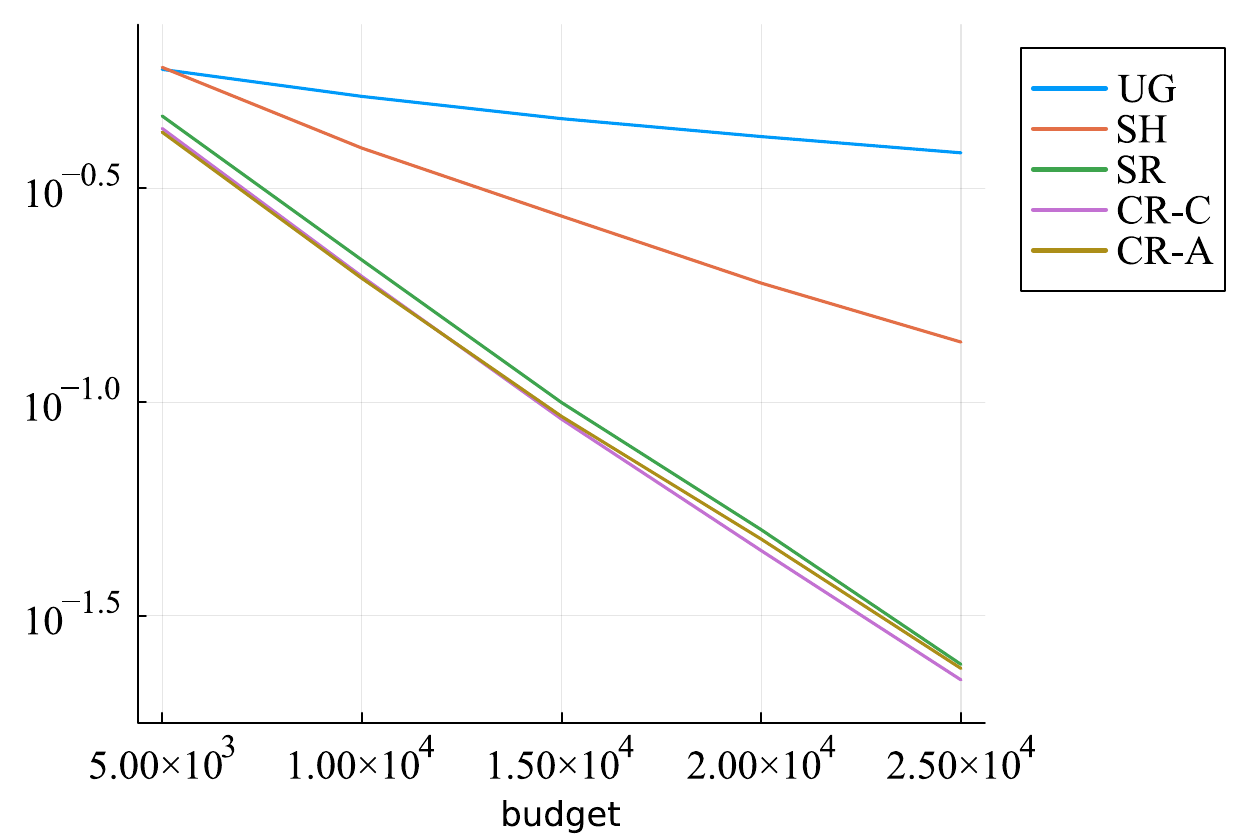}}
	\caption{(Two groups of suboptimal arms) error probabilities averaged over $40,000$ independent runs.}
\end{figure}

\clearpage\pagebreak
\subsection{Linear arm-to-reward function}\label{num:3}
In this instance, we set $\mu_k=\frac{3}{4}-\frac{k-1}{2K}$ for $k=1,\cdots,K$. Here \cra does the best. 

\begin{figure}[h!]
	\centering
	\includegraphics[width=0.5\linewidth]{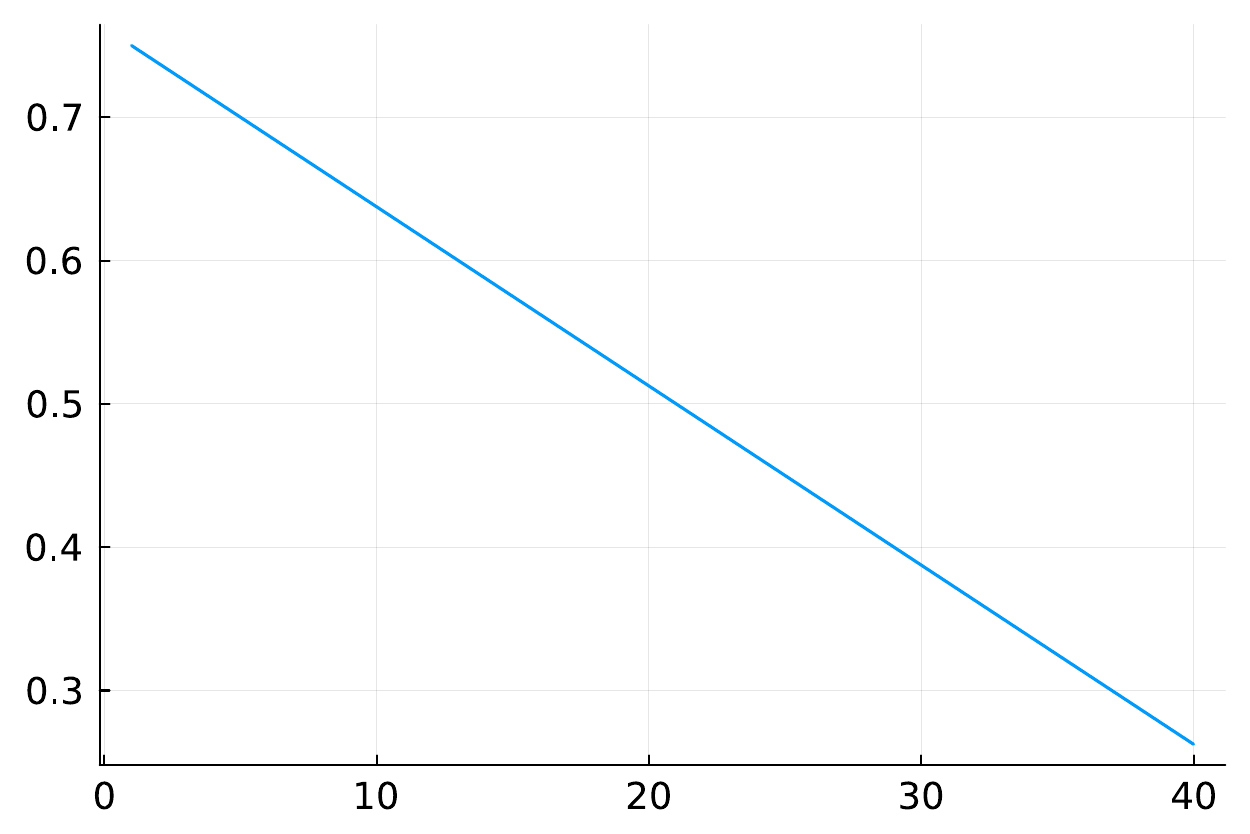}
	\caption{(Linear arm-to-reward function) $\bm$ with $K=40$.}\label{fig:liner-mu}
\end{figure}

\begin{table}[htb!]
\caption{(Linear arm-to-reward function) error probability (in \%).
\label{tab:linear-mu}}
\centering
\begin{tabular}{lllccc} \toprule
  $K=10$ & & & $T=3,200$ & $T=3,600$ & $T=4,000$\\\midrule
  & \ugape & \citep{gabillon2012best} & $4.96$ & $4.22$ & $3.24$\\\hline
  & \sh &\cite{karnin2013almost} & $4.97$ & $4.21$ & $3.49$\\\hline
  & \sr &\cite{audibert2010best} & $2.09$ & $1.53$ & $1.03$\\\hline
  & \crc &(this paper) & $1.59$ & $1.04$ & $0.83$\\\hline
  & \cra &(this paper) & $\mathbf{1.20}$ & $\mathbf{0.81}$ & $\mathbf{0.55}$\\\bottomrule
\end{tabular}
\begin{tabular}{lllccc} \toprule
  $K=20$ & & & $T=6,000$ & $T=8,000$ & $T=10,000$\\\midrule
  & \ugape & \citep{gabillon2012best} & $15.49$ & $11.20$ & $8.78$\\\hline
  & \sh &\cite{karnin2013almost} & $16.00$ & $12.66$ & $9.70$\\\hline
  & \sr &\cite{audibert2010best} & $10.76$ & $7.57$ & $5.24$\\\hline
  & \crc &(this paper) & $10.03$ & $6.96$ & $4.68$\\\hline
  & \cra &(this paper) & $\mathbf{8.78}$ & $\mathbf{5.72}$ & $\mathbf{3.73}$\\\bottomrule
\end{tabular}
\begin{tabular}{lllccc} \toprule
  $K=40$ & & & $T=15,000$ & $T=20,000$ & $T=25,000$\\\midrule
  & \ugape & \citep{gabillon2012best} & $25.09$ & $20.21$ & $16.44$\\\hline
  & \sh &\cite{karnin2013almost} & $25.70$ & $21.19$ & $17.92$\\\hline
  & \sr &\cite{audibert2010best} & $20.29$ & $15.86$ & $13.07$\\\hline
  & \crc &(this paper) & $19.93$ & $15.56$ & $12.30$\\\hline
  & \cra &(this paper) & $\mathbf{17.99}$ & $\mathbf{13.74}$ & $\mathbf{10.67}$\\\bottomrule
\end{tabular}
\end{table}

\begin{figure}[h!]
	\centering
	\subcaptionbox{$K=10$ \label{fig:linear-K10}}{
		\includegraphics[width=0.49\textwidth]{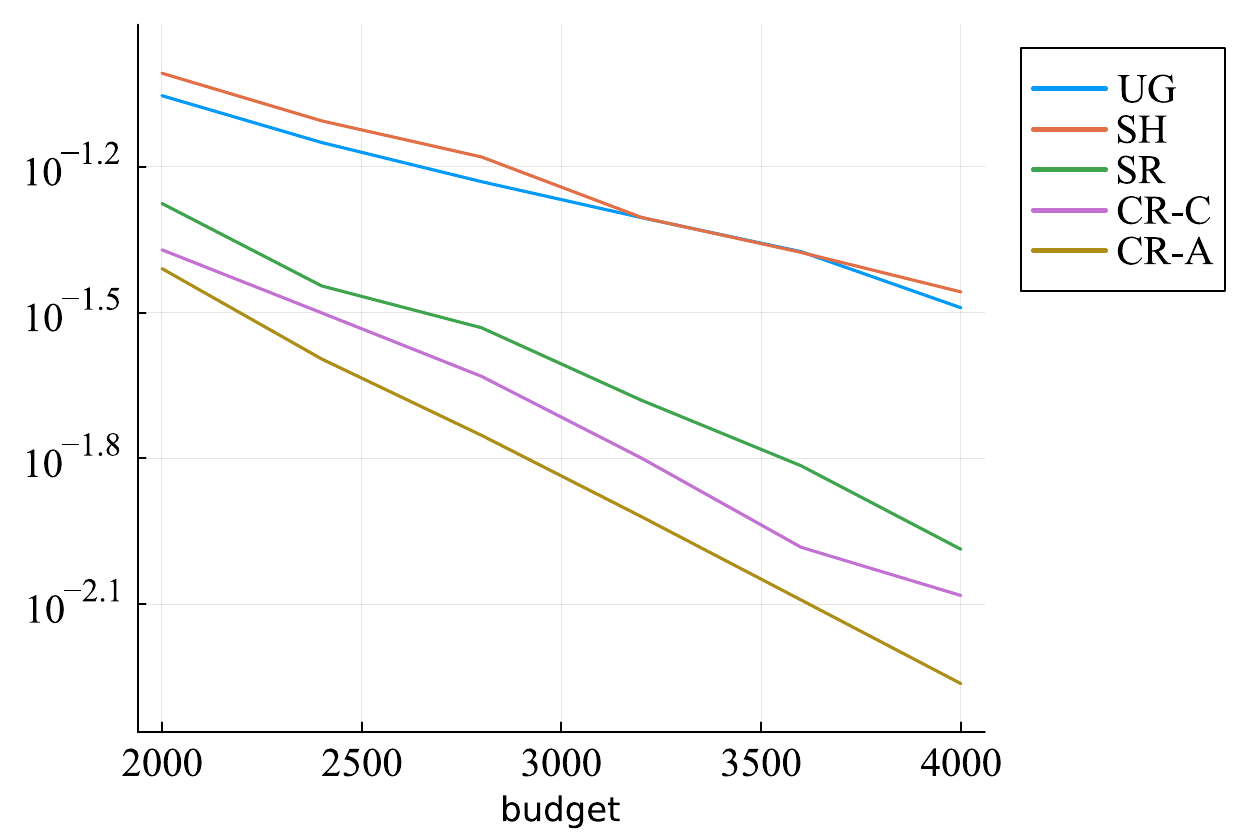}}
	\subcaptionbox{$K=20$ \label{fig:linear-K20}}{
		\includegraphics[width=0.49\textwidth]{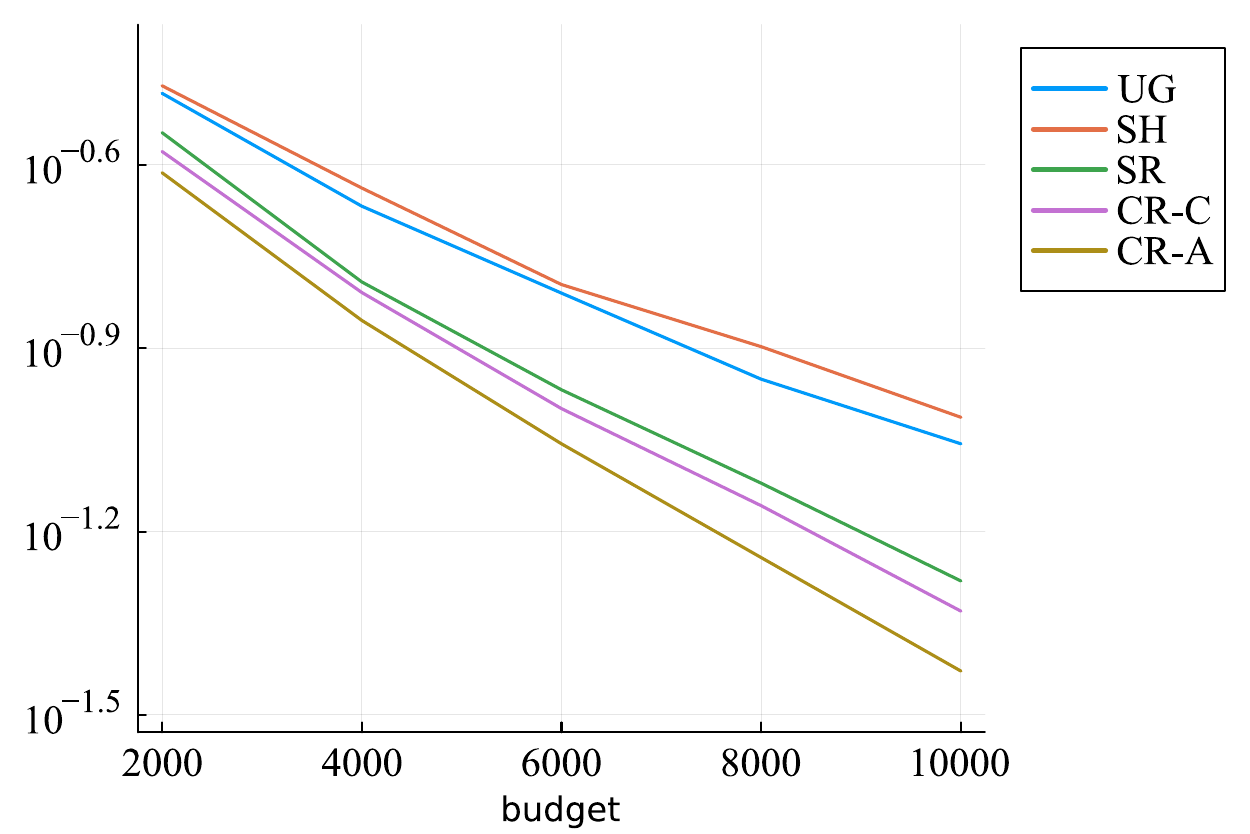}}
	\subcaptionbox{$K=40$ \label{fig:linear-K40}}{
		\includegraphics[width=0.49\textwidth]{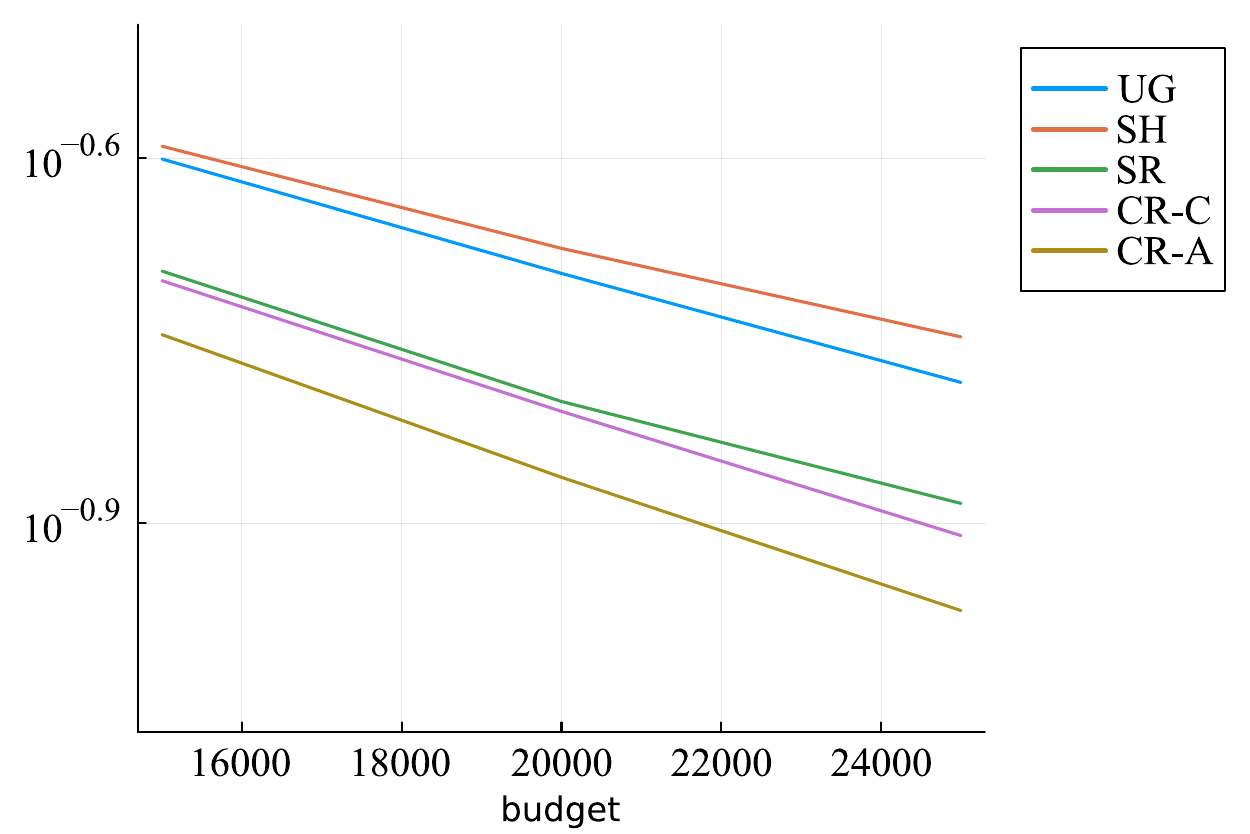}}
	\caption{(Linear arm-to-reward function) error probabilities averaged over $40,000$ independent runs.}
\end{figure}

\clearpage\pagebreak
\subsection{Concave arm-to-reward function}\label{num:4}
In this instance, we set $\mu_1=\sin(\frac{(K-1)\pi}{2K})$ and $\mu_k=\sin(\frac{9\pi(K-k+1)}{20K})$ for $k=2,\cdots,K$. \cra does the best in this instance all the time.
\begin{figure}[h!]
	\centering
	\includegraphics[width=0.5\linewidth]{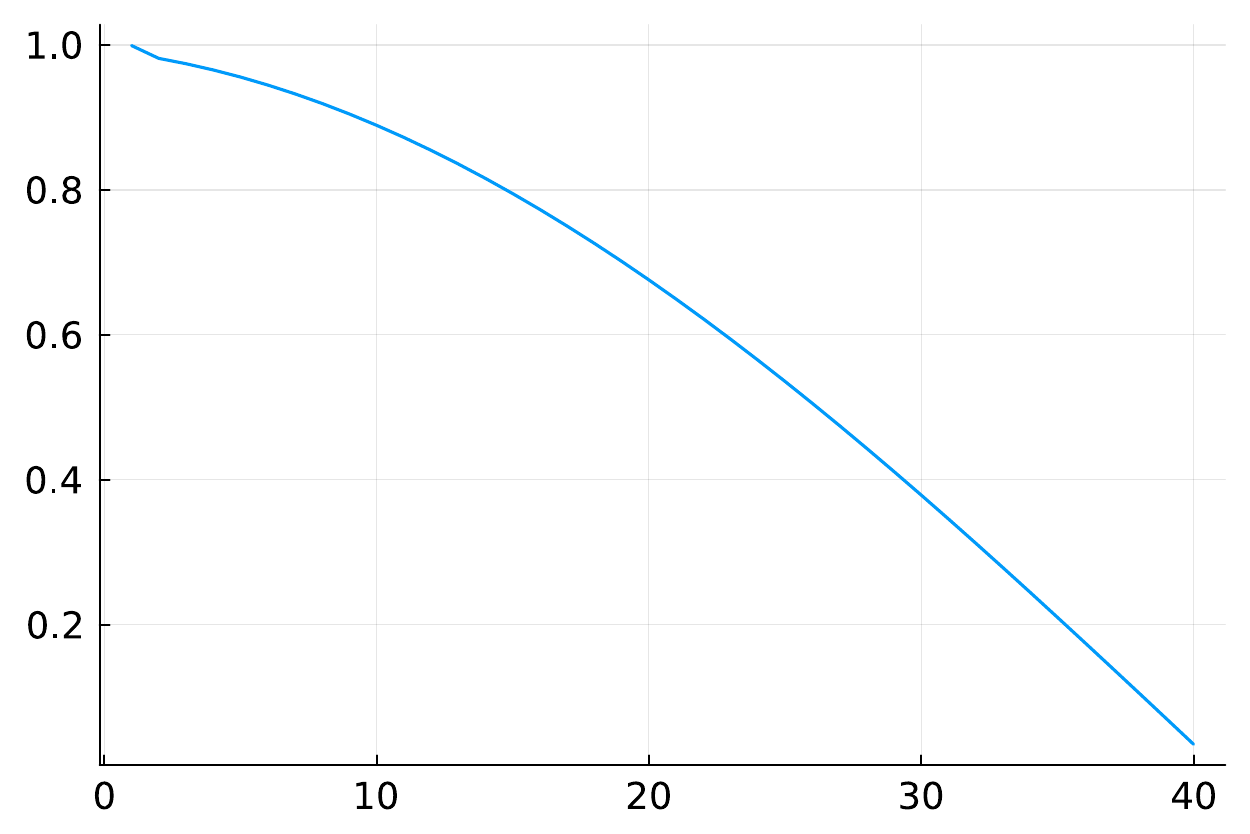}
	\caption{(Concave arm-to-reward function) $\bm$ with $K=40$.}\label{fig:concave-mu}
\end{figure}

\begin{table}[htb!]
\caption{(Concave arm-to-reward function) error probability (in \%).
\label{tab:concave-mu}}
\centering
\begin{tabular}{lllccc} \toprule
  $K=10$ & & & $T=900$ & $T=1,400$ & $T=1,900$\\\midrule
  & \ugape & \citep{gabillon2012best} & $2.23$ & $1.36$ & $0.94$\\\hline
  & \sh &\cite{karnin2013almost} & $4.68$ & $2.15$ & $0.89$\\\hline
  & \sr &\cite{audibert2010best} & $1.85$ & $0.54$ & $0.20$\\\hline
  & \crc &(this paper) & $1.24$ & $0.35$ & $0.10$\\\hline
  & \cra &(this paper) & $\mathbf{0.94}$ & $\mathbf{0.21}$ & $\mathbf{0.04}$\\\bottomrule
\end{tabular}
\begin{tabular}{lllccc} \toprule
  $K=20$ & & & $T=900$ & $T=1,400$ & $T=1,900$\\\midrule
  & \ugape & \citep{gabillon2012best} & $2.44$ & $1.85$ & $1.59$\\\hline
  & \sh &\cite{karnin2013almost} & $6.62$ & $2.62$ & $1.28$\\\hline
  & \sr &\cite{audibert2010best} & $2.81$ & $0.86$ & $0.31$\\\hline
  & \crc &(this paper) & $1.87$ & $0.47$ & $0.14$\\\hline
  & \cra &(this paper) & $\mathbf{1.36}$ & $\mathbf{0.36}$ & $\mathbf{0.09}$\\\bottomrule
\end{tabular}
\begin{tabular}{lllccc} \toprule
  $K=40$ & & & $T=2,400$ & $T=2,800$ & $T=3,200$\\\midrule
  & \ugape & \citep{gabillon2012best} & $1.03$ & $0.98$ & $0.94$\\\hline
  & \sh &\cite{karnin2013almost} & $1.26$ & $0.60$ & $0.35$\\\hline
  & \sr &\cite{audibert2010best} & $0.23$ & $0.10$ & $0.02$\\\hline
  & \crc &(this paper) & $0.18$ & $0.08$ & $0.04$\\\hline
  & \cra &(this paper) & $\mathbf{0.08}$ & $\mathbf{0.03}$ & $\mathbf{0.02}$\\\bottomrule
\end{tabular}
\end{table}

\begin{figure}[h!]
	\centering
	\subcaptionbox{$K=10$ \label{fig:concave-K10}}{
		\includegraphics[width=0.48\textwidth]{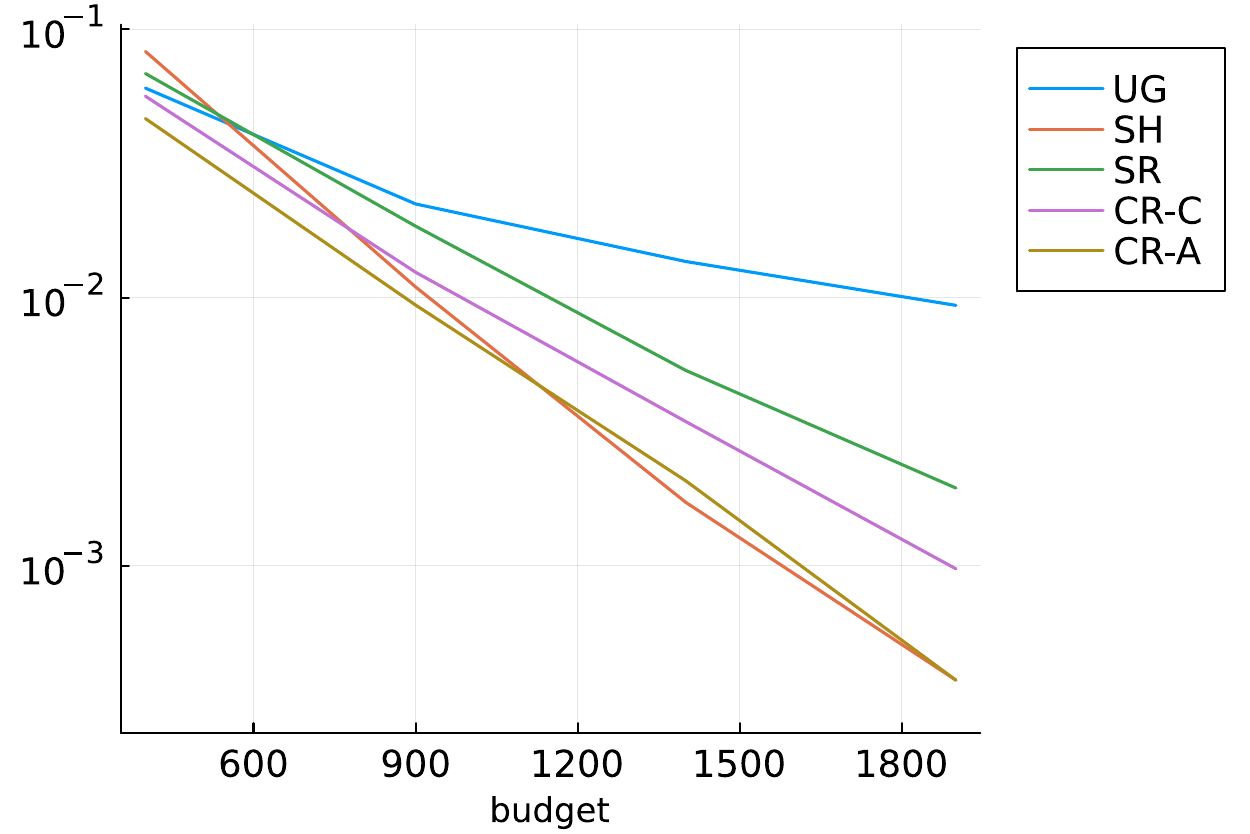}}
	\subcaptionbox{$K=20$ \label{fig:concave-K20}}{
		\includegraphics[width=0.48\textwidth]{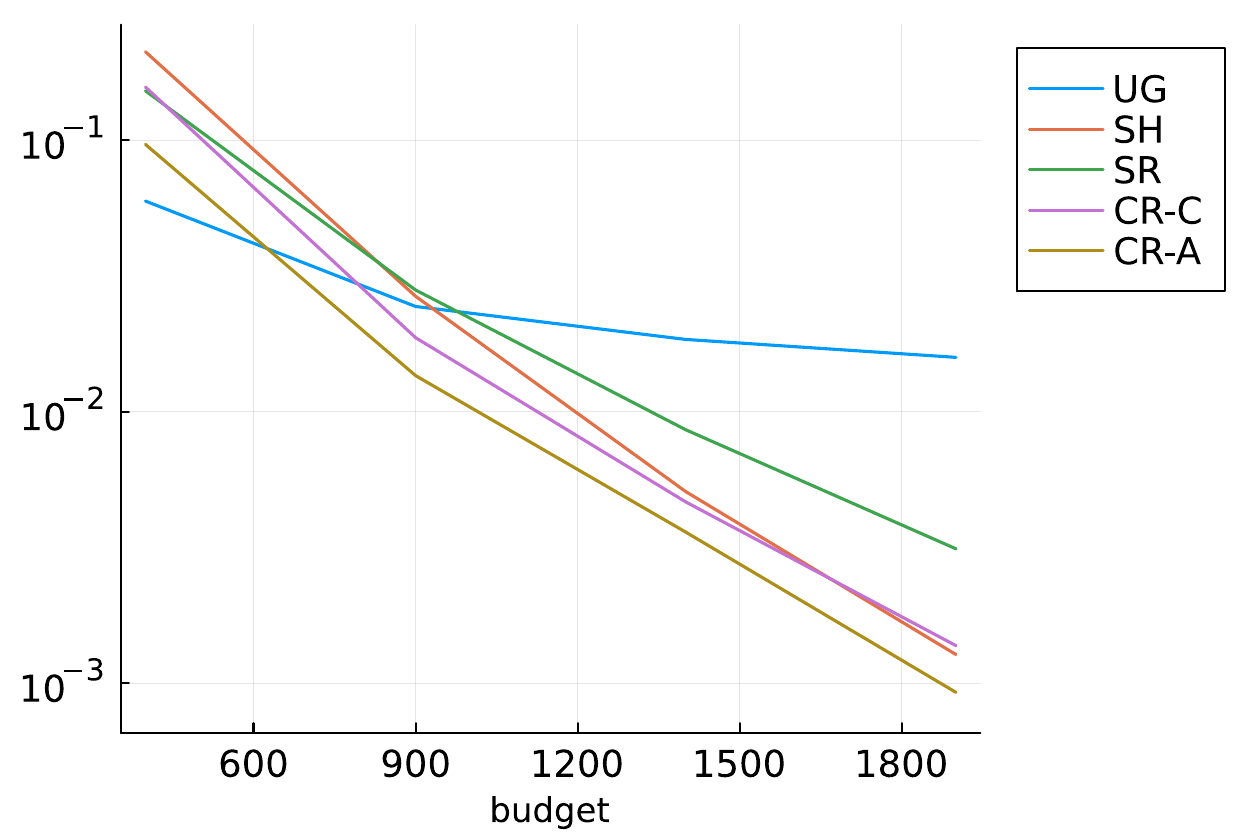}}
	\subcaptionbox{$K=40$ \label{fig:concave-K40}}{
		\includegraphics[width=0.48\textwidth]{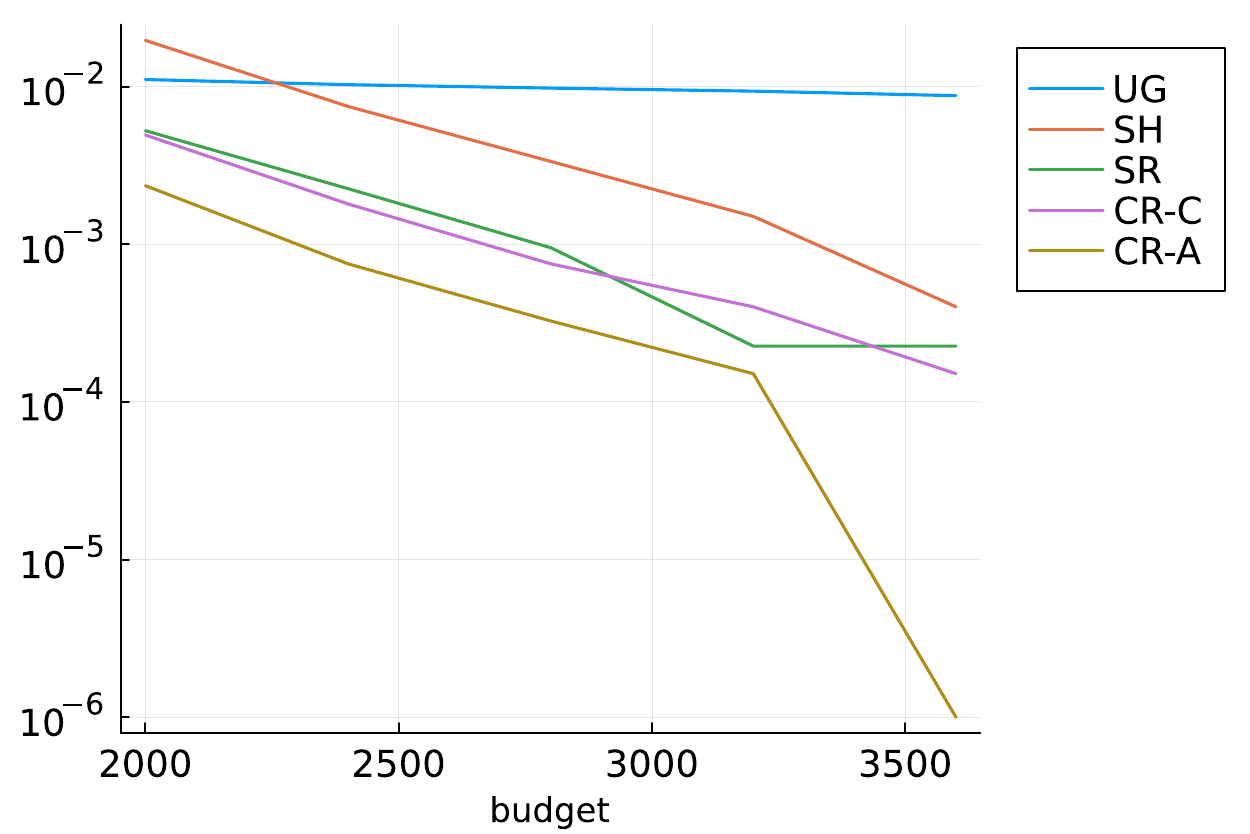}}
	\caption{(Concave arm-to-reward function) error probabilities averaged over $40,000$ independent runs.}
\end{figure}


\clearpage\pagebreak
\subsection{Convex arm-to-reward function}\label{num:5} 
In this instance, we set $\mu_k=\frac{3}{10(k+1)}$ for $k=1,\cdots,K$. Although \sr sometimes does better than \crc, \crc becomes better than \sr when there is more budget given. This confirms our theoretical analysis for \crc (see Theorem \ref{thm:Cj}).

\begin{figure}[h!]
	\centering
	\includegraphics[width=0.5\linewidth]{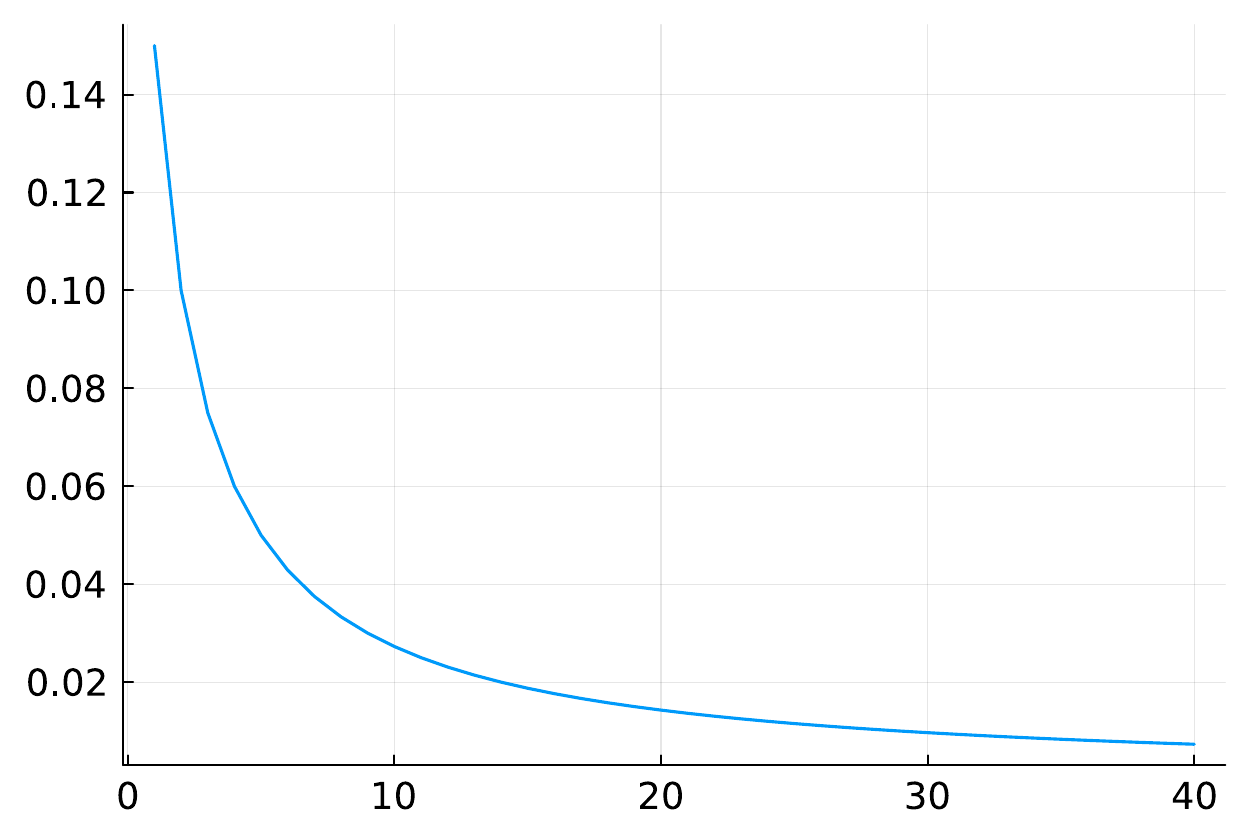}
	\caption{(Convex arm-to-reward function) $\bm$ with $K=40$.}\label{fig:convex-mu}
\end{figure}

\begin{table}[htb!]
\caption{(Convex arm-to-reward function) error probability (in \%).
\label{tab:convex-mu}}
\centering
\begin{tabular}{lllccc} \toprule
  $K=10$ & & & $T=1,500$ & $T=2,000$ & $T=2,500$\\\midrule
  & \ugape & \citep{gabillon2012best} & $13.48$ & $10.08$ & $7.77$\\\hline
  & \sh &\cite{karnin2013almost} & $10.20$ & $5.94$ & $3.84$\\\hline
  & \sr &\cite{audibert2010best} & $3.15$ & $1.45$ & $0.83$\\\hline
  & \crc &(this paper) & $3.12$ & $1.47$ & $0.65$\\\hline
  & \cra &(this paper) & $\mathbf{2.99}$ & $\mathbf{1.27}$ & $\mathbf{0.62}$\\\bottomrule
\end{tabular}
\begin{tabular}{lllccc} \toprule
  $K=20$ & & & $T=3,000$ & $T=3,500$ & $T=4,000$\\\midrule
  & \ugape & \citep{gabillon2012best} & $10.67$ & $8.95$ & $7.76$ \\\hline
  & \sh &\cite{karnin2013almost} & $6.60$ & $4.56$ & $2.78$ \\\hline
  & \sr &\cite{audibert2010best} & $0.96$ & $0.55$ & $0.29$ \\\hline
  & \crc &(this paper) & $0.93$ & $0.56$ & $0.28$ \\\hline
  & \cra &(this paper) & $\mathbf{0.79}$ & $\mathbf{0.39}$ & $\mathbf{0.21}$ \\\bottomrule
\end{tabular}
\begin{tabular}{lllccc} \toprule
  $K=40$ & & & $T=4,400$ & $T=5,200$ & $T=6,000$\\\midrule
  & \ugape & \citep{gabillon2012best} & $12.16$ & $9.95$ & $8.28$\\\hline
  & \sh &\cite{karnin2013almost} & $7.51$ & $4.76$ & $2.96$\\\hline
  & \sr &\cite{audibert2010best} & $\mathbf{1.21}$ & $0.58$ & $0.30$\\\hline
  & \crc &(this paper) & $1.28$ & $\mathbf{0.49}$ & $\mathbf{0.25}$\\\hline
  & \cra &(this paper) & $1.34$ & $0.60$ & $0.29$\\\bottomrule
\end{tabular}
\end{table}

\begin{figure}[h!]
	\centering
	\subcaptionbox{$K=10$ \label{fig:convex-K10}}{
		\includegraphics[width=0.48\textwidth]{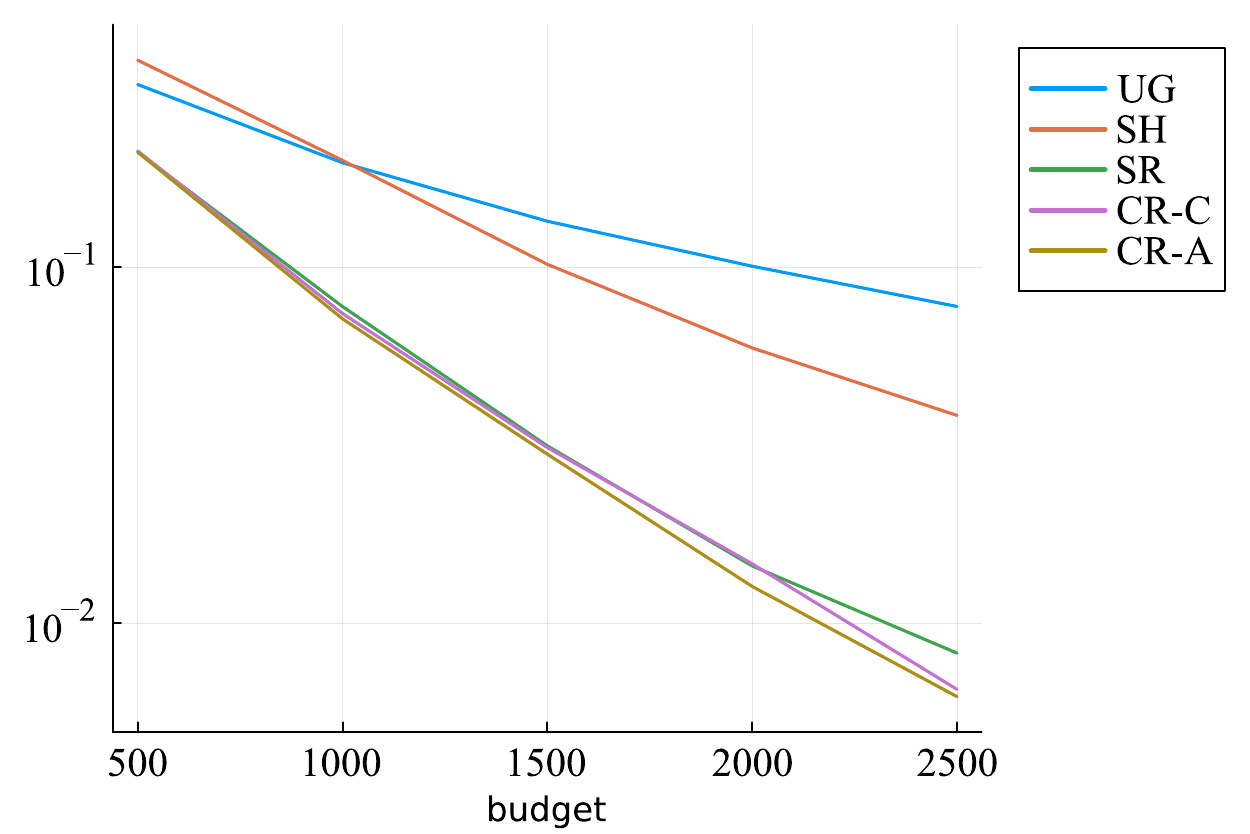}}
	\subcaptionbox{$K=20$ \label{fig:convex-K20}}{
		\includegraphics[width=0.48\textwidth]{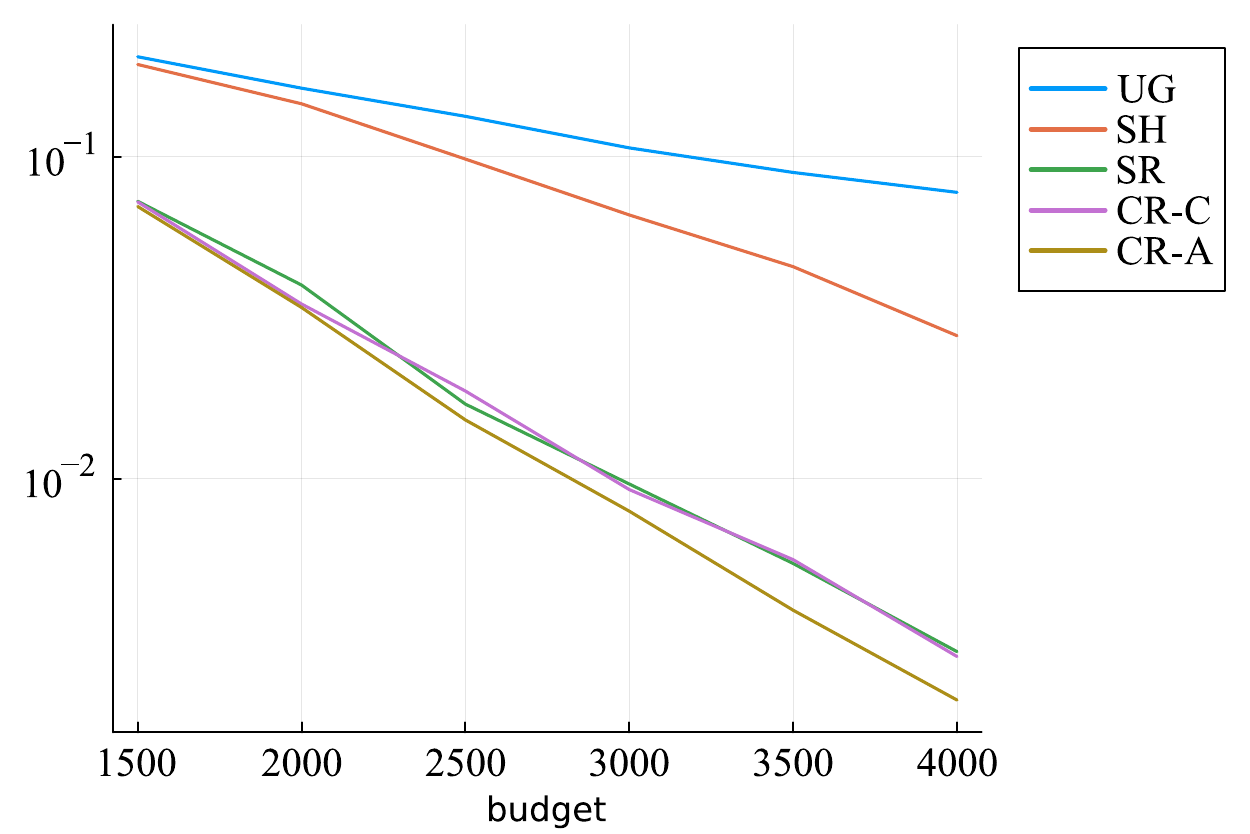}}
	\subcaptionbox{$K=40$ \label{fig:convex-K40}}{
		\includegraphics[width=0.48\textwidth]{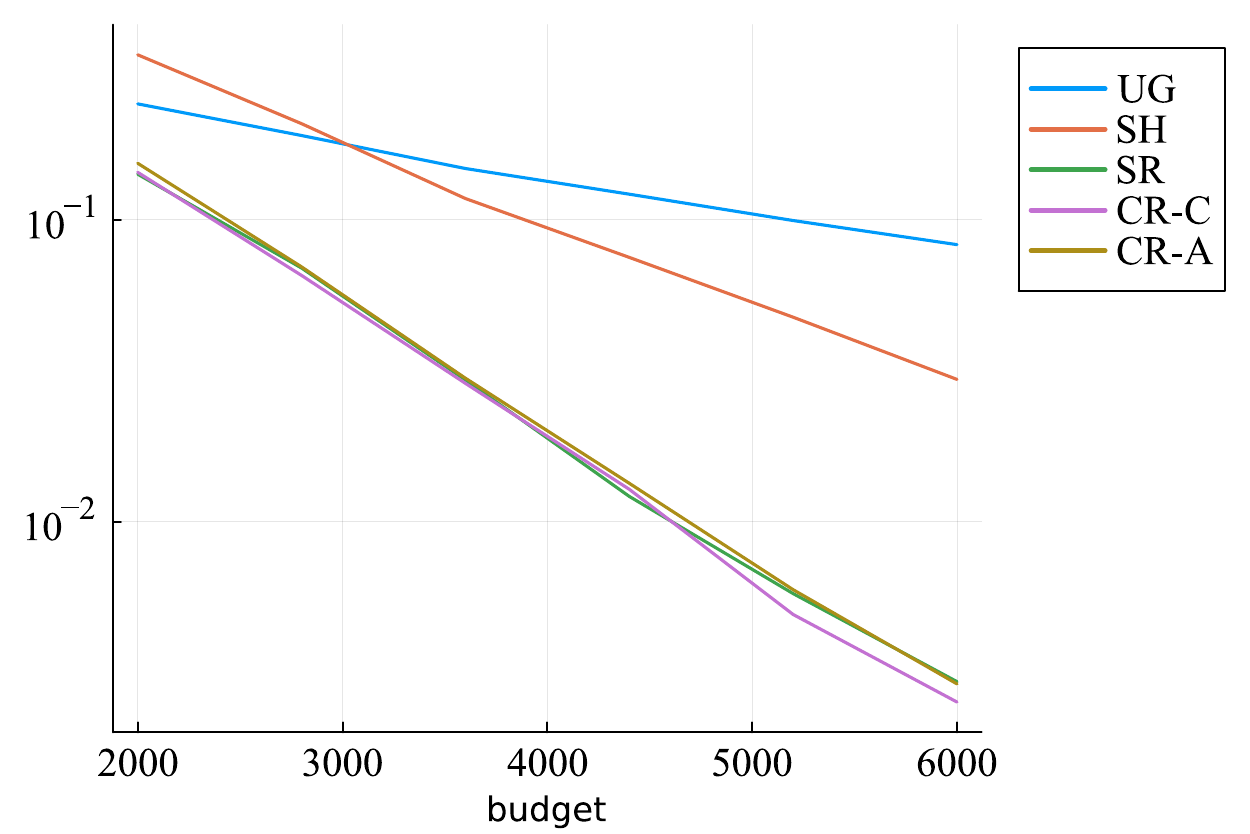}}
	\caption{(Convex arm-to-reward function) error probabilities averaged over $40,000$ independent runs.}
\end{figure}

\clearpage\pagebreak
\subsection{Stair arm-to-reward function}\label{num:6}
In this instance, we consider $M\in \{5,6,10\}$ and a $M(M+1)/2$-dimensional vector $\bm$. For each $M$, we define $\bm$ as: for all positive integers $m$ smaller than $ M$, there are $m$ arms on the same level with value, $\frac{3}{4}\cdot 3^{-\frac{m}{M}}$. For example, we plot the values for $M=10$ (hence $K=55$) in Figure \ref{fig:stair-mu}. One can see in this instance, our algorithms are by far stronger than the others.

\begin{figure}[h!]
    \centering
    \includegraphics[width=0.5\linewidth]{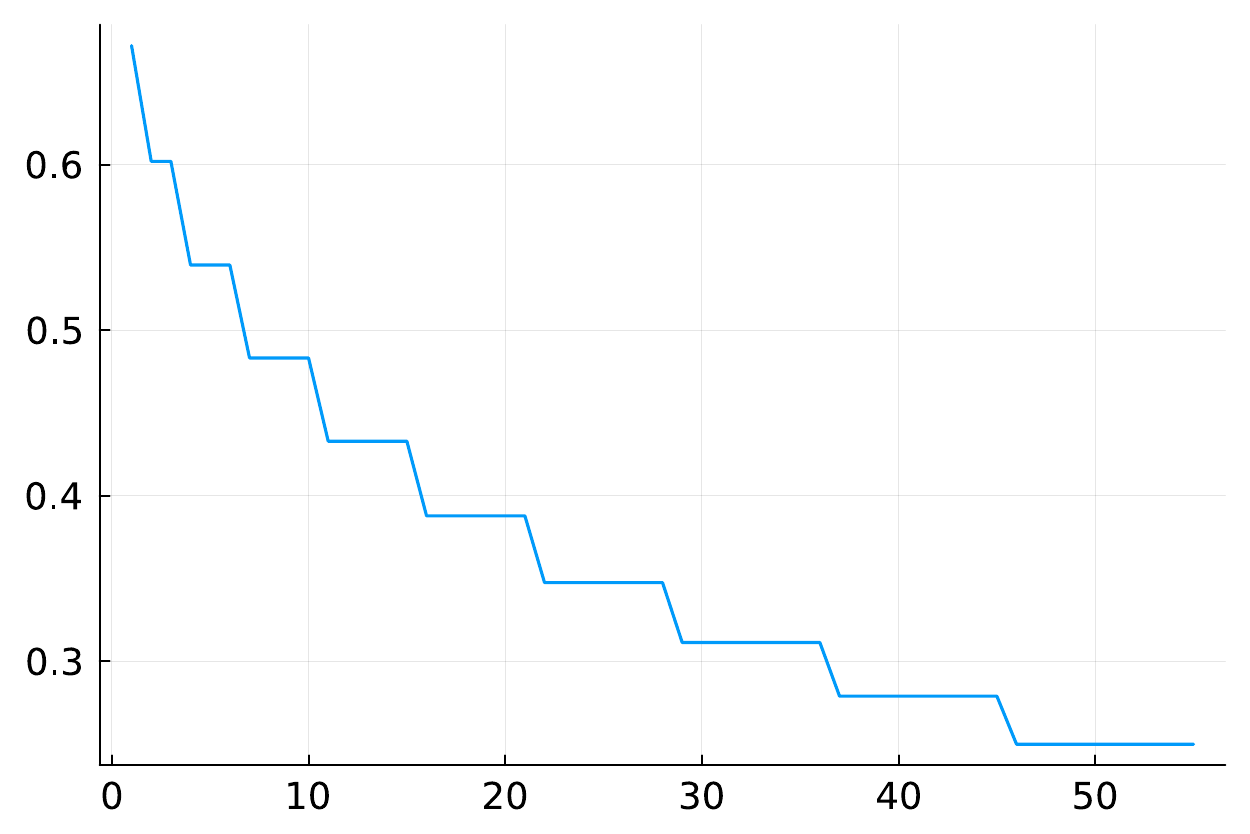}
    \caption{(Stair arm-to-reward function) $\bm$ with $K=55$.}\label{fig:stair-mu}
\end{figure}

\begin{table}[htb!]
\caption{(Stair arm-to-reward function) error probability (in \%).}\label{tab:stair-mu-detailed}
\centering
\begin{tabular}{lllccc} \toprule
  $K=15$ & & & $T=1,600$ & $T=2,000$ & $T=2,400$\\\midrule
  & \ugape & \citep{gabillon2012best} & $12.00$ & $9.89$ & $8.47$\\\hline
  & \sh &\cite{karnin2013almost} & $2.18$ & $1.07$ & $0.52$\\\hline
  & \sr &\cite{audibert2010best} & $0.43$ & $0.19$ & $0.06$\\\hline
  & \crc &(this paper) & $0.36$ & $0.10$ & $\mathbf{0.02}$\\\hline
  & \cra &(this paper) & $\mathbf{0.20}$ & $\mathbf{0.07}$ & $0.04$\\\bottomrule
\end{tabular}
\begin{tabular}{lllccc} \toprule
  $K=21$ & & & $T=1,500$ & $T=2,000$ & $T=2,500$\\\midrule
  & \ugape & \citep{gabillon2012best} & $17.44$ & $13.97$ & $12.08$\\\hline
  & \sh &\cite{karnin2013almost} & $7.78$ & $3.75$ & $1.78$\\\hline
  & \sr &\cite{audibert2010best} & $2.27$ & $0.92$ & $0.32$\\\hline
  & \crc &(this paper) & $1.68$ & $0.55$ & $0.24$\\\hline
  & \cra &(this paper) & $\mathbf{1.14}$ & $\mathbf{0.35}$ & $\mathbf{0.09}$\\\bottomrule
\end{tabular}
\begin{tabular}{lllccc} \toprule
  $K=55$ & & & $T=3,000$ & $T=4,000$ & $T=5,000$\\\midrule
  & \ugape & \citep{gabillon2012best} & $24.74$ & $21.29$ & $18.91$\\\hline
  & \sh &\cite{karnin2013almost} & $13.09$ & $7.76$ & $4.56$\\\hline
  & \sr &\cite{audibert2010best} & $5.55$ & $2.80$ & $1.26$\\\hline
  & \crc &(this paper) & $7.10$ & $2.58$ & $1.05$\\\hline
  & \cra &(this paper) & $\mathbf{4.70}$ & $\mathbf{1.62}$ & $\mathbf{0.57}$\\\bottomrule
\end{tabular}
\end{table}

\begin{figure}[h!]
    \centering
    \subcaptionbox{$K=15$ \label{fig:stair-K15}}{
	\includegraphics[width=0.48\textwidth]{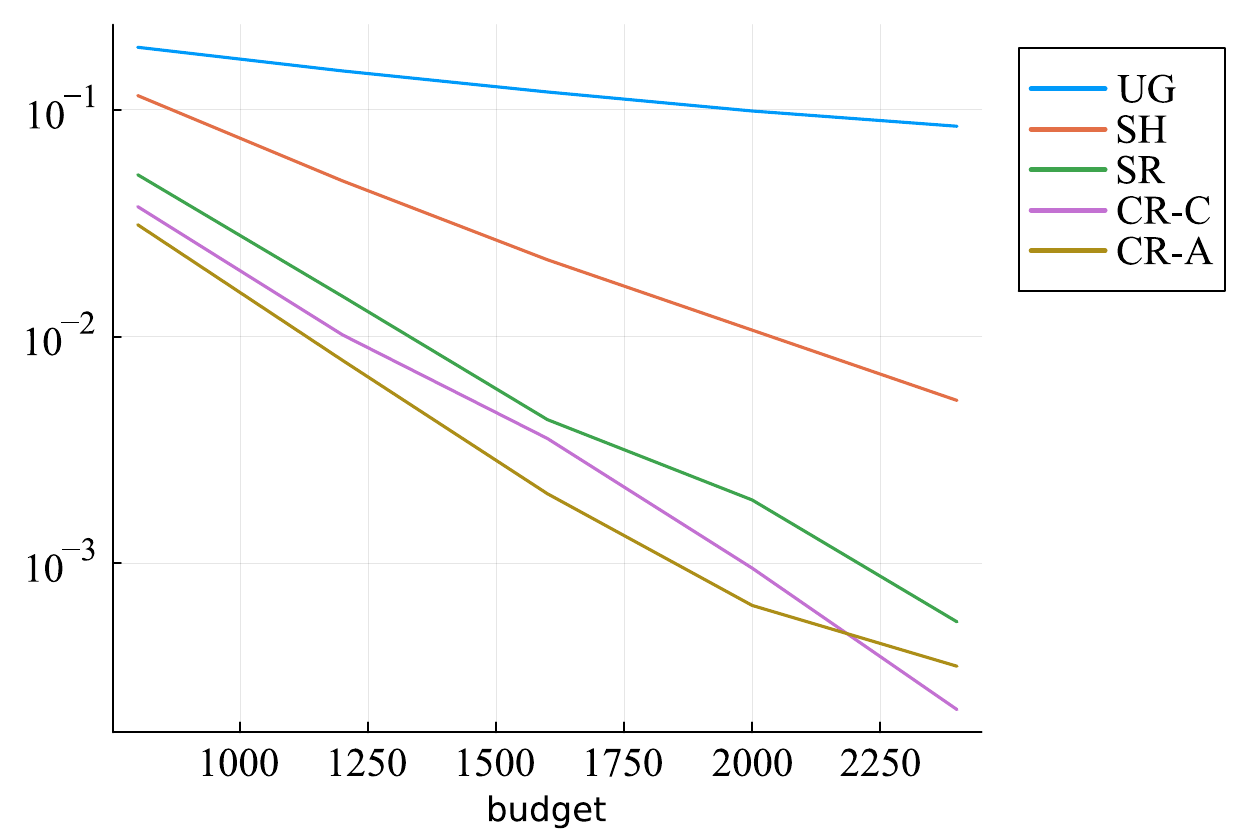}}
    \subcaptionbox{$K=21$ \label{fig:stair-K21}}{
	\includegraphics[width=0.48\textwidth]{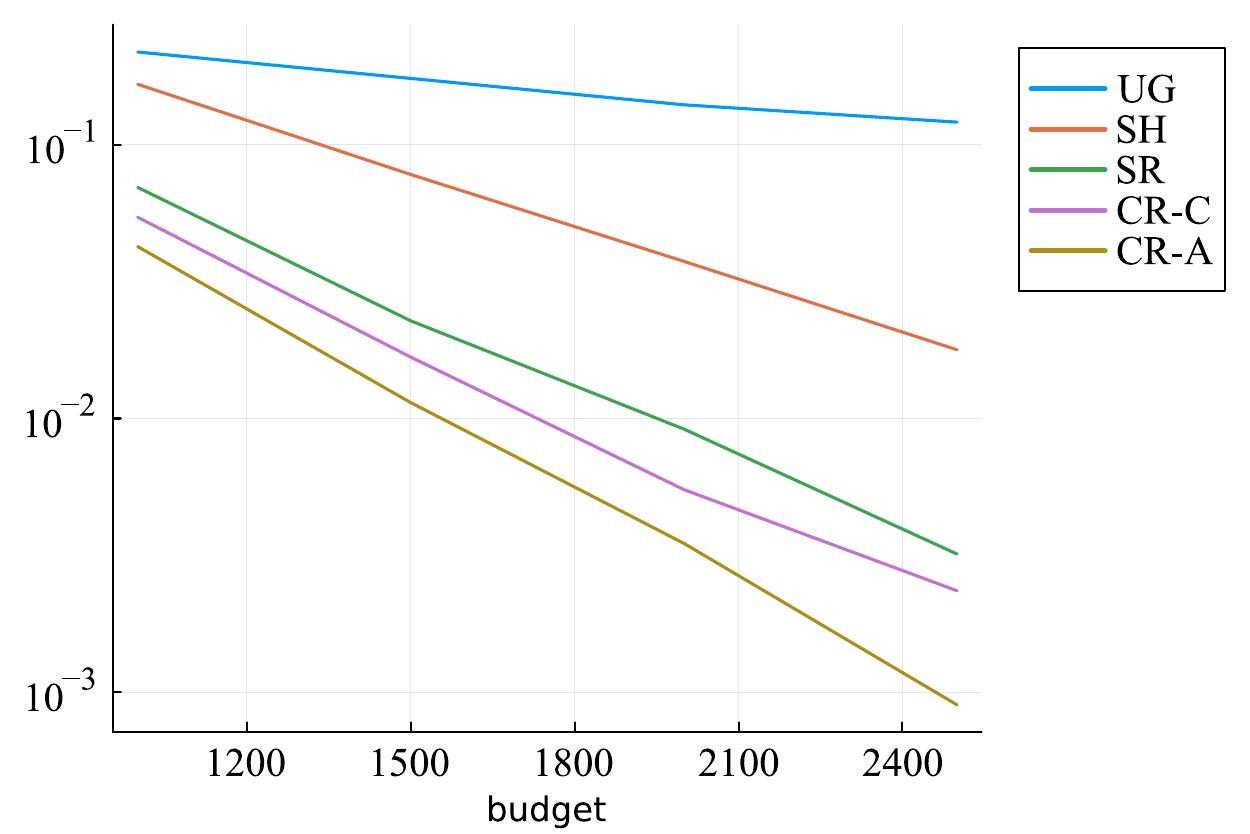}}
    \subcaptionbox{$K=55$ }{
	\includegraphics[width=0.48\textwidth]{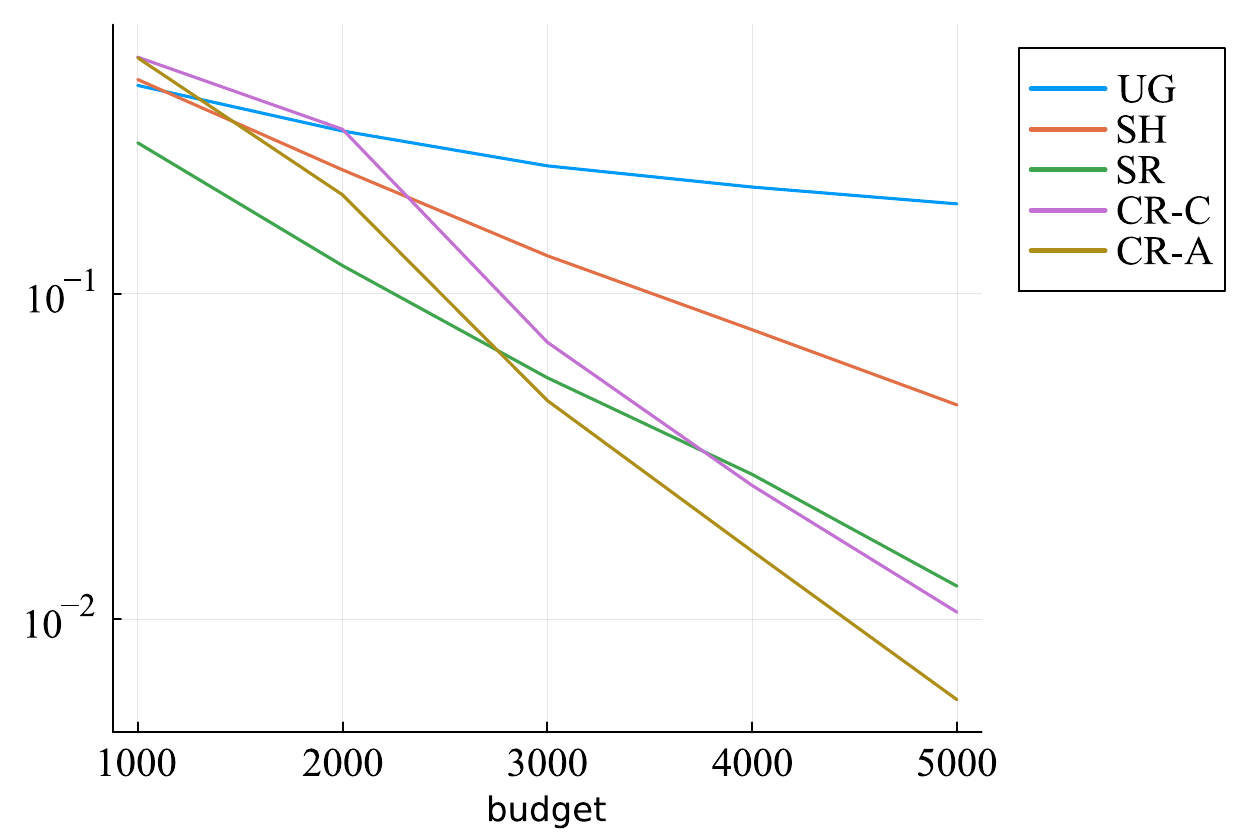}}
    \caption{(Stair arm-to-reward function) error probabilities averaged over $40,000$ independent runs.}
\end{figure}


\end{document}